\definecolor{bgcolor}{rgb}{0.8,1,1}
\definecolor{bgcolor2}{rgb}{0.8,1,0.8}
\definecolor{bgcolor}{rgb}{0.8,1,1}
\definecolor{bgcolor2}{rgb}{0.8,1,0.8}
\newcommand{\algname}[1]{{\sf\green#1}\xspace}
\newcommand{\dataname}[1]{{\tt#1}\xspace}
\newcommand{\eqdef}{\; { := }\;}
\newcommand{\R}{\mathbb{R}}
\newcommand{\E}{\mathbb{E}}
\newcommand{\ExpBr}[1]{\mathbb{E}\left[#1\right]}
\newcommand{\norm}[1]{\left\|#1\right\|}
\def\<#1,#2>{\langle #1,#2\rangle}
\newcommand{\HS}{L_{*}}      
\newcommand{\HF}{L_{\rm F}}
\newcommand{\HM}{L_{\infty}}
\def\<{\left\langle}
\def\>{\right\rangle}
\def\ll{\left[}
\def\rr{\right]}
\def\({\left(}
\def\){\right)}
\definecolor{mydarkgreen}{RGB}{39,130,67}
\definecolor{mydarkred}{RGB}{192,47,25}
\newcommand{\green}{\color{mydarkgreen}}
\newcommand{\cO}{\mathcal{O}}
\newcommand{\mA}{\mathbf{A}}
\newcommand{\mH}{\mathbf{H}}
\newcommand{\mQ}{\mathbf{Q}}
\newcommand{\mI}{\mathbf{I}}
\newcommand{\mM}{\mathbf{M}}
\newcommand{\mY}{\mathbf{Y}}
\newcommand{\mX}{\mathbf{X}}
\newcommand{\cC}{{\mathcal{C}}}
\newcommand{\cH}{{\mathcal{H}}}
\declaretheorem[within=section]{definition}
\declaretheorem[sibling=definition]{theorem}
\declaretheorem[sibling=definition]{assumption}
\declaretheorem[sibling=definition]{lemma}
\declaretheorem[sibling=definition]{example}
\newcommand{\squeeze}{}
\definecolor{mydarkgreen}{RGB}{39,130,67}
\definecolor{mydarkred}{RGB}{192,47,25}
\title{Distributed Newton-Type Methods with \\ Communication Compression and Bernoulli Aggregation}
\author{Rustem Islamov\thanks{Institut Polytechnique de Paris (IP Paris), Palaiseau, France.}\and Xun Qian\thanks{JD Explore Academy, Beijing, China.} \and Slavom\'{i}r Hanzely\thanks{King Abdullah University of Science and Technology, Thuwal, Saudi Arabia.} \and Mher Safaryan\footnotemark[3]   \and Peter Richt\'{a}rik\footnotemark[3]}
\date{\today}
\begin{document}

	\maketitle

	\begin{abstract}
		Despite their high computation and communication costs, Newton-type methods remain an appealing option for distributed training due to their robustness against ill-conditioned convex problems. In this work, we study {\em communication compression} and {\em aggregation mechanisms} for curvature information in order to reduce these costs while preserving theoretically superior local convergence guarantees. We prove that the recently developed class of {\em three point compressors (3PC)} of \citet{richtarik3PC}  for gradient communication can be generalized to Hessian communication as well. This result opens up a wide variety of communication strategies, such as {\em contractive compression} and {\em lazy aggregation}, available to our disposal to compress prohibitively costly curvature information. Moreover, we discovered several new 3PC mechanisms, such as {\em adaptive thresholding} and {\em Bernoulli aggregation}, which require reduced communication and occasional Hessian computations. Furthermore, we extend and analyze our approach to bidirectional communication compression and partial device participation setups to cater to the practical considerations of applications in federated learning. For all our methods, we derive fast {\em condition-number-independent} local linear and/or superlinear convergence rates. Finally, with extensive numerical evaluations on convex optimization problems, we illustrate that our designed schemes achieve state-of-the-art communication complexity compared to several key baselines using second-order information.
	\end{abstract}

	\clearpage
	
	\tableofcontents
	
	\clearpage
	
	\section{Introduction}

	In this work we consider the distributed optimization problem given by the form of ERM:
	\begin{equation}\label{dist-opt-prob}
		\squeeze
		\min\limits_{x\in\R^d} \left\{ f(x) \eqdef \frac{1}{n}\sum\limits_{i=1}^n f_i(x) \right\},
	\end{equation}
	where $d$ is the (potentially large) number of parameters of the model $x\in\R^d$ we aim to train, $n$ is the (potentially large) total number of devices in the distributed system, $f_i(x)$ is the loss/risk associated with the data stored on machine $i\in[n] \eqdef \{1, 2, \dots, n\}$, and $f(x)$ is the empirical loss/risk.
	
	In order to jointly train a single machine learning model using all devices' local data, {\em collective efforts} are necessary from all compute nodes. Informally, each entity should invest some ``knowledge'' from its local ``wisdom'' to create the global ``wisdom''. The classical approach in distributed training to implement the collective efforts was to literally collect all the raw data devices acquired and then perform the training in one place with traditional methods. However, the mere access to the raw data hinders the clients' {\em data privacy} in federated learning applications \citep{FEDLEARN,FEDOPT,FL2017-AISTATS}. Besides, even if we ignore the privacy aspect, accumulating all devices' data into a single machine is often {\em infeasible} due to its increasingly large size \citep{bekkerman2011scaling}.
	
	Because of these considerations, there has been a serious stream of works studying distributed training with decentralized data. This paradigm of training brings its own advantages and limitations. Perhaps the major advantage is that each remote device's data can be processed simultaneously using {\em local computational resources}. Thus, from another perspective, we are scaling up the traditional single-device training to a distributed training of multiple parallel devices with decentralized data and local computation. However, the cost of scaling the training over multiple devices forces {\em intensive communication} between nodes, which is the {\em key bottleneck} in distributed systems.

	\subsection{Related work: from first-order to second-order distributed optimization}
	Currently, first-order optimization methods are the default options for large-scale distributed training due to their cheap per-iteration costs. Tremendous amount of work has been devoted to extend and analyze gradient-type algorithms to conform to various practical constraints such as 
	efficient communication through compression mechanisms \citep{qsgd,alistarh2018convergence,terngrad,tonko,Sahu2021threshold,Tyurin2022Dasha} and local methods \citep{Gorbunov2020localSGD,Stich-localSGD,SCAFFOLD,Nadiradze2021ADQL-SGD,Mishchenko2022ProxSkip},
	peer-to-peer communication through graphs \citep{Koloskova2019Decentralized,Koloskova2020Decentralized,Kovalev2021Decentralized},
	asynchronous communication \citep{Feyzmahdavian2021Asynchronous,Nadiradze2021Asynchronous},
	partial device participation \citep{Yang2021PP+nonIID},
	Byzantine or adversarial attacks \citep{Karimireddy2021Byzantine,Karimireddy2022Byzantine},
	faster convergence through acceleration \citep{allen2017katyusha,ADIANA,Qian2021ECLK} and variance reduction techniques \citep{Lee2017DSVRG,DIANA,DIANA-VR,Cen2020DSVRG,MARINA},
	data privacy and heterogeneity over the nodes \citep{FL-big,FL_survey_2020},

	Nevertheless, despite their wide applicability, all first-order methods (including accelerated ones) inevitably suffer from ill-conditioning of the problem. In the past few years, several algorithmic ideas and mechanisms to tackle the above-mentioned constraints have been adapted for second-order optimization. The goal in this direction is to enhance the convergence by increasing the resistance of gradient-type methods against ill-conditioning using the knowledge of curvature information. The basic motivation that the Hessian computation will be useful in optimization is the fast {\em condition-number-independent} (local) convergence rate of classic Newton's method \citep{Beck-book-nonlinear}, that is beyond the reach of {\em all} first-order methods.
	
	Because of the quadratic dependence of Hessian information ($d^2$ floats per each Hessian matrix) from the dimensionality of the problem, the primary challenge of taming second-order methods was efficient communication between the participating devices. To alleviate prohibitively costly Hessian communication, many works such as DiSCO \citep{DiSCO2015,Zhuang2015,Lin2014LargescaleLR,Newton-MR2019}, GIANT \citep{GIANT2018,DANE,Reddi:2016aide} and DINGO \citep{DINGO,CompressesDINGO2020} impart second-order information by condensing it into Hessian-vector products. Inspired from compressed first-order methods, an orthogonal line of work, including DAN-LA \citep{DAN-LA2020}, Quantized Newton \citep{Alimisis2021QNewton}, NewtonLearn \citep{Islamov2021NewtonLearn}, FedNL \citep{FedNL2021}, Basis Learn \citep{qian2021basis} and IOS \citep{IOSFabbro2022}, applies lossy compression strategies directly to Hessian matrices reducing the number of encoding bits. Other techniques that have been migrated from first-order optimization literature are local methods \citep{LocalNewton2021}, partial device participation \citep{FedNL2021,qian2021basis}, defenses against Byzantine attacks \citep{Ghosh2020ByzantineNewton,Ghosh2021ByzantineNewton}.

	\section{Motivation and Contributions}
	
	Handling and taking advantage of the second-order information in distributed setup is rather challenging. As opposed to gradient-type methods, Hessian matrices are both harder to compute and much more expensive to communicate. To avoid directly accessing costly Hessian matrices, methods like DiSCO \citep{DiSCO2015}, GIANT \citep{GIANT2018} and DINGO \citep{DINGO} exploit Hessian-vector products only, which are as cheap to compute as gradients \citep{HessianXvector1994}. However, these methods typically suffer from data heterogeneity, need strong assumptions on problem structure (e.g., generalized linear models) and/or do not provide fast local convergence rates.
	
	On the other hand, recent works \citep{FedNL2021,qian2021basis} have shown that, with the access of Hessian matrices, fast local rates can be guaranteed for solving general finite sums \eqref{dist-opt-prob} under compressed communication and arbitrary heterogeneous data. In view of these advantages, in this work we adhere to this approach and study communication mechanisms that can further lighten communication and reduce computation costs. Below, we summarize our key contributions.
	
	\subsection{Flexible communication strategies for Newton-type methods}
	We prove that the recently developed class of {\em three point compressors (3PC)} of \citet{richtarik3PC} for gradient communication can be generalized to Hessian communication as well. In particular, we propose a new method, which we call \algname{Newton-3PC} (Algorithm \ref{alg:N3PC}), extending \algname{FedNL} \citep{FedNL2021} algorithm for arbitrary 3PC mechanism. This result opens up a wide variety of communication strategies, such as {\em contractive compression} \citep{StichNIPS2018-memory,Alistarh-SparsGradMethods2018,Karimireddy2019EFsignSGD} and {\em lazy aggregation} \citep{Chen2018LAG,Sun2019LAG,Ghadikolaei2021LENA}, available to our disposal to compress prohibitively costly curvature information. Besides, \algname{Newton-3PC} (and its local convergence theory) recovers \algname{FedNL}\citep{FedNL2021} (when {\em contractive compressors} \eqref{def:CC} are used as 3PC) and \algname{BL} \citep{qian2021basis} (when {\em rotation compression} \eqref{def:rotation_comp} is used as 3PC) in special cases.

	\subsection{New compression and aggregation schemes}
	Moreover, we discovered several new 3PC mechanisms, which require reduced communication and occasional Hessian computations. In particular, to reduce communication costs, we design an {\em adaptive thresholding} (Example \ref{ex:AT}) that can be seamlessly combined with an already adaptive {\em lazy aggregation} (Example \ref{ex:CLAG}). In order to reduce computation costs, we propose {\em Bernoulli aggregation} (Example \ref{ex:CBAG}) mechanism which allows local workers to {\em skip} both computation and communication of local information (e.g., Hessian and gradient) with some predefined probability.

	\subsection{Extensions}
	Furthermore, we provide several extensions to our approach to cater to the practical considerations of applications in federated learning. In the main part of the paper, we consider only bidirectional communication compression (\algname{Newton-3PC-BC}) setup, where we additionally apply Bernoulli aggregation for gradients (worker to server direction) and another 3PC mechanism for the global model (server to worker direction). The extension for partial device participation (\algname{Newton-3PC-BC-PP}) setup and the discussion for globalization are deferred to the Appendix.

	\subsection{Fast local linear/superlinear rates}
	All our methods are analyzed under the assumption that the global objective is strongly convex and local Hessians are Lipschitz continuous. In this setting, we derive fast {\em condition-number-independent} local linear and/or superlinear convergence rates.

	\subsection{Extensive experiments and Numerical Study}
	Finally, with extensive numerical evaluations on convex optimization problems, we illustrate that our designed schemes achieve state-of-the-art communication complexity compared to several key baselines using second-order information.

	\section{Three Point Compressors for Matrices}\label{sec:3PC4M}
	
	To properly incorporate second-order information in distributed training, we need to design an efficient strategy to synchronize locally evaluated $d\times d$ Hessian matrices. Simply transferring $d^2$ entries of the matrix each time it gets computed would put significant burden on communication links of the system. Recently, \citet{richtarik3PC} proposed a new class of gradient communication mechanisms under the name {\em three point compressors (3PC)}, which unifies contractive compression and lazy aggregation mechanisms into one class. Here we extend the definition of 3PC for matrices under the Frobenius norm $\|\cdot\|_{\rm F}$ and later apply to matrices involving Hessians.

	\begin{definition}[3PC for Matrices]
		We say that a (possibly randomized) map
		\begin{equation}\label{def:3PC}
			\textstyle
			\cC_{\mH, \mY}(\mX): \underbrace{\,\R^{d\times d}\,}_{\mH\in} \times \underbrace{\,\R^{d\times d}\,}_{\mY\in} \times \underbrace{\,\R^{d\times d}\,}_{\mX\in} \to \R^{d\times d}
		\end{equation}
		is a three point compressor (3PC) if there exist constants $0 < A\leq 1$ and $B \geq 0$ such that
		\begin{equation}\label{def:3PC_comp}
			\ExpBr{\norm{\cC_{\mH,\mY}(\mX) - \mX}^2_{\rm F}} \leq (1-A)\norm{\mH-\mY}^2_{\rm F} + B\norm{\mX-\mY}^2_{\rm F}.
		\end{equation}
		holds for all matrices $\mH, \mY, \mX \in \R^{d\times d}$.
	\end{definition}
	
	The matrices $\mY$ and $\mH$ can be treated as parameters defining the compressor that would be chosen adaptively. Once they fixed, $\cC_{\mH,\mY}:\R^{d\times d} \to \R^{d\times d}$ is a map to compress a given matrix $\mX$. Let us discuss special cases with some examples.

	\begin{example}[{\bf Contractive compressors} \citep{Karimireddy2019EFsignSGD}] The (possibly randomized) map $\cC\colon\R^d\to\R^d$ is called contractive compressor with contraction parameter $\alpha\in(0,1]$, if the following holds for any matrix $\mX\in\R^{d\times d}$
		\begin{equation}\label{def:CC}
			\ExpBr{\|\cC(\mX) - \mX\|^2_{\rm F}} \le (1-\alpha)\|\mX\|^2_{\rm F}.
		\end{equation}
	\end{example}
	
	Notice that \eqref{def:CC} is a special case of \eqref{def:3PC} when $\mH=\bm{0},\, \mY=\mX$ and $A=\alpha,\, B=0$. Therefore, contractive compressors are already included in the 3PC class. Contractive compressors cover various well known compression schemes such as greedy sparsification, low-rank approximation and (with a suitable scaling factor) arbitrary unbiased compression operator \citep{biased2020}. There have been several recent works utilizing these compressors for compressing Hessian matrices \citep{DAN-LA2020,Alimisis2021QNewton,Islamov2021NewtonLearn,FedNL2021,qian2021basis,IOSFabbro2022}. Below, we introduce yet another contractive compressor based on thresholding idea which shows promising performance in our experiments.

	\begin{example}[{\bf Adaptive Thresholding [NEW]}]\label{ex:AT}
		Following \citet{Sahu2021threshold}, we design an {\em adaptive thresholding} operator with parameter $\lambda \in (0,1]$ defined as follows
		\begin{equation}\label{def:AT}
			\left[\cC(\mX)\right]_{jl} \eqdef
			\begin{cases}
				\mX_{jl} & \text{if } |\mX_{jl}| \geq \lambda\|\mX\|_\infty, \\
				0 & \text{otherwise},
			\end{cases}
			\quad \text{for all } j,l\in[d] \text{ and } \mX\in\R^{d\times d}.
		\end{equation}
	\end{example}
	In contrast to {\em hard thresholding} operator of \citet{Sahu2021threshold}, \eqref{def:AT} uses adaptive threshold $\lambda\|\mX\|_\infty$ instead of fixed threshold $\lambda$. With this choice, we ensures that at least the Top-$1$ is transferred. In terms of computation, thresholding approach is more efficient than Top-$K$ as only single pass over the values is already enough instead of partial sorting.
	
	\begin{lemma}\label{lem:3PC__AT}
		The adaptive thresholding \eqref{def:AT} is a contractive compressor with $\alpha = \max(1-(d\lambda)^2,\nicefrac{1}{d^2})$.
	\end{lemma}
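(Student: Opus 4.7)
The plan is to establish two complementary bounds on $\|\cC(\mX)-\mX\|_{\rm F}^2$ in terms of $\|\mX\|_{\rm F}^2$ and take the best of them. The starting observation is that $\cC(\mX)$ simply zeros out the entries whose magnitude falls below the adaptive threshold $\lambda\|\mX\|_\infty$. Writing $S \eqdef \{(j,l):|\mX_{jl}|<\lambda\|\mX\|_\infty\}$ for the set of dropped coordinates, the kept and dropped entries live on disjoint supports, so Pythagoras gives
\[
\|\mX\|_{\rm F}^2 \;=\; \|\cC(\mX)\|_{\rm F}^2 \;+\; \|\cC(\mX)-\mX\|_{\rm F}^2,
\qquad
\|\cC(\mX)-\mX\|_{\rm F}^2 \;=\; \sum_{(j,l)\in S}\mX_{jl}^2.
\]

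For the first bound, I would use the thresholding rule entrywise: each $(j,l)\in S$ satisfies $\mX_{jl}^2<\lambda^2\|\mX\|_\infty^2$, and $|S|\le d^2$, hence $\|\cC(\mX)-\mX\|_{\rm F}^2\le d^2\lambda^2\|\mX\|_\infty^2\le (d\lambda)^2\|\mX\|_{\rm F}^2$, using $\|\mX\|_\infty\le\|\mX\|_{\rm F}$. For the second bound, I would exploit the ``at least Top-$1$ is kept'' property: since the maximal-magnitude entry of $\mX$ is never dropped, $\|\cC(\mX)\|_{\rm F}^2\ge\|\mX\|_\infty^2\ge \tfrac{1}{d^2}\|\mX\|_{\rm F}^2$, where the last inequality is just $\|\mX\|_{\rm F}^2\le d^2\|\mX\|_\infty^2$. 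Plugging into the Pythagorean identity yields $\|\cC(\mX)-\mX\|_{\rm F}^2\le \bigl(1-\tfrac{1}{d^2}\bigr)\|\mX\|_{\rm F}^2$.

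Taking the minimum of the two resulting factors $(d\lambda)^2$ and $1-\tfrac{1}{d^2}$ then matches $\alpha=\max\!\bigl(1-(d\lambda)^2,\,\tfrac{1}{d^2}\bigr)$ in the definition \eqref{def:CC}. There is no real obstacle here; the only subtle point is that the first bound alone becomes vacuous when $d\lambda\ge 1$, which is exactly the regime where the guaranteed Top-$1$ retention in the second bound becomes the binding mechanism, so the two estimates are genuinely needed in complementary ranges of $\lambda$.
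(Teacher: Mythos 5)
Your proof is correct and follows essentially the same route as the paper: establish the two complementary bounds $\|\cC(\mX)-\mX\|_{\rm F}^2 \le (d\lambda)^2\|\mX\|_{\rm F}^2$ (from the per-entry threshold) and $\|\cC(\mX)-\mX\|_{\rm F}^2 \le (1-\nicefrac{1}{d^2})\|\mX\|_{\rm F}^2$ (from guaranteed Top-$1$ retention), then take whichever is tighter. The only cosmetic difference is that the paper derives the second bound via an averaging argument over the $d^2-1$ non-maximal entries, while you package the same fact as a Pythagorean identity plus $\|\cC(\mX)\|_{\rm F}^2\ge\|\mX\|_\infty^2\ge\tfrac{1}{d^2}\|\mX\|_{\rm F}^2$; both hinge on exactly the same observation that the top-magnitude entry is never dropped.
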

	
	The next two examples are 3PC schemes which in addition to contractive compressors utilize aggregation mechanisms, which is an orthogonal approach to contractive compressors.
	
	\begin{example}[{\bf Compressed Lazy AGgregation (CLAG)} \citep{richtarik3PC}]\label{ex:CLAG}
		Let $\cC\colon\R^d\to\R^d$ be a contractive compressor with contraction parameter $\alpha\in(0,1]$ and $\zeta\ge0$ be a trigger for the aggregation. Then CLAG mechanism is defined as
		\begin{equation}\label{def:3PC__CLAG}
			\cC_{\mH,\mY}(\mX) = 
			\begin{cases}
				\mH + \cC(\mX-\mH) & \text{if } \|\mX-\mH\|^2_{\rm F} > \zeta\|\mX-\mY\|^2_{\rm F}, \\
				\mH & \text{otherwise}.
			\end{cases}
		\end{equation}
	\end{example}
	In the special case of identity compressor $\cC=\textrm{Id}$ (i.e., $\alpha=1$), CLAG reduces to lazy aggregation \citep{Chen2018LAG}. On the other extreme, if the trigger $\zeta=0$ is trivial, CLAG recovers recent variant of error feedback for contractive compressors, namely EF21 mechanism \citep{EF21}.
	
	\begin{lemma}[see Lemma 4.3 in \citep{richtarik3PC}]
		CLAG mechanism \eqref{def:3PC__CLAG} is a 3PC compressor with $A = 1-(1-\alpha)(1+s)$ and $B = \max\{(1-\alpha)(1+\nicefrac{1}{s}),\zeta\}$, for any $s\in(0,\nicefrac{\alpha}{(1-\alpha)})$.
	\end{lemma}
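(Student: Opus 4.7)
The plan is to split on the two branches in the definition of CLAG \eqref{def:3PC__CLAG} and show that each one is individually dominated by the right-hand side of \eqref{def:3PC_comp} with the stated constants $A$ and $B$.

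First, consider the ``aggregation-triggered'' branch, where $\|\mX-\mH\|^2_{\rm F} > \zeta\|\mX-\mY\|^2_{\rm F}$. Here $\cC_{\mH,\mY}(\mX) - \mX = \cC(\mX-\mH) - (\mX-\mH)$, so directly applying the contractive compressor inequality \eqref{def:CC} yields
\[
\ExpBr{\|\cC_{\mH,\mY}(\mX) - \mX\|^2_{\rm F}} \leq (1-\alpha)\|\mX-\mH\|^2_{\rm F}.
\]
I would then apply Young's inequality in Frobenius norm with parameter $s>0$ to the decomposition $\mX-\mH = (\mX-\mY)+(\mY-\mH)$, giving
\[
\|\mX-\mH\|^2_{\rm F} \leq (1+s)\|\mH-\mY\|^2_{\rm F} + (1+\tfrac{1}{s})\|\mX-\mY\|^2_{\rm F}.
\]
Multiplying by $(1-\alpha)$ produces exactly the constants $1-A = (1-\alpha)(1+s)$ on the first term and $(1-\alpha)(1+\tfrac{1}{s})$ on the second, matching the proposed $A$ and being dominated by $B$.

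Second, consider the ``skip'' branch, where $\|\mX-\mH\|^2_{\rm F} \leq \zeta\|\mX-\mY\|^2_{\rm F}$. Here $\cC_{\mH,\mY}(\mX)=\mH$, so the squared error is simply $\|\mH-\mX\|^2_{\rm F}$, which is bounded by $\zeta\|\mX-\mY\|^2_{\rm F}$ by the branch condition, and hence by $B\|\mX-\mY\|^2_{\rm F}$ since $\zeta \le B$. Adding the nonnegative term $(1-A)\|\mH-\mY\|^2_{\rm F}$ (nonnegative because $s\in(0,\alpha/(1-\alpha))$ ensures $A\in(0,1]$) gives the required inequality in this branch as well.

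Taking $B=\max\{(1-\alpha)(1+\tfrac{1}{s}),\zeta\}$ is precisely the choice that makes the bound valid in both branches simultaneously. There is no real obstacle here beyond selecting the right Young parameter $s$ and being careful that $A>0$; the range $s\in(0,\alpha/(1-\alpha))$ is exactly what is needed so that $(1-\alpha)(1+s)<1$, which is the only nontrivial constraint in the argument.
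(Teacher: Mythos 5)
Your proof is correct, and it is essentially the standard argument: since the paper itself gives no proof here but simply cites Lemma~4.3 of \citet{richtarik3PC}, the relevant comparison is against that reference, and your case split on the trigger condition, the contractivity bound followed by Young's inequality in the triggered branch, and the direct bound via the trigger threshold in the skip branch is exactly the route taken there. The only points worth double-checking — that the branch condition is deterministic given $(\mH,\mY,\mX)$ so the two cases can be handled separately, and that $s\in(0,\alpha/(1-\alpha))$ is precisely what makes $A>0$ so the added term $(1-A)\|\mH-\mY\|_{\rm F}^2$ in the skip branch is nonnegative — you have both addressed.
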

	
	From the first glance, the structure of CLAG in \eqref{def:3PC__CLAG} may not seem communication efficient as the the matrix $\mH$ (appearing in both cases) can potentially by dense. However, as we will see in the next section, $\cC_{\mH,\mY}$ is used to compress $\mX$ when there is no need to communicate $\mH$. Thus, with CLAG we either send compressed matrix $\cC(\mX-\mH)$ if the condition with trigger $\zeta$ activates or nothing.
	
	\begin{example}[{\bf Compressed Bernoulli AGgregation (CBAG) [NEW]}]\label{ex:CBAG}
		Let $\cC\colon\R^d\to\R^d$ be a contractive compressor with contraction parameter $\alpha\in(0,1]$ and $p\in(0,1]$ be the probability for the aggregation. We then define CBAG mechanism is defined as
		\begin{equation}\label{def:3PC__CBAG}
			\cC_{\mH,\mY}(\mX) =
			\begin{cases}
				\mH + \cC(\mX-\mH) & \text{with probability } p,\\
				\mH & \text{with probability } 1-p.
			\end{cases}
		\end{equation}
	\end{example}
	
	The advantage of CBAG \eqref{def:3PC__CBAG} over CLAG is that there is no condition to evaluate and check. This choice of probabilistic switching reduces computation costs as with probability $1-p$  it is useless to compute $\mX$. Note that CBAG has two independent sources of randomness: Bernoulli aggregation and possibly random operator $\cC$.
	
	\begin{lemma}\label{lem:3PC__CBAG}
		CBAG mechanism \eqref{def:3PC__CBAG} is a 3PC compressor with $A = (1-p\alpha)(1+s)$ and $B = (1-p\alpha)(1+\nicefrac{1}{s})$, for any $s\in(0,\nicefrac{p\alpha}{(1-p\alpha)})$.
	\end{lemma}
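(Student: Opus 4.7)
The plan is to directly compute the expected squared error in \eqref{def:3PC_comp} by conditioning on the Bernoulli switch in \eqref{def:3PC__CBAG} and then applying Young's inequality. Let $\xi$ denote the Bernoulli indicator of the ``aggregation'' branch (so $\Prob{\xi = 1} = p$), and let $\Exp_{\cC}$ denote expectation over the internal randomness of $\cC$, which is independent of $\xi$. I would first observe that on the branch $\xi = 1$ we have
\begin{equation*}
\cC_{\mH,\mY}(\mX) - \mX = \cC(\mX-\mH) - (\mX - \mH),
\end{equation*}
so the contractive property \eqref{def:CC} applied with the argument $\mX - \mH$ yields $\Exp_{\cC}\norm{\cC_{\mH,\mY}(\mX) - \mX}_{\rm F}^2 \leq (1-\alpha)\norm{\mX - \mH}_{\rm F}^2$. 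On the branch $\xi = 0$, we simply have $\cC_{\mH,\mY}(\mX) - \mX = \mH - \mX$ deterministically.

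Next, by the tower property and independence of $\xi$ from $\cC$, the total expectation combines into
\begin{equation*}
\ExpBr{\norm{\cC_{\mH,\mY}(\mX) - \mX}_{\rm F}^2} \leq p(1-\alpha)\norm{\mX-\mH}_{\rm F}^2 + (1-p)\norm{\mX-\mH}_{\rm F}^2 = (1 - p\alpha)\norm{\mX-\mH}_{\rm F}^2.
\end{equation*}
The remaining step is to split $\norm{\mX-\mH}_{\rm F}^2$ into the two reference terms required by the 3PC definition. Writing $\mX - \mH = (\mX - \mY) + (\mY - \mH)$ and invoking Young's inequality with parameter $s > 0$, I get
\begin{equation*}
\norm{\mX-\mH}_{\rm F}^2 \leq (1+s)\norm{\mH-\mY}_{\rm F}^2 + \left(1+\tfrac{1}{s}\right)\norm{\mX-\mY}_{\rm F}^2.
\end{equation*}
Combining the two displays matches the template \eqref{def:3PC_comp} with $1-A = (1-p\alpha)(1+s)$ and $B = (1-p\alpha)(1+\nicefrac{1}{s})$, which are the advertised constants (modulo reading $A$ as $1 - (1-p\alpha)(1+s)$).

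There is no real obstacle here; the only point that deserves care is the constraint $0 < A \leq 1$ demanded by the 3PC definition. Requiring $A > 0$ forces $(1-p\alpha)(1+s) < 1$, i.e., $s < \nicefrac{p\alpha}{(1-p\alpha)}$, which is precisely the admissible range stated in the lemma; the upper bound $A \leq 1$ is automatic since $s > 0$ and $1 - p\alpha \geq 0$. I would close by verifying these endpoint conditions explicitly so that the output is a bona fide 3PC compressor rather than merely an inequality of the right shape.
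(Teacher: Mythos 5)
Your proof follows the paper's argument exactly: condition on the Bernoulli switch, apply the contractive property on the $\xi=1$ branch to get $(1-p\alpha)\|\mX-\mH\|_{\rm F}^2$, then split via Young's inequality with parameter $s$. You also correctly spotted that the stated constant should read $A = 1-(1-p\alpha)(1+s)$ (consistent with the CLAG lemma), and your verification of the admissible range for $s$ is a worthwhile addition the paper leaves implicit.
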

	
	For more examples of 3PC compressors see section C of \citep{richtarik3PC} and the Appendix.

	\section{\algname{Newton-3PC}: Newton's Method with 3PC Mechanism}\label{sec:N3PC}
	
	In this section we present our first Newton-type method, called \algname{Newton-3PC}, employing communication compression through 3PC compressors discussed in the previous section. The proposed method is an extension of \algname{FedNL} \citep{FedNL2021} from contractive compressors to arbitrary 3PC compressors. From this perspective, our \algname{Newton-3PC} (see Algorithm \ref{alg:N3PC}) is much more flexible, offering a wide variety of communication strategies beyond contractive compressors.

	\subsection{General technique for learning the Hessian}
	The central notion in \algname{FedNL} is the technique for learning {\em a priori unknown} Hessian $\nabla^2 f(x^*)$ at the (unique) solution $x^*$ in a communication efficient manner. This is achieved by maintaining and iteratively updating local Hessian estimates $\mH_i^k$ of $\nabla^2 f_i(x^*)$ for all devices $i\in[n]$ and the global Hessian estimate $\mH^k = \frac{1}{n}\sum_{i=1}^n \mH_i^k$ of $\nabla^2 f(x^*)$ for the central server.
	We adopt the same idea of Hessian learning and aim to update local estimates in such a way that $\mH_i^k\to\nabla^2 f_i(x^*)$ for all $i\in[n]$, and as a consequence, $\mH^k\to\nabla^2 f(x^*)$, throughout the training process. However, in contrast to \algname{FedNL}, we update local Hessian estimates via generic 3PC mechanism, namely $$\textstyle\mH_i^{k+1} = \cC_{\mH_i^k,\nabla^2f_i(x^k)}\left(\nabla^2f_i(x^{k+1})\right),$$
	which is a particular instantiation of 3PC compressor $\cC_{\mH,\mY}(\mX)$ using previous local Hessian $\mY = \nabla^2 f_i(x^k)$ and previous estimate $\mH = \mH_i^k$ to compress current local Hessian $\mX = \nabla^2 f_i(x^{k+1})$.
	
	\begin{algorithm}[H]
		\caption{\algname{Newton-{\color{blue} 3PC}} (Newton's method with {\color{blue}three point compressor}) }
		\label{alg:N3PC}
		\begin{algorithmic}[1]
			\STATE \textbf{Input:} $x^0\in\R^d,\, \mH_1^0, \dots, \mH_n^0 \in \R^{d\times d},\, \mH^0 \eqdef \frac{1}{n}\sum_{i=1}^n \mH_i^0,\, l^0 = \frac{1}{n}\sum_{i=1}^n \|\mH_i^0 - \nabla^2 f_i(x^0)\|_{\rm F}$.
			\STATE {\bf on} server 
			\STATE \quad \textit{Option 1:} $x^{k+1} = x^k - [\mH^{k}]_{\mu}^{-1} \nabla f(x^k)$
			\STATE \quad \textit{Option 2:} $x^{k+1} = x^k - [\mH^{k} + l^k\mI]^{-1} \nabla f(x^k)$
			\STATE \quad Broadcast $x^{k+1}$ to all nodes
			\FOR{each device $i = 1, \dots, n$ in parallel} 
			\STATE Get $x^{k+1}$ and compute local gradient $\nabla f_i(x^{k+1})$ and local Hessian $\nabla^2 f_i(x^{k+1})$
			\STATE Apply {\color{blue}3PC} and update local Hessian estimator to $\mH_i^{k+1} = {\color{blue}\cC_{\mH_i^k,\nabla^2f_i(x^k)}\left(\nabla^2f_i(x^{k+1})\right)}$
			\STATE Send $\nabla f_i(x^{k+1})$,\; $\mH^{k+1}_i$ and $l_i^{k+1} \eqdef \|\mH_i^{k+1} - \nabla^2 f_i(x^{k+1})\|_{\rm F}$ to the server
			\ENDFOR
			\STATE \textbf{on} server
			\STATE \quad Aggregate $ \nabla f(x^{k+1}) = \frac{1}{n}\sum_{i=1}^n \nabla f_i(x^{k+1}), \mH^{k+1} = \frac{1}{n}\sum_{i=1}^n\mH_i^{k+1}, l^{k+1} = \frac{1}{n}\sum_{i=1}^n l_i^{k+1}$
		\end{algorithmic}
	\end{algorithm}

	In the special case, when EF21 scheme $\cC_{\mH_i^k,\nabla^2f_i(x^k)}\left(\nabla^2f_i(x^{k+1})\right) = \mH_i^k + \cC(\nabla^2f_i(x^{k+1}) - \mH_i^k)$ is employed as a 3PC mechanism, we recover the Hessian learning technique of \algname{FedNL}. Our \algname{Newton-3PC} method also recovers recently proposed {\em Basis Learn} (\algname{BL}) \citep{qian2021basis} algorithm if we specialize the 3PC mechanism to {\em rotation compression} (see Appendix \ref{apx:rot-comp}).
	
	\subsection{Flexible Hessian communication and computation schemes}
	The key novelty \algname{Newton-3PC} brings is the flexibility of options to handle costly local Hessian matrices both in terms of computation and communication.
	
	Due to the adaptive nature of CLAG mechanism \eqref{def:3PC__CLAG}, \algname{Newton-CLAG} method {\em does not send any information} about the local Hessian $\nabla^2 f_i(x^{k+1})$ if it is sufficiently close to previous Hessian estimate $\mH_i^k$, namely $$\|\nabla^2 f_i(x^{k+1}) - \mH_i^k\|^2_{\rm F} \le \zeta\|\nabla^2 f_i(x^{k+1}) - \nabla^2 f_i(x^{k})\|^2_{\rm F}$$ with some positive trigger $\zeta>0$. In other words, the server {\em reuses} local Hessian estimate $\mH_i^k$ while there is no essential discrepancy between locally computed Hessian $\nabla^2 f_i(x^{k+1})$. Once a sufficient change is detected by the device, only the compressed difference $\cC(\nabla^2 f_i(x^{k+1}) - \mH_i^k)$ is communicated since the server knows $\mH_i^k$. By adjusting the trigger $\zeta$, we can control the frequency of Hessian communication in an adaptive manner. Together with adaptive thresholding operator \eqref{def:AT} as a contractive compressor, CLAG is a doubly adaptive communication strategy that makes \algname{Newton-CLAG} highly efficient in terms of communication complexity.
	
	Interestingly enough, we can design such 3PC compressors that can reduce computational costs too. To achieve this, we consider CBAG mechanism \eqref{def:3PC__CBAG} which replaces the adaptive switching condition of CLAG by probabilistic switching according to Bernoulli random variable. Due to the probabilistic nature of CBAG mechanism, \algname{Newton-CBAG} method requires devices to compute local Hessian $\nabla^2 f_i(x^{k+1})$ and communicate compressed difference $\cC(\nabla^2 f_i(x^{k+1}) - \mH_i^k)$ {\em only} with probability $p\in(0,1]$. Otherwise, the whole Hessian computation and communication is {\em skipped}.

	\subsection{Options for updating the global model}
	We adopt the same two update rules for the global model as was design in \algname{FedNL}. If the server knows the strong convexity parameter $\mu>0$ (see Assumption \ref{asm:main}), then the global Hessian estimate $\mH^k$ is projected onto the set $\left\{\mM\in\R^{d\times d} \colon \mM^\top = \mM,\; \mu\mI \preceq \mM \right\}$ to get the projected estimate $[\mH^k]_{\mu}$. 
	Alternatively, all devices additionally compute and send compression errors $l_i^k \eqdef \|\mH_i^k - \nabla^2 f_i(x^k)\|_{\rm F}$ (extra float from each device in terms of communication complexity) to the server, which then formulates the regularized estimate $\mH^k + l^k\mI$ by adding the average error $l^k = \frac{1}{n}\sum_{i=1}^n l_i^k$ to the global Hessian estimate $\mH^k$. 
	
	\subsection{Local convergence theory}
	To derive theoretical guarantees, we consider the standard assumption that the global objective is strongly convex and local Hessians are Lipschitz continuous.
	
	\begin{assumption}\label{asm:main}
		The average loss $f$ is $\mu$-strongly convex, and all local losses $f_i(x)$ have Lipschitz continuous Hessians. Let $\HS$, $\HF$ and $\HM$ be the Lipschitz constants with respect to three different matrix norms: spectral, Frobenius and infinity norms, respectively. Formally,  we require 
		\begin{eqnarray*}
		\|\nabla^2 f_i(x) - \nabla^2 f_i(y)\| &\leq  \HS \|x-y\|, \\
		\|\nabla^2 f_i(x) - \nabla^2 f_i(y)\|_{\rm F}  &\leq  \HF \|x-y\|, \\
		\max_{j,l}| (\nabla^2 f_i(x) - \nabla^2 f_i(y))_{jl}|  &\leq  \HM \|x-y\|
		\end{eqnarray*}
		to hold for all $i\in[n]$ and $x,y\in\R^d$.
	\end{assumption}
	
	Define constants $C$ and $D$ depending on which option is used for global model update, namely $C = 2, D = \HS^2$ if {\em Option 1} is used, and $C = 8, D = (\HS+2\HF)^2$ if {\em Option 2} is used. We prove three local rates for \algname{Newton-3PC}: for the squared distance to the solution $\|x^k-x^*\|^2$, and for the Lyapunov function
	\begin{equation*}
		\squeeze
		\Phi^k \eqdef {\cal H}^k + 6\(\frac{1}{A} + 3AB\)\HF^2 \|x^k-x^*\|^2, \quad\text{where}\quad {\cal H}^k \eqdef \frac{1}{n} \sum\limits_{i=1}^n \|\mH_i^k - \nabla^2 f_i(x^*)\|^2_{\rm F}.
	\end{equation*}
	
	We present our theoretical results for local convergence with two stages. For the first stage, we derive convergence rates using specific {\em locality conditions} for model/Hessian estimation error. In the second stage, we prove that these locality conditions are satisfied for different situations.
	
	\begin{theorem}\label{th:NLU}
		Let Assumption \ref{asm:main} hold. Assume $\|x^0-x^*\| \leq \frac{\mu}{\sqrt{2D}}$ and ${\cal H}^k \leq \frac{\mu^2}{4C}$ for all $k\geq 0$. Then, \algname{Newton-3PC} (Algorithm~\ref{alg:N3PC}) with any 3PC mechanism converges with the following rates:
		\begin{equation}\label{rate:local-linear-iter}
			\squeeze
			\|x^k - x^*\|^2 \leq  \frac{1}{2^k}   \|x^0-x^*\|^2, \qquad \E\ll\Phi^k\rr \leq  \left(  1 - \min\left\{  \frac{A}{2}, \frac{1}{3}  \right\}  \right)^k\Phi^0,
		\end{equation}
		\begin{equation}\label{rate:local-superlinear-iter}
			\squeeze
			\E\ll\frac{\|x^{k+1}-x^*\|^2}{\|x^k-x^*\|^2}\rr \leq  \left(  1 - \min\left\{  \frac{A}{2}, \frac{1}{3}  \right\}  \right)^k \left(  C + \frac{AD}{12(1+3AB)\HF^2}  \right) \frac{\Phi^0}{\mu^2}. 
		\end{equation}    
	\end{theorem}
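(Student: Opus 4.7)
The plan is to combine two one-step inequalities — a Newton-type bound for $\|x^{k+1}-x^*\|^2$ and a 3PC-type recursion for ${\cal H}^{k+1}$ — into a Lyapunov contraction, and then read off all three rates. For the Newton step under Option 1, I would write
\begin{equation*}
x^{k+1} - x^* = [\mH^k]_\mu^{-1}\int_0^1 \bigl([\mH^k]_\mu - \nabla^2 f(x^* + t(x^k-x^*))\bigr)(x^k-x^*)\,dt,
\end{equation*}
split the integrand as $([\mH^k]_\mu - \nabla^2 f(x^*)) + (\nabla^2 f(x^*) - \nabla^2 f(x^*+t(x^k-x^*)))$, and bound each piece separately. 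The projection $[\cdot]_\mu$ is non-expansive toward $\nabla^2 f(x^*)$, so the first piece is controlled by $\|\mH^k-\nabla^2 f(x^*)\| \le \sqrt{{\cal H}^k}$ (Jensen, plus spectral $\le$ Frobenius, plus convexity of $\|\cdot\|_F^2$), while Lipschitzness of $\nabla^2 f$ handles the second piece. Using $\|[\mH^k]_\mu^{-1}\| \le 1/\mu$ and a Young inequality then yields the key bound
\begin{equation*}
\|x^{k+1}-x^*\|^2 \le \frac{1}{\mu^2}\bigl(C\cdot {\cal H}^k + D\cdot\|x^k-x^*\|^2\bigr)\|x^k-x^*\|^2.
\end{equation*}
Option 2 follows the same route with $[\mH^k+l^k\mI]^{-1}$; the extra $l^k\mI$ regularizer is absorbed via the bound $l^k \le \sqrt{{\cal H}^k} + \HF\|x^k-x^*\|$ obtained by adding/subtracting $\nabla^2 f_i(x^*)$ inside each $l_i^k$ and applying Cauchy–Schwarz, which inflates the constants to $C=8$, $D=(\HS+2\HF)^2$. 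The locality hypotheses then imply $\|x^{k+1}-x^*\|^2 \le \tfrac12\|x^k-x^*\|^2$, proving the first half of \eqref{rate:local-linear-iter} and automatically preserving the inductive invariant $\|x^{k+1}-x^*\|^2 \le \mu^2/(2D)$.

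For the Hessian-estimate recursion, I would apply \eqref{def:3PC_comp} to each local update $\mH_i^{k+1} = \cC_{\mH_i^k,\nabla^2 f_i(x^k)}(\nabla^2 f_i(x^{k+1}))$ and then perform a double Young expansion $\|a+b\|_F^2 \le (1+s)\|a\|_F^2 + (1+1/s)\|b\|_F^2$ — once on the left to add and subtract $\nabla^2 f_i(x^{k+1})$, once on the right to add and subtract $\nabla^2 f_i(x^k)$ — with $s$ chosen proportional to $A$. Each residual Hessian increment is then converted via $\|\nabla^2 f_i(x)-\nabla^2 f_i(y)\|_F \le \HF\|x-y\|$ into a multiple of $\HF^2\|x^j-x^*\|^2$. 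Averaging over $i\in[n]$, this produces an inequality of the form
\begin{equation*}
\E[{\cal H}^{k+1}] \le (1-\tfrac{A}{2}){\cal H}^k + c_1(A,B)\,\HF^2\|x^k-x^*\|^2 + c_2(A,B)\,\HF^2\,\E\|x^{k+1}-x^*\|^2
\end{equation*}
with constants $c_1,c_2$ that can be computed explicitly and that are what dictates the Lyapunov weight.

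To assemble the Lyapunov contraction, I would add $6(1/A+3AB)\HF^2\E\|x^{k+1}-x^*\|^2$ to the ${\cal H}^{k+1}$ recursion and use the Newton one-step bound from Step 1 — specifically the contraction $\|x^{k+1}-x^*\|^2 \le \tfrac12\|x^k-x^*\|^2$ — to replace the resulting cross term. The coefficient $6(1/A+3AB)$ is picked so that, after this substitution, both the ${\cal H}^k$ part contracts by $1-A/2$ and the $\|x^k-x^*\|^2$ part contracts by at least $1-\min\{A/2,1/3\}$, delivering $\E[\Phi^{k+1}] \le (1-\min\{A/2,1/3\})\Phi^k$ and hence the second bound of \eqref{rate:local-linear-iter}. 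Finally, for \eqref{rate:local-superlinear-iter}, I would return to the Newton inequality of Step 1 and divide through by $\|x^k-x^*\|^2$:
\begin{equation*}
\frac{\|x^{k+1}-x^*\|^2}{\|x^k-x^*\|^2} \le \frac{C\,{\cal H}^k + D\,\|x^k-x^*\|^2}{\mu^2} \le \frac{1}{\mu^2}\!\left(C + \frac{AD}{12(1+3AB)\HF^2}\right)\!\Phi^k,
\end{equation*}
using ${\cal H}^k \le \Phi^k$ and $\|x^k-x^*\|^2 \le \Phi^k/(6(1/A+3AB)\HF^2)$; taking expectations and inserting the geometric decay of $\Phi^k$ concludes.

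The main obstacle I expect is the tight bookkeeping in the Lyapunov assembly: the choice of the Young parameter $s$ in the 3PC expansion and of the Lyapunov weight $6(1/A+3AB)\HF^2$ are interlocked, and one must verify simultaneously that (i) the ${\cal H}^k$ coefficient reaches $1-A/2$, (ii) the $\|x^k-x^*\|^2$ coefficient reaches $1-\min\{A/2,1/3\}$ after folding in the Newton contraction, and (iii) all constants in Step 1 remain compatible with the imposed locality ${\cal H}^k \le \mu^2/(4C)$. Doing this uniformly across both options for the global-model update, and for the generic $(A,B)$ coming from arbitrary 3PC, is the bulk of the technical work.
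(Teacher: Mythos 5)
Your proposal follows essentially the same route as the paper's own proof: the same Newton-step bound $\|x^{k+1}-x^*\|^2 \leq \tfrac{C}{\mu^2}{\cal H}^k\|x^k-x^*\|^2 + \tfrac{D}{2\mu^2}\|x^k-x^*\|^4$ for both options (your integral representation is just a packaged form of the paper's add-and-subtract of $\nabla^2 f(x^*)(x^k-x^*)$, combined with the same non-expansiveness of the $[\cdot]_\mu$ projection, spectral$\leq$Frobenius, and convexity of $\|\cdot\|_F^2$), the same inductive invariant giving the one-step $\tfrac12$-contraction of $\|x^k-x^*\|^2$, the same double-Young expansion of the 3PC inequality yielding the ${\cal H}^{k+1}$ recursion (the paper's Lemma~\ref{lm:threecomp}), the same Lyapunov weighting, and the same divide-by-$\|x^k-x^*\|^2$ step for the superlinear rate. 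The only cosmetic deviations are organizational (e.g., the paper folds $\E\|x^{k+1}-x^*\|^2$ into $\|x^k-x^*\|^2$ already inside Lemma~\ref{lm:threecomp}, while you defer that substitution to the Lyapunov assembly), so I see no gap.
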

	
	Clearly, these rates are independent of the condition number of the problem, and the choice of 3PC can control the parameter $A$. Notice that locality conditions here are upper bounds on the initial model error $\|x^0-x^*\|$ and the errors $\cH^k$ for all $k\ge0$. It turns out that the latter condition may not be guaranteed in general since it depends on the structure of the 3PC mechanism. Below, we show these locality conditions under some assumptions on 3PC, covering practically all compelling cases.
	
	\begin{lemma}[Deterministic 3PC]\label{lm:boundforbiased}
		Let the 3PC compressor in \algname{Newton-3PC} be deterministic. Assume the following initial conditions hold: $$\|x^0 - x^*\| \le e_1 \eqdef \min\left\{ \frac{A\mu}{\sqrt{8(1+3AB)}\HF}, \frac{\mu}{\sqrt{2D}}  \right\} \quad\text{and}\quad \|\mH_i^0 - \nabla^2 f_i(x^*)\|_{\rm F} \leq \frac{\mu}{2\sqrt{C}}.$$
		Then $\|x^k-x^*\| \leq e_1$ and $\|\mH_i^k - \nabla^2 f_i(x^*)\|_{\rm F}  \leq \frac{\mu}{2\sqrt{C}}$ for all $k\geq 0$. 
	\end{lemma}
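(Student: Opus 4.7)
The argument is a straightforward induction on $k$; the base case is exactly the two initial bounds assumed in the statement. Assume both conditions hold at every iteration $0 \le j \le k$; I show they propagate to $j = k+1$. The key point is that we must maintain a bound on each individual error $\|\mH_i^k - \nabla^2 f_i(x^*)\|_{\rm F}$, not just on the aggregated quantity $\cH^k$ that appears in the Lyapunov function of Theorem~\ref{th:NLU}.

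\textbf{Iterate bound.} The per-device inequalities at all previous steps aggregate into $\cH^j \le \mu^2/(4C)$ for $j \le k$, and $e_1 \le \mu/\sqrt{2D}$ together with $\|x^j - x^*\| \le e_1$ secures the other locality hypothesis of Theorem~\ref{th:NLU} along the past trajectory. The one-step descent underlying that theorem then applies at iteration $k$ and yields $\|x^{k+1} - x^*\|^2 \le \tfrac{1}{2}\|x^k - x^*\|^2 \le e_1^2$, which propagates the iterate bound.

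\textbf{Hessian bound.} For a deterministic 3PC, the defining inequality with $\mX = \nabla^2 f_i(x^{k+1})$, $\mY = \nabla^2 f_i(x^k)$, and $\mH = \mH_i^k$ reads
\begin{equation*}
\|\mH_i^{k+1} - \nabla^2 f_i(x^{k+1})\|_{\rm F}^2 \le (1-A)\|\mH_i^k - \nabla^2 f_i(x^k)\|_{\rm F}^2 + B\|\nabla^2 f_i(x^{k+1}) - \nabla^2 f_i(x^k)\|_{\rm F}^2.
\end{equation*}
I convert the errors $\mH_i^j - \nabla^2 f_i(x^j)$ into errors $\mH_i^j - \nabla^2 f_i(x^*)$ via the triangle inequality and $\HF$-Lipschitzness of $\nabla^2 f_i$, and bound $\|\nabla^2 f_i(x^{k+1}) - \nabla^2 f_i(x^k)\|_{\rm F}$ by $\HF\|x^{k+1}-x^k\|$. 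Applying Young's inequality $\|a+b\|^2 \le (1+s)\|a\|^2 + (1+1/s)\|b\|^2$ with parameter $s$ of order $A$ absorbs the cross terms into a recursion of the form
\begin{equation*}
\|\mH_i^{k+1} - \nabla^2 f_i(x^*)\|_{\rm F}^2 \le \left(1 - \tfrac{A}{2}\right)\|\mH_i^k - \nabla^2 f_i(x^*)\|_{\rm F}^2 + \Theta\!\left(\tfrac{1+AB}{A}\right)\HF^2 e_1^2,
\end{equation*}
where the additive term collects contributions proportional to $\|x^k - x^*\|^2$ and $\|x^{k+1} - x^*\|^2$, each bounded by $e_1^2$. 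The specific choice $e_1 \le A\mu/(\sqrt{8(1+3AB)}\,\HF)$ is calibrated so this additive term does not exceed $\tfrac{A}{2}\cdot\tfrac{\mu^2}{4C}$, preserving the invariant $\|\mH_i^{k+1} - \nabla^2 f_i(x^*)\|_{\rm F}^2 \le \mu^2/(4C)$.

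\textbf{Main obstacle.} The delicate step is the Young's-inequality balancing: choosing $s$ so that one retains a clean contraction such as $(1-A/2)$ while keeping the additive coefficient at order $(1+AB)/A$ rather than exploding as $A\to 0$, and simultaneously matching the threshold $\mu/(2\sqrt{C})$ for the given $e_1$. Once these constants are tuned, both conditions propagate in one step and the induction closes.
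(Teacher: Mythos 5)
Your proposal matches the paper's proof almost line for line: the same induction, the same one-step contraction $\|x^{k+1}-x^*\|^2 \le \tfrac12\|x^k-x^*\|^2$ (obtained from the unified bound in Theorem~\ref{th:NLU}'s proof together with the induction hypothesis $\cH^k \le \mu^2/(4C)$ and $\|x^k-x^*\|^2 \le \mu^2/(2D)$), and the same per-device contraction for the Hessian error. The only cosmetic difference is that you re-derive the recursion $\|\mH_i^{k+1} - \nabla^2 f_i(x^*)\|_{\rm F}^2 \le (1-\tfrac{A}{2})\|\mH_i^k - \nabla^2 f_i(x^*)\|_{\rm F}^2 + \tfrac{1+3AB}{A}\HF^2 \|x^k-x^*\|^2$ inline from the 3PC definition, the Lipschitzness of the local Hessians and Young's inequality, whereas the paper simply invokes Lemma~\ref{lm:threecomp} with the expectation dropped for a deterministic compressor.
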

	
	\begin{lemma}[CBAG]\label{lm:boundforcbag}
		Consider CBAG mechanism with only source of randomness from Bernoulli aggregation. Assume $$\|x^0 - x^*\| \le e_2 \eqdef \min\left\{  \frac{(1-\sqrt{1-\alpha})\mu}{4\sqrt{C}\HF}, \frac{\mu}{\sqrt{2D}}  \right\} \quad\text{and}\quad \|\mH_i^0 - \nabla^2 f_i(x^*)\|_{\rm F} \leq \frac{\mu}{2\sqrt{C}}.$$
		Then $\|x^k-x^*\| \leq e_2$ and $\|\mH_i^k - \nabla^2 f_i(x^*)\|_{\rm F}  \leq \frac{\mu}{2\sqrt{C}}$ for all $k\geq 0$. 
	\end{lemma}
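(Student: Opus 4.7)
The plan is to proceed by induction on $k$, showing that both bounds hold along every sample path (not merely in expectation), which is possible because once we condition on the deterministic contractive compressor $\cC$, the only randomness in \algname{Newton-CBAG} is the Bernoulli coin, and both possible outcomes of that coin preserve the invariant. The base case $k=0$ is exactly the hypothesis. For the inductive step, I assume $\|x^j - x^*\|\le e_2$ and $\|\mH_i^j - \nabla^2 f_i(x^*)\|_{\rm F} \le \tfrac{\mu}{2\sqrt{C}}$ for all $i\in[n]$ and all $j\le k$, and establish the same bounds at index $k{+}1$.

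For the iterate bound, the Hessian inequality at step $k$ immediately gives $\cH^k \le \tfrac{\mu^2}{4C}$, while $\|x^k-x^*\|\le e_2 \le \tfrac{\mu}{\sqrt{2D}}$. These are exactly the locality conditions needed to invoke the one-step contraction that underlies Theorem~\ref{th:NLU}, yielding $\|x^{k+1}-x^*\|^2 \le \tfrac12 \|x^k-x^*\|^2 \le e_2^2$, so in particular $\|x^{k+1}-x^*\|\le e_2$ and also $\HF\|x^{k+1}-x^*\| \le \HF e_2 \le \tfrac{(1-\sqrt{1-\alpha})\mu}{4\sqrt{C}}$ by the definition of $e_2$.

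For the Hessian bound I split on the Bernoulli outcome. If the coin fails (probability $1-p$), then $\mH_i^{k+1} = \mH_i^k$ and the bound is inherited verbatim from the inductive hypothesis. If the coin succeeds, then $\mH_i^{k+1} = \mH_i^k + \cC(\nabla^2 f_i(x^{k+1}) - \mH_i^k)$, and I write
\[
\mH_i^{k+1} - \nabla^2 f_i(x^*) = \bigl[\cC(\mM_i) - \mM_i\bigr] + \bigl[\nabla^2 f_i(x^{k+1}) - \nabla^2 f_i(x^*)\bigr],
\]
where $\mM_i \eqdef \nabla^2 f_i(x^{k+1}) - \mH_i^k$. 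The triangle inequality, contractivity \eqref{def:CC}, Lipschitzness of $\nabla^2 f_i$, and a further triangle inequality for $\|\mM_i\|_{\rm F}$ combine to give
\[
\|\mH_i^{k+1} - \nabla^2 f_i(x^*)\|_{\rm F} \le (1+\sqrt{1-\alpha})\,\HF\|x^{k+1}-x^*\| + \sqrt{1-\alpha}\,\|\mH_i^k - \nabla^2 f_i(x^*)\|_{\rm F}.
\]
Substituting the inductive bound $\tfrac{\mu}{2\sqrt{C}}$ for the last term reduces the goal to showing $(1+\sqrt{1-\alpha})\HF\|x^{k+1}-x^*\| \le (1-\sqrt{1-\alpha})\tfrac{\mu}{2\sqrt{C}}$; plugging in the bound $\HF\|x^{k+1}-x^*\|\le \tfrac{(1-\sqrt{1-\alpha})\mu}{4\sqrt{C}}$ from the previous paragraph reduces this further to $\alpha = (1-\sqrt{1-\alpha})(1+\sqrt{1-\alpha}) \le 2(1-\sqrt{1-\alpha})$, which is equivalent to $(1-\sqrt{1-\alpha})^2\ge 0$ and therefore trivially true.

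The main obstacle is the ``Bernoulli-succeeds'' branch: one must juggle the contractive error, the Hessian Lipschitz error, and the inductive Hessian bound simultaneously, and then recognize the algebraic identity $\alpha=(1-\sqrt{1-\alpha})(1+\sqrt{1-\alpha})$ that makes the specific constant $\tfrac{1-\sqrt{1-\alpha}}{4\sqrt{C}\HF}$ appearing in the definition of $e_2$ tight for the induction to close. Beyond that, the argument is routine: the ``Bernoulli-fails'' branch is immediate, the iterate contraction is a direct citation of the contraction lemma underlying Theorem~\ref{th:NLU}, and because the randomness only selects between two deterministic updates both of which maintain the invariant, the resulting bounds hold pathwise rather than merely in expectation.
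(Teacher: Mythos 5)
Your proof is correct and follows essentially the same route as the paper's: an induction where the iterate contraction comes from the one-step bound underlying Theorem~\ref{th:NLU}, and the Hessian bound is established pathwise by case-splitting on the Bernoulli coin and applying the triangle inequality together with contractivity. The only cosmetic difference is that the paper bounds $\sqrt{1-\alpha}\le 1$ early (collapsing the $(1+\sqrt{1-\alpha})$ coefficient to $2$ immediately), whereas you keep the sharper coefficient and close the induction via the equivalent inequality $\alpha\le 2(1-\sqrt{1-\alpha})$ at the end.
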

	
	\section{Extension to Bidirectional Compression (\algname{Newton-3PC-BC})}\label{sec:N3PC-BC}
	
	In this section, we consider the setup where both directions of communication between devices and the central server are bottleneck. For this setup, we propose \algname{Newton-3PC-BC} (Algorithm \ref{alg:N3PC-BC}) which additionally applies Bernoulli aggregation for gradients (worker to server direction) and another 3PC mechanism for the global model (server to worker direction) employed the master.
	
	Overall, the method integrates three independent communication schemes: workers' 3PC (denoted by $\cC^W$) for local Hessian matrices $\nabla^2 f_i(z^{k+1})$, master's 3PC (denoted by $\cC^M$) for the global model $x^{k+1}$ and Bernoulli aggregation with probability $p\in(0,1]$ for local gradients $\nabla f_i(z^{k+1})$. Because of these three mechanisms, the method maintains three sequences of model parameters $\{x^k,w^k,z^k\}_{k\ge0}$. Notice that, Bernoulli aggregation for local gradients is a special case of CBAG (Example \ref{ex:CBAG}), which allows to skip the computation of local gradients with probability $(1-p)$. However, this reduction in gradient computation necessitates algorithmic modification in order to guarantee convergence. Specifically, we design gradient estimator $g^{k+1}$ to be the full gradient $\nabla f(z^{k+1})$ if devices compute local gradients (i.e., $\xi=1$). Otherwise, when gradient computation is skipped (i.e., $\xi=0$), we estimate the missing gradient using Hessian estimate $\mH^{k+1}$ and stale gradient $\nabla f(w^{k+1})$, namely we set $$g^{k+1} = [\mH^{k+1}]_{\mu}(z^{k+1}-w^{k+1}) + \nabla f(w^{k+1}).$$

	\begin{algorithm}[h]
		\caption{\algname{Newton-{\color{blue}3PC-BC}} (Newton's method with {\color{blue}3PC} and {\color{blue}Bidirectional Compression)}}
		\label{alg:N3PC-BC}
		\begin{algorithmic}[1]
			\STATE \textbf{Parameters:} {\color{blue} Workers' ($\cC^W$) and Master's ($\cC^M$) 3PC, gradient probability $p\in(0,1]$}
			\STATE \textbf{Input:} $x^0\;{\color{blue}=w^0=z^0}\in\R^d$;  $\mH_i^0 \in \R^{d\times d}$, and $\mH^0 \eqdef \tfrac{1}{n}\sum_{i=1}^n \mH_i^0$; {\color{blue}$\xi^0 = 1$}; $g^0 = \nabla f(z^0)$
			\STATE \textbf{on} server
			\STATE ~~ Update the global model to $x^{k+1} = z^k - [\mH^k]_{\mu}^{-1} g^k$
			\STATE ~~ Apply {\color{blue} Master's 3PC} and send model estimate $z^{k+1} = {\color{blue}\cC^M_{z^k, x^k} (x^{k+1})}$ to all devices $i\in[n]$
			\STATE ~~ {\color{blue}Sample $\xi^{k+1} \sim \text{Bernoulli}(p)$ and send to all devices $i\in[n]$}
			\STATE \textbf{for} each device $i = 1, \dots, n$ in parallel \textbf{do}
			\STATE ~~ Get $z^{k+1} = {\color{blue}\cC^M_{z^k, x^k} (x^{k+1})}$ and ${\color{blue}\xi^{k+1}}$ from the server
			\STATE ~~ \textbf{if} {\color{blue}$\xi^{k+1} = 1$}
			\STATE ~~~~ $ w^{k+1} = z^{k+1}$, compute local gradient $\nabla f_i(z^{k+1})$ and send to the server
			\STATE ~~ \textbf{if} {\color{blue}$\xi^{k+1} = 0$}
			\STATE ~~~~ $w^{k+1} = w^k$
			\STATE ~~ Apply {\color{blue}Worker's 3PC} and update local Hessian estimator to $\mH_i^{k+1} = {\color{blue}\cC^W_{\mH_i^k, \nabla^2 f_i(z^{k})} (\nabla^2 f_i(z^{k+1}))}$
			\STATE \textbf{end for}
			\STATE \textbf{on} server
			\STATE ~~ Aggregate $\nabla f(z^{k+1}) = \frac{1}{n}\sum_{i=1}^n \nabla f_i(z^{k+1}),\, \mH^{k+1} = \frac{1}{n} \sum_{i=1}^n \mH_i^k$
			\STATE ~~ \textbf{if} {\color{blue}$\xi^{k+1} = 1$}
			\STATE ~~~~ $w^{k+1} = z^{k+1},\; g^{k+1} = \nabla f(z^{k+1})$
			\STATE ~~ \textbf{if} {\color{blue}$\xi^{k+1} = 0$}
			\STATE ~~~~ $w^{k+1} = w^k,\; g^{k+1} = [\mH^{k+1}]_{\mu}(z^{k+1}-w^{k+1}) + \nabla f(w^{k+1})$ 
		\end{algorithmic}
	\end{algorithm}
	
	Similar to the previous result, we present convergence rates and guarantees for locality separately. Let $A_M (A_W),\, B_M (B_W)$ be parameters of the master's (workers') 3PC mechanisms. Define constants $$C_M \eqdef  \frac{4}{A_M} + 1 + \frac{5B_M}{2},\, C_W \eqdef  \frac{4}{A_W} + 1 + \frac{5B_W}{2}$$ and Lyapunov function $$\Phi_1^k \eqdef \|z^k-x^*\|^2 + C_M\|x^k-x^*\|^2 + \frac{A_M(1-p)}{4p} \|w^k-x^*\|^2.$$
	
	\begin{theorem}\label{th:3PCBL1}
		Let Assumption \ref{asm:main} holds. Assume $\|z^k-x^*\|^2 \leq \frac{A_M \mu^2}{24C_M \HS^2}$ and $\cH^k \leq \frac{A_M \mu^2}{96C_M}$ for all $k\geq 0$. Then, \algname{Newton-3PC-BC} (Algorithm~\ref{alg:N3PC-BC}) converges with the following linear rate:
		\begin{equation}\label{rate:local-linear-BC}
			\squeeze
			\mathbb{E}[\Phi_1^k] \leq \left(  1 - \min\left\{  \frac{A_{M}}{4}, \frac{3p}{8}  \right\}  \right)^k \Phi_1^0.
		\end{equation}
	\end{theorem}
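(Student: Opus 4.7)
My plan is to bound the conditional expectation of $\Phi_1^{k+1}$ given the history by $(1-\rho)\Phi_1^k$ with $\rho=\min\{A_M/4,\,3p/8\}$, treating the three terms of $\Phi_1^{k+1}$ in sequence. The cornerstone is a deterministic ``Newton-type'' bound on $\|x^{k+1}-x^*\|^2$. The key observation is that for both $\xi^k=0$ and $\xi^k=1$ the gradient surrogate can be written uniformly as $g^k=[\mH^k]_\mu(z^k-w^k)+\nabla f(w^k)$: when $\xi^k=1$ we have $w^k=z^k$, so the first summand vanishes. Substituting into $x^{k+1}=z^k-[\mH^k]_\mu^{-1}g^k$, the $z^k$ terms telescope and yield
\[
x^{k+1}-x^*=[\mH^k]_\mu^{-1}\bigl([\mH^k]_\mu(w^k-x^*)-(\nabla f(w^k)-\nabla f(x^*))\bigr).
\]
Using $[\mH^k]_\mu^{-1}\preceq\mu^{-1}\mI$, the integral representation of the gradient difference with Lipschitzness of the Hessians (which contributes $\tfrac{\HS}{2}\|w^k-x^*\|$), and the Jensen-type estimate $\|\mH^k-\nabla^2 f(x^*)\|_{\rm F}^2\le\cH^k$, I would obtain
\[
\|x^{k+1}-x^*\|^2\le\tfrac{2\cH^k}{\mu^2}\|w^k-x^*\|^2+\tfrac{\HS^2}{2\mu^2}\|w^k-x^*\|^4.
\]
The two locality assumptions (which also bound $\|w^k-x^*\|$ since $w^k$ equals some past $z^j$ or $z^0$) then collapse this into the compact estimate $C_M\|x^{k+1}-x^*\|^2\le\tfrac{A_M}{24}\|w^k-x^*\|^2$.

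Next I would handle the two remaining terms. The Bernoulli part is immediate: independence of $\xi^{k+1}$ from $\cC^M$ yields $\E\|w^{k+1}-x^*\|^2=p\,\E\|z^{k+1}-x^*\|^2+(1-p)\|w^k-x^*\|^2$, so the Lyapunov's $w$-block contributes an extra $\tfrac{A_M(1-p)}{4}\E\|z^{k+1}-x^*\|^2$ plus a residual $\tfrac{A_M(1-p)^2}{4p}\|w^k-x^*\|^2$, and the latter already contracts by $1-p\le 1-3p/8$, which is exactly why the weight $\tfrac{A_M(1-p)}{4p}$ was chosen. For the $z$-block, I would apply Young's inequality $\|z^{k+1}-x^*\|^2\le(1+\gamma)\|z^{k+1}-x^{k+1}\|^2+(1+1/\gamma)\|x^{k+1}-x^*\|^2$, take expectation, invoke the 3PC property of $\cC^M$ to get a bound in terms of $\|z^k-x^k\|^2$ and $\|x^{k+1}-x^k\|^2$, and then expand $\|z^k-x^k\|^2\le(1+\eta)\|z^k-x^*\|^2+(1+1/\eta)\|x^k-x^*\|^2$ together with $\|x^{k+1}-x^k\|^2\le 2\|x^{k+1}-x^*\|^2+2\|x^k-x^*\|^2$, routing the residual $\|x^{k+1}-x^*\|^2$ terms back into the $w$-bucket via Step~1. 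Choosing $\gamma,\eta=\Theta(A_M)$, the coefficient of $\|z^k-x^*\|^2$ becomes at most $1-A_M/4$, the coefficient of $\|x^k-x^*\|^2$ is bounded by $(1-A_M/4)C_M$ thanks to the $\tfrac{4}{A_M}+1+\tfrac{5B_M}{2}$ structure of $C_M$, and the coefficient on $\|w^k-x^*\|^2$ is bounded by $(1-\rho)\tfrac{A_M(1-p)}{4p}$.

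The delicate step will be the bookkeeping in the $z$-block argument. With three simultaneous sources of decay --- the $A_M$-contraction of $\cC^M$, the Bernoulli probability $p$, and the quadratic Newton decay of Step~1 --- the Young parameters $\gamma,\eta$ must be tuned to meet three interlocking constraints at once: a contraction rate of at least $A_M/4$ on $\|z^k-x^*\|^2$, control of the $(1+1/\eta)\sim 1/A_M$ cross-term on $\|x^k-x^*\|^2$ within the $\tfrac{4}{A_M}$ slack of $C_M$, and control of the Step~1 residual added to the $w$-block. The precise definitions of $C_M$ and of the Lyapunov weight $\tfrac{A_M(1-p)}{4p}$ are calibrated so that a single choice $(\gamma,\eta)=\Theta(A_M)$ satisfies all three constraints uniformly in $p\in(0,1]$ and $A_M\in(0,1]$.
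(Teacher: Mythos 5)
Your plan is essentially the paper's proof, and the overall structure is correct. The paper's derivation proceeds identically: a Newton-type bound on $\|x^{k+1}-x^*\|^2$, a Young-plus-3PC recursion for $\E\|z^{k+1}-x^*\|^2$ (with Young parameters $\alpha=A_M/4$ and $\beta=A_M/(4(1-3A_M/4))$, exactly your ``$\Theta(A_M)$''), the tower-trivial Bernoulli recursion $\E\|w^{k+1}-x^*\|^2 = p\,\E\|z^{k+1}-x^*\|^2 + (1-p)\|w^k-x^*\|^2$, and the same Lyapunov weights. Your one genuine streamlining is the observation that $g^k = [\mH^k]_\mu(z^k-w^k)+\nabla f(w^k)$ holds \emph{uniformly} in $\xi^k$ (since $w^k=z^k$ when $\xi^k=1$), collapsing the $\xi^k=0$ and $\xi^k=1$ cases into a single identity $x^{k+1}-x^*=[\mH^k]_\mu^{-1}\bigl([\mH^k]_\mu(w^k-x^*)-(\nabla f(w^k)-\nabla f(x^*))\bigr)$. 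The paper instead case-splits on $\xi^k$ and recombines by averaging with weights $p$ and $1-p$; the two routes give the same inequality, but yours avoids redundant algebra.

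One place where your constant bookkeeping is too optimistic and should be flagged: you claim the coefficient of $\|x^k-x^*\|^2$ comes out to at most $(1-A_M/4)C_M$. In the paper's own calculation the $z$-block alone contributes $\bigl(\tfrac{4}{A_M}-3+\tfrac{5B_M}{2}\bigr)\|x^k-x^*\|^2 \le C_M\|x^k-x^*\|^2$, and after the Bernoulli mixing this gets multiplied by $\bigl(1+\tfrac{A_M(1-p)}{4}\bigr)$, giving $\bigl(1+\tfrac{A_M(1-p)}{4}\bigr)C_M > C_M$. The Lyapunov weight $C_M$ on $\|x^k-x^*\|^2$ therefore does \emph{not} contract; the paper's proof in fact silently uses an effective weight $2C_M$ for the $\|x^k-x^*\|^2$ term (notice the appearance of $2C_M\|x^{k+1}-x^*\|^2$ throughout the displayed recursions, despite the statement of $\Phi_1^k$ writing $C_M$ --- evidently a typo). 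With the $2C_M$ weight the final comparison requires $\bigl(1+\tfrac{A_M(1-p)}{4}\bigr)C_M \le (1-\rho)\cdot 2C_M$, which holds since the left side is at most $\tfrac{5}{4}C_M$ and $2(1-\rho)\ge\tfrac{5}{4}$. So when you carry out your sketch, you should either double the Lyapunov weight on the $x$-block as the paper implicitly does, or allocate the slack differently; otherwise the closing step in the Lyapunov inequality will not go through.
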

	
	Note that the above linear rate for $\Phi_1^k$ does not depend on the conditioning of the problem and implies linear rates for all three sequences $\{x^k,w^k,z^k\}$. Next we prove locality conditions used in the theorem for two cases: for non-random 3PC schemes and for schemes that preserve certain convex combination condition. It can be seen easily that random sparsification fits into the second case.

	\begin{lemma}[Deterministic 3PC]\label{lm:nbor-N3PCBC-det}
		Let Assumption \ref{asm:main} holds. Let $\cC^M$ and $\cC^W$ be deterministic. Assume $$\|x^0 - x^*\|^2 \leq \frac{11A_M}{24C_M} e_3^2 \eqdef \frac{11A_M}{24C_M}\min\left\{  \frac{A_M \mu^2}{24C_M \HS^2}, \frac{A_WA_M \mu^2}{384C_WC_M\HF^2}  \right\} \quad\text{and}\quad \cH^0 \leq \frac{A_M \mu^2}{96C_M}.$$ Then $\|x^k-x^*\|^2 \leq \frac{11A_M}{24C_M} e_3^2$, $\|z^k - x^*\|^2 \leq e_3^2$ and $\cH^k \leq  \frac{A_M \mu^2}{96C_M}$ for all $k\geq 0$.
	\end{lemma}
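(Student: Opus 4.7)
The plan is a simultaneous strong induction on $k$ for the three invariants. The base case $k=0$ follows by inspection: since $z^0 = w^0 = x^0$ and $\tfrac{11A_M}{24C_M}\leq 1$ (using $A_M\leq 1 \leq C_M$ from the definition of $C_M$), the hypothesis $\|x^0-x^*\|^2\leq \tfrac{11A_M}{24C_M}e_3^2$ immediately gives $\|z^0-x^*\|^2\leq e_3^2$, and $\cH^0 \leq \tfrac{A_M\mu^2}{96C_M}$ is stated as a hypothesis.

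For the inductive step, suppose the three invariants hold at indices $0,\dots,k$. Then the standing locality conditions of Theorem~\ref{th:3PCBL1} are satisfied through step $k$, so the single-step Lyapunov descent inequality underlying its proof applies; because $\cC^M$ and $\cC^W$ are deterministic, this inequality holds surely rather than only in expectation. Telescoping it yields
\[
\Phi_1^{k+1}\;\leq\; \Phi_1^0 \;=\; \Bigl(1 + C_M + \tfrac{A_M(1-p)}{4p}\Bigr)\|x^0-x^*\|^2,
\]
where we used $z^0 = w^0 = x^0$. Extracting the non-negative summands of $\Phi_1^{k+1}$ gives $\|z^{k+1}-x^*\|^2 \leq \Phi_1^{k+1}$ and $C_M\|x^{k+1}-x^*\|^2 \leq \Phi_1^{k+1}$. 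The constant $\tfrac{11A_M}{24C_M}$ in the initial bound on $\|x^0-x^*\|^2$ is precisely calibrated so that these two bounds propagate to $\|z^{k+1}-x^*\|^2 \leq e_3^2$ and $\|x^{k+1}-x^*\|^2 \leq \tfrac{11A_M}{24C_M}e_3^2$.

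To close the induction I need the Hessian bound $\cH^{k+1} \leq \tfrac{A_M\mu^2}{96C_M}$. The idea is to apply the defining 3PC inequality~\eqref{def:3PC_comp} at each device with $\mH = \mH_i^k$, $\mY = \nabla^2 f_i(z^k)$, $\mX = \nabla^2 f_i(z^{k+1})$ and average, producing a contraction for the moving-reference error $\tilde\cH^{k+1}\eqdef \tfrac{1}{n}\sum_i\|\mH_i^{k+1}-\nabla^2 f_i(z^{k+1})\|_{\rm F}^2$ whose drift term, via Assumption~\ref{asm:main}, is bounded by $B_W\HF^2\|z^{k+1}-z^k\|^2$. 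A Young-style triangle inequality then converts $\tilde\cH^{k+1}$ back to $\cH^{k+1}$ at the cost of a term proportional to $\HF^2\|z^{k+1}-x^*\|^2$; invoking the just-established $\|z^{k+1}-x^*\|^2\leq e_3^2$ together with the second branch $e_3^2 \leq \tfrac{A_WA_M\mu^2}{384C_WC_M\HF^2}$ finishes the inductive step.

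The main obstacle is this last conversion: the 3PC inequality naturally tracks error relative to the moving reference $\nabla^2 f_i(z^k)$, whereas $\cH^k$ is measured against the fixed target $\nabla^2 f_i(x^*)$. Choosing the Young parameters so that the $(1-A_W)$ contraction in the recursion for $\tilde\cH^k$ absorbs the change-of-reference drift, while the residual Lipschitz-Hessian term is dominated by the second component of $e_3^2$, is the step that forces the specific numerical constants $11/24$, $96$, and $384$ appearing in the statement.
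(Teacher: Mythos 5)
Your proposal differs from the paper's and, as written, has two genuine gaps.

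\textbf{The Lyapunov inequality is not pointwise.} You claim that ``because $\cC^M$ and $\cC^W$ are deterministic, this inequality holds surely rather than only in expectation.'' This is false: the Bernoulli aggregation $\xi^{k+1}\sim\text{Bernoulli}(p)$ for the gradients is an independent source of randomness that survives even when both 3PC operators are deterministic. The one-step inequality $\mathbb{E}_k[\Phi_1^{k+1}]\leq(1-\rho)\Phi_1^k$ from Theorem~\ref{th:3PCBL1} takes expectation over $\xi^k$ (this is exactly where (\ref{eq:xk+1-BL1}) mixes the $\xi^k=1$ and $\xi^k=0$ branches with weights $p$ and $1-p$). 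Telescoping gives only $\mathbb{E}[\Phi_1^{k+1}]\leq\Phi_1^0$, which says nothing about the pointwise bounds the lemma needs. The paper avoids this entirely by bounding $\|x^{K+1}-x^*\|^2$ separately in each branch $\xi^K\in\{0,1\}$ via (\ref{eq:22-BL1}) and (\ref{eq:33-BL1}), obtaining the same pointwise bound $\frac{A_M}{24C_M}\|z^K-x^*\|^2$ in both cases; no Lyapunov argument is used.

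\textbf{Even granting a pointwise Lyapunov telescope, the extraction step does not close.} You write $\Phi_1^0=\bigl(1+C_M+\frac{A_M(1-p)}{4p}\bigr)\|x^0-x^*\|^2$ and assert the constant $\frac{11A_M}{24C_M}$ is calibrated so that $\|z^{k+1}-x^*\|^2\leq\Phi_1^{k+1}\leq\Phi_1^0\leq e_3^2$. But the factor $\frac{A_M(1-p)}{4p}$ is unbounded as $p\to 0$, while the constant $\frac{11A_M}{24C_M}$ in the hypothesis does not depend on $p$. So $\Phi_1^0\leq e_3^2$ fails for small $p$, and the route through the Lyapunov function cannot deliver the claimed bounds for arbitrary $p\in(0,1]$. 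The $\frac{11}{24}$ is in fact calibrated against the direct recursion (\ref{eq:zk+1nbor-3PCBL1}): with $\|x^{K+1}-x^*\|^2\leq\frac{A_M}{24C_M}\|z^K-x^*\|^2$ and $\|x^K-x^*\|^2\leq\frac{11A_M}{24C_M}e_3^2$, the coefficients $1-\frac{A_M}{2}+\frac{A_M}{24}+\frac{11A_M}{24}=1$ recombine to give $\|z^{K+1}-x^*\|^2\leq e_3^2$. Your $\cH^{k+1}$ argument via (\ref{def:3PC_comp}) plus Lipschitz continuity and a Young split matches the paper's (\ref{eq:Hk+1nbor-3PCBL1}) in spirit, but it still needs the pointwise bound on $\|z^{k+1}-x^*\|^2$, which the Lyapunov route has not established.
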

	\begin{lemma}[Random sparsification]\label{lm:nbor-N3PCBC-conv}
		Let Assumption \ref{asm:main} holds. Assume $(z^k)_j$ is a convex combination of $\{(x^t)_j\}_{t=0}^k$, and $(\mH_i^k)_{jl}$ is a convex combination of $\{  (\nabla^2 f_i(z^k))_{jl}  \}_{t=0}^k$ for all $i\in [n]$, $j,l \in [d]$, and $k\geq 0$. If $$\|x^0-x^*\|^2 \leq e_4^2 \eqdef \min\left\{ \frac{\mu^2}{d^2 \HS^2}, \frac{A_M \mu^2}{24d C_M \HS^2}, \frac{A_M\mu^2}{96d^3 C_M \HM^2}, \frac{\mu^2}{4d^4 \HM^2} \right\},$$ then $\|z^k-x^*\|^2 \leq d e_4^2$ and $\cH^k \leq \min \{  \frac{A_M\mu^2}{96C_M}, \frac{\mu^2}{4d}  \}$ for all $k\geq 0$. 
	\end{lemma}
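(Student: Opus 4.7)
The plan is to run a joint induction on $k$ that maintains four bounds simultaneously: $\|x^k - x^*\|^2 \le e_4^2$, $\|z^k - x^*\|^2 \le d e_4^2$, $\|w^k - x^*\|^2 \le d e_4^2$, and $\cH^k \le \min\{\frac{A_M \mu^2}{96 C_M}, \frac{\mu^2}{4d}\}$. At the base case $k=0$, we have $x^0 = z^0 = w^0$, and the convex-combination hypothesis at $k=0$ degenerates to $\mH_i^0 = \nabla^2 f_i(z^0) = \nabla^2 f_i(x^0)$, so Assumption \ref{asm:main} in the $\infty$-norm gives the entrywise estimate $\cH^0 \le d^2 \HM^2 e_4^2$, which the third and fourth terms of $e_4^2$ push inside the required minimum.

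The easier half of the inductive step handles $z^{k+1}$ and $\cH^{k+1}$. Applying scalar Jensen coordinate-by-coordinate to the convex combination for $(z^{k+1})_j$ together with the trivial bound $|(x^t-x^*)_j|^2 \le \|x^t-x^*\|^2 \le e_4^2$, then summing the $d$ coordinates, yields $\|z^{k+1} - x^*\|^2 \le d e_4^2$. An entrywise Jensen argument for $(\mH_i^{k+1})_{jl}$ combined with the $\infty$-norm Lipschitz estimate $|(\nabla^2 f_i(z^t) - \nabla^2 f_i(x^*))_{jl}| \le \HM \|z^t - x^*\| \le \HM \sqrt{d}\, e_4$, followed by summation over the $d^2$ entries and averaging over workers, gives $\cH^{k+1} \le d^3 \HM^2 e_4^2$, which again sits inside the required minimum by the third and fourth terms of $e_4^2$. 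The bound for $w^{k+1}$ is automatic since $w^{k+1} \in \{z^{k+1}, w^k\}$ and both are controlled by the induction hypothesis.

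The hard step is propagating $\|x^{k+1} - x^*\| \le e_4$. A useful simplification is that the stale-gradient formula telescopes: when $\xi^k = 0$,
\[
x^{k+1} = z^k - [\mH^k]_\mu^{-1}\bigl([\mH^k]_\mu (z^k - w^k) + \nabla f(w^k)\bigr) = w^k - [\mH^k]_\mu^{-1} \nabla f(w^k),
\]
so in both branches the update is a Newton step from $u^k \in \{z^k, w^k\}$ with the current Hessian estimate. Using the integral identity $\nabla f(u^k) - \nabla f(x^*) = \int_0^1 \nabla^2 f(x^* + t(u^k-x^*))\, dt \cdot (u^k - x^*)$, the bound $\|[\mH^k]_\mu^{-1}\| \le 1/\mu$, non-expansiveness of the spectral-cone projection, the Jensen/triangle estimate $\|\mH^k - \nabla^2 f(x^*)\|_{\rm F} \le \sqrt{\cH^k}$, and spectral Lipschitzness of $\nabla^2 f$, one obtains
\[
\|x^{k+1} - x^*\| \le \tfrac{1}{\mu}\bigl(\sqrt{\cH^k} + \tfrac{\HS}{2}\|u^k - x^*\|\bigr)\|u^k - x^*\|.
\]
Plugging in $\sqrt{\cH^k} \le \frac{\mu}{2\sqrt{d}}$ and $\|u^k - x^*\| \le \sqrt{d}\, e_4$ yields $\frac{e_4}{2} + \frac{\HS d e_4^2}{2\mu}$; the first term of $e_4^2$, namely $\frac{\mu^2}{d^2 \HS^2}$, is precisely tuned to force the remainder to be at most $e_4/2$, closing the induction.

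The main obstacle is this last contraction step. The per-coordinate convex-combination structure produces an unavoidable $\sqrt{d}$-blowup from $\|x^k - x^*\|$ to $\|z^k - x^*\|$, which then threads through both the Newton contraction and the Hessian-estimate bound $\cH^k \le d^3 \HM^2 e_4^2$. Tracing this blowup back through the Newton step, through the spectral locality requirement of Theorem \ref{th:3PCBL1} on $\|z^k - x^*\|^2$, and through the two caps on $\cH^k$ is exactly what dictates the four separate $d$-dependent terms appearing in the definition of $e_4^2$.
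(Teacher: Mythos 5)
Your proof is correct and follows essentially the same route as the paper's own: a joint induction propagating bounds on $\|x^k-x^*\|^2$, $\|z^k-x^*\|^2$, $\|w^k-x^*\|^2$, and $\cH^k$, with the entrywise convex-combination hypotheses supplying the $d$-factor blowups and the Newton-step contraction (tuned by the first term of $e_4^2$) closing the loop. Two small remarks: your observation that when $\xi^k=0$ the stale-gradient update telescopes to $x^{k+1}=w^k-[\mH^k]_\mu^{-1}\nabla f(w^k)$ is a clean unification of the two branches that the paper estimates separately, and within the inductive step you should establish $\|x^{K+1}-x^*\|\le e_4$ \emph{before} bounding $\|z^{K+1}-x^*\|^2$, since the latter's convex-combination bound averages over $\{(x^t)_j\}_{t\le K+1}$.
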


	\section{Experiments}\label{sec:exp-main}
	
	In this part, we study the empirical performance of \algname{Newton-3PC} comparing its performance against other second-order methods on logistic regression problems of the form
	\begin{equation}\label{prob:log-reg}
		\squeeze	\min\limits_{x\in\R^d}\left\{f(x)\eqdef \frac{1}{n}\sum\limits_{i=1}^n f_i(x) +\frac{\lambda}{2}\|x\|^2\right\}, \qquad f_i(x) = \frac{1}{m}\sum \limits_{j=1}^m\log\(1+\exp(-b_{ij}a_{ij}^\top x)\),
	\end{equation}
	where $\{a_{ij},b_{ij}\}_{j\in [m]}$ are data points belonging to $i$-th client. We use datasets from LibSVM library \citep{chang2011libsvm} such as \dataname{a1a}, \dataname{a9a}, \dataname{w2a}, \dataname{w8a}, and \dataname{phishing}. Each dataset was shuffled and split into $n$ equal parts. Detailed description of datasets and the splitting is given in the Appendix.
	
	\subsection{Choice of parameters}
	For \algname{DINGO} \citep{DINGO} we use the authors' choice of hyperparameters: $\theta=10^{-4}, \phi=10^{-6}, \rho=10^{-4}$. Backtracking line search selects the largest stepsize from $\{1,2^{-1},\dots,2^{-10}\}.$ The initialization of $\mH_i^0$ for \algname{FedNL} \citep{FedNL2021}, \algname{NL1} \citep{Islamov2021NewtonLearn} is chosen as $\nabla^2f_i(x^0)$ if it is not specified. Finally, for \algname{Fib-IOS} \citep{IOSFabbro2022} we set $d_k^i = 1$. Local Hessians are computed following the partial sums of Fibonacci number and the parameter $\rho=\lambda_{q_{j+1}}$. This is stated in the description of the method. The parameters of backtracking line search for \algname{Fib-IOS} are $\alpha=0.5$ and $\beta=0.9$.
	
	\begin{figure}[t]
		\begin{center}
			\begin{tabular}{cccc}
				\includegraphics[width=0.22\linewidth]{./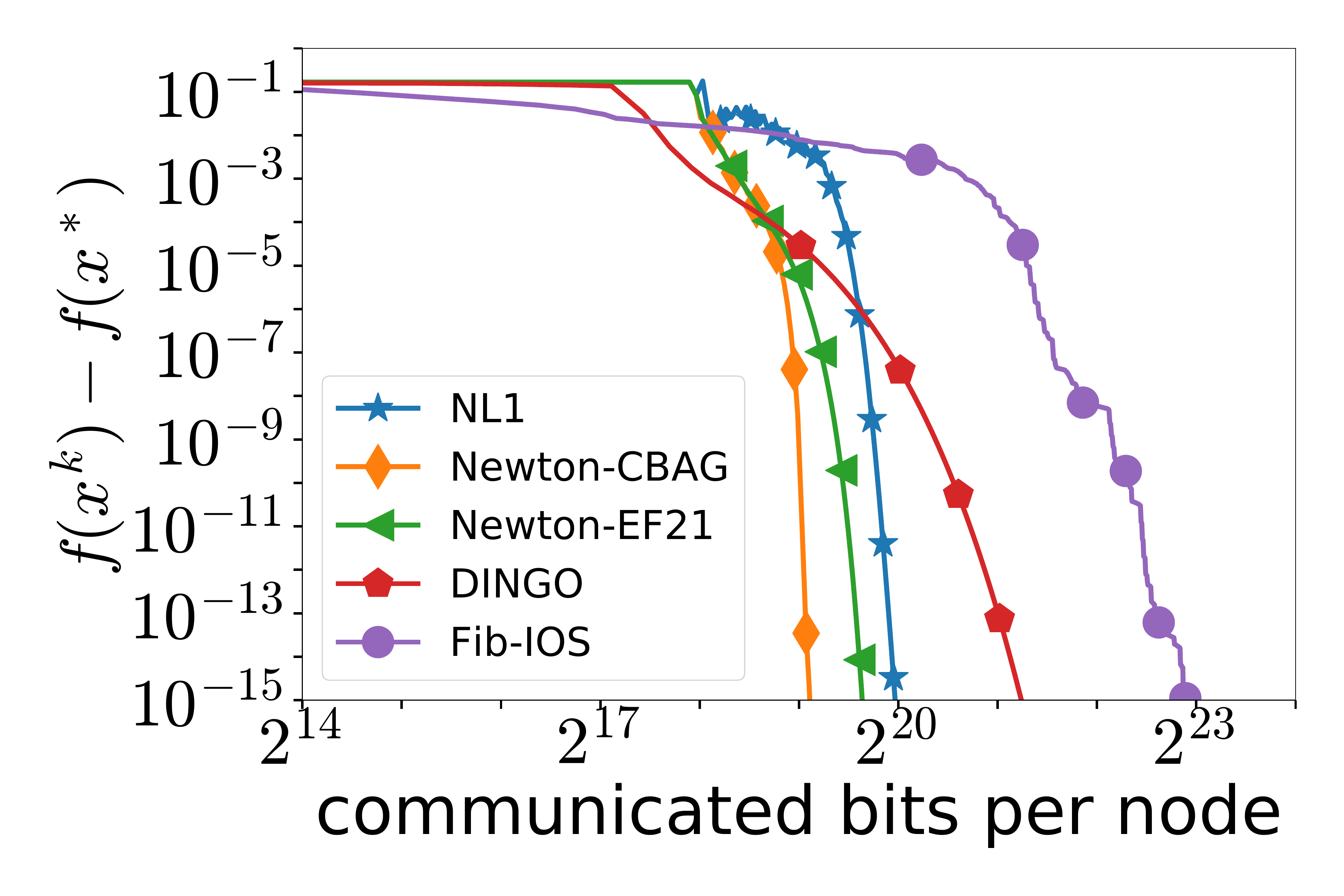} &
				\includegraphics[width=0.22\linewidth]{./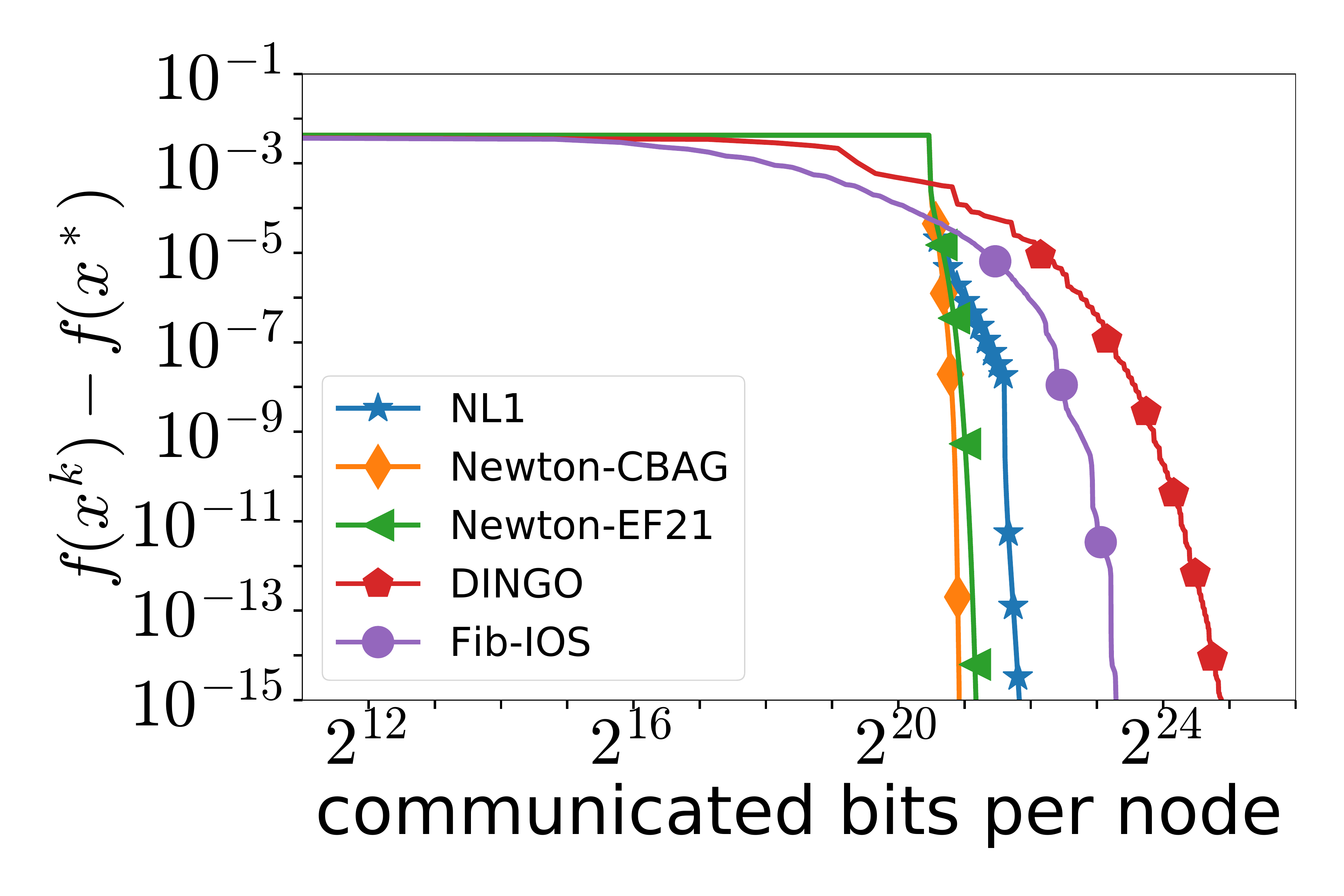} &
				\includegraphics[width=0.22\linewidth]{./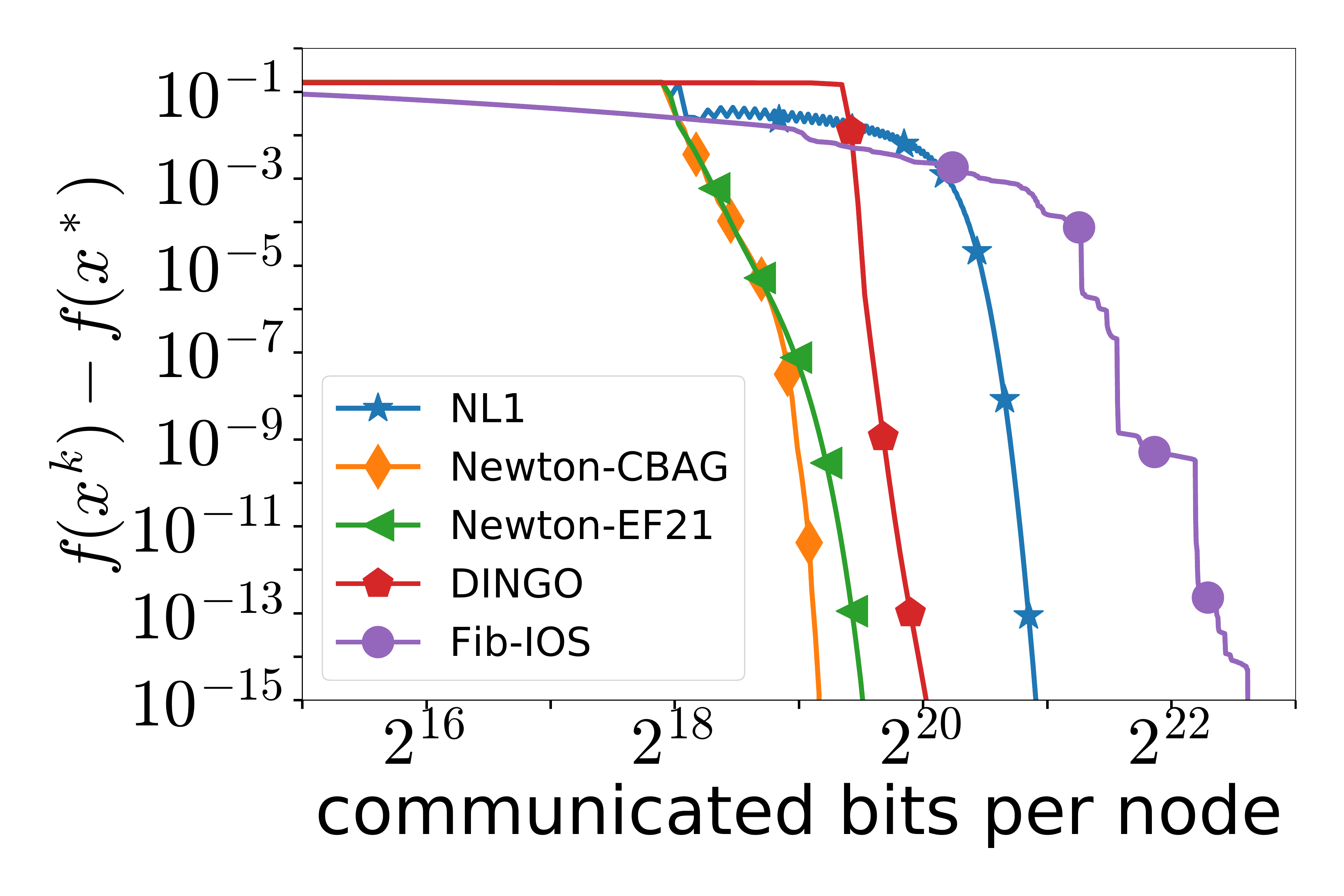} & 
				\includegraphics[width=0.22\linewidth]{./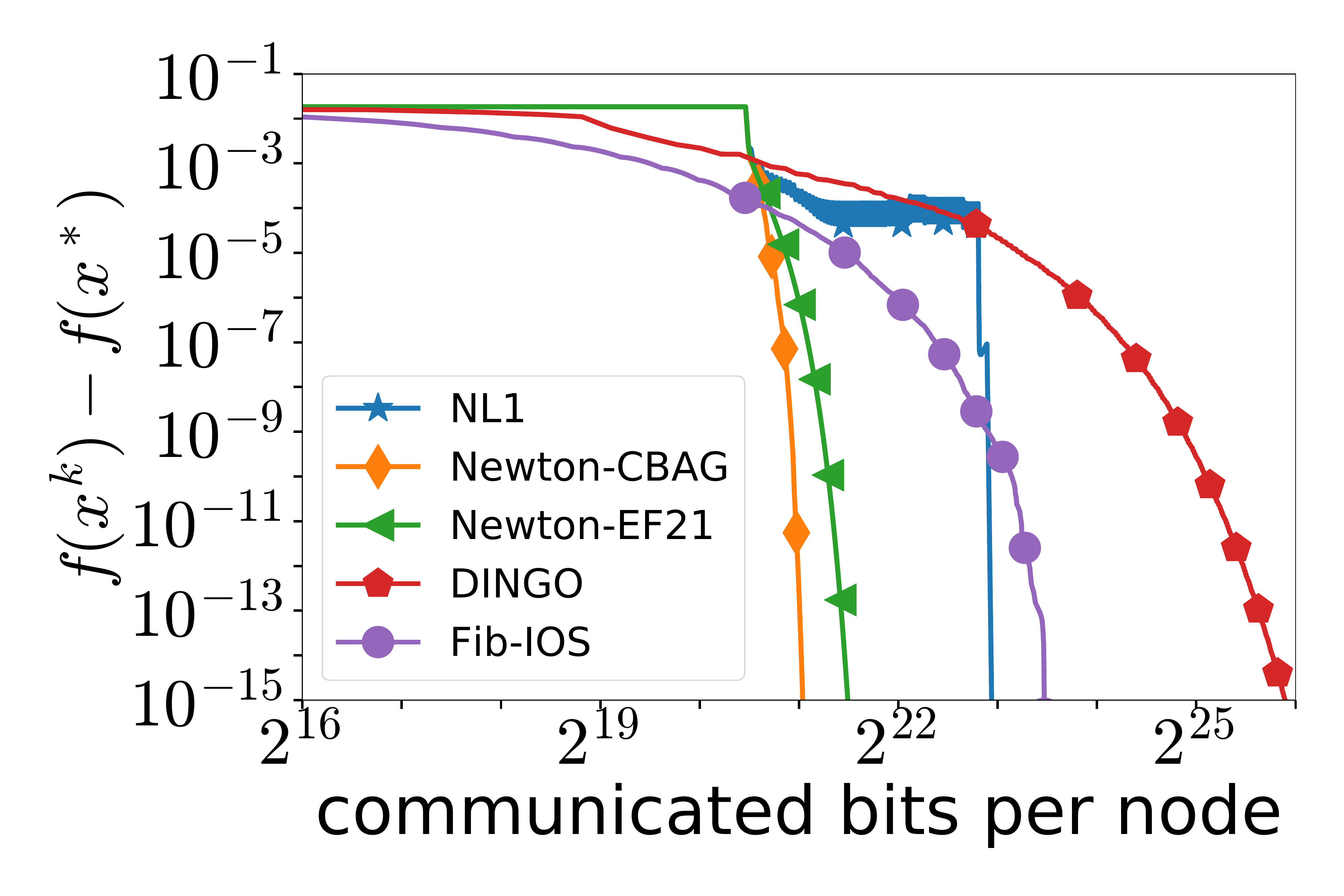}\\
				(a) \dataname{a1a}, {\scriptsize$ \lambda=10^{-3}$} &
				(b) \dataname{w2a}, {\scriptsize $\lambda=10^{-4}$} &
				(c) \dataname{a9a}, {\scriptsize$ \lambda=10^{-3}$} &
				(d) \dataname{w8a}, {\scriptsize$ \lambda=10^{-4}$} \\
				\includegraphics[width=0.22\linewidth]{./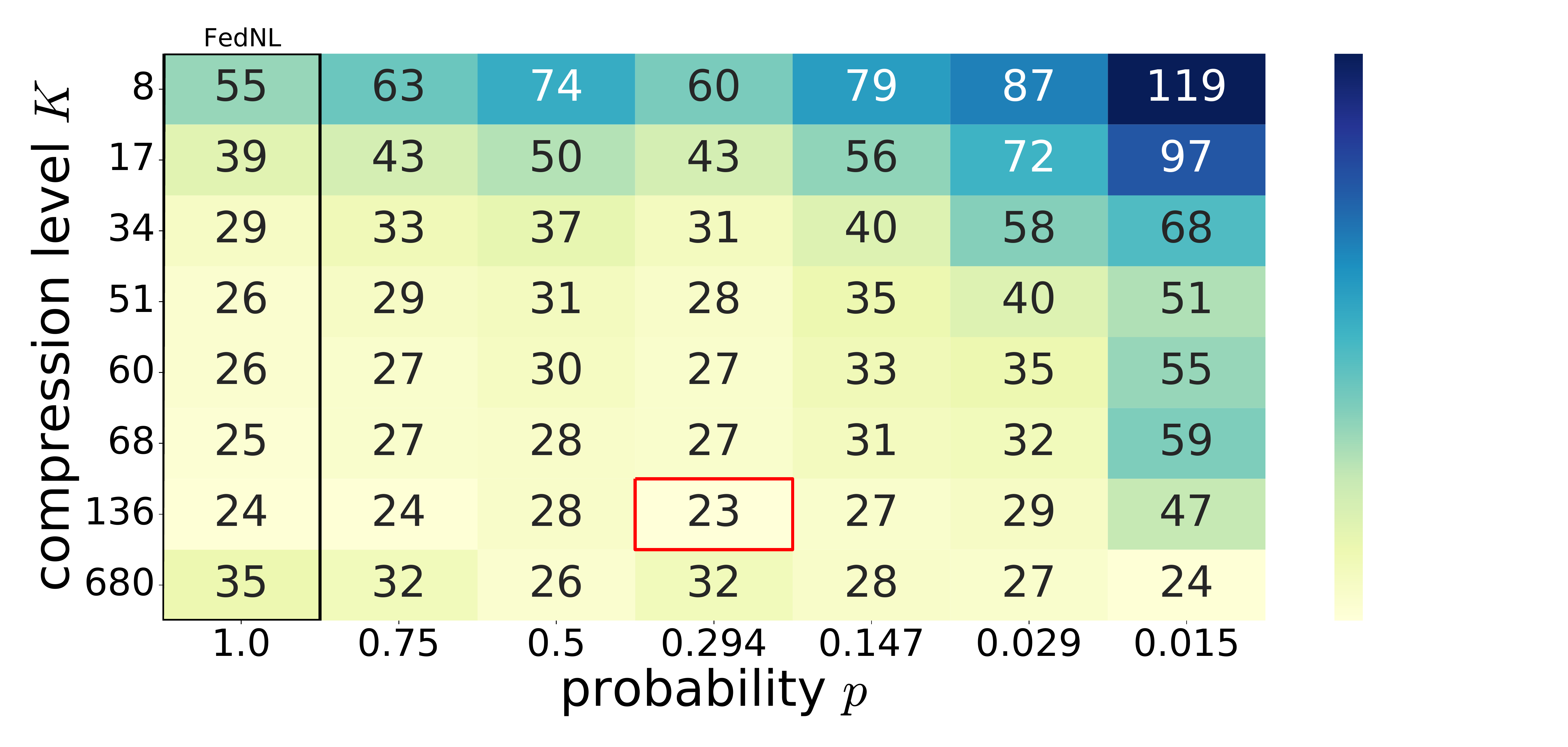} &
				\includegraphics[width=0.22\linewidth]{./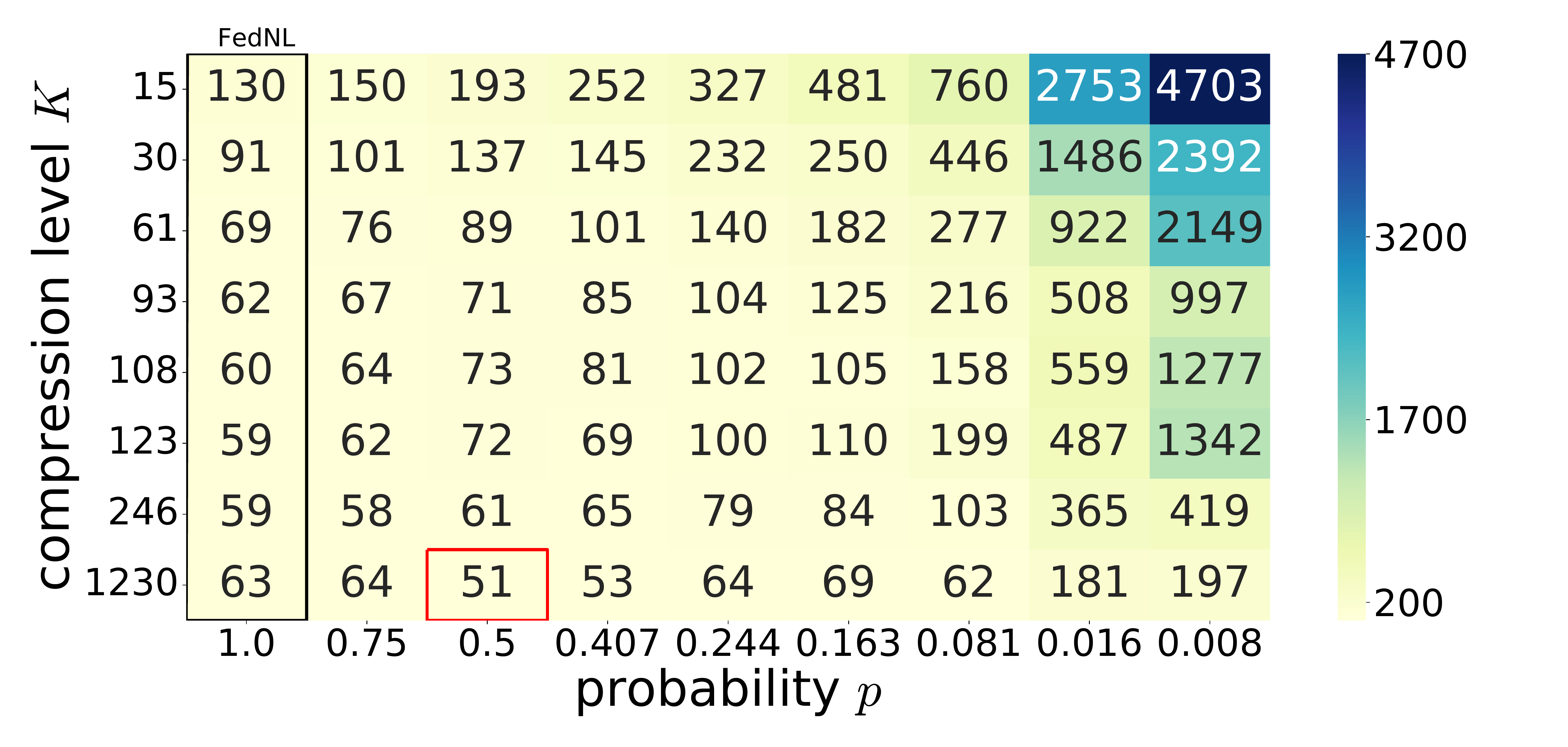} &
				\includegraphics[width=0.22\linewidth]{./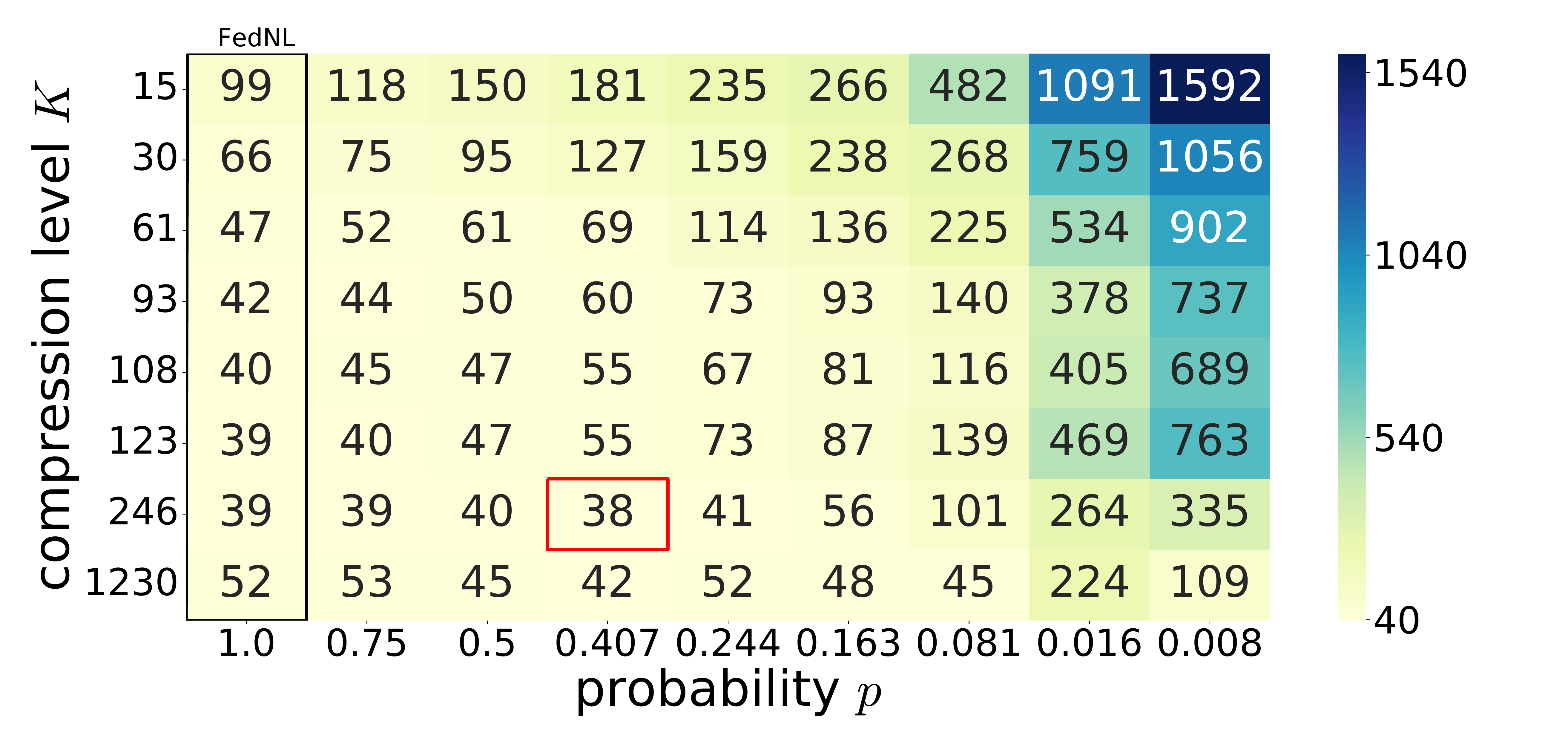} & 
				\includegraphics[width=0.22\linewidth]{./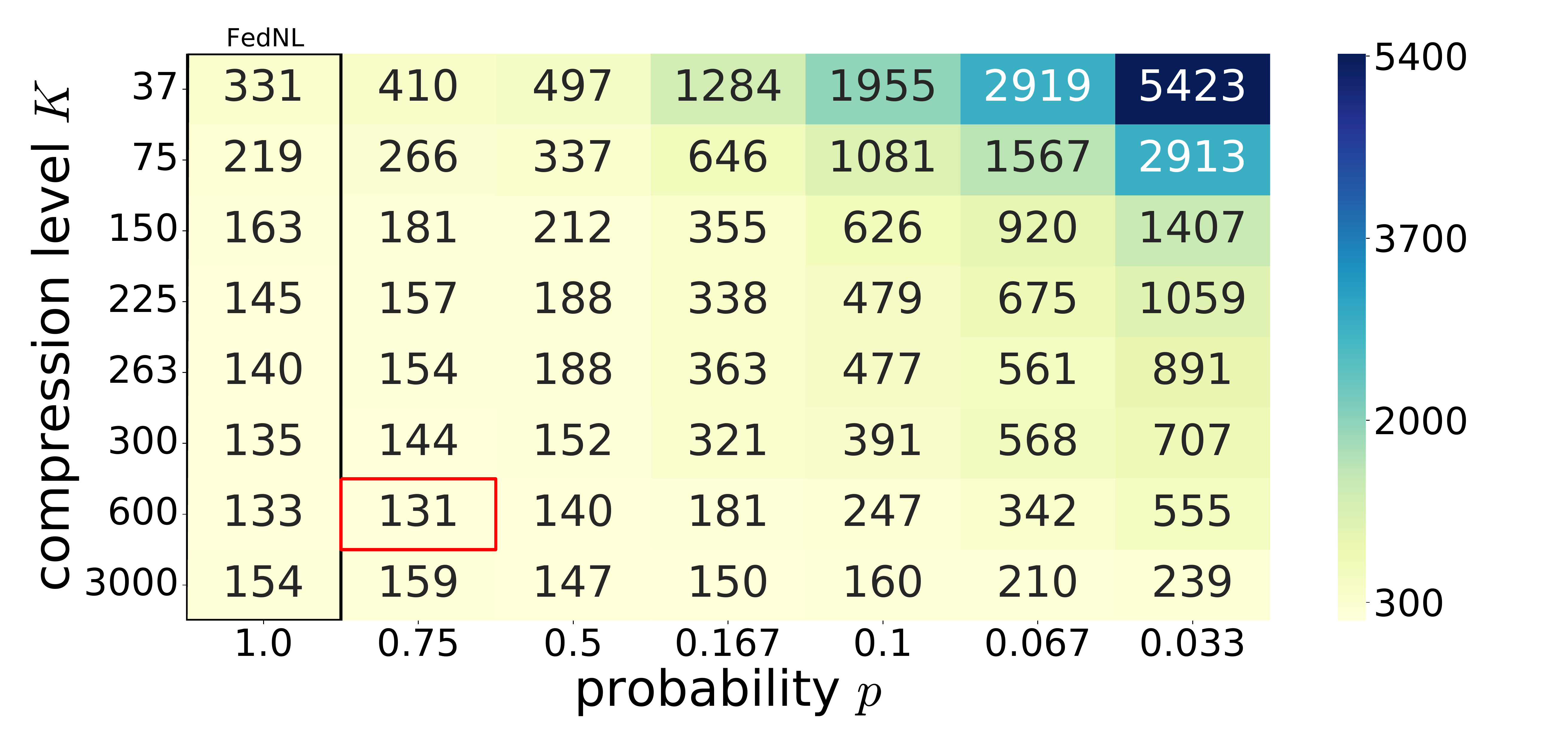}\\
				(e) \dataname{phishing}, {\scriptsize$ \lambda=10^{-3}$} &
				(f) \dataname{a1a}, {\scriptsize $\lambda=10^{-4}$} &
				(g) \dataname{a9a}, {\scriptsize$ \lambda=10^{-3}$} &
				(h) \dataname{w2a}, {\scriptsize$ \lambda=10^{-4}$} \\
				\includegraphics[width=0.22\linewidth]{./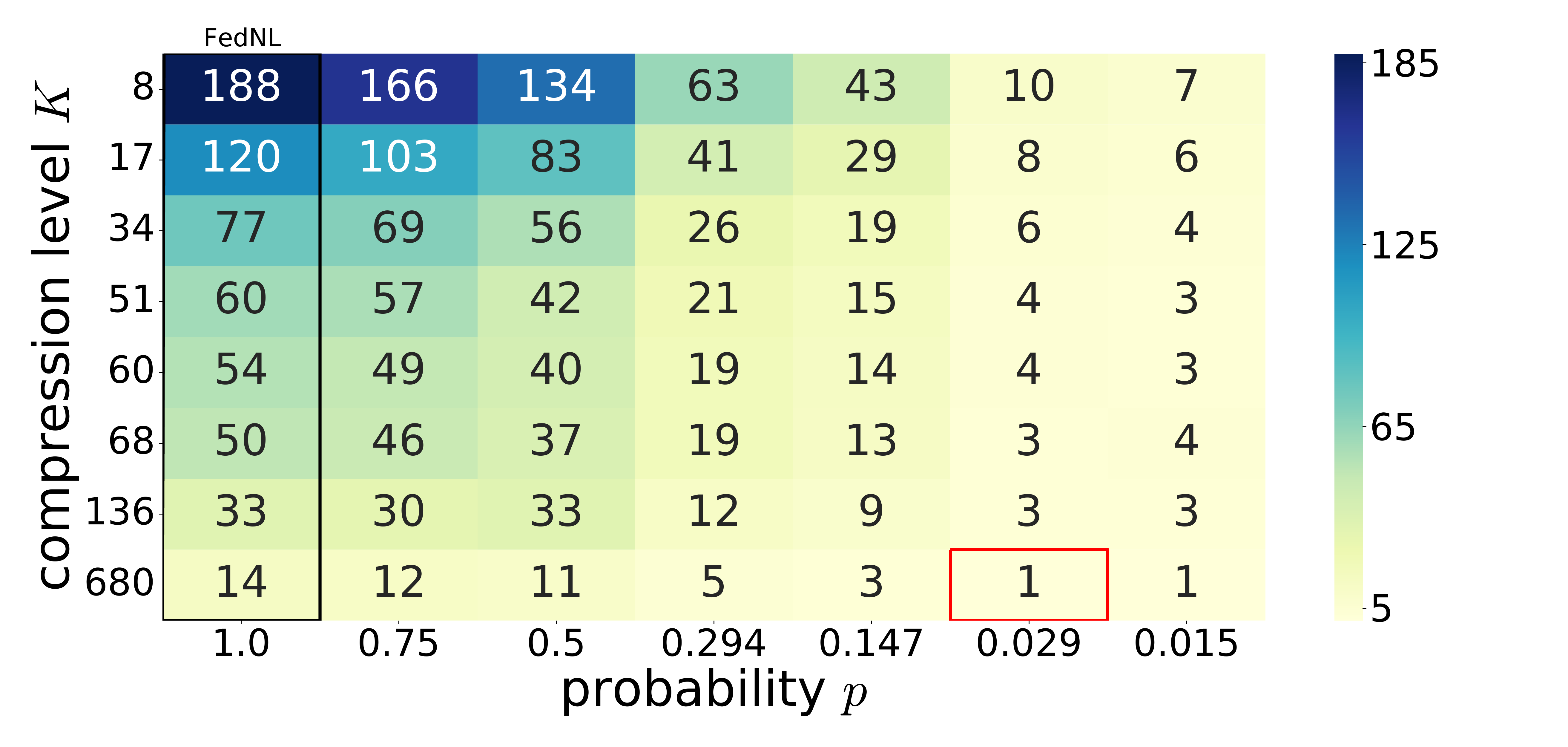} &
				\includegraphics[width=0.22\linewidth]{./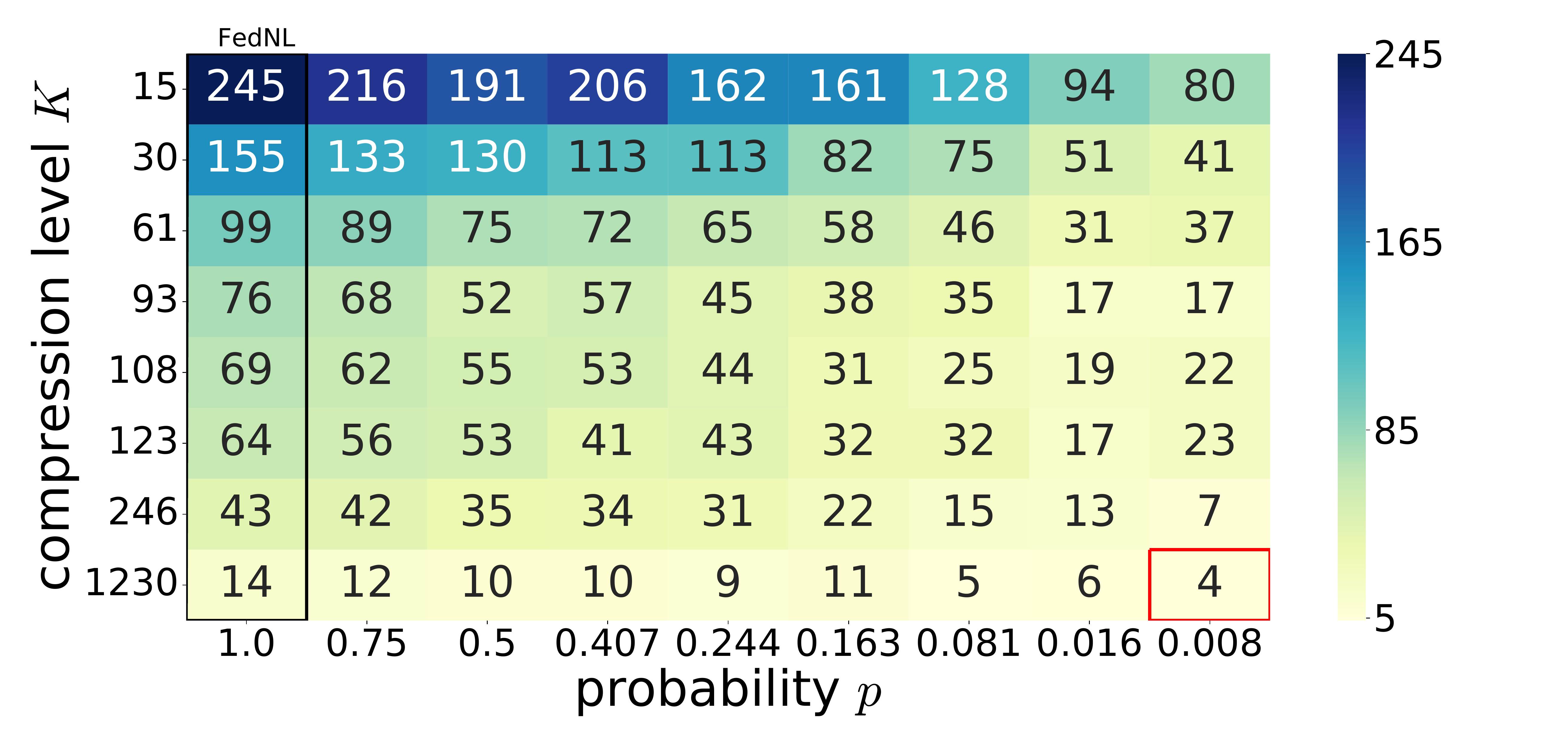} &
				\includegraphics[width=0.22\linewidth]{./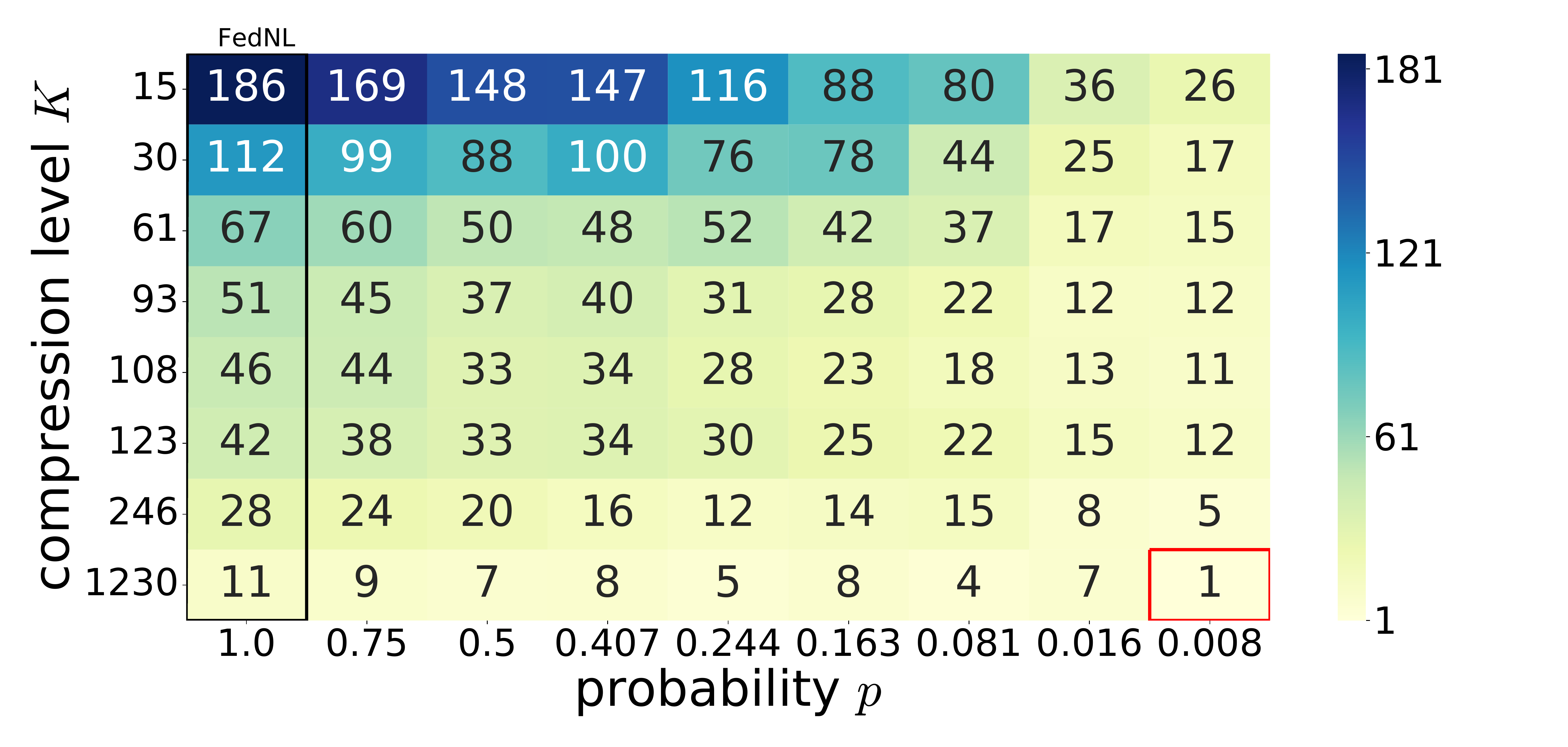} & 
				\includegraphics[width=0.22\linewidth]{./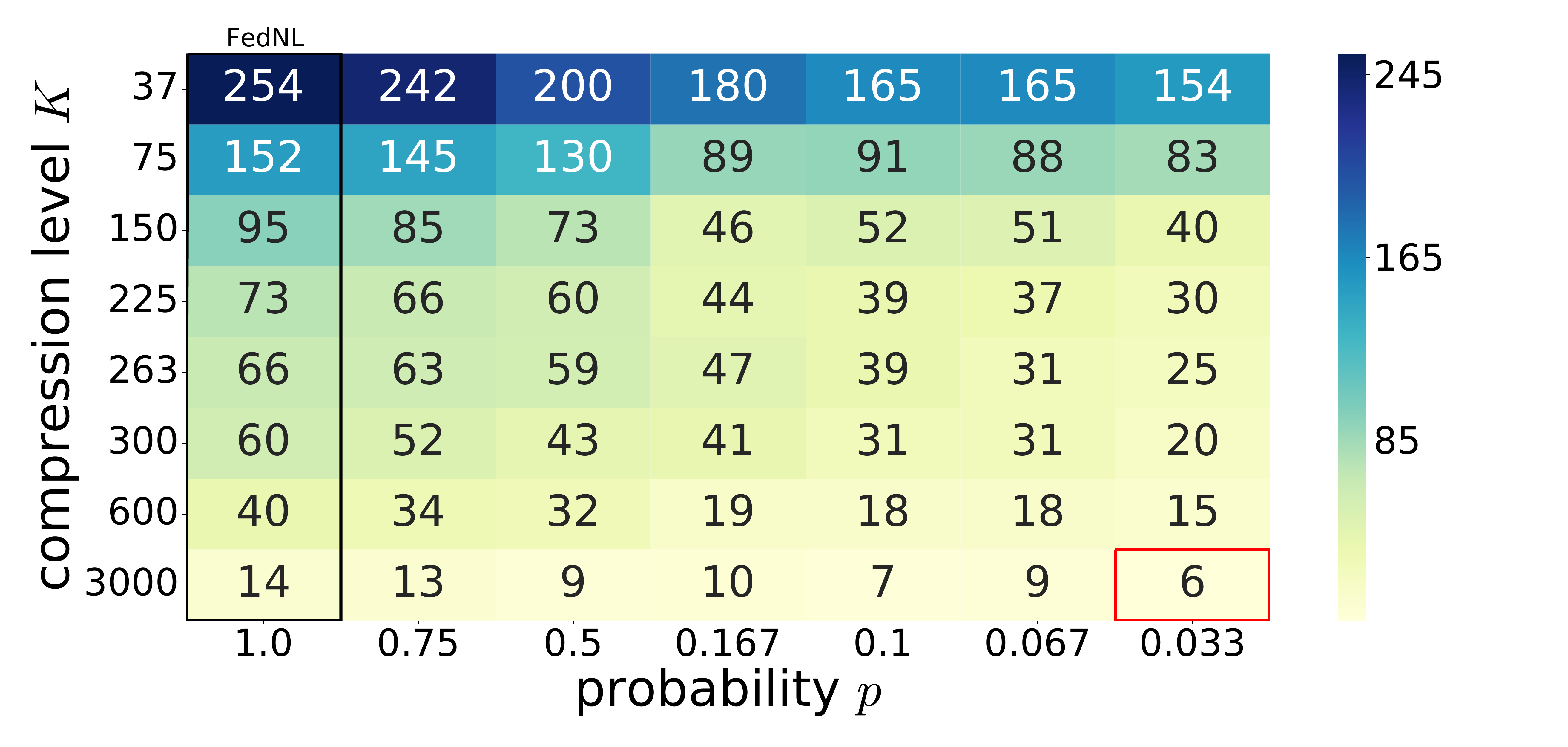}\\
				(i) \dataname{phishing}, {\scriptsize$ \lambda=10^{-3}$} &
				(j) \dataname{a1a}, {\scriptsize $\lambda=10^{-4}$} &
				(k) \dataname{a9a}, {\scriptsize$ \lambda=10^{-3}$} &
				(l) \dataname{w2a}, {\scriptsize$ \lambda=10^{-4}$} \\
			\end{tabular}				
		\end{center}
		\caption{Comparison of \algname{Newton-CBAG} with Top-$d$ compressor and probability $p=0.75$, \algname{Newton-EF21} (equivalent to \algname{FedNL}) with Rank-$1$ compressor, \algname{NL1} with Rand-$1$ compressor, and \algname{DINGO} ({\bf first row}). The performance of \algname{Newton-CBAG} with Top-$d$ in terms of communication complexity ({\bf second row}, in Mbytes) and the number of local Hessian computations ({\bf third row}).}
		\label{fig:Newton-ProbCLAG-three-in-one}
	\end{figure}
	
	\subsection{Comparison of \algname{Newton-3PC} with other second-order methods}
	According to \citep{FedNL2021}, \algname{FedNL} (equivalent to \algname{Newton-EF21}) with Rank-$1$ compressor outperforms other second-order methods in all cases in terms of communication complexity. Thus, we compare in Figure~\ref{fig:Newton-ProbCLAG-three-in-one} (first row) \algname{Newton-CBAG} (based on Top-$d$ compressor and probability $p=0.75$), \algname{Newton-EF21} with Rank-$1$, \algname{NL1} with Rand-$1$, \algname{DINGO}, and \algname{Fib-IOS} indicating how many bits are transmitted by each client in both uplink and downlink directions. We clearly see that \algname{Newton-CBAG} is much more communication efficient than \algname{NL1}, \algname{Fib-IOS} and \algname{DINGO}. Besides, it outperforms \algname{FedNL} in all cases (in the case of \dataname{a1a} data set speedup is almost $2$ times). On top of that, we achieve improvement not only in communication complexity, but also in computational cost with \algname{Newton-CBAG}. Indeed, when clients do not send compressed Hessian differences to the server there is no need to compute local Hessians. Consequently, computational costs goes down.
	
	We decided not to compare \algname{Newton-3PC} with first-order methods since \algname{FedNL} already outperforms them in terms of communication complexity in a variety of experiments in \citep{FedNL2021}.

	\subsection{Does Bernoulli aggregation brings any advantage?}
	Next, we investigate the performance of \algname{Newton-CBAG} based on Top-$K$. We report the results in 
	heatmaps (see Figure~\ref{fig:Newton-ProbCLAG-three-in-one}, second row) where we vary probability $p$ along rows and compression level $K$ along columns. Notice that \algname{Newton-CBAG} reduces to \algname{FedNL} when $p=1$ (left column). We observe that Bernoulli aggregation (BAG) is indeed beneficial since the communication complexity reduces when $p$ becomes smaller than $1$ (in case of \dataname{a1a} data set the improvement is significant). We can conclude that BAG leads to better communication complexity of \algname{Newton-3PC} over \algname{FedNL} (is equivalent to \algname{Newton-EF21}).
	
	On top of that, we claim that \algname{Newton-CBAG} is also computationally more efficient than \algname{FedNL}; see Figure~\ref{fig:Newton-ProbCLAG-three-in-one} (third row) that indicates the number of Hessian computations. We observe that even if communication complexity in two regimes are close to each other, but computationally better the one with smaller $p$. Indeed, in the case when $p < 1$ we do not have to compute local Hessians with probability $1-p$ that leads to acceleration in terms of computation complexity.

	\subsection{ 3PC based on adaptive thresholding}
	Next we test the performance of \algname{Newton-3PC} using adaptive thresholding operator \eqref{def:AT}. We compare \algname{Newton-EF21} (equivalent to \algname{FedNL}), \algname{Newton-CBAG}, and \algname{Newton-CLAG} with adaptive thresholding against \algname{Newton-CBAG} with Top-$d$ compressor. We fix the probability $p=0.5$ for CBAG, the trigger $\zeta=2$ for CLAG, and thresholding parameter $\lambda=0.5$. According to the results presented in Figure~\ref{fig:Newton-ProbCLAG-two-in-one} (first row), adaptive thresholding can be beneficial since it improves the performance of \algname{Newton-3PC} in some cases. Moreover, it is computationally cheaper than Top-$K$ as we do not sort entries of a matrix as it is for Top-$K$.	
	
	\subsection{\algname{Newton-3PC-BC} against \algname{FedNL-BC}}
	In our next experiment, we study bidirectional compression. We compare \algname{Newton-3PC-BC} against \algname{FedNL-BC} (equivalent to \algname{Newton-3PC-BC} with EF21 update rule applied on Hessians and iterates). For \algname{Newton-3PC-BC} we fix CBAG with $p=0.75$ combined with Top-$d$ compressor applied on Hessians, BAG with $p=0.75$ applied on gradients, and 3PCv4 \citep{richtarik3PC} combined with (Top-$K_1$, Top-$K_2$) compressors on iterates. For \algname{FedNL-BC} we use Top-$d$ compressor on Hessians and BAG with $p=0.75$ on gradients, and Top-$K$ compressor on iterates. We choose different values for $K_1$ and $K_2$ such that it $K_1+K_2=K$ always hold. Such choice of parameters allows to make the iteration cost of both methods to be equal. Based on the results, we argue that the superposition of CBAG and 3PCv4 applied on Hessians and iterates respectively is more communication efficient than the combination of EF21 and EF21.

	\begin{figure}[t]
		\begin{center}
			\begin{tabular}{cccc}
				\includegraphics[width=0.22\linewidth]{./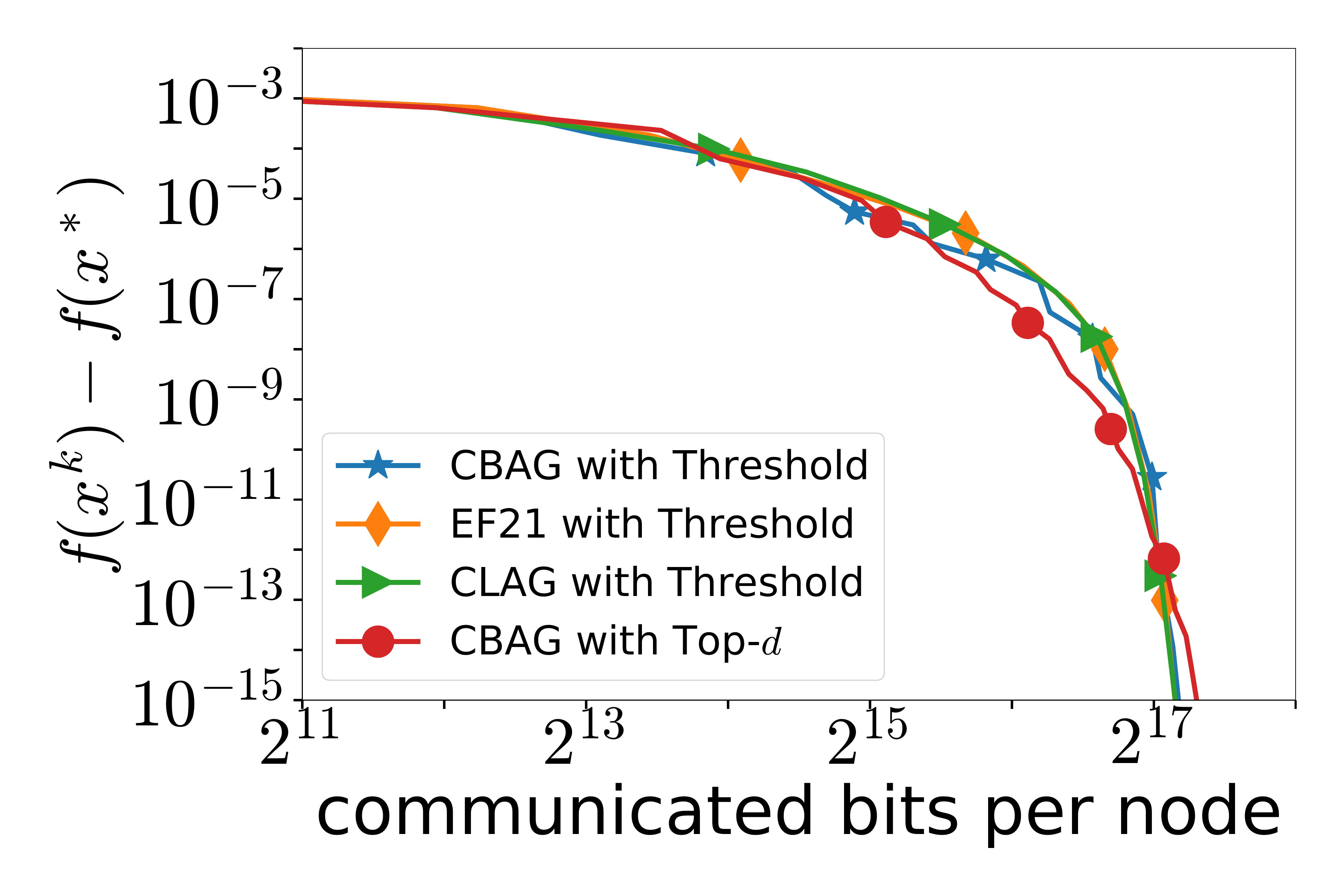} &
				\includegraphics[width=0.22\linewidth]{./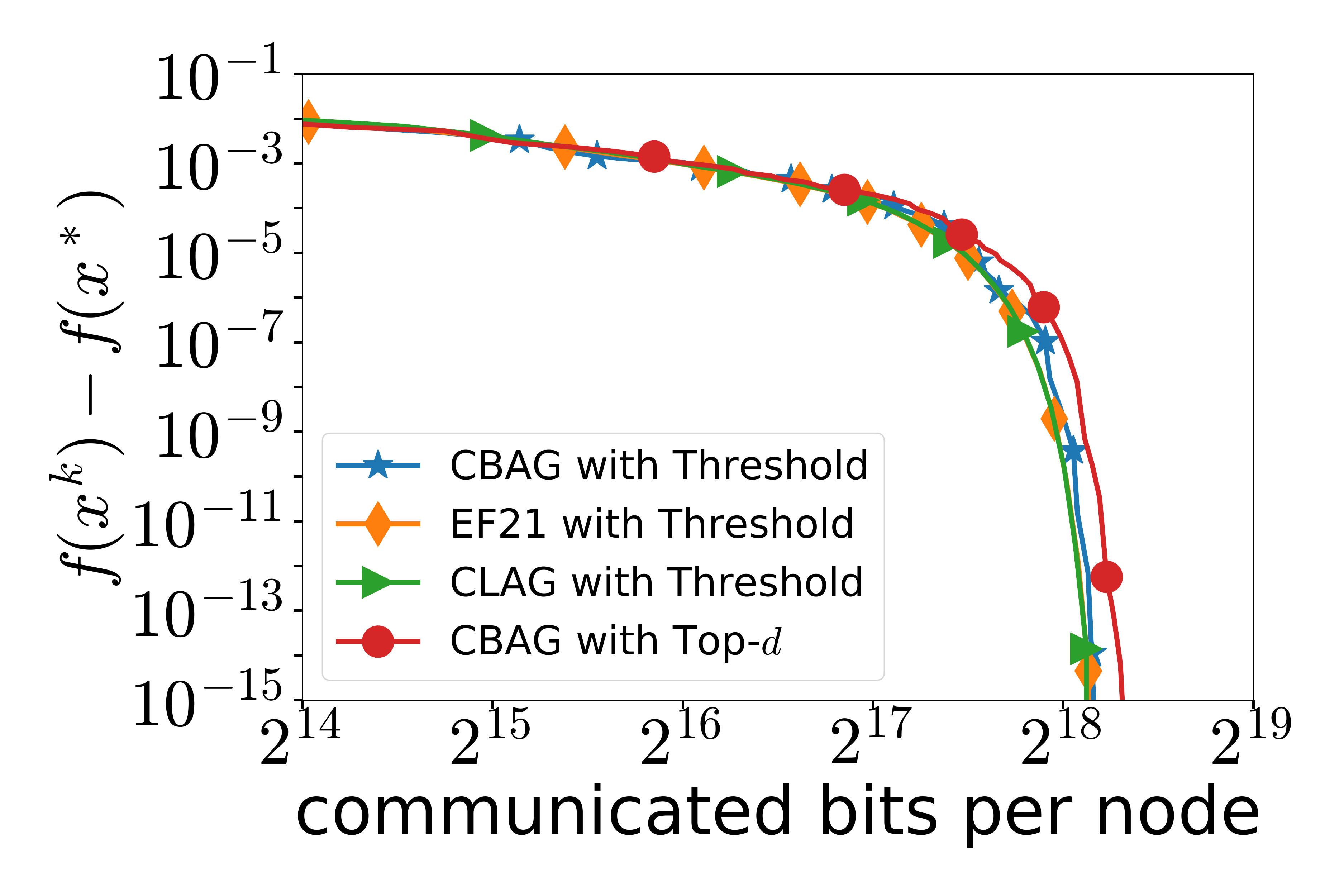} &
				\includegraphics[width=0.22\linewidth]{./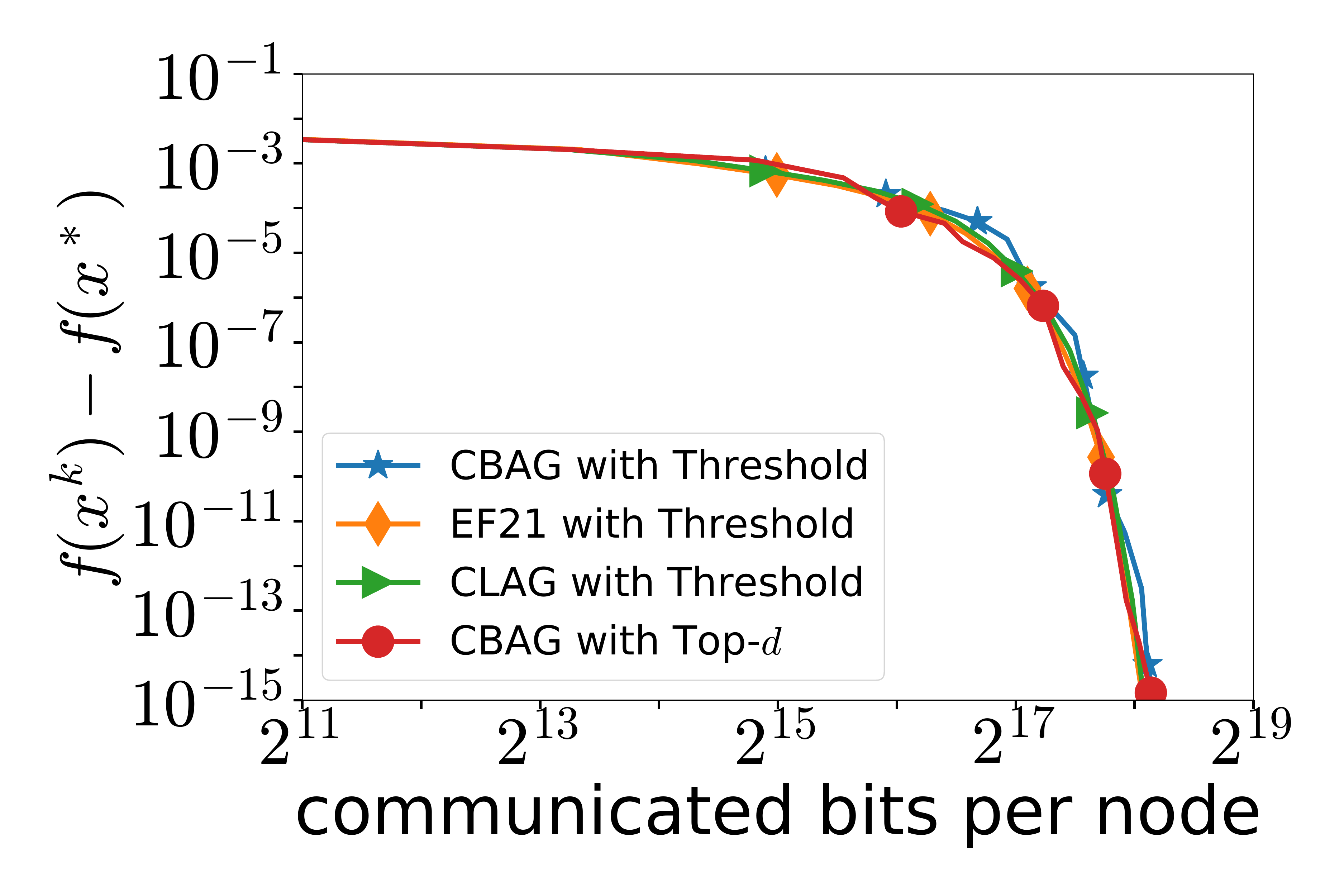} & 
				\includegraphics[width=0.22\linewidth]{./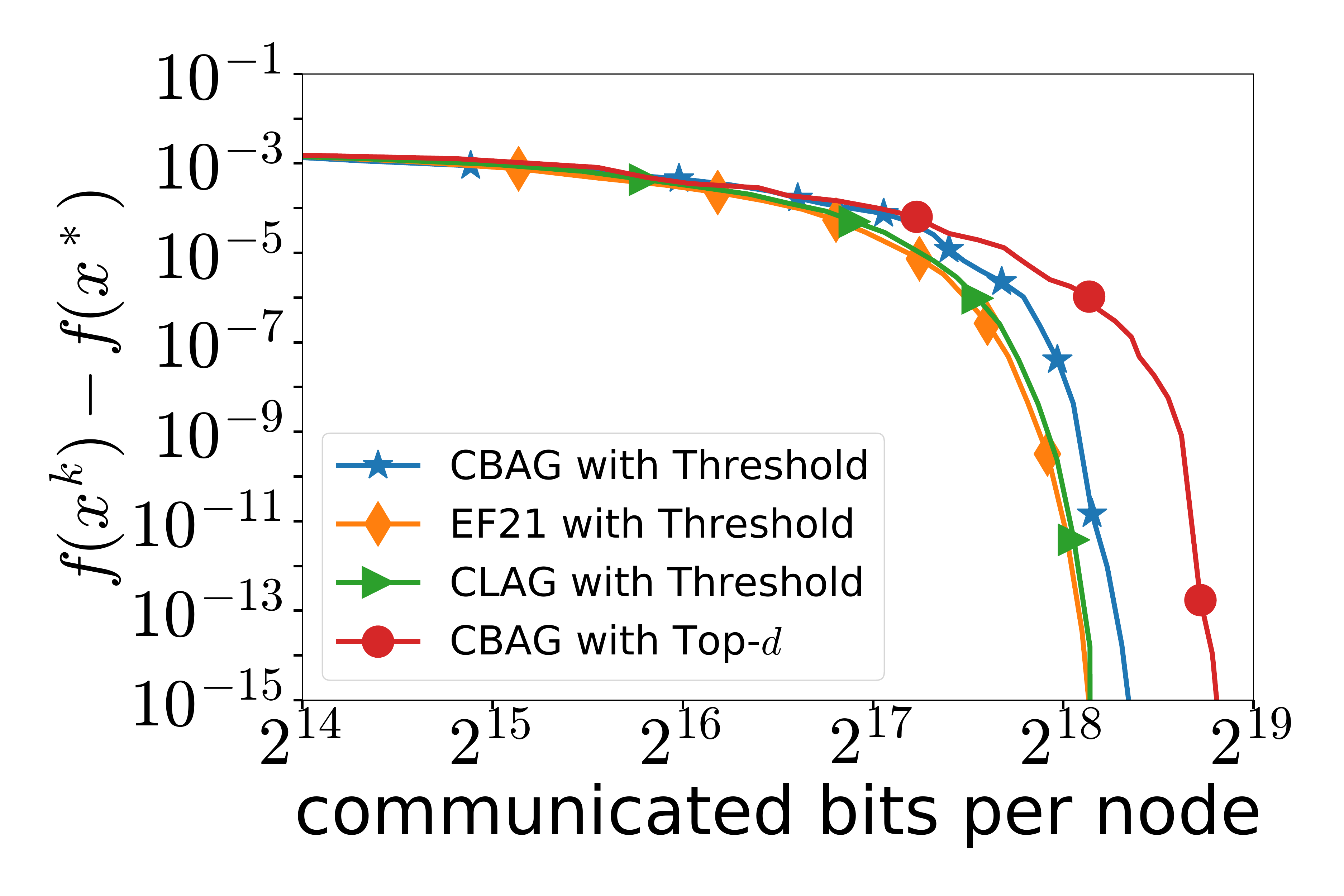}\\
				(a) \dataname{a9a}, {\scriptsize$ \lambda=10^{-3}$} &
				(b) \dataname{a1a}, {\scriptsize $\lambda=10^{-4}$} &
				(c) \dataname{w8a}, {\scriptsize$ \lambda=10^{-3}$} &
				(d) \dataname{w2a}, {\scriptsize$ \lambda=10^{-4}$} \\
				\includegraphics[width=0.22\linewidth]{./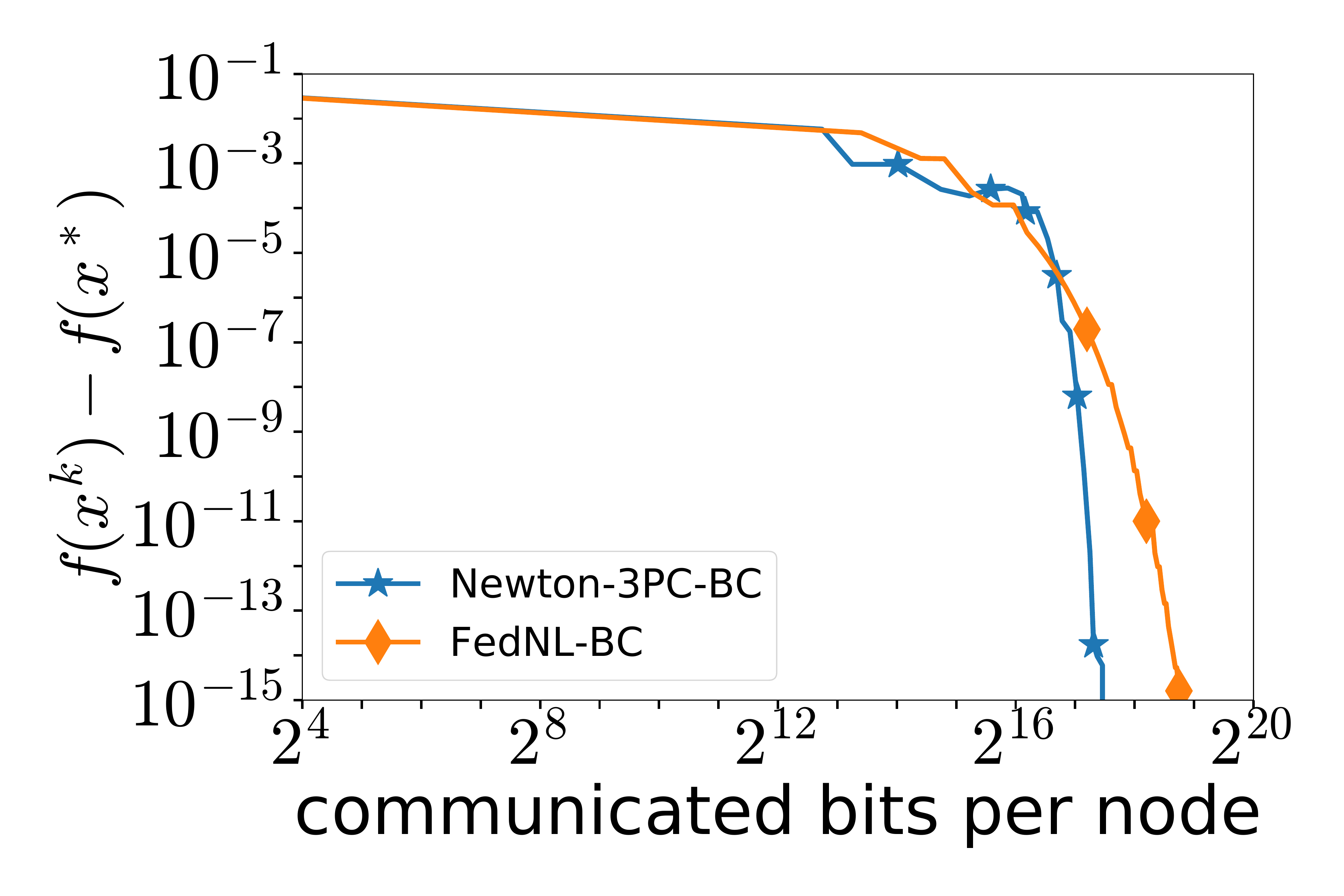} &
				\includegraphics[width=0.22\linewidth]{./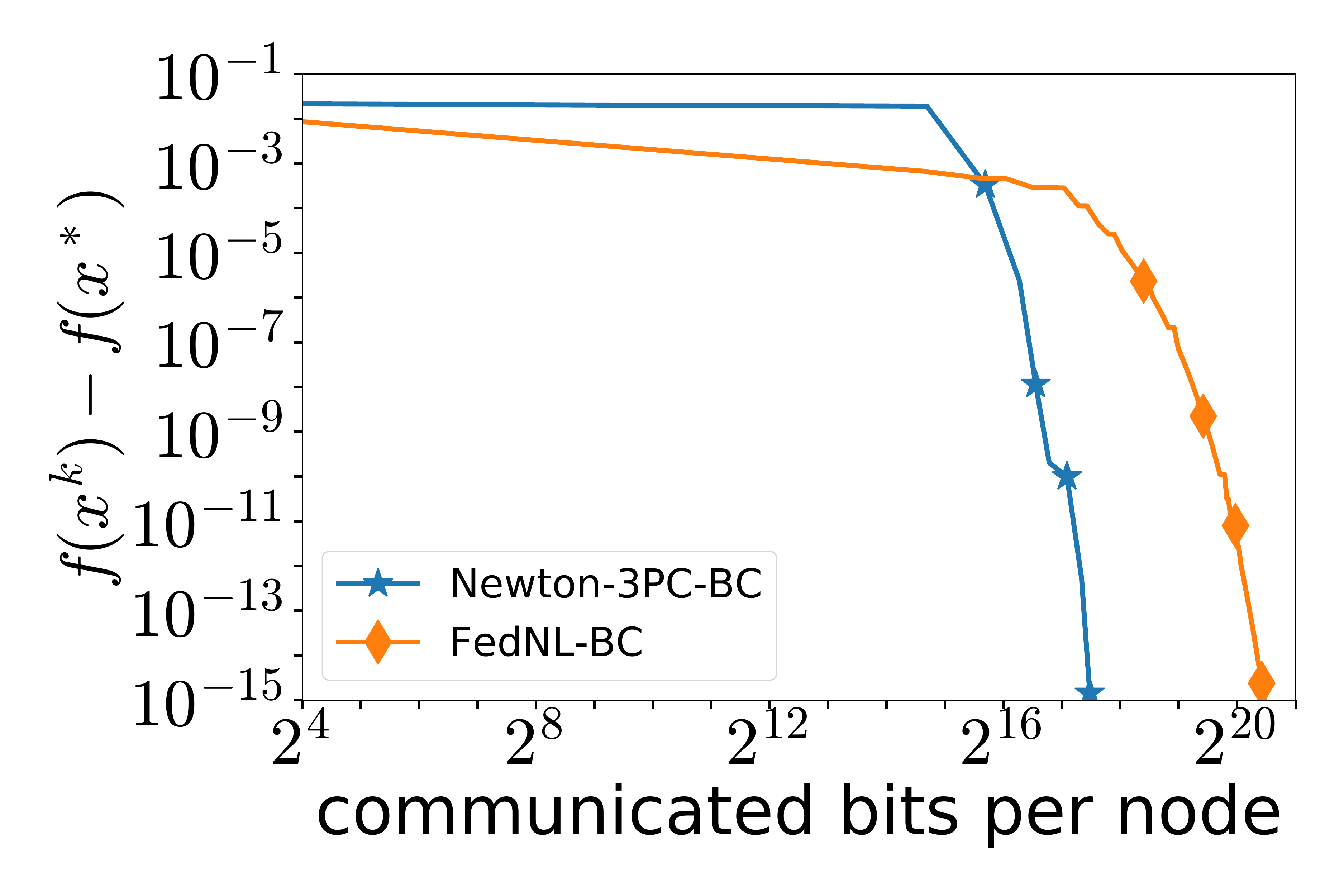} &
				\includegraphics[width=0.22\linewidth]{./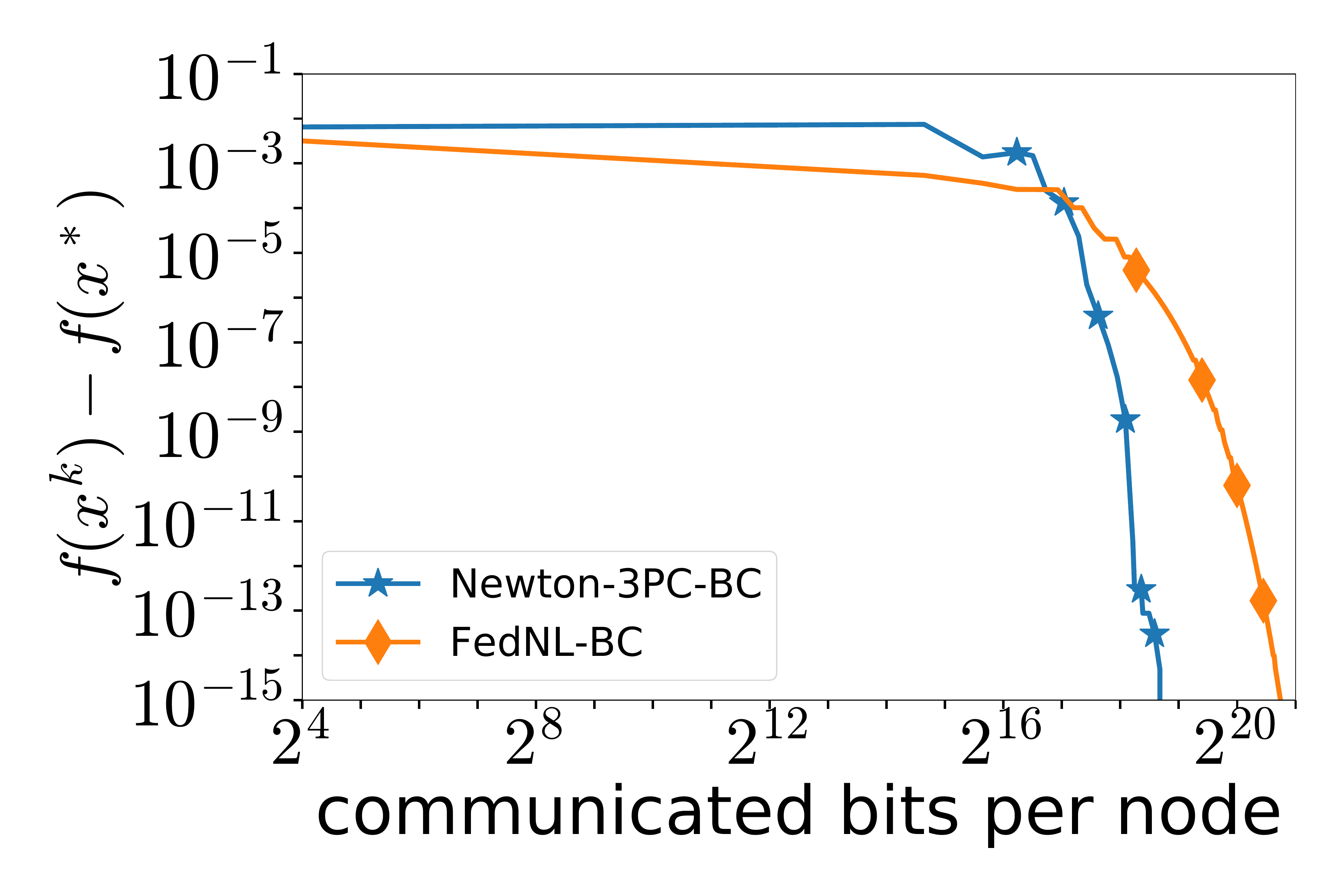} & 
				\includegraphics[width=0.22\linewidth]{./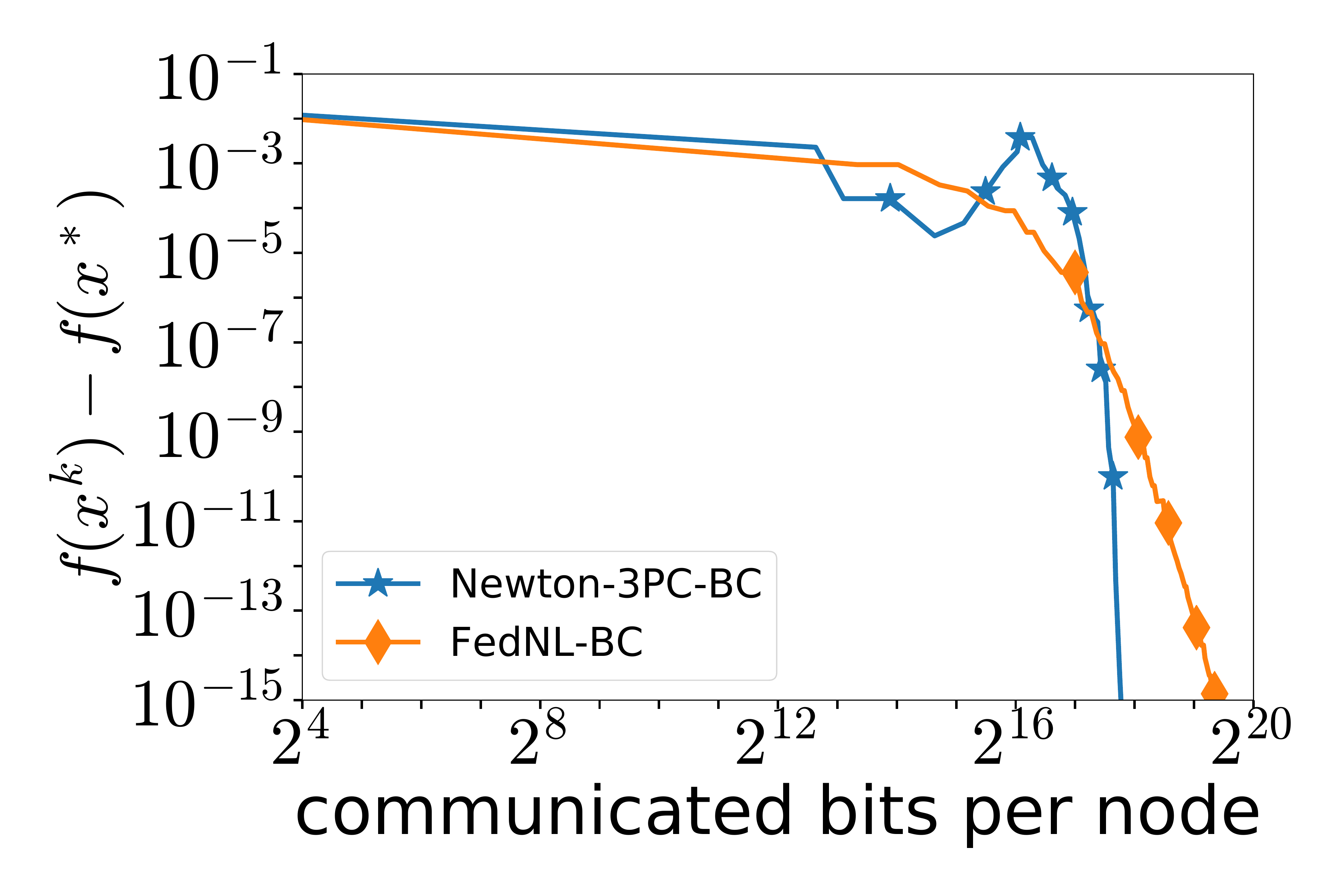}\\
				(e) \dataname{a9a}, {\scriptsize$ \lambda=10^{-3}$} &
				(f) \dataname{w2a}, {\scriptsize $\lambda=10^{-4}$} &
				(g) \dataname{w8a}, {\scriptsize$ \lambda=10^{-3}$} &
				(h) \dataname{a1a}, {\scriptsize$ \lambda=10^{-4}$} \\
			\end{tabular}				
		\end{center}
		\caption{Comparison of \algname{Newton-CBAG} with thresholding and Top-$d$ compressors and \algname{Newton-EF21} with thresholding compressor in terms of communication complexity ({\bf first row}). Comparison of \algname{Newton-3PC-BC} against \algname{FedNL-BC} in terms of communication complexity ({\bf second row}).}
		\label{fig:Newton-ProbCLAG-two-in-one}
	\end{figure}

	\clearpage
	\bibliography{references}
	\bibliographystyle{plainnat}

	\clearpage
	\appendix
	\part*{Appendix}
	
	\section{Deferred Proofs from Section \ref{sec:3PC4M} and New 3PC Compressors}

	\subsection{Proof of Lemma \ref{lem:3PC__AT}: Adaptive Thresholding}
	
	Basically, we show two upper bounds for the error and combine them to get the expression for $\alpha$. From the definition \eqref{def:AT}, we get
	\begin{equation*}
		\|\cC(\mX) - \mX\|^2_{\rm F}
		= \sum_{j,l: |\mX_{jl}|<\lambda\|\mX\|_{\infty}} \mX_{jl}^2
		\le d^2\lambda^2\|\mX\|_{\infty}^2
		\le d^2\lambda^2\|\mX\|_{\rm F}^2.
	\end{equation*}
	
	The second inequality is derived from the observation that at least on entry, the top one in magnitude, is selected always. Since the top entry is missing in the sum below, we imply that the average without the top one is smaller than the overall average.
	\begin{equation*}
		\|\cC(\mX) - \mX\|^2_{\rm F}
		= \sum_{j,l: |\mX_{jl}|<\lambda\|\mX\|_{\infty}} \mX_{jl}^2
		\le \frac{d^2-1}{d^2} \sum_{j,l=1}^d \mX_{jl}^2
		\le \(1-\frac{1}{d^2}\)\|\mX\|_{\rm F}^2.
	\end{equation*}
	
	\subsection{Proof of Lemma \ref{lem:3PC__CBAG}: Compressed Bernoulli AGgregation (CBAG)}
	
	As it was mentioned, CBAG has two independent sources of randomness: Bernoulli aggregation and possible random contractive compression. To show that CBAG is a 3PC mechanism, we consider these randomness one by one and upper bound the error as follows:
	\begin{eqnarray*}
		\E\left[\|\cC_{\mH,\mY}(\mX) - \mX\|^2 \right]
		&=& (1-p)\|\mH - \mX\|^2 + p \E\ll\|\cC(\mX-\mH) - (\mX-\mH)\|^2\rr \\
		&\le& (1-p)\|\mX-\mH\|^2 + p(1-\alpha)\|\mX-\mH\|^2 \\
		&=& (1-p\alpha)\|\mX-\mH\|^2 \\
		&\le& (1-p\alpha)(1+s)\|\mH-\mY\|^2 + (1-p\alpha)(1+\nicefrac{1}{s})\|\mX-\mY\|^2.
	\end{eqnarray*}

	\subsection{New 3PC: Adaptive Top-$K$}

	Assume that in our framework we are restricted by the number of floats we can send from clients to the server. For example, each client is able to broadcast $d_0 \leq d^2$ floats to the server. Besides, we want to use Top-$K$ compression operator with adaptive $K$, but due to the aforementioned restrictions we should control how $K$ evolves. Let $K_{\mH,\mY}$ be such that
	$$K_{\mH,\mY} = \min\left\{\left\lceil \frac{\|\mY-\mH\|^2_{\rm F}}{\|\mX-\mH\|^2_{\rm F}}d^2\right\rceil, d_0\right\}$$. We introduce the following compression operator 
	\begin{equation}\label{def:adaptive_topk}
		\cC_{\mH,\mY}(\mX) \eqdef \mH + \text{Top-}K_{\mH,\mY}\left(\mX-\mH\right).
	\end{equation}
	The next lemma shows that the described compressor satisfy~\eqref{def:3PC_comp}.
	\begin{lemma}
		The compressor $\cC_{\mY,\mH}$~\eqref{def:adaptive_topk} satisfy~\eqref{def:3PC_comp} with $$A=\frac{d_0}{2d^2}, \quad B = \max\left\{\left(1-\frac{d_0}{d^2}\right)\left(\frac{2d^2}{d_0}-1\right), 3\right\}.$$
	\end{lemma}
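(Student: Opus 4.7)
The plan is to apply the standard Top-$K$ contraction with the data-adaptive choice $K = K_{\mH,\mY}$, and then split into two cases according to which term attains the minimum that defines $K_{\mH,\mY}$. Viewing a $d \times d$ matrix as a vector in $\R^{d^2}$, the classical Top-$K$ inequality gives, for any $\mZ \in \R^{d \times d}$ and integer $K$, the bound $\|\text{Top-}K(\mZ) - \mZ\|_{\rm F}^2 \leq (1 - K/d^2) \|\mZ\|_{\rm F}^2$. Applied to $\mZ = \mX - \mH$ and using $\cC_{\mH,\mY}(\mX) - \mX = \text{Top-}K_{\mH,\mY}(\mX-\mH) - (\mX-\mH)$, this yields
\begin{equation*}
\|\cC_{\mH,\mY}(\mX) - \mX\|_{\rm F}^2 \leq \left( 1 - \frac{K_{\mH,\mY}}{d^2} \right) \|\mX - \mH\|_{\rm F}^2.
\end{equation*}
From here, all that is left is to bound this right-hand side by $(1-A)\|\mH - \mY\|_{\rm F}^2 + B \|\mX - \mY\|_{\rm F}^2$ in each of the two cases. (The degenerate case $\mX = \mH$ is trivial since then $\cC_{\mH,\mY}(\mX) = \mX$.)

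\textbf{Case 1: $K_{\mH,\mY} = d_0$.} Here the contraction factor collapses to the deterministic value $1 - d_0/d^2$, so I would apply Young's inequality $\|\mX - \mH\|_{\rm F}^2 \leq (1+s)\|\mH - \mY\|_{\rm F}^2 + (1+1/s)\|\mX - \mY\|_{\rm F}^2$ for a parameter $s > 0$ to be tuned. Choosing $s = d_0 / (2(d^2 - d_0))$ makes the coefficient of $\|\mH - \mY\|_{\rm F}^2$ reduce to $(1 - d_0/d^2)(1+s) = (2d^2 - d_0)/(2d^2) = 1 - d_0/(2d^2) = 1 - A$, and a short algebraic step shows that the coefficient of $\|\mX - \mY\|_{\rm F}^2$ becomes $(1 - d_0/d^2)(2d^2/d_0 - 1)$, which is exactly the first argument of the $\max$ defining $B$.

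\textbf{Case 2: $K_{\mH,\mY} < d_0$.} Now the ceiling term is active, so $K_{\mH,\mY} \geq (\|\mH - \mY\|_{\rm F}^2 / \|\mX - \mH\|_{\rm F}^2)\, d^2$, which after substitution gives
\begin{equation*}
\left( 1 - \frac{K_{\mH,\mY}}{d^2} \right) \|\mX - \mH\|_{\rm F}^2 \;\leq\; \|\mX - \mH\|_{\rm F}^2 - \|\mH - \mY\|_{\rm F}^2 \;\leq\; 2 \|\mH-\mY\|_{\rm F}\|\mX-\mY\|_{\rm F} + \|\mX-\mY\|_{\rm F}^2,
\end{equation*}
where the last step uses the triangle inequality $\|\mX - \mH\|_{\rm F} \leq \|\mH - \mY\|_{\rm F} + \|\mX - \mY\|_{\rm F}$. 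It then suffices to show that $2 a b + b^2 \leq (1-A) a^2 + 3 b^2$ for $A = d_0/(2 d^2)$ and all $a, b \geq 0$, with $a = \|\mH - \mY\|_{\rm F}$ and $b = \|\mX - \mY\|_{\rm F}$. This is the quadratic inequality $(1-A) a^2 - 2 a b + 2 b^2 \geq 0$, whose discriminant in $a$ equals $4 b^2 \bigl( 1 - 2(1 - A) \bigr) = 4 b^2 (d_0/d^2 - 1) \leq 0$ since $d_0 \leq d^2$. Hence $B = 3$ works uniformly in Case 2.

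\textbf{Main obstacle.} Combining Cases 1 and 2, taking the maximum of the two bounds on $B$, gives the announced constants $A = d_0/(2 d^2)$ and $B = \max\{ (1 - d_0/d^2)(2 d^2/d_0 - 1),\, 3 \}$. The only delicate step is Case 2, where the data-dependent choice of $K_{\mH,\mY}$ forces us to trade the contraction factor for the ratio $\|\mH - \mY\|_{\rm F}^2 / \|\mX - \mH\|_{\rm F}^2$; the triangle inequality and the discriminant computation have to be lined up so that a single constant $B = 3$ suffices regardless of how large $d_0$ is relative to $d^2$.
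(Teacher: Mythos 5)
Your proof is correct and uses the same two-case decomposition as the paper (Case 1 when $K_{\mH,\mY}=d_0$, Case 2 when the ceiling term is active), and the Case 1 computation is identical. The only difference is in Case 2: the paper bounds $\|\mX-\mH\|_{\rm F}^2$ via Young's inequality with $s=\tfrac{1}{2}$, getting $\tfrac{3}{2}\|\mH-\mY\|_{\rm F}^2 + 3\|\mX-\mY\|_{\rm F}^2$ and then subtracting $\|\mH-\mY\|_{\rm F}^2$ to land on $\tfrac{1}{2}\|\mH-\mY\|_{\rm F}^2 + 3\|\mX-\mY\|_{\rm F}^2$ (noting $\tfrac12 \le 1-A$). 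You instead expand the squared triangle inequality, cancel $\|\mH-\mY\|_{\rm F}^2$, and verify $2ab + b^2 \le (1-A)a^2 + 3b^2$ by a discriminant check that reduces to $d_0 \le d^2$. The two routes are functionally equivalent — Young's inequality with $s=\tfrac12$ is precisely the AM--GM bound on the cross term $2ab$ that your discriminant argument encodes — and both give the same $B=3$; yours is slightly tighter in the $\|\mH-\mY\|_{\rm F}^2$ coefficient but that slack is absorbed anyway when taking the max over cases.
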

	\begin{proof}
		Recall that if $\cC$ is a Top-$K$ compressor, then for all $\mX\in \R^{d\times d}$ 
		$$\norm{\cC(\mX)-\mX}^2_{\rm F} \leq \left(1-\frac{K}{d^2}\right)\norm{\mX}^2_{\rm F},$$
		Using this property we get in the case when $K_{\mY,\mH} = d_0$
		\begin{align*}
			\norm{\cC_{\mH,\mY}(\mX) - \mX}^2_{\rm F} &= \norm{\mH + \text{Top-}K_{\mH,\mY}(\mX-\mH) - \mX}_{\rm F}^2\\
			&\leq \left(1-\frac{d_0}{d^2}\right)\norm{\mH-\mX}^2_{\rm F}\\
			&\leq \left(1-\frac{d_0}{2d^2}\right)\norm{\mH-\mY}^2_{\rm F} + \left(1-\frac{d_0}{d^2}\right)\frac{2d^2-d_0}{d_0}\norm{\mY-\mX}^2_{\rm F}.
		\end{align*}
		If $K_{\mH,\mY} = \left\lceil \frac{\|\mY-\mH\|^2_{\rm F}}{\|\mX-\mH\|^2_{\rm F}}d^2\right\rceil$, then $-K_{\mH,\mY}  \leq -\frac{\|\mY-\mH\|^2_{\rm F}}{\|\mX-\mH\|^2_{\rm F}}d^2$, and we have 
		\begin{align*}
			\norm{\cC_{\mH,\mY}(\mX) - \mX}^2_{\rm F} &= \norm{\mH+\text{Top-}K_{\mH,\mY}(\mX-\mH)-\mX}^2_{\rm F}\\
			&\leq \left(1-\frac{K_{\mH,\mY}}{d^2}\right)\norm{\mH-\mX}^2_{\rm F}\\
			&\leq \left(1-\frac{\norm{\mY-\mH}^2_{\rm F}}{\norm{\mX-\mH}^2_{\rm F}}\right)\norm{\mH-\mX}^2_{\rm F}\\
			&= \norm{\mH-\mX}^2_{\rm F} - \norm{\mY-\mH}^2_{\rm F}\\
			&\leq \frac{3}{2}\norm{\mH-\mY}^2_{\rm F} + 3\norm{\mY-\mX}^2_{\rm F}- \norm{\mY-\mH}^2_{\rm F}\\
			&=\frac{1}{2}\norm{\mY-\mH}^2_{\rm F} + 3\norm{\mY-\mX}^2_{\rm F},
		\end{align*}
		where in the last inequality we use Young's inequality. Since we always have $\frac{d_0}{2d^2}$ (because $d_0 \leq d^2$), then $A = \frac{d_0}{2d^2}.$
	\end{proof}
	
	\subsection{New 3PC: Rotation Compression}\label{apx:rot-comp}
	
	\citep{qian2021basis} proposed a novel idea to change the basis in the space of matrices that allows to apply more aggresive compression mechanism. Following Section~$2.3$ from \citep{qian2021basis} one can show that for Generalized Linear Models local Hessians can be represented as $\nabla^2f_i(x) = \mQ_i\Lambda_i(x)\mQ_i^\top,$ where $\mQ_i$ is properly designed basis matrix. This means that $\mQ_i$ is orthogonal matrix. Their idea is based on the fact that $\Lambda_i(x)$ is potentially sparser matrix than $\nabla^2f_i(x),$ and applying compression on $\Lambda_i(x)$ could require smaller compression level to obtain the same results than applying compression on dense standard representation $\nabla^2f_i(x).$ We introduce the following compression based on this idea. Let $\cC$ be an arbitrary contractive compressor with parameter $\alpha,$ and $\mQ$ be an orthogonal matrix, then our new compressor is defined as follows 
	\begin{equation}\label{def:rotation_comp}
		\cC_{\mH,\mY}(\mX) \eqdef \mH + \mQ\cC\left(\mQ^\top(\mX-\mH)\mQ\right)\mQ^\top.
	\end{equation}
	Now we prove that this compressor satisfy~\eqref{def:3PC_comp}.
	\begin{lemma}
		The compressor $\cC_{\mH,\mQ}$~\eqref{def:rotation_comp} based on a contractive compressor $\cC$ with parameter $\alpha\in(0,1]$ satisfy~\eqref{def:3PC_comp} with $A=\nicefrac{\alpha}{2}$ and $B=(1-\alpha)\left(\nicefrac{(2-\alpha)}{\alpha}\right)$.
	\end{lemma}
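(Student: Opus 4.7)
The plan is to exploit the orthogonality of $\mQ$ to push the compressor's contractive property through the rotation, and then close the argument with a Young's inequality to introduce the two pivot matrices $\mY$ and $\mH$ as demanded by the 3PC definition \eqref{def:3PC_comp}.

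First I would rewrite the error as a single Frobenius norm of a rotated matrix:
\begin{equation*}
\cC_{\mH,\mY}(\mX) - \mX = \mQ\,\cC(\mQ^\top(\mX-\mH)\mQ)\,\mQ^\top - (\mX-\mH) = \mQ\bigl[\cC(\widetilde{\mX}) - \widetilde{\mX}\bigr]\mQ^\top,
\end{equation*}
where $\widetilde{\mX} \eqdef \mQ^\top(\mX-\mH)\mQ$. Since $\mQ$ is orthogonal, conjugation by $\mQ$ is a Frobenius isometry, so $\|\mQ A\mQ^\top\|_{\rm F} = \|A\|_{\rm F}$ for any $A$. Applying this twice (once to remove the outer conjugation, once to identify $\|\widetilde{\mX}\|_{\rm F} = \|\mX-\mH\|_{\rm F}$) and then invoking the contractive property \eqref{def:CC} of $\cC$ gives
\begin{equation*}
\Exp\bigl[\|\cC_{\mH,\mY}(\mX) - \mX\|_{\rm F}^2\bigr] \;\leq\; (1-\alpha)\,\|\mX-\mH\|_{\rm F}^2.
\end{equation*}

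Next I would introduce $\mY$ via Young's inequality: for any $s>0$,
\begin{equation*}
\|\mX-\mH\|_{\rm F}^2 \;\leq\; (1+s)\|\mH-\mY\|_{\rm F}^2 + (1+\tfrac{1}{s})\|\mX-\mY\|_{\rm F}^2,
\end{equation*}
which yields
\begin{equation*}
\Exp\bigl[\|\cC_{\mH,\mY}(\mX) - \mX\|_{\rm F}^2\bigr] \;\leq\; (1-\alpha)(1+s)\,\|\mH-\mY\|_{\rm F}^2 + (1-\alpha)(1+\tfrac{1}{s})\,\|\mX-\mY\|_{\rm F}^2.
\end{equation*}
Finally I would tune $s$ to match the advertised constants: solving $1-(1-\alpha)(1+s) = \alpha/2$ gives $s = \alpha/(2(1-\alpha))$, whereupon $1+1/s = (2-\alpha)/\alpha$ and the coefficient of $\|\mX-\mY\|_{\rm F}^2$ becomes $(1-\alpha)(2-\alpha)/\alpha$, exactly the claimed $B$.

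There is essentially no obstacle here; the one subtlety worth flagging is the double use of the isometry (inside the contractive bound one needs $\|\widetilde{\mX}\|_{\rm F} = \|\mX-\mH\|_{\rm F}$, and outside one needs to strip the conjugation from $\cC(\widetilde{\mX})-\widetilde{\mX}$), both of which rely on $\mQ^\top\mQ = \mI$. Everything else is a routine Young-inequality tuning, and the range constraint $s>0$ is automatic since $\alpha\in(0,1]$ (with the $\alpha=1$ case being a trivial direct verification where the bound degenerates but the statement still holds).
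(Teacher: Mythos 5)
Your proposal is correct and follows essentially the same route as the paper's own proof: write the error as $\mQ[\cC(\widetilde{\mX})-\widetilde{\mX}]\mQ^\top$, use orthogonal invariance of $\|\cdot\|_{\rm F}$ to peel off the conjugation and evaluate $\|\widetilde{\mX}\|_{\rm F}$, apply the contractive bound, and then Young's inequality with the same choice $s=\nicefrac{\alpha}{2(1-\alpha)}$. Your remark about the degenerate $\alpha=1$ case is a small bonus the paper glosses over, but otherwise the two arguments coincide step for step.
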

	\begin{proof}
		From the definition of contractive compressor
		$$\ExpBr{\norm{\cC(\mX)-\mX}^2_{\rm F}} \leq (1-\alpha)\norm{\mX}^2_{\rm F}.$$
	\end{proof}
	Thus, we get
	\begin{align*}
		\ExpBr{\norm{\cC_{\mH,\mY}(\mX) - \mX}^2_{\rm F}} &= \ExpBr{\norm{\mQ\cC\left(\mQ^\top(\mX-\mH)\mQ\right)\mQ^\top - (\mX-\mH)}^2_{\rm F}}\\
		&=\ExpBr{\norm{\mQ\cC\left(\mQ^\top(\mX-\mH)\mQ\right)\mQ^\top - \mQ\mQ^\top(\mX-\mH)\mQ\mQ^\top}^2_{\rm F}}\\
		&=\ExpBr{\norm{\cC\left(\mQ^\top(\mX-\mH)\mQ\right) -\mQ^\top(\mX-\mH)\mQ}^2_{\rm F}}\\
		&\leq (1-\alpha)\norm{\mQ^\top(\mX-\mH)\mQ}^2_{\rm F}\\
		&= (1-\alpha)\norm{\mX-\mH}^2_{\rm F}\\
		&\leq (1-\alpha)(1+\beta)\norm{\mY-\mH}^2_{\rm F} + (1-\alpha)(1+\beta^{-1})\norm{\mY-\mX}^2_{\rm F},
	\end{align*}
	where we use the fact that an orthogonal matrix doesn't change a norm. Let $\beta=\frac{\alpha}{2(1-\alpha)}$, then
	\begin{align}
		\ExpBr{\norm{\cC_{\mH,\mY}(\mX) - \mX}^2_{\rm F}} &\leq \left(1-\frac{\alpha}{2}\right)\norm{\mY-\mH}^2_{\rm F} + (1-\alpha)\left(\frac{2-\alpha}{\alpha}\right)\norm{\mY-\mX}^2_{\rm F}.
	\end{align}

	\section{Deferred Proofs from Section \ref{sec:N3PC} (\algname{Newton-3PC})}
	
	\subsection{Auxiliary lemma}
	
	Denote by $\E_{k+1}[\cdot]$ the conditional expectation given $(k+1)^{th}$ iterate $x^{k+1}$. We first develop a lemma to handle the mismatch $\mathbb{E}_k \|\mH_i^{k+1} - \nabla^2 f_i(x^{*})\|^2_{\rm F}$ of the estimate $\mH_i^{k+1}$ defined via 3PC compressor.
	
	\begin{lemma}\label{lm:threecomp}
		Assume that $\norm{x^{k+1}-x^*}^2\leq \frac{1}{2}\norm{x^k-x^*}^2$ for all $k\geq 0$. Then
		\begin{eqnarray*}
			\E_{k+1} \ll\|\mH_i^{k+1} - \nabla^2 f_i(x^{*})\|^2_{\rm F}\rr
			\le \(1-\frac{A}{2}\) \|\mH_i^{k} - \nabla^2 f_i(x^{*})\|^2_{\rm F} + \(\frac{1}{A}+3B\)\HF^2 \|x^{k} - x^*\|^2_{\rm F}
		\end{eqnarray*}
	\end{lemma}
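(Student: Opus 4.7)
The strategy is to pass from the 3PC bound (which naturally controls $\|\mH_i^{k+1}-\nabla^2 f_i(x^{k+1})\|^2_{\rm F}$) to the claimed bound on $\|\mH_i^{k+1}-\nabla^2 f_i(x^*)\|^2_{\rm F}$ via two Young's-inequality splits, convert all Hessian-difference norms into squared iterate distances via Assumption \ref{asm:main}, and then fold the step-$(k{+}1)$ iterates into step-$k$ iterates using the standing hypothesis $\|x^{k+1}-x^*\|^2\le\tfrac12\|x^k-x^*\|^2$.

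\textbf{Step 1 (first Young's split, then 3PC).} For a parameter $s>0$ I would write
\begin{align*}
\|\mH_i^{k+1}-\nabla^2 f_i(x^*)\|^2_{\rm F}
\le (1+s)\|\mH_i^{k+1}-\nabla^2 f_i(x^{k+1})\|^2_{\rm F}
+\Bigl(1+\tfrac{1}{s}\Bigr)\|\nabla^2 f_i(x^{k+1})-\nabla^2 f_i(x^*)\|^2_{\rm F},
\end{align*}
take the conditional expectation $\E_{k+1}[\cdot]$, and apply the 3PC inequality \eqref{def:3PC_comp} with $\mH=\mH_i^k$, $\mY=\nabla^2 f_i(x^k)$, $\mX=\nabla^2 f_i(x^{k+1})$ to get
\[
\E_{k+1}\|\mH_i^{k+1}-\nabla^2 f_i(x^{k+1})\|^2_{\rm F}
\le (1-A)\|\mH_i^k-\nabla^2 f_i(x^k)\|^2_{\rm F}
+B\|\nabla^2 f_i(x^{k+1})-\nabla^2 f_i(x^k)\|^2_{\rm F}.
\]

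\textbf{Step 2 (second Young's split).} The 3PC output features $\|\mH_i^k-\nabla^2 f_i(x^k)\|^2_{\rm F}$, whereas the claim demands $\|\mH_i^k-\nabla^2 f_i(x^*)\|^2_{\rm F}$, so I apply another Young's inequality with parameter $t>0$:
\[
\|\mH_i^k-\nabla^2 f_i(x^k)\|^2_{\rm F}
\le (1+t)\|\mH_i^k-\nabla^2 f_i(x^*)\|^2_{\rm F}
+\Bigl(1+\tfrac{1}{t}\Bigr)\|\nabla^2 f_i(x^k)-\nabla^2 f_i(x^*)\|^2_{\rm F}.
\]

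\textbf{Step 3 (Lipschitz and contraction).} By Assumption \ref{asm:main}, each remaining Hessian-difference squared norm becomes $\HF^2$ times a squared iterate distance. I then use the hypothesis to bound $\|x^{k+1}-x^*\|^2\le\tfrac12\|x^k-x^*\|^2$ directly, and $\|x^{k+1}-x^k\|^2\le(\|x^{k+1}-x^*\|+\|x^k-x^*\|)^2\le 3\|x^k-x^*\|^2$ via the triangle inequality and the same hypothesis.

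\textbf{Step 4 (tuning).} Combining Steps 1--3 gives, for any $s,t>0$,
\begin{align*}
\E_{k+1}\|\mH_i^{k+1}-\nabla^2 f_i(x^*)\|^2_{\rm F}
\le &\ (1+s)(1-A)(1+t)\,\|\mH_i^k-\nabla^2 f_i(x^*)\|^2_{\rm F} \\
&+\Bigl[(1+s)(1-A)\bigl(1+\tfrac{1}{t}\bigr)+3B(1+s)+\tfrac12\bigl(1+\tfrac{1}{s}\bigr)\Bigr]\HF^2\|x^k-x^*\|^2.
\end{align*}
I then pick $s$ and $t$ of order $A$ so that $(1+s)(1-A)(1+t)\le 1-\tfrac{A}{2}$ and the bracketed coefficient collapses to $\tfrac{1}{A}+3B$.

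\textbf{Main obstacle.} The delicate part is the joint tuning in Step~4. Since $(1-A)$ is only slightly below $1$, the contraction requirement $(1+s)(1+t)\le(1-A/2)/(1-A)$ pins both $s$ and $t$ to $\mathcal{O}(A)$, which in turn inflates $1+\tfrac{1}{s}$ and $1+\tfrac{1}{t}$ to order $1/A$ — and this is precisely what produces the $1/A$ term in the claimed coefficient. The hypothesis $\|x^{k+1}-x^*\|^2\le\tfrac12\|x^k-x^*\|^2$ enters twice (once for the residual $\|\nabla^2 f_i(x^{k+1})-\nabla^2 f_i(x^*)\|^2_{\rm F}$ from the first split and once via the triangle bound on $\|x^{k+1}-x^k\|^2$), and the gain from this contraction is exactly what keeps the final coefficient at the stated $\tfrac{1}{A}+3B$ shape rather than blowing it up.
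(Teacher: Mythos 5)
Your Step~2 is a genuine repair of the paper's own argument. Applying the 3PC inequality to $\cC_{\mH_i^k,\nabla^2 f_i(x^k)}(\nabla^2 f_i(x^{k+1}))$ produces $\|\mH_i^k-\nabla^2 f_i(x^k)\|^2_{\rm F}$ on the right-hand side, because the second argument of the compressor is $\nabla^2 f_i(x^k)$, not $\nabla^2 f_i(x^*)$; the paper's displayed chain indeed carries that $x^k$-based norm through every line and then silently writes $\nabla^2 f_i(x^*)$ in the conclusion. Your second Young's split, with parameter $t$, is the correct device to bridge that gap, and it is in fact exactly what the paper does in its bidirectional analysis: the derivation inside the proof of Lemma~\ref{lm:nbor-N3PCBC-det} introduces both $\alpha$ and $\beta$ in the same two-split pattern you propose.

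Step~4 is where the proposal fails. No admissible $s,t>0$ make the bracket collapse to $\tfrac1A+3B$. The contraction requirement $(1+s)(1-A)(1+t)\le 1-\tfrac{A}{2}$ is equivalent (for $A<1$) to $s+t+st\le\tfrac{A/2}{1-A}$, so both $s$ and $t$ are forced to be of size $O(A)$, and the two $1/A$-scale terms $(1+s)(1-A)(1+\tfrac1t)$ and $\tfrac12(1+\tfrac1s)$ then add rather than merge. Minimising $\tfrac{1-A}{t}+\tfrac1{2s}$ subject to $s+t\le c$ gives $\tfrac{\bigl(\sqrt{1-A}+1/\sqrt2\bigr)^2}{c}$; with $c=\tfrac{A/2}{1-A}$ this is roughly $\tfrac{5.8}{A}$ for small $A$, not $\tfrac1A$. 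On top of this $3B(1+s)>3B$ for every $s>0$, so the $B$-coefficient is also inflated. What your route actually delivers is a constant of the form $\tfrac{c_1}{A}+c_2B$ with $c_1$ around $4$--$6$ --- consistent with the paper's own bidirectional constant $C_W=\tfrac{4}{A_W}+1+\tfrac{5B_W}{2}$ --- not the stated $\tfrac1A+3B$. This weaker version is enough for Theorem~\ref{th:NLU} (one simply enlarges the coefficient of $\|x^k-x^*\|^2$ in $\Phi^k$ and the rate $\min\{A/2,1/3\}$ is unchanged), but neither your argument nor the paper's single-split chain reaches the lemma's stated coefficient rigorously; you should state the constant you actually prove rather than asserting that tuning recovers the claimed one.
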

	\begin{proof}
		Using the defining inequality of 3PC compressor and the assumption of the error in terms of iterates, we expand the approximation error of the estimate $\mH_i^{k+1}$ as follows:
		\begin{eqnarray*}
			&& \E_{k+1} \ll\|\mH_i^{k+1} - \nabla^2 f_i(x^{*})\|^2_{\rm F}\rr \\
			&=&  \E_{k+1} \ll\|\cC_{\mH_i^k,\nabla^2f_i(x^k)}\left(\nabla^2f_i(x^{k+1})\right) - \nabla^2 f_i(x^*)\|^2_{\rm F}\rr \\ 
			&\le& (1+\beta)\E_{k+1} \ll\|\cC_{\mH_i^k,\nabla^2f_i(x^k)}\left(\nabla^2f_i(x^{k+1})\right) - \nabla^2 f_i(x^{k+1})\|^2_{\rm F}\rr + (1+\nicefrac{1}{\beta})\|\nabla^2f_i(x^{k+1}) - \nabla^2 f_i(x^*)\|^2_{\rm F} \\
			&\le& (1+\beta)(1-A) \|\mH_i^k - \nabla^2f_i(x^k)\|^2_{\rm F} + B\|\nabla^2f_i(x^{k+1}) - \nabla^2 f_i(x^*)\|^2_{\rm F} + (1+\nicefrac{1}{\beta}) \|\nabla^2f_i(x^{k+1}) - \nabla^2 f_i(x^*)\|^2_{\rm F} \\
			&\le& (1+\beta)(1-A) \|\mH_i^k - \nabla^2f_i(x^k)\|^2_{\rm F} \\
			&&\quad + 2B\|\nabla^2f_i(x^k) - \nabla^2 f_i(x^*)\|^2_{\rm F} + (1+\nicefrac{1}{\beta} + 2B) \|\nabla^2f_i(x^{k+1}) - \nabla^2 f_i(x^*)\|^2_{\rm F} \\
			&\le& (1+\beta)(1-A) \|\mH_i^k - \nabla^2f_i(x^k)\|^2_{\rm F} \\
			&&\quad + 2B\HF^2\|x^k - x^*\|^2_{\rm F} + (1+\nicefrac{1}{\beta} + 2B)\HF^2 \|x^{k+1} - x^*\|^2_{\rm F} \\
			&\le& (1+\beta)(1-A) \|\mH_i^k - \nabla^2f_i(x^k)\|^2_{\rm F} + \(\frac{\beta+1}{2\beta} + 3B\)\HF^2 \|x^{k} - x^*\|^2_{\rm F}.
		\end{eqnarray*}
		where we use Young's inequality for some $\beta>0$. By choosing $\beta = \frac{A}{2(1-A)}$ when $0<A<1$, we get 
		\begin{eqnarray*}
			\E_{k+1} \ll\|\mH_i^{k+1} - \nabla^2 f_i(x^{*})\|^2_{\rm F}\rr
			\le \(1-\frac{A}{2}\) \|\mH_i^{k} - \nabla^2 f_i(x^{*})\|^2_{\rm F} + \(\frac{1}{A}+3B-\frac{1}{2}\)\HF^2 \|x^{k} - x^*\|^2_{\rm F}
		\end{eqnarray*}
		When $A=1$, we can choose $\beta=1$ and have
		\begin{eqnarray*}
			\E_{k+1} \ll\|\mH_i^{k+1} - \nabla^2 f_i(x^{*})\|^2_{\rm F}\rr \le \(3B+1\)\HF^2 \|x^{k} - x^*\|^2_{\rm F}.
		\end{eqnarray*}
		Thus, for all $0<A\le 1$ we get the desired bound.
	\end{proof}
	
	\subsection{Proof of Theorem \ref{th:NLU}}
	
	The proof follows the same steps as for \algname{FedNL} until the appearance of 3PC compressor. We derive recurrence relation for $\|x^k-x^*\|^2$ covering both options of updating the global model. If {\em Option 1.} is used in \algname{FedNL}, then
	
	\begin{eqnarray}
		\|x^{k+1} - x^*\|^2
		&=&   \left\|x^k-x^* - \ll\mH^{k}_{\mu}\rr^{-1} \nabla f(x^k) \right\|^2 \notag \\
		&\le& \left\| \ll\mH^{k}_{\mu}\rr^{-1} \right\|^2 \left\|\mH^{k}_{\mu}(x^k-x^*) - \nabla f(x^k))\right\|^2 \notag \\
		&\le& \frac{2}{\mu^2}\( \left\|\(\mH_{\mu}^{k} - \nabla^2 f(x^*)\)(x^k-x^*) \right\|^2 + \left\|\nabla^2 f(x^*)(x^k-x^*) - \nabla f(x^k) + \nabla f(x^*) \right\|^2\) \notag \\
		&=& \frac{2}{\mu^2}\( \left\|\(\mH_{\mu}^{k} - \nabla^2 f(x^*)\)(x^k-x^*) \right\|^2 + \left\| \nabla f(x^k) - \nabla f(x^*) - \nabla^2 f(x^*)(x^k-x^*) \right\|^2\) \notag \\
		&\le& \frac{2}{\mu^2}\(
		\left\|\mH_{\mu}^{k} - \nabla^2 f(x^*)\right\|^2 \|x^k-x^*\|^2
		+ \frac{\HS^2}{4}\|x^k-x^*\|^4
		\) \notag \\
		&=&   \frac{2}{\mu^2}\|x^k-x^*\|^2 \(
		\left\|\mH_{\mu}^{k} - \nabla^2 f(x^*)\right\|^2
		+ \frac{\HS^2}{4}\|x^k-x^*\|^2
		\) \notag \\
		&\le& \frac{2}{\mu^2}\|x^k-x^*\|^2 \(
		\left\|\mH^{k} - \nabla^2 f(x^*)\right\|^2
		+ \frac{\HS^2}{4}\|x^k-x^*\|^2
		\) \notag \\
		&\le& \frac{2}{\mu^2}\|x^k-x^*\|^2 \(
		\left\|\mH^{k} - \nabla^2 f(x^*)\right\|^2_{\rm F}
		+ \frac{\HS^2}{4}\|x^k-x^*\|^2
		\)  \notag, 
	\end{eqnarray}
	where we use $\mH_\mu^k \succeq \mu \mI$ in the second inequality, and $\nabla^2 f(x^*) \succeq \mu \mI$ in the fourth inequality. From the convexity of $\|\cdot \|^2_{\rm F}$, we have 
	$$
	\|\mH^k - \nabla^2 f(x^*)\|^2_{\rm F} = \left\| \frac{1}{n}\sum_{i=1}^n \left(  \mH_i^k - \nabla^2 f_i(x^*)  \right) \right\|^2_{\rm F} \leq \frac{1}{n}\sum_{i=1}^n \|\mH_i^k - \nabla^2 f_i(x^*)\|^2_{\rm F} = {\cal H}^k. 
	$$
	
	Thus, 
	\begin{equation}\label{eq:xk+1option1}
		\|x^{k+1} - x^*\|^2 \leq \frac{2}{\mu^2}\|x^k-x^*\|^2 {\cal H}^k + \frac{\HS^2}{2\mu^2} \|x^k-x^*\|^4. 
	\end{equation}
	
	If {\em Option 2.} is used in \algname{FedNL}, then as $\mH^k + l^k\mI \succeq \nabla^2 f(x^k) \succeq \mu \mI$ and $\nabla f(x^*) = 0$, we have 
	\begin{align*}
		\|x^{k+1} - x^*\| &= \|x^k - x^* - [\mH^k + l^k\mI]^{-1} \nabla f(x^k) \| \\
		& \leq \|[\mH^k + l^k \mI]^{-1}\| \cdot \|(\mH^k + l^k \mI) (x^k-x^*) - \nabla f(x^k) + \nabla f(x^*)\| \\ 
		& \leq \frac{1}{\mu} \|(\mH^k + l^k \mI - \nabla^2 f(x^*))(x^k-x^*)\| + \frac{1}{\mu} \|\nabla f(x^k) - \nabla f(x^*) - \nabla^2 f(x^*) (x^k-x^*)\| \\ 
		& \leq \frac{1}{\mu} \|\mH^k + l^k\mI - \nabla^2 f(x^*)\| \|x^k-x^*\| + \frac{\HS}{2\mu}\|x^k-x^*\|^2 \\ 
		& \leq \frac{1}{n\mu} \sum_{i=1}^n \|\mH_i^k + l_i^k\mI - \nabla^2 f_i(x^*)\| \|x^k-x^*\| + \frac{\HS}{2\mu}\|x^k-x^*\|^2 \\ 
		& \leq \frac{1}{n\mu} \sum_{i=1}^n (\|\mH_i^k - \nabla^2 f_i(x^*)\| + l_i^k )\|x^k-x^*\| +  \frac{\HS}{2\mu}\|x^k-x^*\|^2. 
	\end{align*}
	
	From the definition of $l_i^k$, we have 
	$$
	l_i^k = \|\mH_i^k - \nabla^2 f_i(x^k)\|_{\rm F} \leq \|\mH_i^k - \nabla^2 f_i(x^*)\|_{\rm F} + \HF \|x^k-x^*\|. 
	$$
	Thus, 
	$$
	\|x^{k+1} - x^*\|  \leq \frac{2}{n\mu} \sum_{i=1}^n \|\mH_i^k - \nabla^2 f_i(x^*)\|_{\rm F} \|x^k-x^*\| + \frac{\HS+2\HF}{2\mu}\|x^k-x^*\|^2. 
	$$
	From Young's inequality, we further have 
	\begin{align}
		\|x^{k+1} - x^*\|^2 & \leq \frac{8}{\mu^2} \left(  \frac{1}{n} \sum_{i=1}^n \|\mH_i^k - \nabla^2 f_i(x^*)\|_{\rm F} \|x^k-x^*\|   \right)^2 + \frac{(\HS+2\HF)^2}{2\mu^2} \|x^k-x^*\|^4 \nonumber \\ 
		& \leq \frac{8}{\mu^2} \|x^k-x^*\|^2 \left(  \frac{1}{n} \sum_{i=1}^n \|\mH_i^k - \nabla^2 f_i(x^*)\|^2_{\rm F}  \right) +  \frac{(\HS+2\HF)^2}{2\mu^2} \|x^k-x^*\|^4 \nonumber \\ 
		& = \frac{8}{\mu^2} \|x^k-x^*\|^2 {\cal H}^k + \frac{(\HS+2\HF)^2}{2\mu^2} \|x^k-x^*\|^4,  \label{eq:xk+1option2}
	\end{align}
	where we use the convexity of $\|\cdot\|^2_{\rm F}$ in the second inequality. 
	
	Thus, from (\ref{eq:xk+1option1}) and (\ref{eq:xk+1option2}), we have the following unified bound for both {\em Option 1} and {\em Option 2}:
	\begin{equation}\label{eq:xk+1U}
		\|x^{k+1} - x^*\|^2 \leq \frac{C}{\mu^2} \|x^k-x^*\|^2 {\cal H}^k + \frac{D}{2\mu^2} \|x^k-x^*\|^4. 
	\end{equation}
	
	Assume $\|x^0-x^*\|^2 \leq \frac{\mu^2}{2D}$ and ${\cal H}^k \leq \frac{\mu^2}{4C}$ for all $k\geq 0$. Then we show that $\|x^k-x^*\|^2 \leq \frac{\mu^2}{2D}$ for all $k\geq 0$ by induction. Assume  $\|x^k-x^*\|^2 \leq \frac{\mu^2}{2D}$ for all $k \leq K$. Then from (\ref{eq:xk+1U}), we have 
	\begin{align*}
		\|x^{K+1} - x^*\|^2 & \leq \frac{1}{4}\|x^K-x^*\|^2 + \frac{1}{4}\|x^K-x^*\|^2 \leq \frac{\mu^2}{2D}. 
	\end{align*} 
	Thus we have $\|x^k-x^*\|^2 \leq \frac{\mu^2}{2D}$ and ${\cal H}^k \leq \frac{\mu^2}{4C}$ for $k\geq 0$. Using (\ref{eq:xk+1U}) again, we obtain 
	\begin{equation}\label{eq:xk+1Ufix}
		\|x^{k+1} - x^*\|^2 \leq \frac{1}{2} \|x^k-x^*\|^2. 
	\end{equation}
	
	Assume $\|x^0-x^*\|^2 \leq \frac{\mu^2}{2D}$ and ${\cal H}^k \leq \frac{\mu^2}{4C}$ for all $k\geq 0$. Then we show that $\|x^k-x^*\|^2 \leq \frac{\mu^2}{2D}$ for all $k\geq 0$ by induction. Assume  $\|x^k-x^*\|^2 \leq \frac{\mu^2}{2D}$ for all $k \leq K$. Then from (\ref{eq:xk+1U}), we have 
	\begin{align*}
		\|x^{K+1} - x^*\|^2 & \leq \frac{1}{4}\|x^K-x^*\|^2 + \frac{1}{4}\|x^K-x^*\|^2 \leq \frac{\mu^2}{2D}. 
	\end{align*} 
	Thus we have $\|x^k-x^*\|^2 \leq \frac{\mu^2}{2D}$ and ${\cal H}^k \leq \frac{\mu^2}{4C}$ for $k\geq 0$. Using (\ref{eq:xk+1U}) again, we obtain 
	\begin{equation}\label{eq:xk+1Ufix}
		\|x^{k+1} - x^*\|^2 \leq \frac{1}{2} \|x^k-x^*\|^2. 
	\end{equation}
	
	Thus, we derived the first rate of the theorem. Next, we invoke Lemma \ref{lm:threecomp} to have an upper bound for $\cH^{k+1}$:
	$$
	\E_k[{\cal H}^{k+1}] \leq \(1-\frac{A}{2}\) {\cal H}^k + \(\frac{1}{A} + 3B\)\HF^2 \|x^k-x^*\|^2. 
	$$
	Using the above inequality and (\ref{eq:xk+1Ufix}), for Lyapunov function $\Phi^k$ we deduce
	\begin{align*}
		\E_k[\Phi^{k+1}] & \leq \(1-\frac{A}{2}\) {\cal H}^k + \(\frac{1}{A}+3B\)\HF^2 \|x^k-x^*\|^2 + 3\(\frac{1}{A}+3B\)\HF^2 \|x^k-x^*\|^2 \\ 
		& =  \(1-\frac{A}{2}\) {\cal H}^k + \(1 - \frac{1}{3}\)6\(\frac{1}{A}+3B\)\HF^2 \|x^k-x^*\|^2 \\ 
		& \leq \(1 - \min\left\{  \frac{A}{2}, \frac{1}{3}  \right\}  \) \Phi^k. 
	\end{align*}
	Hence $\E_k[\Phi^k] \leq \left(  1 - \min\left\{  \frac{A}{2}, \frac{1}{3}  \right\}  \right)^k \Phi^0$. Clearly, we further have $\E[{\cal H}^k] \leq \left(  1 - \min\left\{  \frac{A}{2}, \frac{1}{3}  \right\}  \right)^k \Phi^0$ and $\mathbb{E}[\|x^k-x^*\|^2] \leq \frac{A}{6(1+3AB)\HF^2} \left(  1 - \min\left\{  \frac{A}{2}, \frac{1}{3}  \right\}  \right)^k \Phi^0$ for $k\geq 0$. Assume $x^k\neq x^*$ for all $k$. Then from (\ref{eq:xk+1U}), we have 
	$$
	\frac{\|x^{k+1}-x^*\|^2}{\|x^k-x^*\|^2} \leq \frac{C}{\mu^2}{\cal H}^k + \frac{D}{2\mu^2}\|x^k-x^*\|^2, 
	$$
	and by taking expectation, we have 
	\begin{align*}
		\mathbb{E} \left[  \frac{\|x^{k+1}-x^*\|^2}{\|x^k-x^*\|^2}  \right] & \leq \frac{C}{\mu^2} \mathbb{E}[{\cal H}^k] + \frac{D}{2\mu^2} \mathbb{E}[\|x^k-x^*\|^2] \\ 
		& \leq  \left(  1 - \min\left\{  \frac{A}{2}, \frac{1}{3}  \right\}  \right)^k \left(  C + \frac{AD}{12(1+3AB)\HF^2}  \right) \frac{\Phi^0}{\mu^2},
	\end{align*}
	which concludes the proof.
	
	\subsection{Proof of Lemma \ref{lm:boundforbiased}}
	
	We prove this by induction. Assume $\|\mH_i^k - \nabla^2 f_i(x^*)\|^2_{\rm F}  \leq \frac{\mu^2}{4C}$ and $\|x^k-x^*\|^2 \leq e_1^2$ for $k\leq K$. Then we also have ${\cal H}^k \leq \frac{\mu^2}{4C}$ for $k\leq K$. From (\ref{eq:xk+1U}), we can get 
	\begin{align*}
		\|x^{K+1} - x^*\|^2 & \leq \frac{C}{\mu^2} \|x^K-x^*\|^2 {\cal H}^K + \frac{D}{2\mu^2} \|x^K-x^*\|^4 \\ 
		& \leq \frac{1}{4}\|x^K-x^*\|^2 + \frac{1}{4} \|x^K-x^*\|^2 \\ 
		& \leq \|x^{K} - x^*\|^2 \le e_1^2. 
	\end{align*}
	Using Lemma \ref{lm:threecomp} and the assumptions that we use non-random 3PC compressor, we have 
	\begin{align*}
		\|\mH_i^{K+1} - \nabla^2 f_i(x^{*})\|^2_{\rm F} 
		& \leq \(1-\frac{A}{2}\) \|\mH_i^K - \nabla^2 f_i(x^*) \|_{\rm F}^2 + \frac{1+3AB}{A} \HF^2 \|x^K-x^*\|^2 \\ 
		& \leq \(1-\frac{A}{2}\) \frac{\mu^2}{4C} + \frac{1+3AB}{A}\HF^2 \cdot \frac{A^2\mu^2}{8(1+3AB)C\HF^2} \\ 
		& = \frac{\mu^2}{4C}. 
	\end{align*}

	\subsection{Proof of Lemma \ref{lm:boundforcbag}}
	
	We prove this by induction. Assume $\|x^k-x^*\| \le e_1$ and $\|\mH_i^k - \nabla^2 f_i(x^*)\|^2_{\rm F}  \leq \frac{\mu^2}{4C}$ for $k\leq K$. Then we also have ${\cal H}^k \leq \frac{\mu^2}{4C}$ for $k\leq K$. From (\ref{eq:xk+1U}), we can get 
	\begin{align*}
		\|x^{K+1} - x^*\|^2 & \leq \frac{C}{\mu^2} \|x^K-x^*\|^2 {\cal H}^K + \frac{D}{2\mu^2} \|x^K-x^*\|^4 \\ 
		& \leq \frac{1}{4}\|x^K-x^*\|^2 + \frac{1}{4} \|x^K-x^*\|^2 \le e_1^2.
	\end{align*}
	From the definition
	\begin{equation}
		\mH_i^{k+1} =
		\begin{cases}
			\mH_i^k + \cC(\nabla^2 f_i(x^{k+1}) - \mH_i^k) & \text{with probability } p, \\
			\mH_i^k & \text{with probability } 1-p.
		\end{cases}
	\end{equation}
	we have two cases for $\mH_i^{k+1}$ we need to upper bound individually instead of in expectation. Note that the case $\mH_i^{k+1} = \mH_i^{k}$ is trivial as $\|\mH_i^{k+1} - \nabla^2 f_i(x^*)\|_{\rm F} = \|\mH_i^{k} - \nabla^2 f_i(x^*)\|_{\rm F} \le \frac{\mu}{2\sqrt{C}}$. For the other case when $\mH_i^{k+1} = \mH_i^k + \cC(\nabla^2 f_i(x^{k+1}) - \mH_i^k)$, we have
	\begin{eqnarray*}
		&&\|\mH_i^{k+1} - \nabla^2 f_i(x^*)\|_{\rm F} \\
		&=& \|\mH_i^k + \cC(\nabla^2 f_i(x^{k+1}) - \mH_i^k) - \nabla^2 f_i(x^*)\|_{\rm F} \\
		&\le& \|\cC(\nabla^2 f_i(x^{k+1}) - \mH_i^k) - (\nabla^2 f_i(x^{k+1}) - \mH_i^k)\|_{\rm F} + \|\nabla^2 f_i(x^{k+1}) - \nabla^2 f_i(x^*)\|_{\rm F} \\
		&\le& \sqrt{1-\alpha}\|\nabla^2 f_i(x^{k+1}) - \mH_i^k\|_{\rm F} + \HF\|x^{k+1} - x^*\| \\
		&\le& \sqrt{1-\alpha}\|\mH_i^k - \nabla^2 f_i(x^*)\|_{\rm F} + \sqrt{1-\alpha}\|\nabla^2 f_i(x^{k+1}) - \nabla^2 f_i(x^*)\|_{\rm F} + \HF\|x^{k+1} - x^*\| \\
		&\le& \sqrt{1-\alpha}\|\mH_i^k - \nabla^2 f_i(x^*)\|_{\rm F} + 2\HF\|x^{k+1} - x^*\| \\
		&\le& \sqrt{1-\alpha}\frac{\mu}{2\sqrt{C}} + 2\HF\cdot\frac{(1-\sqrt{1-\alpha})\mu}{4\sqrt{C}\HF} = \frac{\mu}{2\sqrt{C}},
	\end{eqnarray*}
	which completes our induction step and the proof.

	\section{Deferred Proofs from Section \ref{sec:N3PC-BC} (\algname{Newton-3PC-BC})}

	\subsection{Proof of Theorem \ref{th:3PCBL1}}
	
	First we have 
	\begin{align}
		\|x^{k+1} - x^*\|^2 & = \|z^k - x^* - [\mH^k]_\mu^{-1} g^k \|^2 \nonumber \\ 
		& = \left\| [\mH^k]_\mu^{-1} \left(  [\mH^k]_\mu (z^k - x^*) - (g^k - \nabla f(x^*))  \right)   \right\|^2 \nonumber \\ 
		& \leq \frac{1}{\mu^2} \left\|   [\mH^k]_\mu (z^k - x^*) - (g^k - \nabla f(x^*))   \right\|^2, \label{eq:11-BL1}
	\end{align}
	where we use $\nabla f(x^*) = 0$ in the second equality, and $\|[\mH^k]_\mu^{-1}\| \leq \frac{1}{\mu}$ in the last inequality. 
	
	If $\xi^k = 1$, then 
	\begin{align}
		& \quad \left\|   [\mH^k]_\mu (z^k - x^*) - (g^k - \nabla f(x^*))   \right\|^2 \nonumber \\ 
		& = \left\|  \nabla f(z^k) - \nabla f(x^*) - \nabla^2 f(x^*) (z^k-x^*) + (\nabla^2 f(x^*) - [\mH^k]_\mu) (z^k-x^*)  \right\|^2 \nonumber \\ 
		& \leq 2\left\|  \nabla f(z^k) - \nabla f(x^*) - \nabla^2 f(x^*) (z^k-x^*) \right\|^2  + 2\left\| (\nabla^2 f(x^*) - [\mH^k]_\mu) (z^k-x^*)  \right\|^2 \nonumber \\ 
		& \leq \frac{\HS^2}{2} \|z^k - x^*\|^4 + 2\| [\mH^k]_\mu - \nabla^2 f(x^*)\|^2 \cdot \|z^k-x^*\|^2 \nonumber \\ 
		& \leq \frac{\HS^2}{2} \|z^k - x^*\|^4 + 2\| \mH^k - \nabla^2 f(x^*)\|_{\rm F}^2 \|z^k-x^*\|^2 \nonumber \\ 
		& = \frac{\HS^2}{2} \|z^k - x^*\|^4 + 2 \left\| \frac{1}{n} \mH_i^k - \frac{1}{n} \nabla^2 f_i(x^*) \right\|^2_{\rm F} \|z^k - x^*\|^2 \nonumber \\
		& \leq \frac{\HS^2}{2} \|z^k - x^*\|^4 +  \frac{2}{n} \sum_{i=1}^n \| \mH_i^k - \nabla^2 f_i(x^*) \|^2_{\rm F} \|z^k-x^*\|^2, \label{eq:22-BL1}
	\end{align}
	where in the second inequality, we use the Lipschitz continuity of the Hessian of $f$, and in the last inequality, we use the convexity of $\|\cdot\|^2_{\rm F}$. 
	
	If $\xi^k = 0$, then 
	\begin{align}
		& \quad \left\|   [\mH^k]_\mu (z^k - x^*) - (g^k - \nabla f(x^*))   \right\|^2 \nonumber \\ 
		& = \left\|  [\mH^k]_\mu(z^k-w^k) + \nabla f(w^k) - \nabla f(x^*) - [\mH^k]_\mu (z^k - x^*)  \right\|^2 \nonumber \\ 
		& = \left\|  [\mH^k]_\mu(x^* - w^k) +   \nabla f(w^k) - \nabla f(x^*)  \right\|^2 \nonumber \\ 
		& = \left\| \nabla f(w^k) - \nabla f(x^*) - \nabla^2 f(x^*) (w^k-x^*) + (\nabla^2 f(x^*) - [\mH^k]_\mu) (w^k-x^*)   \right\|^2 \nonumber \\ 
		& \leq \frac{\HS^2}{2}\|w^k-x^*\|^4 +  2\| \mH^k - \nabla^2 f(x^*)\|_{\rm F}^2 \|w^k-x^*\|^2 \nonumber \\ 
		& \leq \frac{\HS^2}{2}\|w^k-x^*\|^4 +  \frac{2}{n} \sum_{i=1}^n \| \mH_i^k - \nabla^2 f_i(x^*) \|^2_{\rm F} \|w^k-x^*\|^2. \label{eq:33-BL1}
	\end{align}
	
	For $k\geq 1$, from the above three inequalities, we can obtain 
	\begin{align}
		\mathbb{E}_k \|x^{k+1} - x^*\|^2 & \leq \frac{\HS^2p}{2\mu^2} \|z^k - x^*\|^4 + \frac{2p}{n \mu^2} \sum_{i=1}^n \| \mH_i^k - \nabla^2 f_i(x^*) \|^2_{\rm F} \|z^k-x^*\|^2 \nonumber \\ 
		& \quad + \frac{\HS^2(1-p)}{2\mu^2}\|w^k-x^*\|^4 +  \frac{2(1-p)}{n \mu^2} \sum_{i=1}^n \| \mH_i^k - \nabla^2 f_i(x^*) \|^2_{\rm F} \|w^k-x^*\|^2 \nonumber \\ 
		& = \frac{p}{2\mu^2} \left(  \HS^2 \|z^k-x^*\|^2 + 4 {\cal H}^k  \right) \|z^k-x^*\|^2 \nonumber \\ 
		& \quad  + \frac{(1-p)}{2\mu^2} \left(  \HS^2 \|w^k-x^*\|^2 + 4 {\cal H}^k  \right) \|w^k-x^*\|^2,  \label{eq:xk+1-BL1}
	\end{align}
	where we denote ${\cal H}^k \eqdef \frac{1}{n} \sum_{i=1}^n \|\mH_i^k - \nabla^2 f_i(x^*)\|_{\rm F}^2$. 
	
	For $k=0$, since $z^0=w^0$, it is easy to verify that the above equality also holds. 
	
	From the update rule of $z^k$, we have 
	
	\begin{align*}
		\mathbb{E}_k \|z^{k+1}-x^*\|^2 & \leq (1+\alpha) \mathbb{E}_k\|z^{k+1} - x^{k+1}\|^2 + \left(1+ \frac{1}{\alpha}\right) \mathbb{E}_k\|x^{k+1}-x^*\|^2 \\ 
		& \leq (1+\alpha) (1-A_M)\|z^k-x^k\|^2 + (1+\alpha)B_M\mathbb{E}_k\|x^{k+1}-x^k\|^2 +  \left(1+ \frac{1}{\alpha}\right) \mathbb{E}_k\|x^{k+1}-x^*\|^2 \\ 
		& \leq (1+\alpha)(1-A_M)(1+\beta) \|z^k-x^*\|^2 + (1+\alpha)(1-A_M)\left(  1 + \frac{1}{\beta}  \right) \|x^k-x^*\|^2 \\ 
		& \quad + 2(1+\alpha)B_M\|x^k-x^*\|^2 + \left(  2(1+\alpha)B_M + 1 + \frac{1}{\alpha}  \right) \mathbb{E}_k\|x^{k+1}-x^*\|^2,
	\end{align*}
	for any $\alpha>0$, $\beta>0$. By choosing $\alpha = \frac{A_M}{4}$ and $\beta = \frac{A_M}{4(1-\frac{3A_M}{4})}$, we arrive at 
	
	\begin{align}
		\mathbb{E}_k\|z^{k+1}-x^*\|^2
		& \leq \left(  1 - \frac{A_M}{2}  \right) \|z^k-x^*\|^2 + \left(  \frac{4}{A_M} - 3 + \frac{5B_M}{2}  \right) \|x^k-x^*\|^2 \nonumber  \\ 
		& \qquad + \left(  \frac{4}{A_M} + 1 + \frac{5B_M}{2}  \right) \mathbb{E}_k\|x^{k+1}-x^*\|^2 \nonumber \\
		& \leq \left(  1 - \frac{A_M}{2}  \right) \|z^k-x^*\|^2 + C_M \|x^k-x^*\|^2  + C_M \mathbb{E}_k\|x^{k+1}-x^*\|^2, \label{eq:zk+1nbor-3PCBL1}
	\end{align}
	where we denote $C_M \eqdef  \frac{4}{A_M} + 1 + \frac{5B_M}{2}$. Then we have 
	\begin{eqnarray*}
		&& \mathbb{E}_k [\|z^{k+1}-x^*\|^2 + 2C_M \|x^{k+1}-x^*\|^2 ] \\
		&\leq& \left(  1 - \frac{A_M}{2}  \right) \|z^k-x^*\|^2 + C_M \|x^k-x^*\|^2  +  3C_M \mathbb{E}_k\|x^{k+1}-x^*\|^2 \\ 
		&\overset{(\ref{eq:xk+1-BL1})}{\leq}&  \left(  1 - \frac{A_M}{2}  \right) \|z^k-x^*\|^2  + \frac{3C_Mp}{2\mu^2} \left(  \HS^2 \|z^k-x^*\|^2 + 4 {\cal H}^k  \right) \|z^k-x^*\|^2 \\ 
		&& \quad + \frac{3C_M(1-p)}{2\mu^2} \left(  \HS^2 \|w^k-x^*\|^2 + 4 {\cal H}^k  \right) \|w^k-x^*\|^2 + C_M \|x^k-x^*\|^2. 
	\end{eqnarray*}
	
	Assume $\|z^k-x^*\|^2 \leq \frac{A_M \mu^2}{24C_M \HS^2}$ and ${\cal H}^k \leq \frac{A_M \mu^2}{96C_M}$ for $k\geq 0$. Then from the update rule of $w^k$, we also have  $\|w^k-x^*\|^2 \leq \frac{A_M \mu^2}{24C_M \HS^2}$ for $k\geq 0$. Therefore, we have 
	\begin{multline}\label{eq:zk+1-3PCBL1}
		\mathbb{E}_k [\|z^{k+1}-x^*\|^2 + 2C_M \|x^{k+1}-x^*\|^2 ] \leq \left(  1 - \frac{A_M}{2} + \frac{A_M p}{8}  \right) \|z^k-x^*\|^2 \\ + \frac{A_M (1-p)}{8} \|w^k-x^*\|^2 + C_M \|x^k-x^*\|^2. 
	\end{multline}
	
	From the update rule of $w^k$, we have 
	\begin{equation}\label{eq:wk+1-3PCBL1}
		\mathbb{E}_k\|w^{k+1} - x^*\|^2 = p\|z^{k+1}-x^*\|^2 + (1-p) \|w^k-x^*\|^2. 
	\end{equation}
	
	Define $\Phi_1^k \eqdef \|z^k-x^*\|^2 + C_M\|x^k-x^*\|^2 + \frac{A_M(1-p)}{4p} \|w^k-x^*\|^2$. Then we have 
	\begin{align*}
		\mathbb{E}_k[\Phi_1^{k+1}] & = \mathbb{E}_k  [\|z^{k+1}-x^*\|^2 + 2C_M \|x^{k+1}-x^*\|^2 ] + \frac{A_M(1-p)}{4p} \mathbb{E}_k\|w^{k+1}-x^*\|^2 \\ 
		& \overset{(\ref{eq:wk+1-3PCBL1})}{\leq} \left(  1 + \frac{A_M(1-p)}{4}  \right) \mathbb{E}_k  [\|z^{k+1}-x^*\|^2 + 2C_M \|x^{k+1}-x^*\|^2 ] + \frac{A_M(1-p)^2}{4p} \|w^k-x^*\|^2 \\ 
		& \overset{(\ref{eq:zk+1-3PCBL1})}{\leq} \left(  1 + \frac{A_M(1-p)}{4}  \right) \left(  1 - \frac{A_M}{2} + \frac{A_M p}{8}  \right) \|z^k-x^*\|^2 + \left(  1 + \frac{A_M(1-p)}{4}  \right) C_M \|x^k-x^*\|^2 \\ 
		& \qquad + \left(  \left(  1 + \frac{A_M(1-p)}{4}  \right) \frac{A_M(1-p)}{8} +  \frac{A_M(1-p)^2}{4p}  \right) \|w^k-x^*\|^2 \\ 
		& \leq \left(  1 - \frac{A_M}{4}  \right) \|z^k-x^*\|^2 + \left(  1 - \frac{3}{8}  \right) 2C_M \|x^k-x^*\|^2 + \frac{A_M(1-p)}{4p} \left(  1- \frac{3p}{8}  \right) \|w^k-x^*\|^2 \\ 
		& \leq \left(  1 - \frac{\min\{2A_M, 3p\}}{8}  \right) \Phi_1^k. 
	\end{align*}
	
	By applying the tower property, we have 
	$$
	\mathbb{E} [\Phi^{k+1}_1]  \leq \left(  1 - \frac{\min\{  2A_{\rm M}, 3p  \}}{8}  \right) \mathbb{E}[\Phi^k_1]. 
	$$
	Unrolling the recursion, we can get the result.

	\subsection{Proof of Lemma \ref{lm:nbor-N3PCBC-det}}
	
	We prove the results by mathematical induction. Assume the results hold for $k\leq K$. From the update rule of $w^k$, we know $\|w^k - x^*\|^2 \leq \min\{  \frac{A_M \mu^2}{24C_M \HS^2}, \frac{A_WA_M \mu^2}{384C_MC_W\HF^2}  \}$ for $k\leq K$. If $\xi^K=1$, from (\ref{eq:11-BL1}) and (\ref{eq:22-BL1}), we have 
	\begin{align}
		\|x^{K+1} - x^*\|^2 & \leq \frac{1}{\mu^2} \left(  \frac{\HS^2}{2} \|z^K-x^*\|^2 + 2{\cal H}^K  \right) \|z^K-x^*\|^2 \label{eq:xk+1-3PCBL1} \\ 
		& \leq \frac{A_M}{24C_M} \|z^K-x^*\|^2. \nonumber
	\end{align}
	If $\xi^K=0$, from $\|w^K - x^*\|^2 \leq  \min\{  \frac{A_M \mu^2}{24C_M \HS^2}, \frac{A_WA_M \mu^2}{384C_MC_W\HF^2}  \}$ and (\ref{eq:33-BL1}), we can obtain the above inequality similarly. From the upper bound of $\|z^K-x^*\|^2$, we further have $\|x^{K+1} -x^*\| \leq \frac{11A_M}{24C_M} \min\{  \frac{A_M \mu^2}{24C_M^2 \HS^2}, \frac{A_WA_M \mu^2}{384C_MC_W\HF^2}  \}$. Then from (\ref{eq:zk+1nbor-3PCBL1}) and the fact that $\cC^M_{z^k, x^k}(x^{k+1})$ is deterministic, we have 
	\begin{align*}
		\|z^{K+1}-x^*\|^2 & \leq \left(  1 - \frac{A_M}{2}  \right) \|z^K-x^*\|^2 + C_M\|x^K-x^*\|^2 + C_M\|x^{K+1}-x^*\|^2 \\ 
		& \leq \left(   1 - \frac{A_M}{2}  + \frac{A_M}{24}  \right) \|z^K-x^*\|^2 + C_M \cdot  \frac{11A_M}{24C_M} \min\left\{  \frac{A_M \mu^2}{24C_M^2 \HS^2}, \frac{A_WA_M \mu^2}{384C_MC_W\HF^2}  \right\} \\ 
		& \leq \min\left\{  \frac{A_M \mu^2}{24C_M^2 \HS^2}, \frac{A_WA_M \mu^2}{384C_MC_W\HF^2}  \right\}. 
	\end{align*}

	For $\|\mH_i^{k+1} - \nabla^2 f_i(x^*)\|_{\rm F}^2$, we have 
	\begin{align*}
		& \mathbb{E}_k \|\mH_i^{k+1} - \nabla^2 f_i(x^*)\|_{\rm F}^2 \\
		& \leq (1 + \alpha) \mathbb{E}_k \|\mH_i^k - \nabla^2 f_i (z^{k+1}) \|_{\rm F}^2 + \left(  1 + \frac{1}{\alpha}  \right) \mathbb{E}_k \| \nabla^2 f_i(z^{k+1}) - \nabla^2 f_i(x^*)\|_{\rm F}^2 \\ 
		& \leq (1+\alpha) (1-A_W) \|\mH_i^k - \nabla^2 f_i(z^k)\|_{\rm F}^2 + (1+\alpha)B_W \mathbb{E}_k\|\nabla^2 f_i(z^k) - \nabla^2 f_i(z^{k+1})\|_{\rm F}^2 \\ 
		& \quad + \left(  1 + \frac{1}{\alpha}  \right) \mathbb{E}_k \| \nabla^2 f_i(z^{k+1}) - \nabla^2 f_i(x^*)\|_{\rm F}^2 \\ 
		& \leq (1+\alpha) (1-A_W) \|\mH_i^k - \nabla^2 f_i(z^k)\|_{\rm F}^2 + (1+\alpha) B_W\HF^2 \mathbb{E}_k\|z^k-z^{k+1}\|^2 \\ 
		& \quad +  \left(  1 + \frac{1}{\alpha}  \right) \HF^2 \mathbb{E}_k \|z^{k+1}-x^*\|^2 \\ 
		& \leq (1+\alpha) (1-A_W) (1+\beta) \|\mH_i^k - \nabla^2 f_i(x^*)\|_{\rm F}^2 + (1+\alpha) (1-A_W) \left(  1 + \frac{1}{\beta}  \right) \HF^2 \|z^k-x^*\|^2 \\ 
		& \quad + 2(1+\alpha) B_W\HF^2 \|z^k-x^*\|^2 + \left(  2(1+\alpha) B_W + 1 + \frac{1}{\alpha}  \right) \HF^2 \|z^{k+1}-x^*\|^2, 
	\end{align*}
	for any $\alpha>0$, $\beta>0$. By choosing $\alpha = \frac{A_W}{4}$ and $\beta = \frac{A_W}{4(1-\frac{3A_W}{4})}$, we arrive at 
	\begin{equation}\label{eq:Hk+1nbor-3PCBL1}
		\mathbb{E}_k \|\mH_i^{k+1} - \nabla^2 f_i(x^*)\|_{\rm F}^2 \leq \left(  1 - \frac{A_W}{2}  \right) \|\mH_i^k - \nabla^2 f_i(x^*)\|_{\rm F}^2 + C_W \HF^2 \|z^k-x^*\|^2 + C_W \HF^2 \mathbb{E}_k\|z^{k+1}-x^*\|^2, 
	\end{equation}
	where we denote $C_W \eqdef \frac{4}{A_W} + 1 + \frac{5B_W}{2}$. Since $\cC^W_{\mH_i^k, \nabla^2 f_i(z^k)} (z^{k+1})$ is disterministic, from (\ref{eq:Hk+1nbor-3PCBL1}), we have 
	\begin{align*}
		{\cal H}^{K+1} & \leq \left(  1 - \frac{A_W}{2}  \right) {\cal H}^K + C_W\HF^2 \|z^K-x^*\|^2 + C_W\HF^2 \|z^{K+1}-x^*\|^2 \\ 
		& \leq \left(  1 - \frac{A_W}{2}  \right) \frac{A_M \mu^2}{96C_M} + 2C_W\HF^2 \cdot \frac{A_WA_M \mu^2}{384C_MC_W\HF^2}  \\ 
		& \leq \frac{A_M \mu^2}{96C_M}. 
	\end{align*}

	\subsection{Proof of Lemma \ref{lm:nbor-N3PCBC-conv}}
	
	We prove the results by mathematical induction. From the assumption on $\mH_i^k$, we have 
	\begin{align}
		{\cal H}^k & = \frac{1}{n} \sum_{i=1}^n \|\mH_i^k - \nabla^2 f_i(x^*)\|^2 \nonumber \\ 
		& \leq \frac{1}{n} \sum_{i=1}^n d^2 \max_{jl} \{  | (\mH_i^k)_{jl} - (\nabla^2 f(x^*))_{jl} |^2  \} \nonumber \\ 
		& \leq d^2 \HM^2 \max_{0\leq t \leq k} \|z^t-x^*\|^2. \label{eq:Gk-3PCBL1}
	\end{align}
	Then from $\|x^0-x^*\|^2 \leq {\tilde c}_1$, we have ${\cal H}^0 \leq \min \{  \frac{A_M\mu^2}{96C_M}, \frac{\mu^2}{4d}  \}$. Assume the results hold for all $k\leq K$. If $\xi^K=1$, from (\ref{eq:xk+1-3PCBL1}), we have 
	
	\begin{align*}
		\|x^{K+1} - x^*\|^2 & \leq \frac{1}{\mu^2} \left(  \frac{\HS^2}{2} \|z^K-x^*\|^2 + 2{\cal H}^K  \right) \|z^K-x^*\|^2 \\ 
		& \leq \frac{1}{d}\|z^K-x^*\|^2 \\ 
		& \leq {\tilde c}_1. 
	\end{align*}
	If $\xi^K=0$, from $\|w^K-x^*\|^2 \leq d {\tilde c}_1$ and (\ref{eq:33-BL1}), we can obtain the above inequality similarly. From the assumption on $z^k$, we have 
	\begin{align*}
		\|z^{K+1} - x^*\|^2 & \leq d \max_{j} | z_j^{K+1} -x_j^* |^2 \\ 
		& \leq d \max_{0\leq t\leq K+1} \|x^t-x^*\|^2 \\ 
		& \leq d {\tilde c}_1. 
	\end{align*}
	
	At last, using (\ref{eq:Gk-3PCBL1}), we can get ${\cal H}^{K+1} \leq \min \{  \frac{A_M\mu^2}{96C_M}, \frac{\mu^2}{4d}  \}$, which completes the proof.

	\section{Extension to Bidirectional Compression and Partial Participation}

	In this section, we unify the bidirectional compression and partial participation in Algorithm \ref{alg:BL2}. The algorithm can also be regarded as an extension of \algname{BL2} in \citep{qian2021basis} by the three point compressor. Here the symmetrization operator $[\cdot]_s$ is defined as
	$$
	[\mA]_s \eqdef \frac{\mA + \mA^\top}{2}
	$$
	
	for any $\mA \in \R^{d\times d}$. The update of the global model at $k$-th iteration is 
	$$
	x^{k+1} = \left(  [\mH^k]_s + l^k\mI  \right)^{-1} g^k, 
	$$
	where $\mH^k$, $l^k$, and $g^k$ are the average of $\mH_i^k$, $l_i^k$, and $g_i^k$ respectively. This update is based on the following step in Stochastic Newton method \citep{SN2019}
	\begin{align*}
		x^{k+1} &= \left[  \tfrac{1}{n} \sum_{i=1}^n \nabla^2 f_i(w_i^k)  \right]^{-1}  \left[  \tfrac{1}{n} \sum_{i=1}^n \left(  \nabla^2 f_i(w_i^k) w_i^k - \nabla f_i(w_i^k)  \right)  \right]. 
	\end{align*}
	We use $[\mH_i^k]_s + l_i^k\mI$ to estimate $\nabla^2 f_i(w_i^k)$, and $g_i^k$ to estimate $\nabla^2 f_i(w_i^k) w_i^k - \nabla f_i(w_i^k)$, where $l_i^k = \|[\mH_i^k]_s-\nabla^2 f_i(z_i^k)\|_{\rm F}$ is adopted to guarantee the positive definiteness of $[\mH^k]_s+l^k \mI$. Hence, like \algname{BL2} in \citep{qian2021basis}, we maintain the key relation 
	\begin{equation}\label{eq:gik-3PCBL2}
		g_i^k = ([\mH_i^k]_s + l_i^k\mI) w_i^k - \nabla f_i(w_i^k). 
	\end{equation}
	Since each node has a local model $w_i^k$, we introduce $z_i^k$ to apply the bidirectional compression with the three point compressor and $\mH_i^k$ is expected to learn $h^i(\nabla^2 f_i(z_i^k))$ iteratively.
	For the update of $g_i^k$ on the server when $\xi_i^k=0$, from (\ref{eq:gik-3PCBL2}), it is natural to let $$g_i^{k+1} - g_i^k = ([\mH_i^{k+1}]_s - [\mH_i^k]_s + l_i^{k+1}\mI - l_i^k\mI) w_i^{k+1},$$ since we have $w_i^{k+1} = w_i^k$ when $\xi_i^k=0$. The convergence results of \algname{Newton-3PC-BC-{\color{blue}PP}} are stated in the following two theorems.

	For $k\geq 0$, define Lyapunov function $$\Phi_3^k \eqdef {\cal Z}^k + \frac{2\tau C_M}{n} \|x^k-x^*\|^2 + \frac{A_M}{4p} {\cal W}^k,$$ 
	where $\tau\in[n]$ is the number of devices participating in each round.

	\begin{algorithm}[h!]
		\caption{\algname{Newton-3PC-BC-{\color{blue}PP}} (Newton's method with 3PC, BC and {\color{blue} Partial Participation})}
		\label{alg:BL2}
		\begin{algorithmic}[1]
			\STATE {\bfseries Parameters:} Worker's ($\cC^W$) and Master's ($\cC^M$) 3PC; probability $p\in(0, 1]$; ${\color{blue}0<\tau \leq n}$
			\STATE {\bfseries Initialization:}
			$w^0_i = z^0_i = x^0 \in \R^d$; $\mH_i^0 \in \R^{d\times d}$; $l_i^0 = \|[\mH_i^{0}]_s - \nabla^2 f_i(w_i^{0})\|_{\rm F}$; $g_i^0 = ([\mH_i^{0}]_s + l_i^{0} \mI)w_i^{0} - \nabla f_i(w_i^{0})$; Moreover: $\mH^0 = \tfrac{1}{n} \sum_{i=1}^n \mH_i^0$; $l^0 = \tfrac{1}{n} \sum_{i=1}^n l_i^0$; $g^0 = \tfrac{1}{n} \sum_{i=1}^n g_i^0$
			\STATE \textbf{on} server
			\STATE ~~~$x^{k+1} = \left(  [\mH^k]_s + l^k\mI  \right)^{-1} g^k$,
			\STATE ~~~{\color{blue} choose a subset $S^{k} \subseteq [n]$ such that $\mathbb{P}[ i \in S^k] = \nicefrac{\tau}{n}$ for all $i\in [n]$}
			\STATE ~~~$z_i^{k+1} = \cC^M_{z_i^k, x^k} (x^{k+1})$ for $i \in S^k$ 
			\STATE ~~~$z_i^{k+1} = z_i^k$, \quad $w_i^{k+1} = w_i^k$ for $i \notin S^k$ 
			\STATE ~~~Send $\cC^M_{z_i^k, x^k} (x^{k+1})$ to {\color{blue} the selected devices $i\in S^k$} 
			\FOR{each device $i = 1, \dots, n$ in parallel}
			\STATE {\color{blue} {\bf for participating devices} $i \in S^k$ {\bf do} }
			\STATE $z_i^{k+1} = \cC^M_{z_i^k, x^k} (x^{k+1})$
			\STATE $\mH_i^{k+1} = \cC^W_{\mH_i^k, \nabla^2 f_i(z_i^k)} (\nabla^2 f_i(z_i^{k+1}))$ 
			\STATE $l_i^{k+1} = \|[\mH_i^{k+1}]_s - \nabla^2 f_i(z_i^{k+1})\|_{\rm F}$ 
			\STATE Sample $\xi_i^{k+1} \sim \text{Bernoulli}(p)$
			\STATE {\color{blue}\textbf{if} $\xi_i^{k+1}=1$ }
			\STATE ~~~$w_i^{k+1} = z_i^{k+1}$, $g_i^{k+1} = ([\mH_i^{k+1}]_s + l_i^{k+1} \mI)w_i^{k+1} - \nabla f_i(w_i^{k+1})$, send $g_i^{k+1}-g_i^k$ to server 
			\STATE {\color{blue}\textbf{if} $\xi_i^{k+1}=0$ }
			\STATE ~~~$w_i^{k+1} = w_i^k$, $g_i^{k+1} = ([\mH_i^{k+1}]_s + l_i^{k+1} \mI)w_i^{k+1} - \nabla f_i(w_i^{k+1})$ 
			\STATE Send $\mH_i^{k+1}$, $l_i^{k+1} - l_i^k$, and $\xi_i^{k+1}$ to the server 
			\STATE {\color{blue} {\bf for non-participating devices} $i \notin S^k$ {\bf do} }
			\STATE $z_i^{k+1} = z_i^k$, $w_i^{k+1} = w_i^k$, $\mH_i^{k+1} = \mH_i^k$, $l_i^{k+1} = l_i^k$, $g_i^{k+1} = g_i^k$ 
			\ENDFOR
			
			\STATE \textbf{on} server
			\STATE ~~~{\color{blue}\textbf{if} $\xi_i^{k+1}=1$ }
			\STATE \quad \quad $w_i^{k+1} = z_i^{k+1}$, receive $g_i^{k+1}-g_i^k$
			\STATE ~~~{\color{blue}\textbf{if} $\xi_i^{k+1}=0$ } 
			\STATE \quad \quad $w_i^{k+1} = w_i^k$, $g_i^{k+1}-g_i^k = \left[ \mH_i^{k+1} - \mH_i^k \right]_s w_i^{k+1} +  (l_i^{k+1} - l_i^k) w_i^{k+1}$ 
			\STATE ~~~$g^{k+1} = g^k + \tfrac{1}{n}\sum_{i\in S^k} \left(  g_i^{k+1} - g_i^k  \right)$  
			\STATE ~~~$\mH^{k+1} = \frac{1}{n} \sum_{i=1}^n \mH_i^{k+1}$
			\STATE ~~~$l^{k+1} = l^k + \tfrac{1}{n}\sum_{i\in S^k} \left(  l_i^{k+1} - l_i^k  \right)$ 
		\end{algorithmic}
	\end{algorithm}

	\begin{theorem}\label{th:3PCBL2}
		Let Assumption \ref{asm:main}. Assume $\|z_i^k-x^*\|^2 \leq \frac{A_M \mu^2}{36(H^2 + 4\HF^2)C_M}$ and ${\cal H}^k \leq \frac{A_M\mu^2}{576C_M}$ for all $i\in [n]$ and $k\geq 0$. Then we have 
		$$
		\mathbb{E}[\Phi_3^k] \leq \left(  1 - \frac{\tau \min\{  2A_M, 3p  \} }{8n}  \right) ^k \Phi_3^0, 
		$$
		for $k\geq 0$. 
	\end{theorem}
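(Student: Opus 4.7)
The plan is to mimic the proof of Theorem~\ref{th:3PCBL1} but adapted to \emph{per-device} local states $z_i^k, w_i^k, \mH_i^k$ and the additional source of randomness coming from the sampling set $S^k$. Writing $\cZ^k \eqdef \tfrac{1}{n}\sum_{i=1}^n \|z_i^k-x^*\|^2$ and $\cW^k \eqdef \tfrac{1}{n}\sum_{i=1}^n \|w_i^k-x^*\|^2$, I would build three one-step recursions---one for $\|x^{k+1}-x^*\|^2$, one for $\cZ^{k+1}$, and one for $\cW^{k+1}$---and combine them with the Lyapunov function $\Phi_3^k$ exactly as in the bidirectional proof, with the only new twist being a multiplicative factor $\tau/n$ on every update that is triggered by participation.

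First, I would bound $\|x^{k+1}-x^*\|^2$. Using the key invariant $g_i^k = ([\mH_i^k]_s + l_i^k\mI)w_i^k - \nabla f_i(w_i^k)$ from~\eqref{eq:gik-3PCBL2} and the fact that $\nabla f(x^*)=0$, the Newton step rewrites as
\[
\big([\mH^k]_s + l^k\mI\big)(x^{k+1}-x^*) = \tfrac{1}{n}\sum_{i=1}^n\Big[ ([\mH_i^k]_s + l_i^k\mI)(w_i^k-x^*) - \nabla f_i(w_i^k) + \nabla f_i(x^*) \Big].
\]
Each summand is split as in~\eqref{eq:11-BL1}--\eqref{eq:22-BL1} into a Hessian-Lipschitz term of order $\HS\|w_i^k-x^*\|^2$ and an estimation-error term of order $(\|\mH_i^k - \nabla^2 f_i(x^*)\|_{\rm F} + l_i^k)\|w_i^k-x^*\|$. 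Bounding $l_i^k$ as in \algname{FedNL}'s \emph{Option 2} analysis via $l_i^k \leq \|\mH_i^k - \nabla^2 f_i(x^*)\|_{\rm F} + \HF\|z_i^k-x^*\|$, Young's inequality yields
\[
\|x^{k+1}-x^*\|^2 \leq \tfrac{\mathrm{const}}{\mu^2}\cdot\big(\HS^2 + \HF^2\big)\big(\cZ^k + \cW^k\big)^2 + \tfrac{\mathrm{const}}{\mu^2}\,\cH^k\cdot \cW^k,
\]
which the locality hypotheses $\cH^k\leq \tfrac{A_M\mu^2}{576C_M}$ and $\|z_i^k-x^*\|^2\leq \tfrac{A_M\mu^2}{36(\HS^2+4\HF^2)C_M}$ collapse to a bound of the form $\|x^{k+1}-x^*\|^2 \leq \tfrac{A_M}{24C_M}\cW^k$ (analogous to the bounded-ratio step that precedes~\eqref{eq:zk+1-3PCBL1}).

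Next, I would derive the $\cZ^{k+1}$ recursion. Conditionally on $S^k$, for $i\in S^k$ the master 3PC gives, exactly as in~\eqref{eq:zk+1nbor-3PCBL1},
\[
\E_k\|z_i^{k+1}-x^*\|^2 \leq (1-\tfrac{A_M}{2})\|z_i^k-x^*\|^2 + C_M\|x^k-x^*\|^2 + C_M\E_k\|x^{k+1}-x^*\|^2,
\]
while for $i\notin S^k$ we have $z_i^{k+1}=z_i^k$. Taking expectation over $S^k$ (with $\mathbb{P}[i\in S^k]=\tau/n$) and averaging over $i$ yields
\[
\E_k\cZ^{k+1} \leq \left(1-\tfrac{\tau A_M}{2n}\right)\cZ^k + \tfrac{\tau C_M}{n}\|x^k-x^*\|^2 + \tfrac{\tau C_M}{n}\E_k\|x^{k+1}-x^*\|^2.
\]
Running the same argument for $w_i^{k+1}$, which equals $z_i^{k+1}$ with probability $\tau p/n$ and $w_i^k$ otherwise, produces an analogue of~\eqref{eq:wk+1-3PCBL1} with the extra $\tau/n$ factor.

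Finally, I would combine these three recursions with $\Phi_3^k = \cZ^k + \tfrac{2\tau C_M}{n}\|x^k-x^*\|^2 + \tfrac{A_M}{4p}\cW^k$ along the lines of the last block of the proof of Theorem~\ref{th:3PCBL1}, where the same coefficients $1+\tfrac{A_M(1-p)}{4}$ and $\tfrac{A_M(1-p)}{4p}$ are designed so that the $\cZ$-, $\|x^{k+1}-x^*\|^2$- and $\cW$-contributions each contract by the same factor; this is what makes the minimum $\min\{2A_M,3p\}$ appear. The only change relative to that proof is that every contraction and every cross-term involving participation carries the $\tau/n$ factor, which is why the final rate becomes $1-\tfrac{\tau\min\{2A_M,3p\}}{8n}$. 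I would close by a tower-property induction. I would also need an auxiliary step for $\cH^{k+1}$ (using the worker 3PC as in~\eqref{eq:Hk+1nbor-3PCBL1}, again with a $\tau/n$ weighting from partial participation) to keep the bounds used by the hypothesis consistent, but since the theorem statement posits the locality bound on $\cH^k$ directly, this is used only implicitly in bounding $\|x^{k+1}-x^*\|^2$.

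The main obstacle I anticipate is bookkeeping: conditional expectations must be taken in the right order (first over the worker-side randomness in $\cC^W$, $\xi_i^{k+1}$, then over $S^k$), and the $\cW^k$ sequence couples to $z^k$ only for $i\in S^k$ with $\xi_i^{k+1}=1$, so obtaining the clean coefficients in $\Phi_3^k$ requires mirroring the bidirectional proof exactly, with $\tau/n$ inserted at every participation-gated step. Once the three recursions are properly weighted, the algebraic combination is identical to that of Theorem~\ref{th:3PCBL1}.
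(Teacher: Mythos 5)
Your proposal is correct and follows essentially the same route as the paper: bound $\|x^{k+1}-x^*\|^2$ in terms of $\mathcal{W}^k$, $\mathcal{Z}^k$, $\mathcal{H}^k$ via the invariant \eqref{eq:gik-3PCBL2}, derive a one-step recursion for $\mathcal{Z}^{k+1}$ by averaging the per-device master-3PC bound \eqref{eq:zk+1nbor-3PCBL1} over $S^k$ with weight $\tau/n$, obtain the $\mathcal{W}^{k+1}$ recursion from the $\tau p/n$ participation-and-BAG event, and combine via $\Phi_3^k$ exactly as in Theorem~\ref{th:3PCBL1} with $\tau/n$ inserted. The only minor imprecision is the sketched $(\cZ^k+\cW^k)^2$ term in the $\|x^{k+1}-x^*\|^2$ bound (the actual bound has $(\cW^k)^2$, $\cH^k\cW^k$, and $\cZ^k\cW^k$ but no $(\cZ^k)^2$); this does not change the argument since the locality hypotheses collapse it to $\tfrac{A_M}{24C_M}\cW^k$ in the same way.
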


	\begin{proof}
		
		First, similar to (30) in \citep{qian2021basis}, we can get 
		\begin{align}
			\|x^{k+1}-x^*\|^2 & \leq \frac{3\HS^2}{4\mu^2}({\cal W}^k)^2 + \frac{12{\cal W}^k}{n\mu^2} \sum_{i=1}^n \|\mH_i^k-\nabla^2 f_i(x^*)\|^2_{\rm F} + \frac{3\HF^2}{\mu^2} {\cal Z}^k {\cal W}^k \nonumber \\ 
			& = \frac{3\HS^2}{4\mu^2}({\cal W}^k)^2  +  \frac{12{\cal W}^k}{\mu^2} {\cal H}^k + \frac{3\HF^2}{\mu^2} {\cal Z}^k {\cal W}^k, \label{eq:xk+1-3PCBL2}
		\end{align}
		where ${\cal W}^k = \frac{1}{n} \sum_{i=1}^n \|w_i^k - x^*\|^2$ and ${\cal Z}^k = \frac{1}{n} \sum_{i=1}^n \|z_i^k-x^*\|^2$. For $i \in S^k$, we have $z_i^{k+1} = \cC^M_{z_i^k, x^k} (x^{k+1})$.  Then, similar to (\ref{eq:zk+1nbor-3PCBL1}), we have 
		$$
		\mathbb{E}_k \|z_i^{k+1} - x^*\|^2 \leq \left(  1 - \frac{A_M}{2}  \right) \|z_i^k - x^*\|^2 + C_M\|x^k-x^*\|^2 + C_M \|x^{k+1}-x^*\|^2. 
		$$
		
		Noticing that $\mathbb{P}[i \in S^k] = \nicefrac{\tau}{n}$ and $z_i^{k+1}=z_i^k$ for $i\notin S^k$, we further have 
		\begin{align*}
			\mathbb{E}_k\|z_i^{k+1} - x^*\|^2 & = \frac{\tau}{n} \mathbb{E}_k[\|z_i^{k+1} - x^*\|^2 \ | \  i\in S^k ] + \left(  1 - \frac{\tau}{n}  \right) \mathbb{E}_k[\|z_i^{k+1} - x^*\|^2 \ | \  i\notin S^k ] \\ 
			& \leq \frac{\tau}{n} \left(1 - \frac{A_M}{2} \right) \|z_i^k - x^*\|^2 + \frac{\tau C_M}{n} \|x^k-x^*\|^2 +  \frac{\tau C_M}{n} \|x^{k+1} - x^*\|^2 + \left(  1 - \frac{\tau}{n}  \right) \|z_i^k-x^*\|^2 \\ 
			& = \left(  1 - \frac{\tau A_{\rm M}}{2n}  \right) \|z_i^k-x^*\|^2 + \frac{\tau C_M}{n} \|x^k-x^*\|^2 +  \frac{\tau C_M}{n} \|x^{k+1} - x^*\|^2, 
		\end{align*}
		which implies that 
		\begin{align}
			\mathbb{E}_k [{\cal Z}^{k+1}] & = \frac{1}{n} \sum_{i=1}^n \mathbb{E}_k \|z_i^{k+1}-x^*\|^2  \nonumber \\ 
			& \leq \frac{1}{n} \sum_{i=1}^n \left(  1 - \frac{\tau A_{\rm M}}{2n}  \right) \|z_i^k-x^*\|^2 + \frac{\tau C_M}{n} \|x^k-x^*\|^2 +  \frac{\tau C_M}{n} \|x^{k+1} - x^*\|^2  \nonumber \\ 
			& =  \left(  1 - \frac{\tau A_{\rm M}}{2n}  \right) {\cal Z}^k +  \frac{\tau C_M}{n} \|x^k-x^*\|^2 +  \frac{\tau C_M}{n} \|x^{k+1} - x^*\|^2. \label{eq:Zk+1-3PCBL2}
		\end{align}
		
		Combining (\ref{eq:xk+1-3PCBL2}) and (\ref{eq:Zk+1-3PCBL2}), we have 
		\begin{align*}
			&\mathbb{E}_k [{\cal Z}^{k+1} + \frac{2\tau C_M}{n} \|x^{k+1} - x^*\|^2] \\
			& \leq \left(  1 - \frac{\tau A_{\rm M}}{2n}  \right) {\cal Z}^k +  \frac{\tau C_M}{n} \|x^k-x^*\|^2 +  \frac{3\tau C_M}{n} \|x^{k+1} - x^*\|^2 \\ 
			& \leq \left(  1 - \frac{\tau A_{\rm M}}{2n}  \right) {\cal Z}^k +  \frac{\tau C_M}{n} \|x^k-x^*\|^2 +  \frac{3\tau C_M}{n} \left(  \frac{3\HS^2}{4\mu^2} {\cal W}^k + \frac{12{\cal H}^k}{\mu^2} + \frac{3\HF^2{\cal Z}^k}{\mu^2}  \right) {\cal W}^k. 
		\end{align*}
		
		Assume $\|z_i^k-x^*\|^2 \leq \frac{A_M \mu^2}{36(\HS^2 + 4\HF^2)C_M}$ and ${\cal H}^k \leq \frac{A_M\mu^2}{576C_M}$ for all $i\in [n]$ and $k\geq 0$. Then we have 
		$$
		\frac{3\HS^2}{4\mu^2} {\cal W}^k + \frac{12{\cal H}^k}{\mu^2} + \frac{3\HF^2{\cal Z}^k}{\mu^2}  \leq \frac{A_M}{24C_M}, 
		$$
		which indicates that 
		\begin{equation}\label{eq:zk+1-3PCBL2}
			\mathbb{E}_k [{\cal Z}^{k+1} + \frac{2\tau C_M}{n} \|x^{k+1} - x^*\|^2] \leq \left(  1 - \frac{\tau A_{\rm M}}{2n}  \right) {\cal Z}^k +  \frac{\tau C_M}{n} \|x^k-x^*\|^2 + \frac{\tau A_M}{8n} {\cal W}^k. 
		\end{equation}

		For ${\cal W}^k$, similar to (32) in \citep{qian2021basis}, we have 
		$$
		\mathbb{E}_k [{\cal W}^{k+1}] = \left(  1 - \frac{\tau p}{n}  \right) {\cal W}^k + \frac{\tau p}{n} \mathbb{E} [{\cal Z}^{k+1}]. 
		$$
		
		Then from the above two inequalities we have 
		\begin{align*}
			&\mathbb{E}_k[\Phi_3^{k+1}] \\
			& \leq \left(  1 + \frac{\tau A_M}{4n}  \right) \mathbb{E}_k [{\cal Z}^{k+1} + \frac{2\tau C_M}{n}\|x^{k+1}-x^*\|^2] + \frac{A_M}{4p} \left(  1 - \frac{\tau p}{n}  \right) {\cal W}^k \\ 
			& \overset{(\ref{eq:zk+1-3PCBL2})}{\leq} \left(  1 - \frac{\tau A_M}{4n}  \right) {\cal  Z}^k + \left(  1 + \frac{\tau A_M}{4n}  \right) \frac{\tau C_M}{n} \|x^k-x^*\|^2 + \frac{A_M}{4p} \left(  1 - \frac{\tau p}{n} + \frac{\tau p}{2n} \left(  1 + \frac{\tau A_M}{4n}  \right)  \right) {\cal W}^k \\ 
			& \leq \left(  1 - \frac{\tau \min\{  2A_M, 3p  \} }{8n}  \right) \Phi_3^k. 
		\end{align*}
		
		By applying the tower property, we have 
		$$
		\mathbb{E}[\Phi_3^{k+1}] \leq \left(  1 - \frac{\tau \min\{  2A_M, 3p  \} }{8n}  \right) \mathbb{E}[\Phi_3^k]. 
		$$
		
		Unrolling the recursion, we can obtain the result. 
		
	\end{proof}

	Define $\Phi_4^{k} = {\cal H}^k+ \frac{16C_W\HF^2}{A_M} \|x^k-x^*\|^2$ for $k\geq 0$, where $C_W \eqdef \frac{4}{A} + 1 + \frac{5B}{2}$. 
	
	\begin{theorem}\label{th:supl-3PCBL2}
		Let Assumption \ref{asm:main} holds, $\xi^k \equiv 1$, $S^k\equiv [n]$, and $\cC_{z_i^k, x^k}^M (x^{k+1}) \equiv x^{k+1}$ for all $i\in[n]$ and $k\geq 0$. Assume $\|z_i^k-x^*\|^2 \leq \frac{A_M \mu^2}{36(\HS^2 + 4\HF^2)C_M}$ and ${\cal H}^k \leq \frac{A_M\mu^2}{576C_M}$ for all $i\in [n]$ and $k\geq 0$. Then we have 
		$$
		\mathbb{E}[\Phi_4^k] \leq \theta_2^k \Phi_4^0, 
		$$
		
		$$
		\mathbb{E} \left[   \frac{\|x^{k+1}-x^*\|^2}{\|x^k-x^*\|^2} \right]  \leq \theta_2^k \left( \frac{3(\HS^2+4\HF^2)A_{\rm M}}{64C_W\HF^2\mu^2} + \frac{12}{\mu^2}  \right) \Phi_4^0. 
		$$
		for $k\geq 0$, where $\theta_2 \eqdef \left(  1 - \frac{\min\{  2A_W, A_M  \}}{4}  \right)$. 
	\end{theorem}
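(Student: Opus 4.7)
The plan is to exploit the three simplifications in the hypothesis---$\xi^k \equiv 1$, $S^k \equiv [n]$, and $\cC^M_{z_i^k,x^k}(x^{k+1}) \equiv x^{k+1}$---to reduce Algorithm~\ref{alg:BL2} essentially to Newton-3PC and then to mimic the Lyapunov argument from the proofs of Theorems~\ref{th:NLU} and~\ref{th:3PCBL1}. These simplifications imply by a one-line induction that $z_i^k = w_i^k = x^k$ for all $i\in[n]$ and $k\geq 0$, which collapses ${\cal Z}^k = {\cal W}^k = \|x^k-x^*\|^2$. Substituting this into inequality~(\ref{eq:xk+1-3PCBL2}) from the proof of Theorem~\ref{th:3PCBL2} gives the deterministic bound
\begin{equation*}
\|x^{k+1}-x^*\|^2 \leq \frac{3(\HS^2+4\HF^2)}{4\mu^2}\|x^k-x^*\|^4 + \frac{12\cH^k}{\mu^2}\|x^k-x^*\|^2,
\end{equation*}
and under the locality hypotheses each of the two coefficients on the right is at most $\tfrac{A_M}{48 C_M}$, so $\|x^{k+1}-x^*\|^2 \leq \tfrac{A_M}{24 C_M}\|x^k-x^*\|^2 \leq \tfrac12\|x^k-x^*\|^2$.

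Next, the Hessian update reduces to $\mH_i^{k+1} = \cC^W_{\mH_i^k,\nabla^2 f_i(x^k)}(\nabla^2 f_i(x^{k+1}))$, and inequality~(\ref{eq:Hk+1nbor-3PCBL1}) specialised to $z^k = x^k$, $z^{k+1} = x^{k+1}$ yields
$$\E_k\cH^{k+1} \leq (1 - A_W/2)\cH^k + C_W\HF^2\|x^k-x^*\|^2 + C_W\HF^2\E_k\|x^{k+1}-x^*\|^2.$$
Adding $\tfrac{16 C_W\HF^2}{A_M}\E_k\|x^{k+1}-x^*\|^2$ to both sides to form $\E_k\Phi_4^{k+1}$ and invoking the deterministic bound above, the coefficient of $\|x^k-x^*\|^2$ becomes $C_W\HF^2\bigl(1 + (1+16/A_M)\tfrac{A_M}{24C_M}\bigr) \leq 2 C_W\HF^2$, using $A_M \leq 1$ and $C_M = 4/A_M + 1 + 5B_M/2 \geq 5$. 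Comparing against $\theta_2\Phi_4^k$: the $\cH^k$ coefficient matches because $1 - A_W/2 \leq \theta_2 = 1 - \min(2A_W,A_M)/4$ in both cases of the minimum, and the $\|x^k-x^*\|^2$ coefficient matches because $2 \leq 16\theta_2/A_M$ follows from $\theta_2 \geq 3/4$ and $A_M \leq 1$. The tower property then delivers the first claim $\E[\Phi_4^k] \leq \theta_2^k\Phi_4^0$.

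For the superlinear rate, dividing the deterministic inequality for $\|x^{k+1}-x^*\|^2$ by $\|x^k-x^*\|^2$ and taking unconditional expectation gives
$$\E\!\left[\tfrac{\|x^{k+1}-x^*\|^2}{\|x^k-x^*\|^2}\right] \leq \tfrac{3(\HS^2+4\HF^2)}{4\mu^2}\E\|x^k-x^*\|^2 + \tfrac{12}{\mu^2}\E\cH^k.$$
The linear rate established above yields $\E\cH^k \leq \theta_2^k\Phi_4^0$ and $\E\|x^k-x^*\|^2 \leq \tfrac{A_M}{16C_W\HF^2}\theta_2^k\Phi_4^0$, and plugging these in reproduces exactly the second claim. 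The main obstacle is not conceptual but arithmetic: verifying that the specific constants chosen in $\Phi_4^k$, in $\theta_2$, and in the locality hypotheses are all calibrated so that the combined $\|x^k-x^*\|^2$-coefficient absorbs neatly back into the potential, since the Lyapunov function and rate in this corollary differ subtly from those in Theorems~\ref{th:3PCBL1} and~\ref{th:3PCBL2}.
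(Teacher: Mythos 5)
Your proposal is correct and follows essentially the same route as the paper's proof: collapse $z_i^k = w_i^k = x^k$, derive a one-step contraction for $\|x^k-x^*\|^2$, combine with the Hessian recursion into $\Phi_4^k$, and then divide through the deterministic iterate bound to obtain the superlinear rate. The only stylistic difference is that you derive the contraction $\E_k\|x^{k+1}-x^*\|^2 \leq \tfrac{A_M}{24C_M}\|x^k-x^*\|^2$ directly from \eqref{eq:xk+1-3PCBL2} and the locality hypotheses, whereas the paper routes through \eqref{eq:zk+1-3PCBL2} to assert the looser $\E_k\|x^{k+1}-x^*\|^2 \leq (1-\tfrac{3A_M}{8})\|x^k-x^*\|^2$; both bounds are easily compatible with the target $\theta_2$ and your arithmetic checks out.
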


	\begin{proof}
		
		Since $\xi^k\equiv 1$, $S^k \equiv [n]$, and $\cC^M_{z_i^k, x^k}(x^{k+1}) \equiv x^{k+1}$ for all $i\in [n]$ and $k\geq 0$, we have $z_i^k \equiv w_i^k \equiv x^k$ for all $i\in [n]$ and $k\geq 0$. Then from (\ref{eq:zk+1-3PCBL2}), we have 
		\begin{equation}\label{eq:xk+1-3PCBL2supp}
			\mathbb{E}_k\|x^{k+1}-x^*\|^2 \leq \left(  1 - \frac{3A_M}{8}  \right) \|x^k-x^*\|^2. 
		\end{equation}

		For $\|\mH_i^{k+1} - \nabla^2 f_i(x^*)\|_{\rm F}^2$, similar to (\ref{eq:Hk+1nbor-3PCBL1}), we have 
		$$
		\mathbb{E}_k \|\mH_i^{k+1} - \nabla^2 f_i(x^*)\|_{\rm F}^2 \leq \left(  1 - \frac{A_W}{2}  \right) \|\mH_i^k - \nabla^2 f_i(x^*)\|_{\rm F}^2 + C_W \HF^2 \|z_i^k-x^*\|^2 + C_W \HF^2 \mathbb{E}_k\|z_i^{k+1}-x^*\|^2. 
		$$
		
		Considering $z_i^k \equiv x^k$, we further have 
		\begin{align*}
			\mathbb{E}_k \|\mH_i^{k+1} - \nabla^2 f_i(x^*)\|_{\rm F}^2 & \leq \left(  1 - \frac{A_W}{2}  \right) \|\mH_i^k - \nabla^2 f_i(x^*)\|_{\rm F}^2 + C_W \HF^2 \|x^k-x^*\|^2 + C_W \HF^2 \mathbb{E}_k\|x^{k+1}-x^*\|^2 \\ 
			& \overset{(\ref{eq:xk+1-3PCBL2supp})}{\leq} \left(  1 - \frac{A_W}{2}  \right) \|\mH_i^k - \nabla^2 f_i(x^*)\|_{\rm F}^2 + 2 C_W \HF^2 \|x^k-x^*\|^2, 
		\end{align*}
		which implies that 
		\begin{equation}\label{eq:Gk+1-3PCBL2}
			\mathbb{E}_k [{\cal H}^{k+1}] \leq \left(  1 - \frac{A_W}{2}  \right) {\cal H}^k + 2C_W\HF^2 \|x^k-x^*\|^2. 
		\end{equation}

		Thus, we have 
		\begin{align*}
			\mathbb{E}_k[\Phi_4^{k+1}] & = \mathbb{E}_k [{\cal H}^{k+1}] + \frac{16C_W\HF^2}{A_M} \mathbb{E}_k\|x^{k+1}-x^*\|^2 \\ 
			& \leq \left(  1 - \frac{A_W}{2}  \right) {\cal H}^k + 2C_W\HF^2 \|x^k-x^*\|^2 + \frac{16C_W\HF^2}{A_M} \mathbb{E}_k\|x^{k+1}-x^*\|^2 \\ 
			& \overset{(\ref{eq:xk+1-3PCBL2supp})}{\leq} \left(  1 - \frac{\min\{  2A_W, A_M  \}}{4}  \right) \Phi_4^k. 
		\end{align*}
		
		By applying the tower property, we have $\mathbb{E}[\Phi_4^{k+1}] \leq \theta_1 \mathbb{E}[\Phi_4^k]$. Unrolling the recursion, we have $\mathbb{E}[\Phi_4^k] \leq \theta_2^k \Phi_4^0$. Then we further have $\mathbb{E}[{\cal H}^k] \leq \theta_2^k \Phi_4^0$ and $\mathbb{E}\|x^k-x^*\|^2 \leq \frac{A_{\rm M}}{16C_W\HF^2} \theta_2^k \Phi_4^0$.

		From (\ref{eq:xk+1-3PCBL2}), we can get 
		$$
		\|x^{k+1} - x^*\|^2 \leq \frac{1}{\mu^2} \left(  \frac{3(\HS^2+4\HF^2)}{4} \|x^k-x^*\|^2 + 12 {\cal H}^k  \right) \|x^k-x^*\|^2. 
		$$
		
		Assume $x^k\neq x^*$ for all $k\geq 0$. Then we have 
		$$
		\frac{\|x^{k+1}-x^*\|^2}{\|x^k-x^*\|^2} \leq \frac{1}{\mu^2} \left(  \frac{3(\HS^2+4\HF^2)}{4} \|x^k-x^*\|^2 + 12 {\cal H}^k  \right), 
		$$
		and by taking expectation, we arrive at  
		\begin{align*}
			\mathbb{E} \left[   \frac{\|x^{k+1}-x^*\|^2}{\|x^k-x^*\|^2} \right] & \leq \frac{3(\HS^2+4\HF^2)}{4\mu^2} \mathbb{E}\|x^k-x^*\|^2 + \frac{12}{\mu^2} \mathbb{E}[{\cal H}^k] \\ 
			& \leq \theta_2^k \left(  \frac{3(\HS^2+4\HF^2)A_{\rm M}}{64C_W\HF^2\mu^2} + \frac{12}{\mu^2}  \right) \Phi_4^0. 
		\end{align*}
		
	\end{proof}

	Next, we explore under what conditions we can guarantee the boundedness of $\|z_i^k-x^*\|^2$ and ${\cal H}^k$. 
	
	\begin{theorem}\label{th:nbor-3PCBL2}
		Let Assumption \ref{asm:main} holds. \\ 
		(i) Let $\cC^M$ and $\cC^W$ be deterministic. Assume $$\|x^0 - x^*\|^2 \leq \frac{11A_M}{24C_M}\min\left\{  \frac{A_M \mu^2}{36(\HS^2+4\HF^2)C_M}, \frac{A_WA_M \mu^2}{2304C_MC_W\HF^2}  \right\} \quad\text{and}\quad {\cal H}^0 \leq \frac{A_M \mu^2}{576C_M}.$$ Then we have $$\|x^k-x^*\| \leq \frac{11A_M}{24C_M} \min\{  \frac{A_M \mu^2}{36(\HS^2+4\HF^2)C_M}, \frac{A_WA_M \mu^2}{2304C_MC_W\HF^2}  \}$$, $$\|z_i^k - x^*\|^2 \leq \min\{  \frac{A_M \mu^2}{36(\HS^2+4\HF^2)C_M}, \frac{A_WA_M \mu^2}{2304C_MC_W\HF^2}  \}$$ and ${\cal H}^k \leq  \frac{A_M \mu^2}{576C_M}$ for all $i\in [n]$ and $k\geq 0$. \\ 
		
		(ii) Assume $(z_i^k)_j$ is a convex combination of $\{(x^t)_j\}_{t=0}^k$, and $(\mH_i^k)_{jl}$ is a convex combination of $\{  (\nabla^2 f_i(z_i^k))_{jl}  \}_{t=0}^k$ for all $i\in [n]$, $j,l \in [d]$, and $k\geq 0$. If $$\|x^0-x^*\|^2 \leq {\tilde c}_2 \eqdef \min\left\{ \frac{2\mu^2}{3d^2 (\HS^2+4\HF^2)}, \frac{A_M \mu^2}{36dC_M(\HS^2+4\HF^2)}, \frac{A_M\mu^2}{576d^3C_M\HM^2}, \frac{\mu^2}{24d^4\HM^2} \right\}$$, then $\|z_i^k-x^*\|^2 \leq d {\tilde c}_2$ and ${\cal H}^k \leq \min \{  \frac{A_M\mu^2}{576C_M}, \frac{\mu^2}{24d}  \}$ for all $i\in [n]$ and $k\geq 0$. 
		
	\end{theorem}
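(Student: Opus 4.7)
The plan is to prove both parts by induction on $k$, paralleling the strategy used in Lemmas \ref{lm:nbor-N3PCBC-det} and \ref{lm:nbor-N3PCBC-conv}, while accounting for the extra randomness introduced by partial participation and by the $\xi_i^k$ variables. A key preliminary observation is that for non-participating devices ($i \notin S^k$) the quantities $z_i^{k+1}, w_i^{k+1}, \mathbf{H}_i^{k+1}$ are simply copies of their predecessors, so the induction hypothesis propagates trivially for such indices; similarly, $w_i^{k+1}$ is either $z_i^{k+1}$ or $w_i^k$, so if both are bounded the bound transfers automatically. Thus only participating devices need a genuine argument.

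For part (i), assume the three invariants hold for all indices $\le K$. Since $\cC^M$ and $\cC^W$ are deterministic, all the expectation inequalities derived in the proof of Theorem \ref{th:3PCBL2} --- specifically the analogs of (\ref{eq:zk+1nbor-3PCBL1}) and (\ref{eq:Hk+1nbor-3PCBL1}) adapted to the per-device sequences $z_i^k$ --- hold as almost-sure bounds. From the $w$-update one first gets $\|w_i^K-x^*\|^2 \le \tfrac{A_M\mu^2}{36(\HS^2+4\HF^2)C_M}$ for all $i$, so $\mathcal{W}^K$ is bounded by the same quantity; plugging into (\ref{eq:xk+1-3PCBL2}) together with the bound on $\mathcal{H}^K$ yields $\|x^{K+1}-x^*\|^2 \le \tfrac{A_M}{24C_M}$ times that quantity, producing the stated bound. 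Then the deterministic recurrence for $\|z_i^{K+1}-x^*\|^2$ gives a contraction plus $C_M(\|x^K-x^*\|^2 + \|x^{K+1}-x^*\|^2)$ whose constants are chosen precisely so that the threshold $\min\{\cdot,\cdot\}$ is preserved; the analogous recurrence for $\mathbf{H}_i^{K+1} - \nabla^2 f_i(x^*)$ closes the induction on $\mathcal{H}^{K+1}$ using the second term of the $\min$ inside $\|z_i^{K+1}-x^*\|^2$.

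For part (ii), the convex combination assumptions immediately give the coordinate-wise bounds
\begin{equation*}
\|z_i^k - x^*\|^2 \le d \max_j |(z_i^k - x^*)_j|^2 \le d \max_{0\le t \le k} \|x^t-x^*\|^2,
\end{equation*}
and, using the Lipschitz-in-infinity-norm assumption,
\begin{equation*}
\mathcal{H}^k \le d^2 \HM^2 \max_{i, \, 0 \le t \le k} \|z_i^t - x^*\|^2 \le d^3 \HM^2 \max_{0 \le t \le k} \|x^t-x^*\|^2.
\end{equation*}
Hence the induction reduces to showing the single invariant $\|x^k-x^*\|^2 \le \tilde{c}_2$. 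Plugging the above two estimates into (\ref{eq:xk+1-3PCBL2}) and using the four terms in the definition of $\tilde{c}_2$ --- the first two dominating the $\HS^2\mathcal{W}^k$ and $\HF^2\mathcal{Z}^k\mathcal{W}^k$ contributions, the last two dominating the $\mathcal{H}^k\mathcal{W}^k$ contribution via the $d^3\HM^2$ bound --- yields $\|x^{K+1}-x^*\|^2 \le \tilde{c}_2$, and the stated bounds on $\mathcal{Z}^{K+1}$ and $\mathcal{H}^{K+1}$ follow from the two displays above.

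The main obstacle is the delicate bookkeeping of constants: one must verify that in part (i) the two-term $\min$ defining the bound on $\|z_i^k-x^*\|^2$ is exactly what the recurrences for $z_i^k$ and $\mathcal{H}^k$ need in order to close up, and that in part (ii) the four-term $\min$ defining $\tilde{c}_2$ simultaneously controls both the quadratic-in-$\|x^k-x^*\|^2$ term and the $\mathcal{H}^k\|x^k-x^*\|^2$ cross term (where the $d^3\HM^2$ factor from the convex-combination bound is the critical source of the $d$-dependence). The remaining computations are direct applications of the inequalities already proved in Sections \ref{sec:N3PC}--\ref{sec:N3PC-BC}.
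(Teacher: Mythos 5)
Your proposal follows exactly the route the paper intends: the paper states the proof is ``similar to that of Lemmas \ref{lm:nbor-N3PCBC-det} and \ref{lm:nbor-N3PCBC-conv}, hence we omit it,'' and you supply precisely that adaptation, tracking per-device sequences $z_i^k, w_i^k, \mH_i^k$ through the induction and noting that non-participating devices copy forward trivially. The constant bookkeeping you sketch is correct; for instance in part (ii), plugging $\cW^k,\cZ^k\le d\tilde c_2$ and $\cH^k\le \min\{\tfrac{A_M\mu^2}{576C_M},\tfrac{\mu^2}{24d}\}$ into (\ref{eq:xk+1-3PCBL2}) gives $\tfrac{3\HS^2}{4\mu^2}(\cW^k)^2 + \tfrac{3\HF^2}{\mu^2}\cZ^k\cW^k \le \tfrac{1}{2}\tilde c_2$ via the first entry of $\tilde c_2$ (the identity $\tfrac{\HS^2}{2(\HS^2+4\HF^2)}+\tfrac{2\HF^2}{\HS^2+4\HF^2}=\tfrac{1}{2}$) and $\tfrac{12\cW^k}{\mu^2}\cH^k\le\tfrac{1}{2}\tilde c_2$ via the fourth entry, so $\|x^{k+1}-x^*\|^2\le\tilde c_2$.

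One subtlety in part (i) deserves to be made explicit, because it is the single place where partial participation genuinely changes the structure of the argument relative to Lemma~\ref{lm:nbor-N3PCBC-det}. Under partial participation the \emph{averaged} quantity $\cH^k$ does not obey a contraction: for $i\in S^k$ one has the deterministic recurrence
\begin{equation*}
\|\mH_i^{k+1}-\nabla^2 f_i(x^*)\|_{\rm F}^2 \le \Bigl(1-\tfrac{A_W}{2}\Bigr)\|\mH_i^k-\nabla^2 f_i(x^*)\|_{\rm F}^2 + C_W\HF^2\bigl(\|z_i^k-x^*\|^2+\|z_i^{k+1}-x^*\|^2\bigr),
\end{equation*}
while for $i\notin S^k$ the quantity is frozen. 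Averaging yields
$\cH^{k+1}\le \cH^k - \tfrac{A_W}{2n}\sum_{i\in S^k}\|\mH_i^k-\nabla^2 f_i(x^*)\|_{\rm F}^2 + \tfrac{2|S^k|}{n}C_W\HF^2 e^2$,
and the subtracted term can vanish (if the selected devices happen to have accurate estimates) while the added term persists, so the average can increase. The induction must therefore be carried on the \emph{per-device} invariant $\|\mH_i^k-\nabla^2 f_i(x^*)\|_{\rm F}^2\le\tfrac{A_M\mu^2}{576C_M}$, which does close with the exact cancellation $(1-\tfrac{A_W}{2})\tfrac{A_M\mu^2}{576C_M}+2C_W\HF^2\cdot\tfrac{A_WA_M\mu^2}{2304C_MC_W\HF^2}=\tfrac{A_M\mu^2}{576C_M}$; correspondingly the hypothesis ${\cal H}^0\le\tfrac{A_M\mu^2}{576C_M}$ in the theorem's statement must be strengthened to (or read as) a per-device bound, a reading suggested by the dangling ``for all $i\in[n]$'' in the conclusion. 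Your phrasing ``closes the induction on $\cH^{K+1}$'' glosses over this; spelling it out is essential for the PP setting, since it is exactly what distinguishes the argument from the non-PP one. Part (ii) is immune to this issue, since the convex-combination assumption bounds each $\|\mH_i^k-\nabla^2 f_i(x^*)\|_{\rm F}^2$ in terms of $\max_{0\le t\le k}\|x^t-x^*\|^2$ device-by-device automatically.
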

	
	\begin{proof}
		
		The proof is similar to that of  Lemmas \ref{lm:nbor-N3PCBC-det} and \ref{lm:nbor-N3PCBC-conv}. Hence we omit it. 
		
	\end{proof}

	\section{Globalization Through Cubic Regularization and Line Search Procedure}
	
	So far, we have discussed only the local convergence of our methods. To prove global rates, one must incorporate additional regularization mechanisms. Otherwise, global convergence cannot be guaranteed.
	Due to the smooth transition from contractive compressors to general 3PC mechanism, we can easily adapt two globalization strategies of \algname{FedNL} (equivalent to \algname{Newton-EF21}) to our \algname{Newton-3PC} algorithm.
	
	The two globalization strategies are {\em cubic regularization} and {\em line search procedure}. We only present the extension with cubic regularization \algname{Newton-3PC-CR} (Algorithm \ref{alg:N3PC-CR}) analogous to \algname{FedNL-CR} \citep{FedNL2021}. Similarly, line search procedure can be combined as it was done in \algname{FedNL-LS} \citep{FedNL2021}.
	
	\begin{algorithm}[H]
		\caption{\algname{Newton-3PC-CR} (Newton's method with 3PC and {\color{blue}Cubic Regularization})}
		\label{alg:N3PC-CR}
		\begin{algorithmic}[1]
			\STATE \textbf{Input:} $x^0\in\R^d,\, \mH_1^0, \dots, \mH_n^0 \in \R^{d\times d},\, \mH^0 \eqdef \frac{1}{n}\sum_{i=1}^n \mH_i^0,\, l^0 = \frac{1}{n}\sum_{i=1}^n \|\mH_i^0 - \nabla^2 f_i(x^0)\|_{\rm F}$
			\STATE {\bf on} master
			\STATE \quad {\color{blue}$h^k = \arg\min_{h\in\R^d}T_k(h)$, where $T_k(h) \eqdef \<\nabla f(x^k),h\> + \frac{1}{2}\<(\mH^k+l^k\mI)h,h\> + \frac{\HS}{6}\|h\|^3$}
			\STATE \quad Update global model to $x^{k+1} = x^k + {\color{blue}h^k}$ and send to the nodes
			\FOR{each device $i = 1, \dots, n$ in parallel} 
			\STATE Get $x^{k+1}$ and compute local gradient $\nabla f_i(x^{k+1})$ and local Hessian $\nabla^2 f_i(x^{k+1})$
			\STATE Take $\nabla^2f_i(x^k)$ from memory and update $\mH^{k+1}_i = \cC_{\mH_i^k, \nabla^2f_i(x^k)}(\nabla^2f_i(x^{k+1}))$
			\STATE Send $\nabla f_i(x^{k+1})$,\; $\mH^{k+1}_i$ and $l_i^{k+1} \eqdef \|\mH_i^{k+1} - \nabla^2 f_i(x^{k+1})\|_{\rm F}$ to the server
			\ENDFOR
			\STATE \textbf{on} server
			\STATE \quad Aggregate $ \nabla f(x^{k+1}) = \frac{1}{n}\sum_{i=1}^n \nabla f_i(x^{k+1}), \mH^{k+1} = \frac{1}{n}\sum_{i=1}^n\mH_i^{k+1}, l^{k+1} = \frac{1}{n}\sum_{i=1}^n l_i^{k+1}$
		\end{algorithmic}
	\end{algorithm}
	
	We omit theoretical analysis of these extension as they can be obtained directly from \algname{FedNL} approach with minor adaptations. In particular, one can get global linear rate for \algname{Newton-3PC-CR}, global $\cO(\frac{1}{k})$ rate for general convex case and the same fast local rates \eqref{rate:local-linear-iter} and \eqref{rate:local-superlinear-iter} of \algname{Newton-3PC}.

	\section{Additional Experiments and Extended Numerical Analysis}

	In this section we provide extended variety of experiments to analyze the empirical performance of \algname{Newton-3PC}. We study the efficiency of \algname{Newton-3PC} in different settings changing 3PC compressor and comparing with other second-order state-of-the-art algorithms. Tests were carried out on logistic regression problem with L2 regularization
	\begin{equation}\label{eq:logreg_problem}
		\min\limits_{x\in\R^d}\left\{\frac{1}{n}\sum\limits_{i=1}^n f_i(x) +\frac{\lambda}{2}\|x\|^2\right\}, \quad f_i(x) = \frac{1}{m}\sum_{j=1}^m\log\(1+\exp(-b_{ij}a_{ij}^\top x)\),
	\end{equation}
	where $\{a_{ij},b_{ij}\}_{j\in [m]}$ are data points at the $i$-th device. On top of that, we also consider L2 regularized Softmax problem  of the form
	\begin{equation}\label{eq:softmax_problem}
		\min\limits_{x\in\R^d}\left\{\frac{1}{n}\sum_{i=1}^n f_i(x) + \frac{\lambda}{2}\|x\|^2\right\}, \quad f_i(x) = \sigma\log\left(\sum\limits_{j=1}^m\exp\left(\frac{a_{ij}^\top x-b_{ij}}{\sigma}\right)\right),
	\end{equation}
	where $\sigma > 0$ is a smoothing parameter. One can show that this function has both Lipschitz continuous gradient and Lipschitz continuous Hessian (see example $2.1$ in \citep{Doikov2021}). Let $\tilde{a}_{ij}$ be initial data points, and $\tilde{f}_i$ be defined as in \eqref{eq:softmax_problem}
	$$
	\tilde{f}_i(x) = \sigma\log\left(\sum\limits_{j=1}^m\exp\left(\frac{\tilde{a}_{ij}^\top x-b_{ij}}{\sigma}\right)\right).
	$$
	Then data shift is performed as follows
	$$a_{ij} = \tilde{a}_{ij} - \tilde{f}_i(0), j \in [m], i \in [n].$$
	After such shift we may claim that $0$ is the optimum since $\nabla f(0)=0$. Note that this problem does not belong to the class of {\it generalized linear models.}
	
	\subsection{Datasets split}
	
	We use standard datasets from LibSVM library \citep{chang2011libsvm}. We shuffle and split each dataset into $n$ equal parts representing a local data of $i$-th client. Exact names of datasets and values of $n$ are shown in Table~\ref{tab:datasets}.

	\begin{table}[h]
		\caption{Datasets used in the experiments with the number of worker nodes $n$ used in each case.}
		\label{tab:datasets}
		\centering
		\begin{tabular}{|l|r|r|r|}
			\hline
			{\bf Data set} & {\bf \# workers} $n$ & {\bf total \# of data points} ($=nm$) & {\bf \# features} $d$                 \\
			\hline
			\dataname{a1a} & $16$ & $1600$ & $123$\\ \hline
			\dataname{a9a} & $80$ & $32560$ & $123$\\ \hline
			\dataname{w2a} & $50$ & $3450$ & $300$\\ \hline 
			\dataname{w8a} & $142$ & $49700$ & $300$\\ \hline
			\dataname{phishing} & $100$ & $11000$ & $68$\\
			\hline
		\end{tabular}
	\end{table}
	
	\subsection{Choice of parameters}
	
	We follow the authors' choice of DINGO \citep{DINGO} in choosing hyperparameters: $\theta=10^{-4}, \phi=10^{-6}, \rho=10^{-4}$. Besides, \algname{DINGO} uses a backtracking line search that selects the largest stepsize from $\{1,2^{-1},\dots,2^{-10}\}.$ The initialization of $\mH^0_i$ for \algname{Newton-3PC}, \algname{FedNL} \citep{FedNL2021} and its extensions, \algname{NL1} \citep{Islamov2021NewtonLearn} is $\nabla^2 f_i(x^0)$ if it is not specified directly. For \algname{Fib-IOS} \citep{IOSFabbro2022} we set $d_k^i = 1$. Local Hessians are computed following the partial sums of Fibonacci number and the parameter $\rho=\lambda_{q_{j+1}}$. This is stated in the description of the method. The parameters of backtracking line search for \algname{Fib-IOS} are $\alpha=0.5$ and $\beta=0.9$.
	
	We conduct experiments for two values of regularization parameter $\lambda\in \{10^{-3}, 10^{-4}\}$. In the figures we plot the relation of the optimality gap $f(x^k)-f(x^*)$ and the number of communicated bits per node. In the heatmaps numbers represent the communication complexity per client of \algname{Newton-3PC} for some specific choice of 3PC compression mechanism (see the description in corresponding section). The optimal value $f(x^*)$ is chosen as the function value at the $20$-th iterate of standard Newton's method.

	In our experiments we use various compressors for the methods. Examples of classic compression mechanisms include Top-$K$ and Rank-$R$. The parameters of these compressors are parsed in details in Section~A.3 of \citep{FedNL2021}; we refer a reader to this paper for disaggregated description of aforementioned compression mechanisms. Besides, we use various 3PC compressors introduced in \citep{richtarik3PC}.
	
	\subsection{Performance of \algname{Newton-3PC} on Softmax problem}
	
	In the main part we include the comparison of \algname{Newton-3PC} method against others. In this section we additionally compare them on Softmax problem \eqref{eq:softmax_problem}. We would like to note that since Softmax problem is no longer GLM, then \algname{NL1} \citep{Islamov2021NewtonLearn} can not be implemented for considered problem. 
	
	We compare \algname{Newton-CBAG} combined with Top-$d$ compressor and probability $p=0.75$, \algname{Newton-EF21} (equivalent to \algname{FedNL} \citep{FedNL2021}) with Rank-$1$ compressor, \algname{DINGO} \citep{DINGO}, and \algname{Fib-IOS} \citep{IOSFabbro2022}. As we can see in Figure~\ref{fig:Newton-3PC-softmax}, \algname{Newton-CBAG} and \algname{Newton-EF21} demonstrate almost equivalent performance: in some cases slightly better the first one (\dataname{a1a} dataset), in some cases~--- the second (\dataname{phishing} dataset). Furthermore, \algname{DINGO} and \algname{Fib-IOS} are significantly slower than \algname{Newton-3PC} methods in terms of communication complexity.
	
	\begin{figure}[t]
		\begin{center}
			\begin{tabular}{cccc}
				\includegraphics[width=0.22\linewidth]{./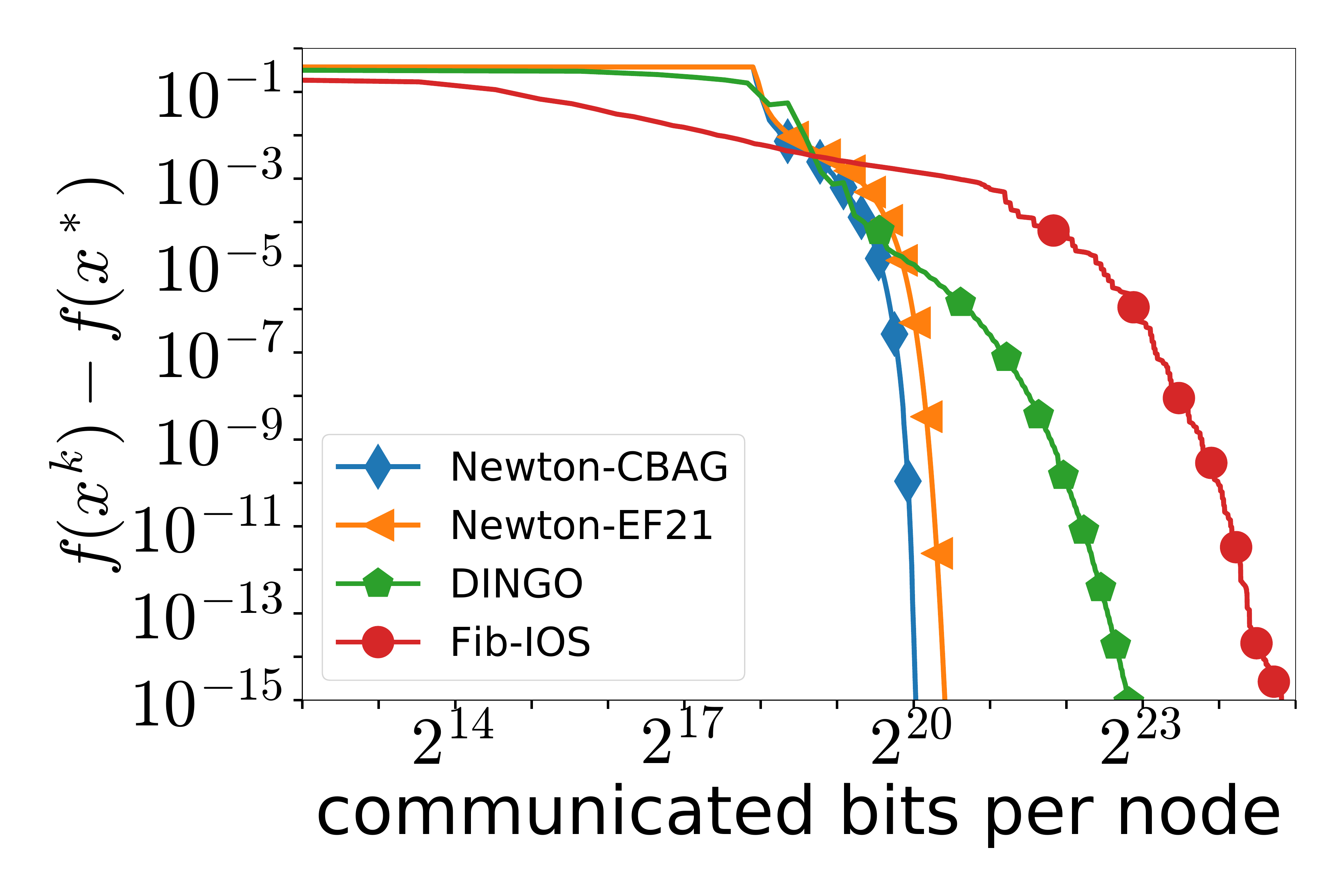} &
				\includegraphics[width=0.22\linewidth]{./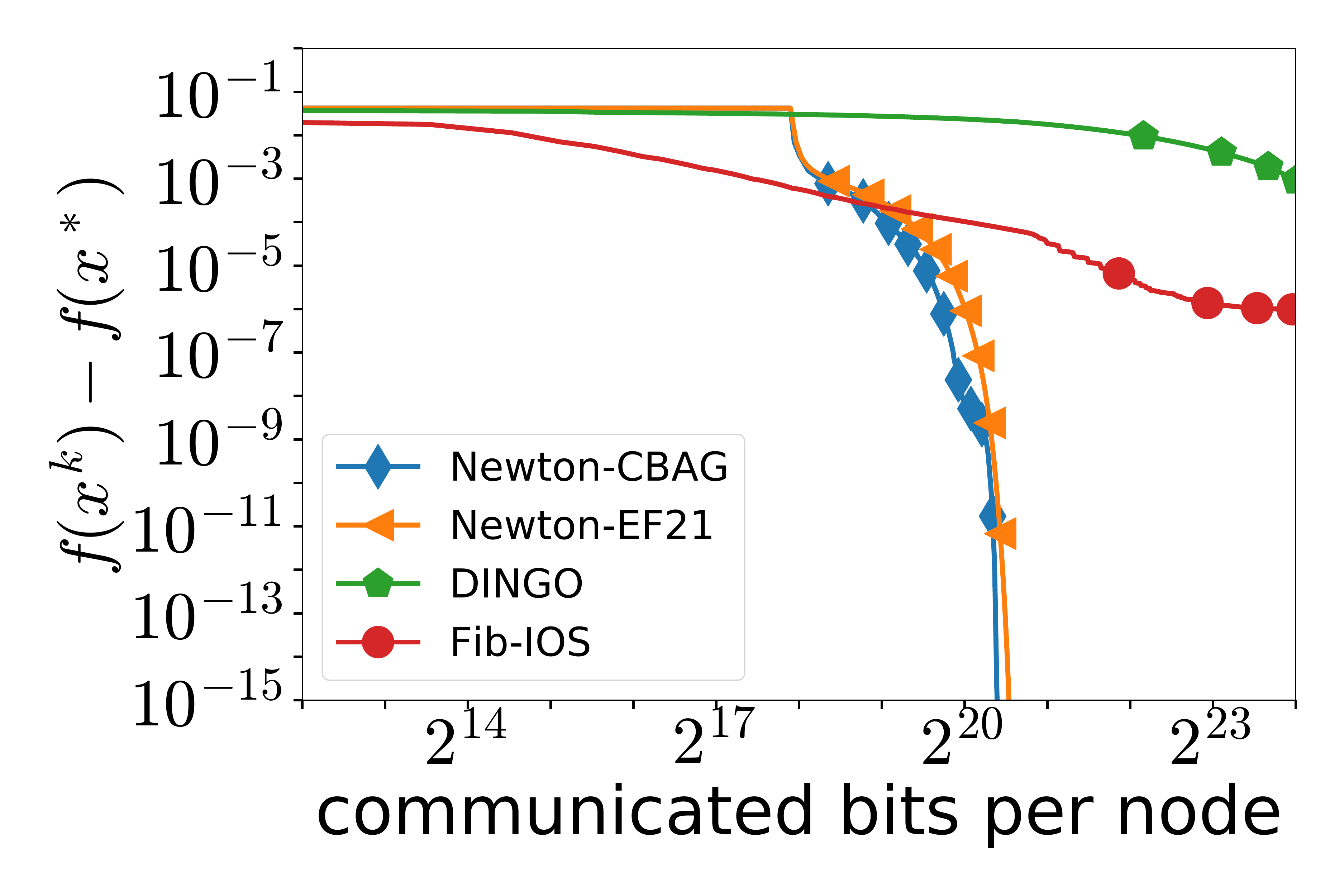} &
				\includegraphics[width=0.22\linewidth]{./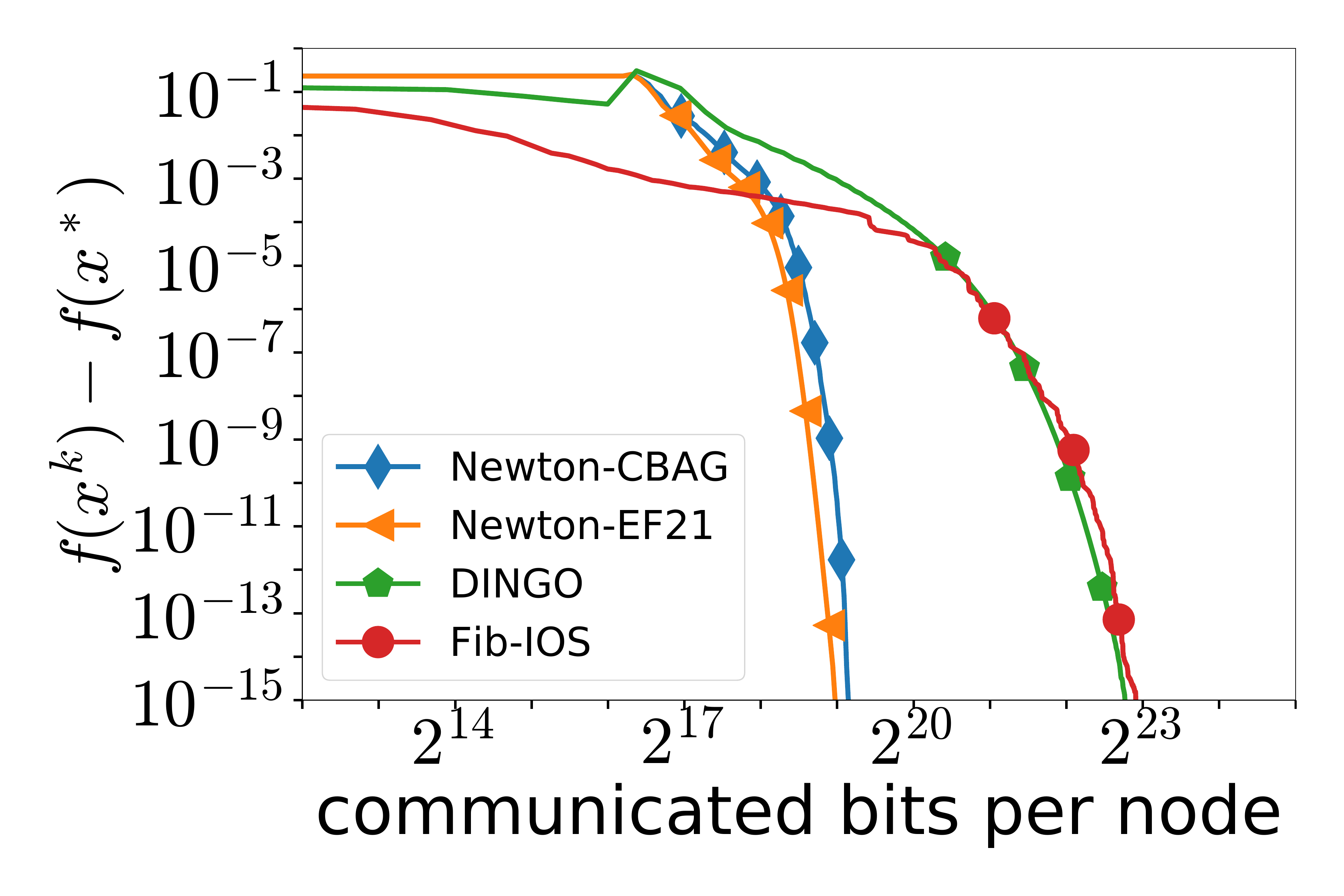} & 
				\includegraphics[width=0.22\linewidth]{./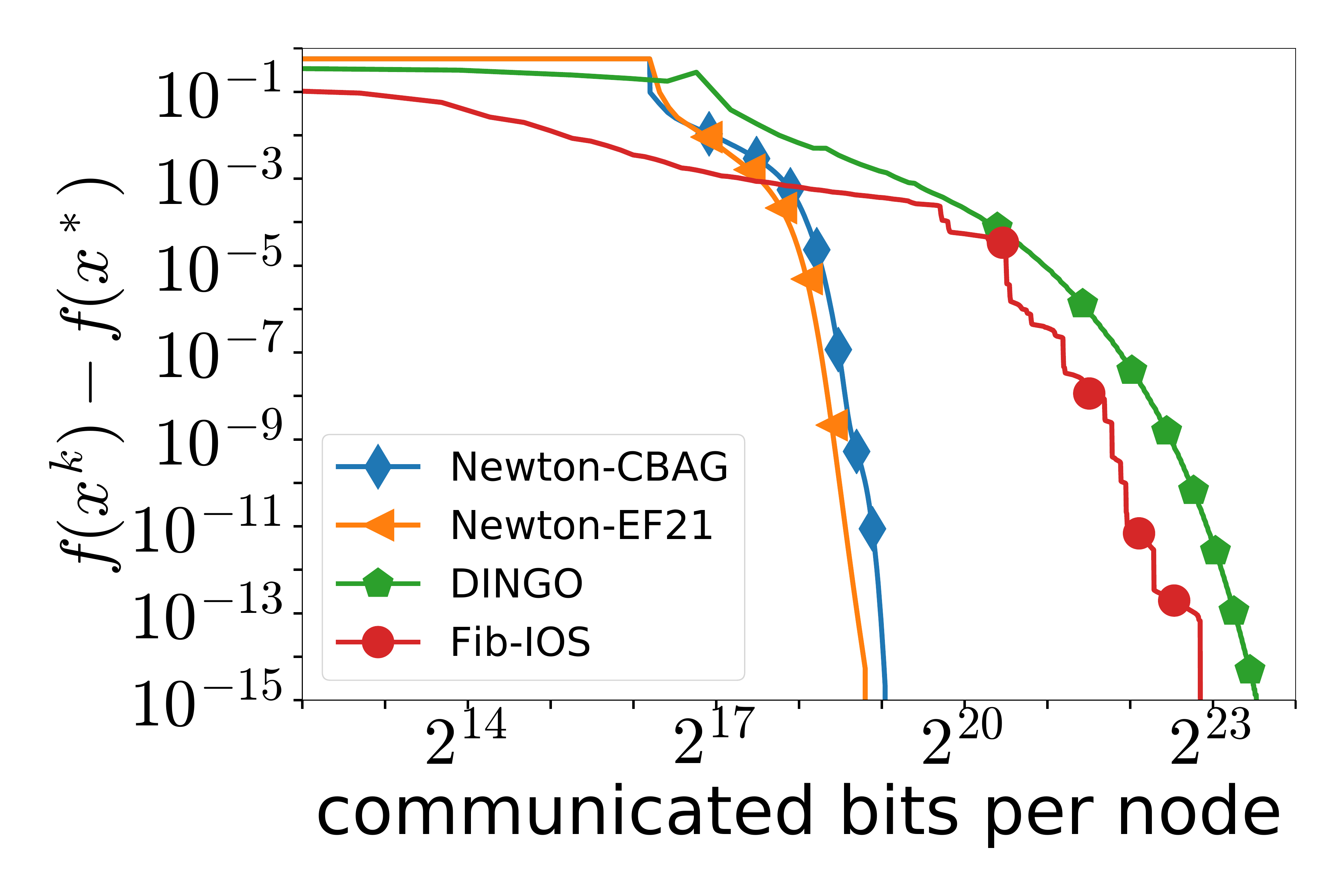}\\
				(a) \dataname{a1a} &
				(b) \dataname{a1a} &
				(c) \dataname{phishing} &
				(d) \dataname{phishing} \\
				{\scriptsize$\sigma=0.5, \lambda=10^{-3}$} & 
				{\scriptsize $\sigma=0.1, \lambda=10^{-4}$} &
				{\scriptsize$\sigma=0.1, \lambda=10^{-3}$} &
				{\scriptsize$\sigma=0.5, \lambda=10^{-3}$}
				
			\end{tabular}       
		\end{center}
		\caption{The performance of \algname{Newton-CBAG} combined with Top-$d$ compressor and probability $p=0.75$, \algname{Newton-EF21} with Rank-$1$ compressor, \algname{DINGO}, and \algname{Fib-ISO} in terms of communication complexity on Softmax problem.}
		\label{fig:Newton-3PC-softmax}
	\end{figure}

	\subsection{Behavior of \algname{Newton-CLAG} based on Top-$K$ and Rank-$R$ compressors}
	
	Next, we study how the performance of \algname{Newton-CLAG} changes when we vary parameters of biased compressor CLAG compression mechanism is based on. In particular, we test \algname{Newton-CLAG} combined with Top-$K$ and Rank-$R$ compressors modifying compression level (parameters $K$ and $R$ respectively) and trigger parameter $\zeta$. We present the results as heatmaps in Figure~\ref{fig:Newton-CLAG} indicating the communication complexity in Mbytes for particular choice of a pair of parameters (($K$, $\zeta$) or ($R$, $\zeta$) for CLAG based on Top-$K$ and Rank-$R$ respectively) .
	
	First, we can highlight that in special cases \algname{Newton-CLAG} reduces to \algname{FedNL} ($\zeta=0$, left column) and \algname{Newton-LAG} (compression is identity, bottom row). Second, we observe slight improvement from using the lazy aggregation. 
	
	\begin{figure}[t]
		\begin{center}
			\begin{tabular}{ccc}
				\includegraphics[width=0.22\linewidth]{./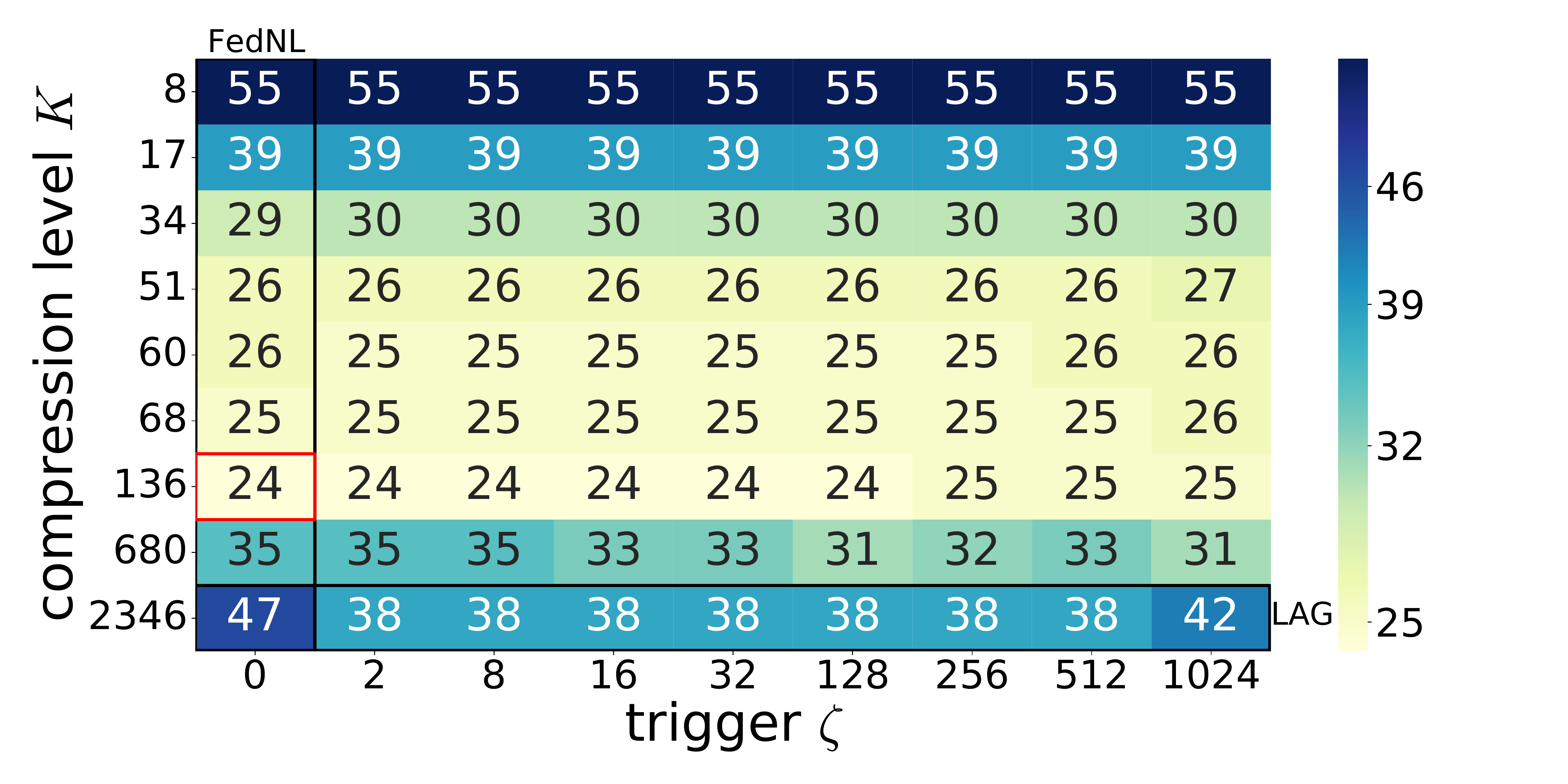} &
				\includegraphics[width=0.22\linewidth]{./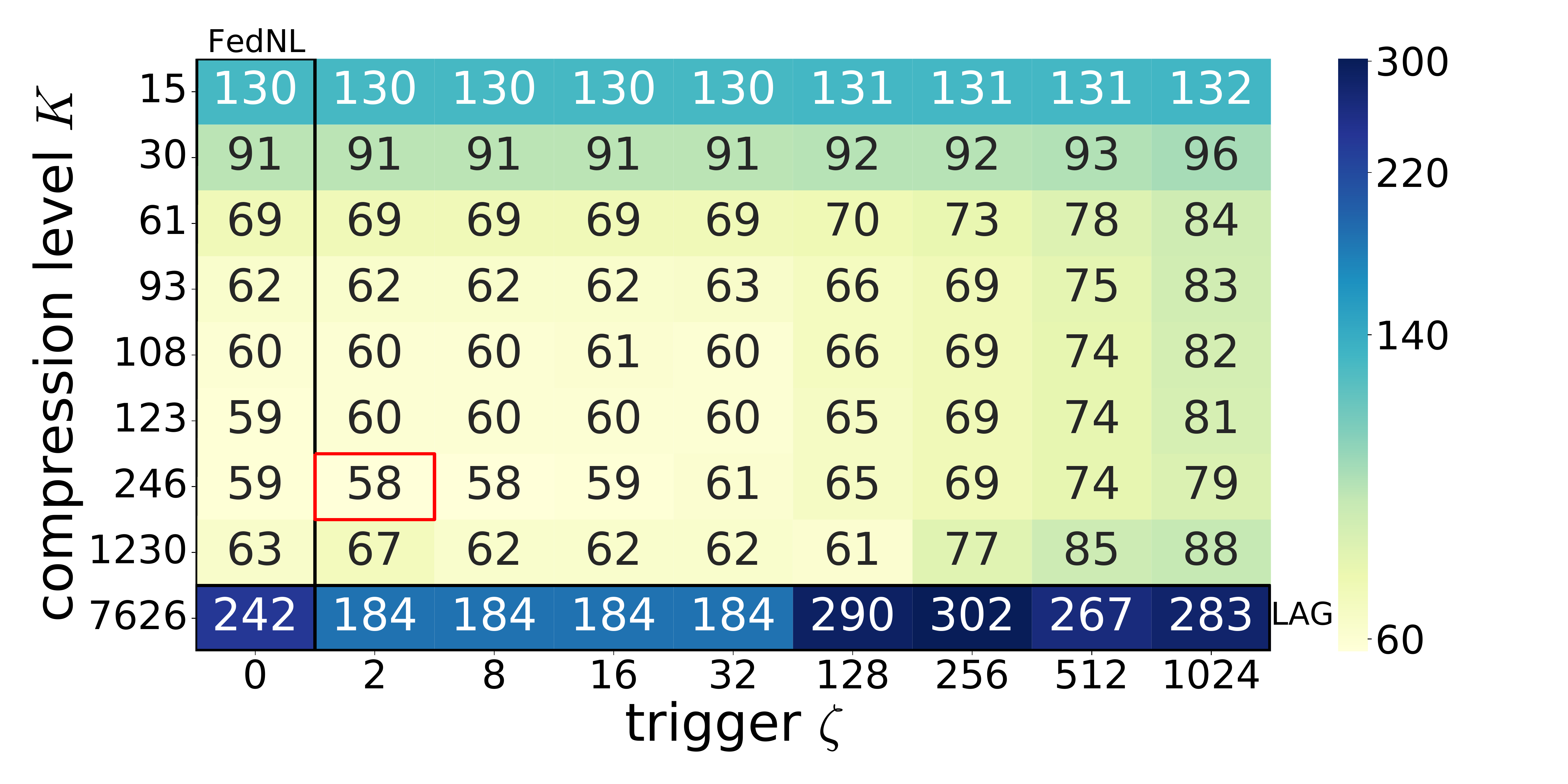} &
				\includegraphics[width=0.22\linewidth]{./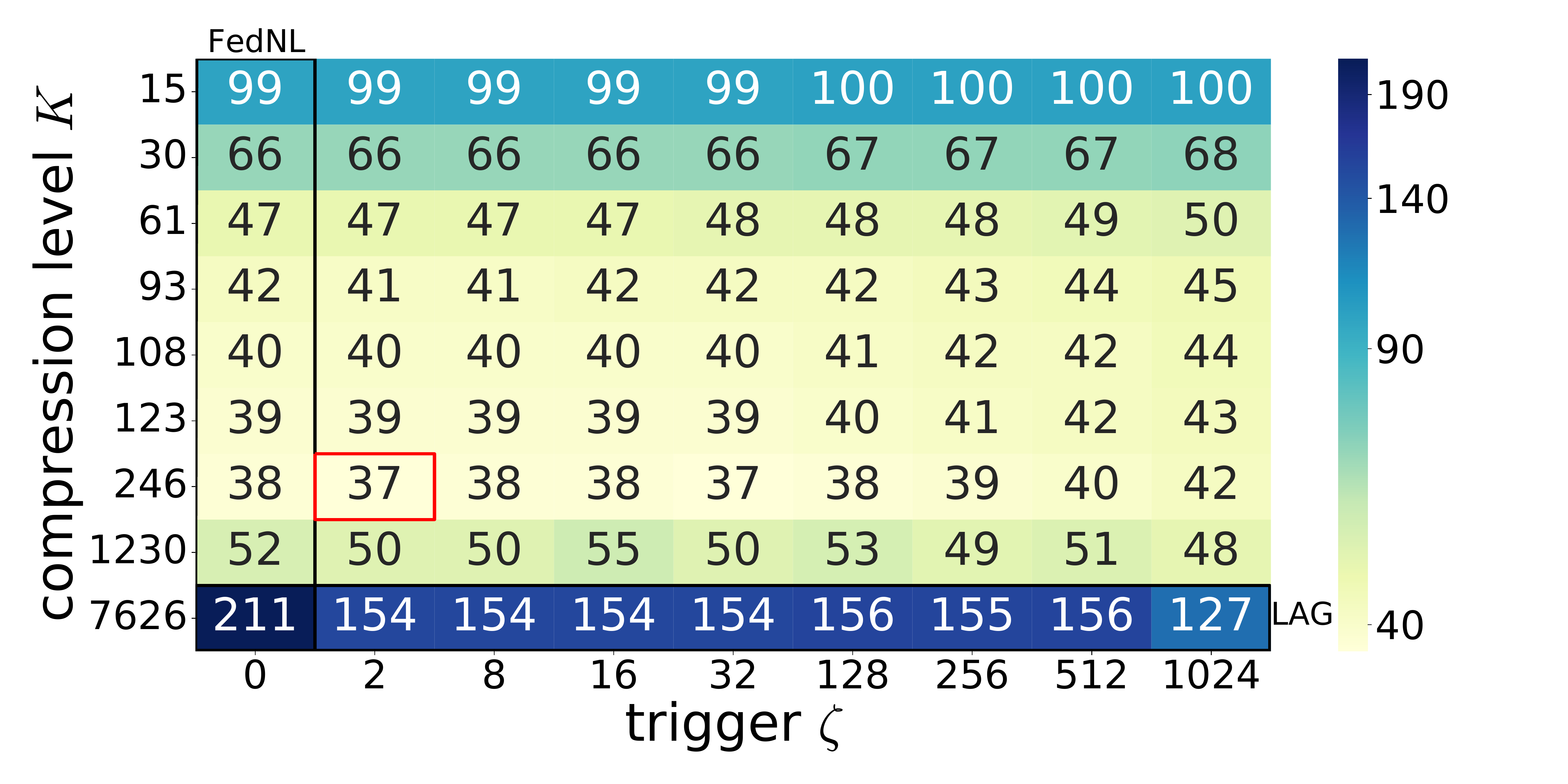}\\
				(a) \dataname{phishing}, {\scriptsize$ \lambda=10^{-3}, d=68$} &
				(b) \dataname{a1a}, {\scriptsize $\lambda=10^{-4}, d=123$} &
				(c) \dataname{a9a}, {\scriptsize$ \lambda=10^{-3}, d=123$}\\
				\includegraphics[width=0.22\linewidth]{./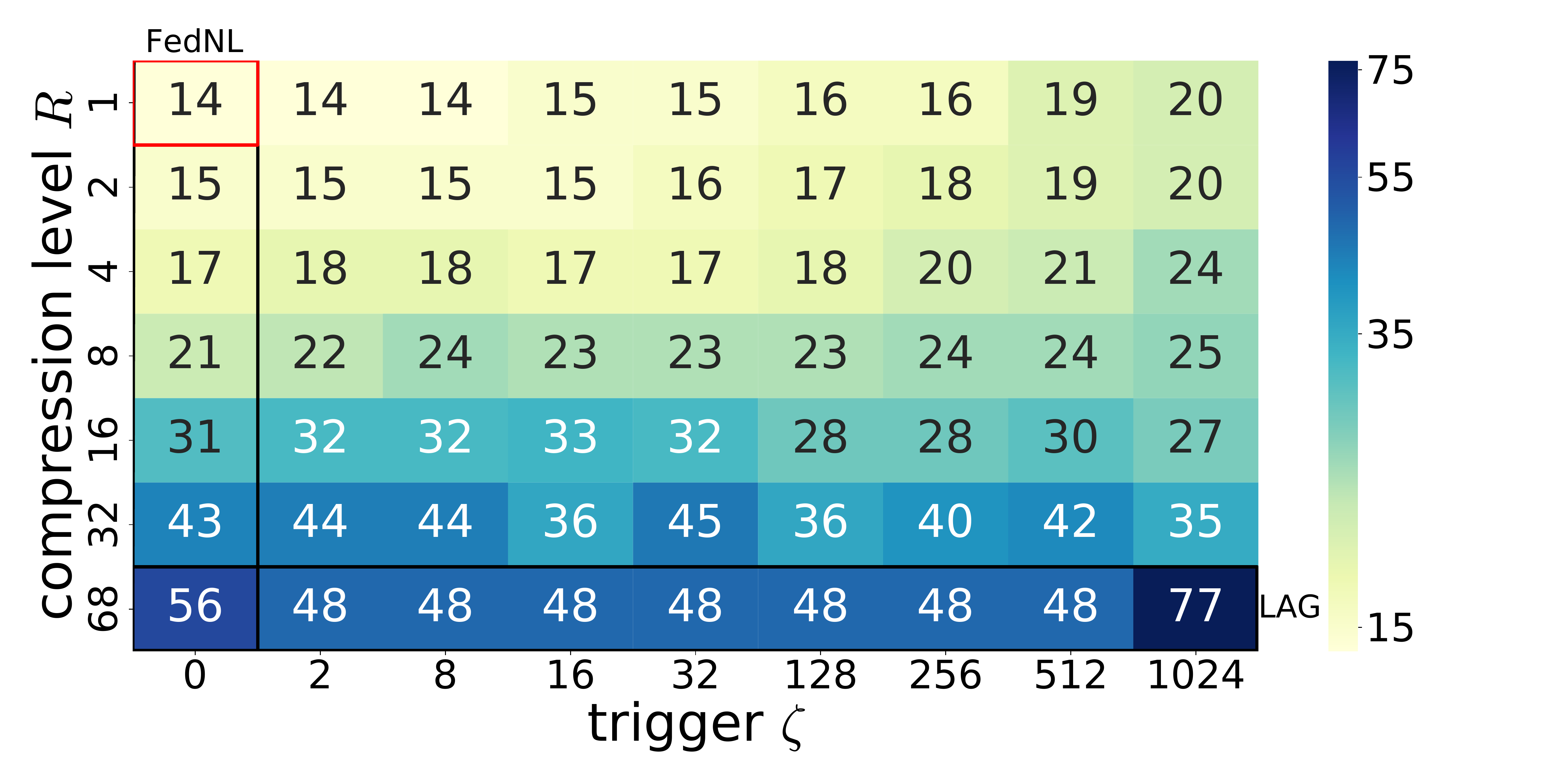} &
				\includegraphics[width=0.22\linewidth]{./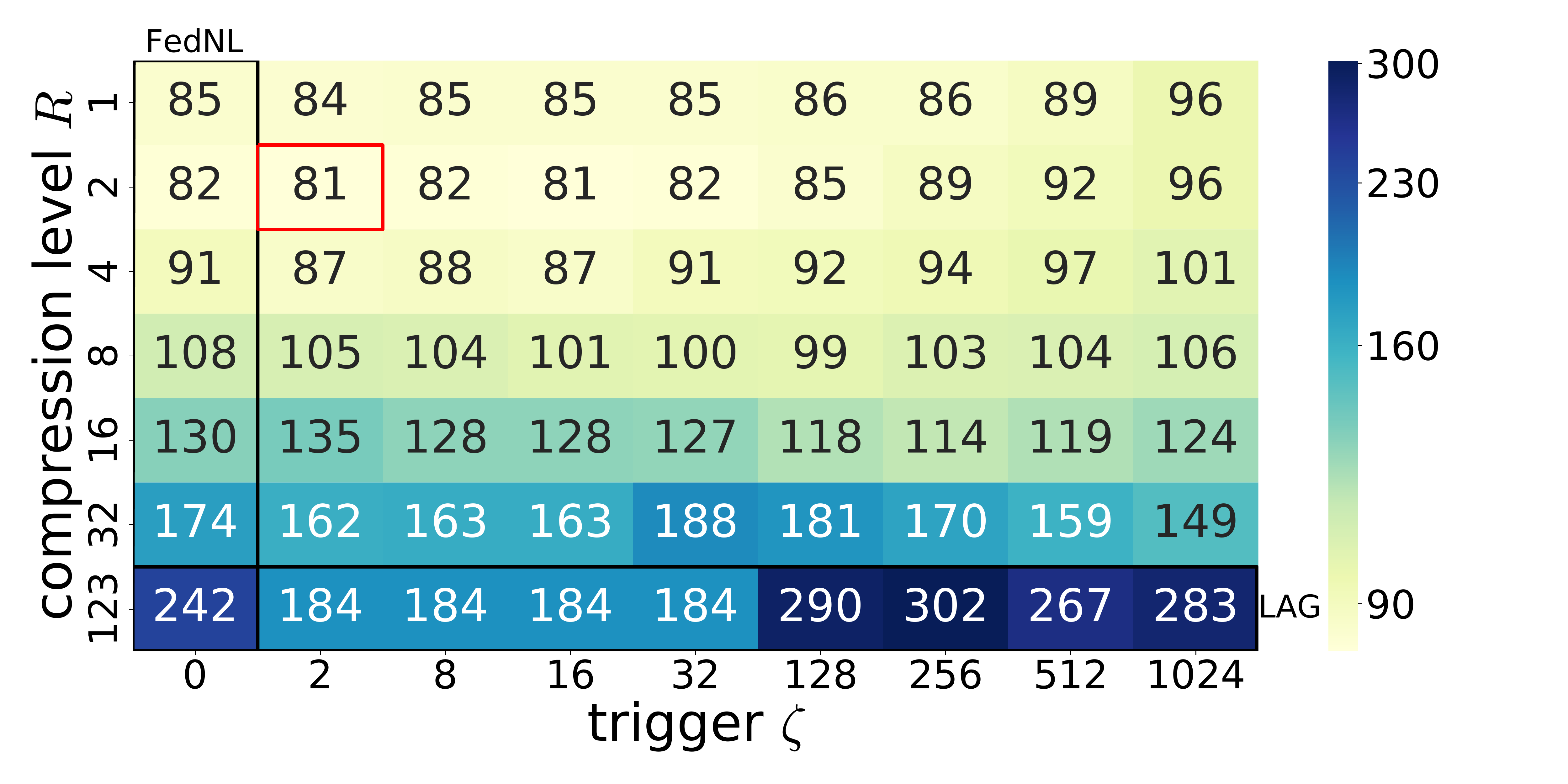} &
				\includegraphics[width=0.22\linewidth]{./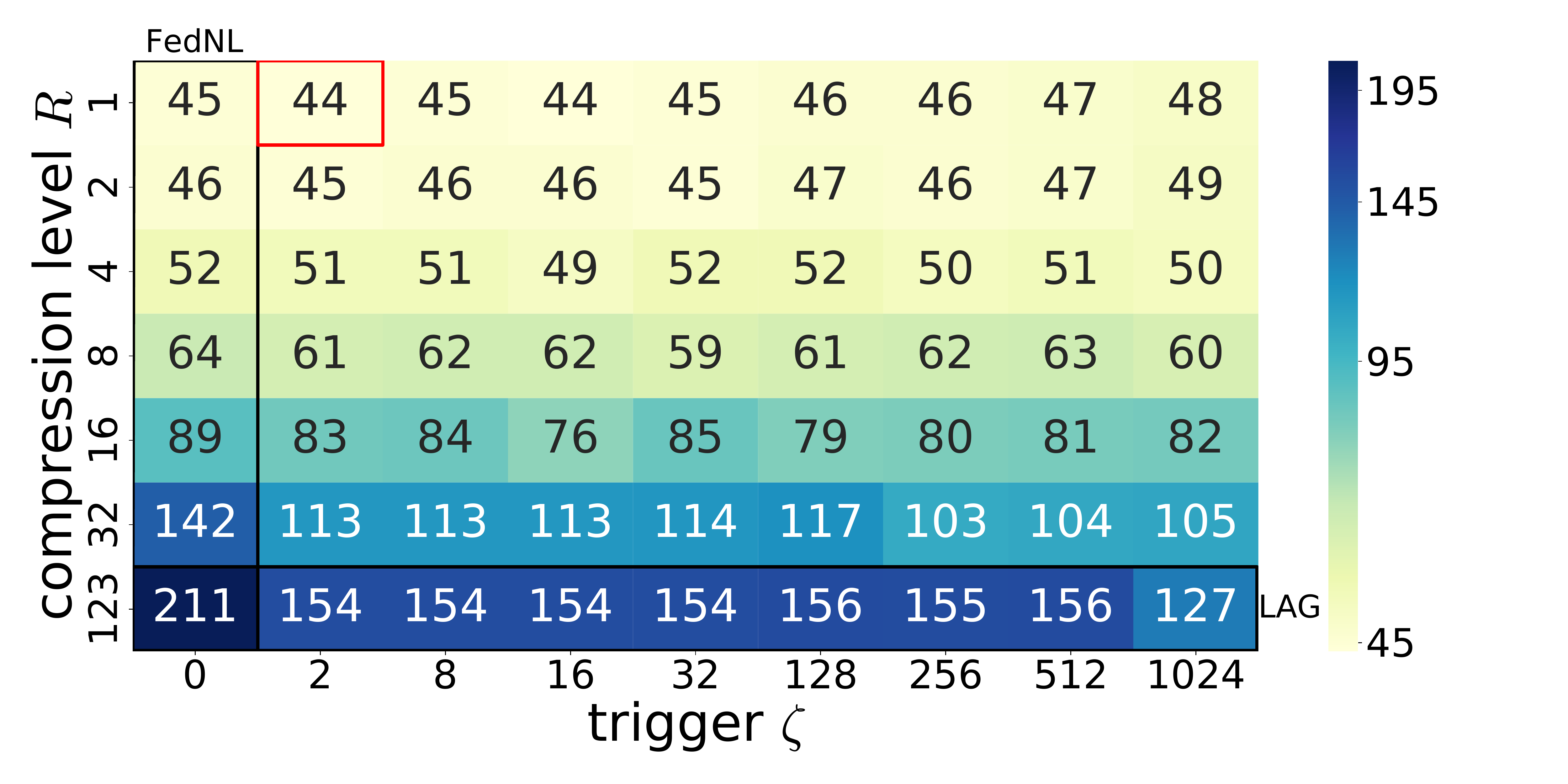}\\
				(e) \dataname{phishing}, {\scriptsize$ \lambda=10^{-3}, d=68$} &
				(f) \dataname{a1a}, {\scriptsize $\lambda=10^{-4},d=123$} &
				(g) \dataname{a9a}, {\scriptsize$ \lambda=10^{-3}, d=123$}\\ 
			\end{tabular}       
		\end{center}
		\caption{{\bf First row:} The performance of \algname{Newton-CLAG} based on Top-$K$ varying values of $(\zeta, K)$ in terms of communication complexity (in Mbytes). {\bf Second row:} The performance of \algname{Newton-CLAG} based on Rank-$R$  varying values of $(\zeta, R)$ in terms of communication complexity (in Mbytes). }
		\label{fig:Newton-CLAG}
	\end{figure}
	
	\subsection{Efficiency of \algname{Newton-3PCv2} under different compression levels}
	
	On the following step we study how \algname{Newton-3PCv2} behaves when the parameters of compressors 3PCv2 is based on are changing. In particular, in the first set of experiments we test the performance of \algname{Newton-3PCv2} assembled from Top-$K_1$ and Rand-$K_2$ compressors where $K_1+K_2=d$. Such constraint is forced to make the cost of one iteration to be $\cO(d)$. In the second set of experiments we choose $K_1=K_2=K$ and vary $K$. The results are presented in Figure~\ref{fig:Newton-3PCv2}. 
	
	For the first set of experiments, one can notice that randomness hurts the convergence since the larger the value of $K_2$, the worse the convergence in terms of communication complexity. In all cases a weaker level of randomness is preferable. For the second set of experiments, we observe that the larger $K$, the better communication complexity of \algname{Newton-3PCv2} except the case of \dataname{w8a} where the results for $K=150$  are slightly better than those for $K=300$. 
	
	\begin{figure}[t]
		\begin{center}
			\begin{tabular}{cccc}
				\includegraphics[width=0.22\linewidth]{./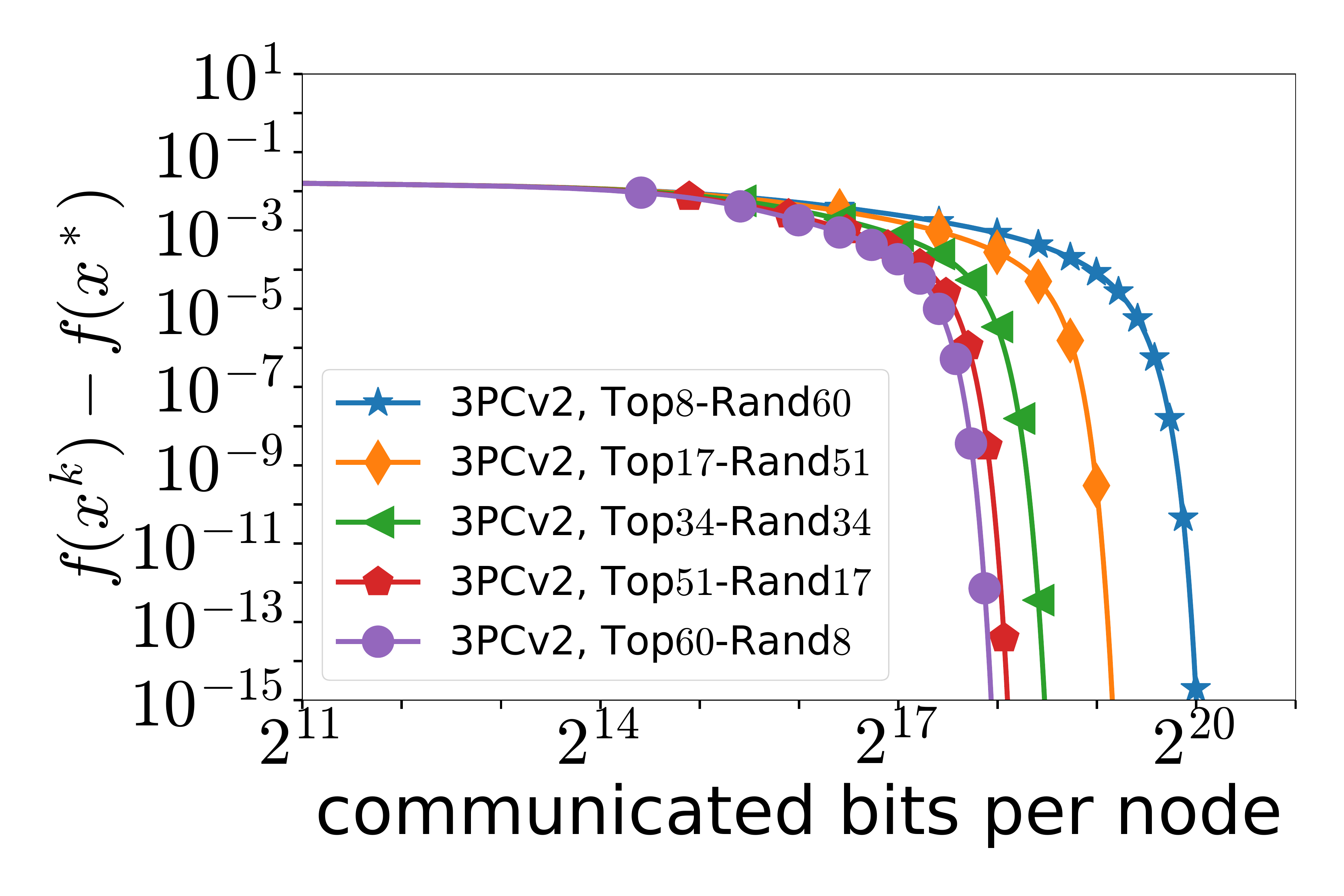} &
				\includegraphics[width=0.22\linewidth]{./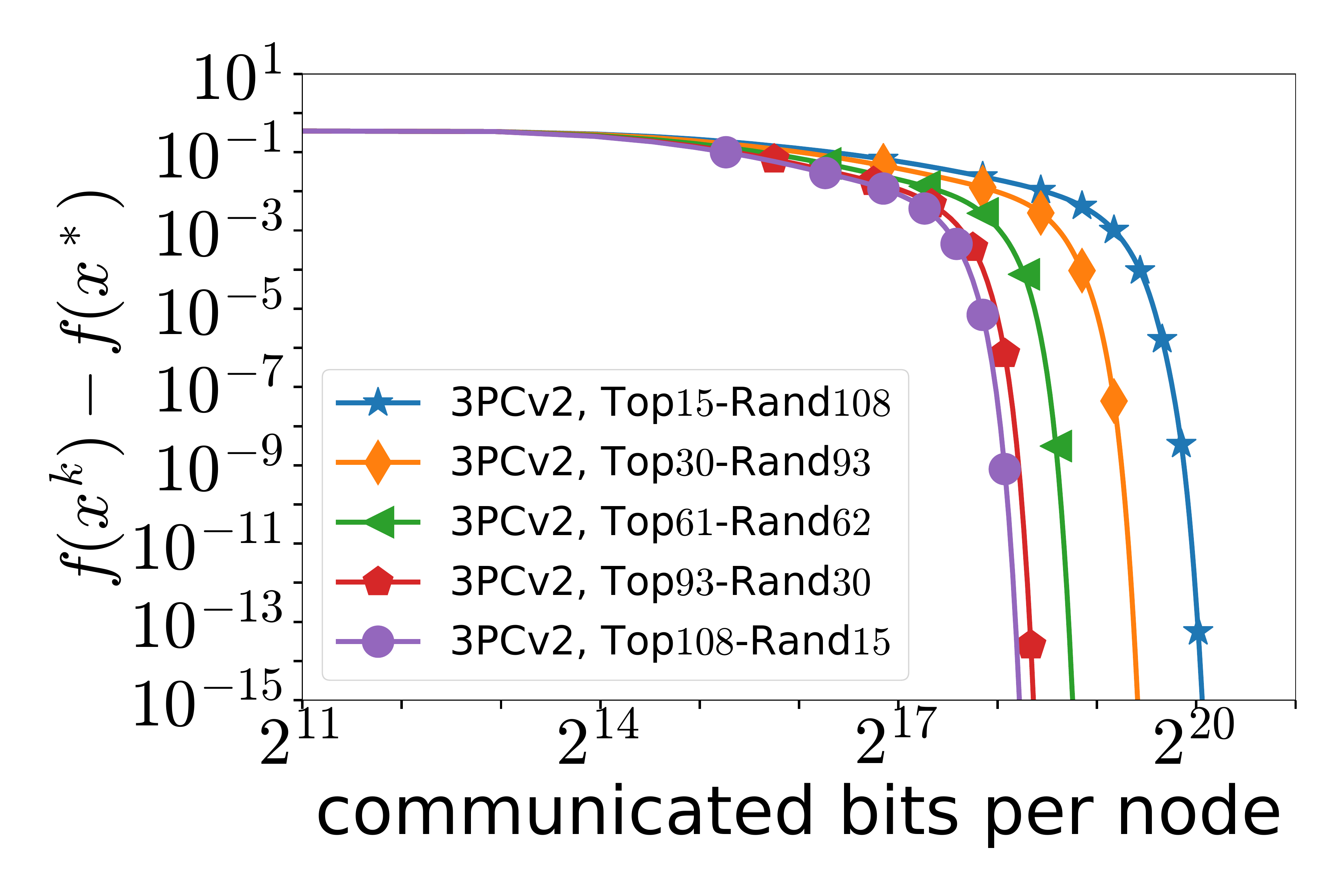} &
				\includegraphics[width=0.22\linewidth]{./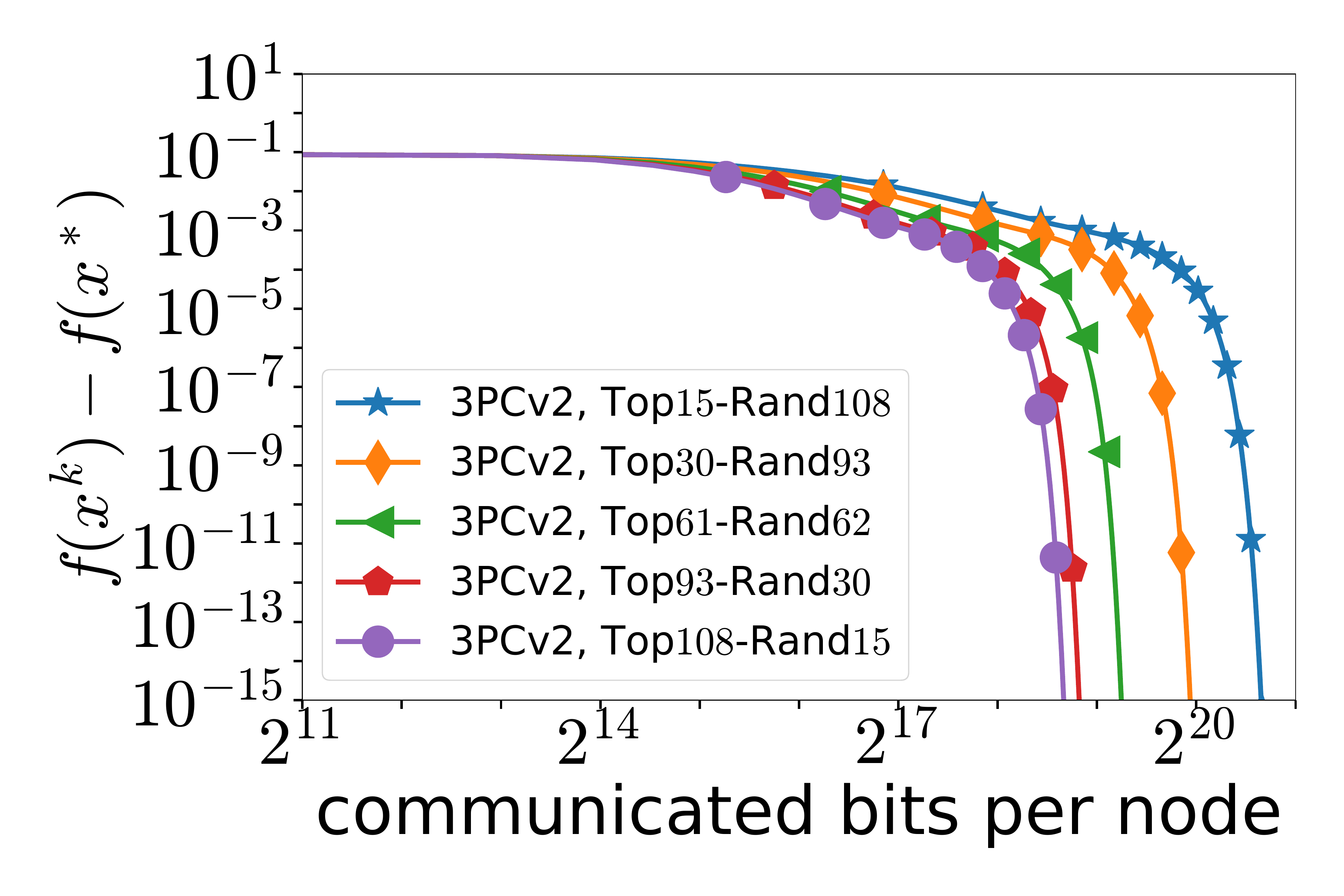} &
				\includegraphics[width=0.22\linewidth]{./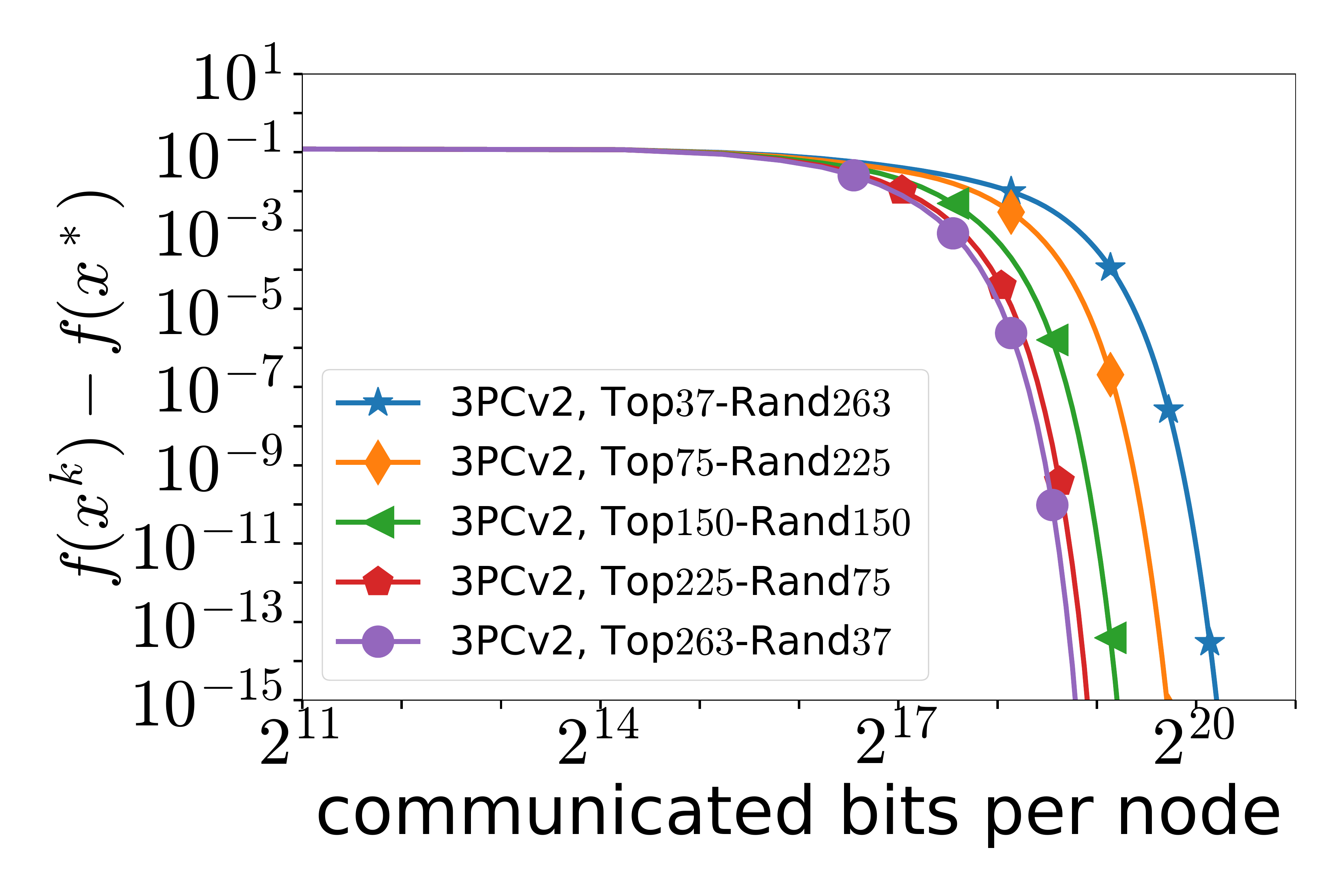}\\
				(a) \dataname{phishing}, {\scriptsize$ \lambda=10^{-4}$} &
				(b) \dataname{a1a}, {\scriptsize $\lambda=10^{-3}$} &
				(c) \dataname{a9a}, {\scriptsize$ \lambda=10^{-4}$} &
				(d) \dataname{w8a}, {\scriptsize$ \lambda=10^{-3}$} \\
				\includegraphics[width=0.22\linewidth]{./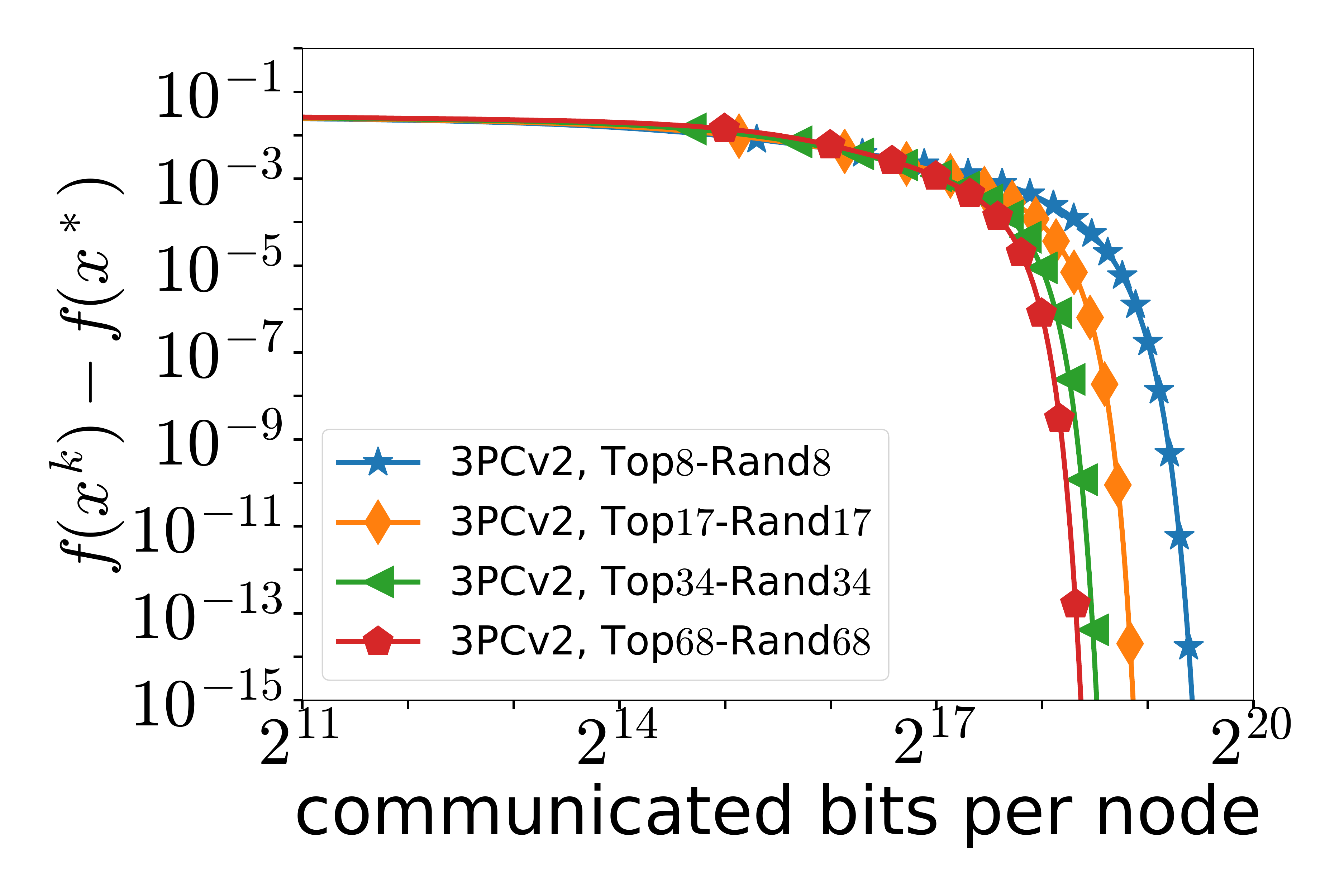} &
				\includegraphics[width=0.22\linewidth]{./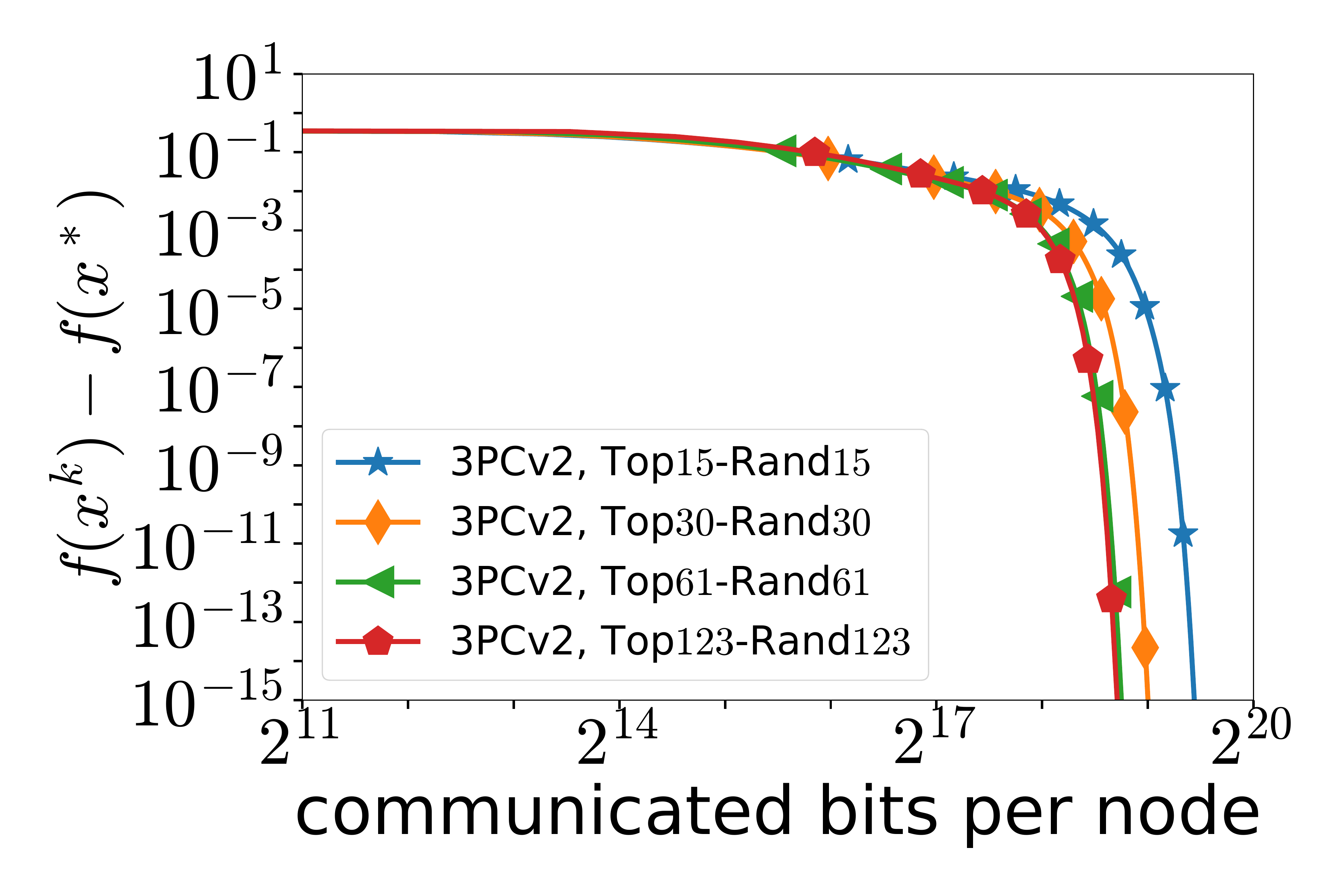} &
				\includegraphics[width=0.22\linewidth]{./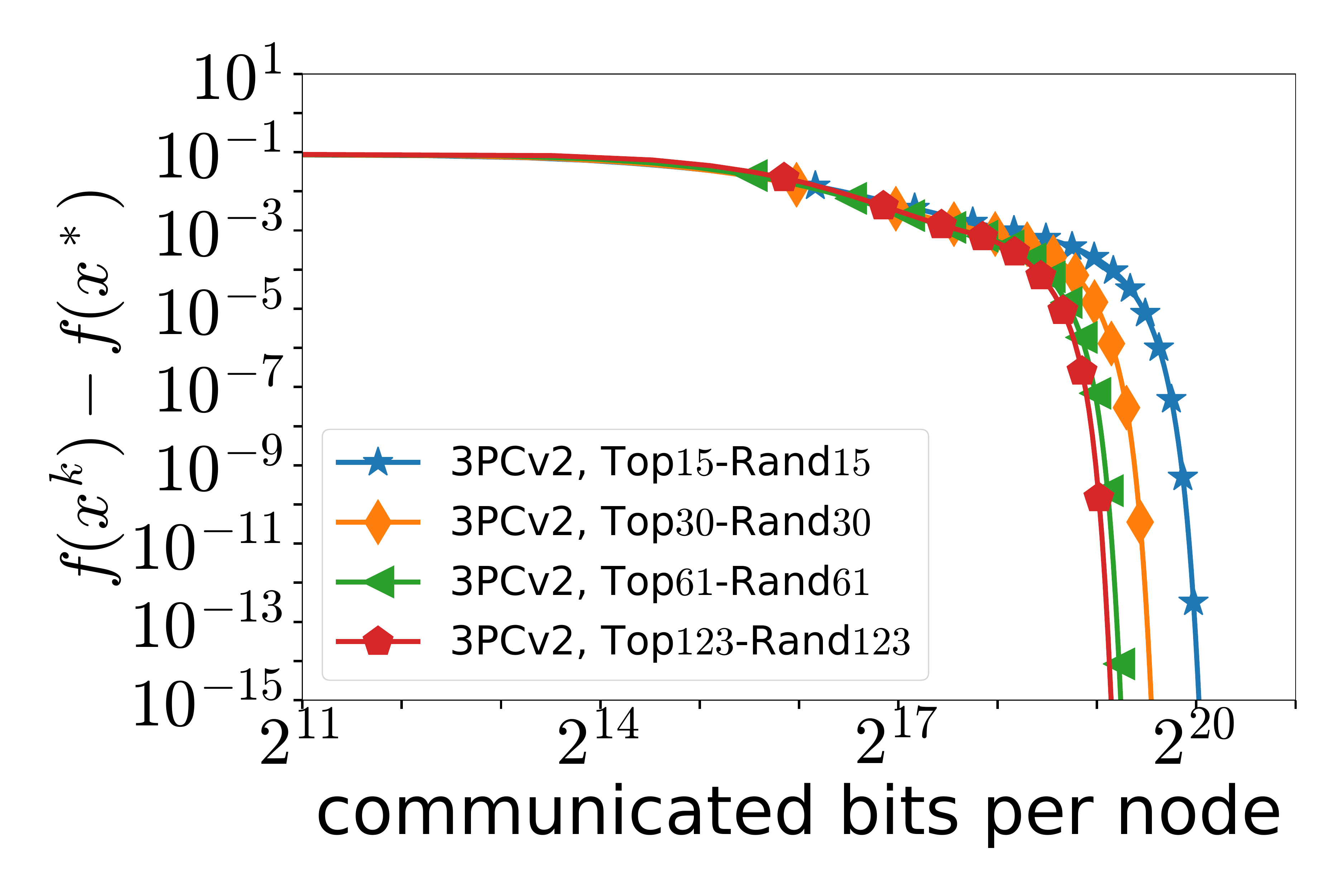} &
				\includegraphics[width=0.22\linewidth]{./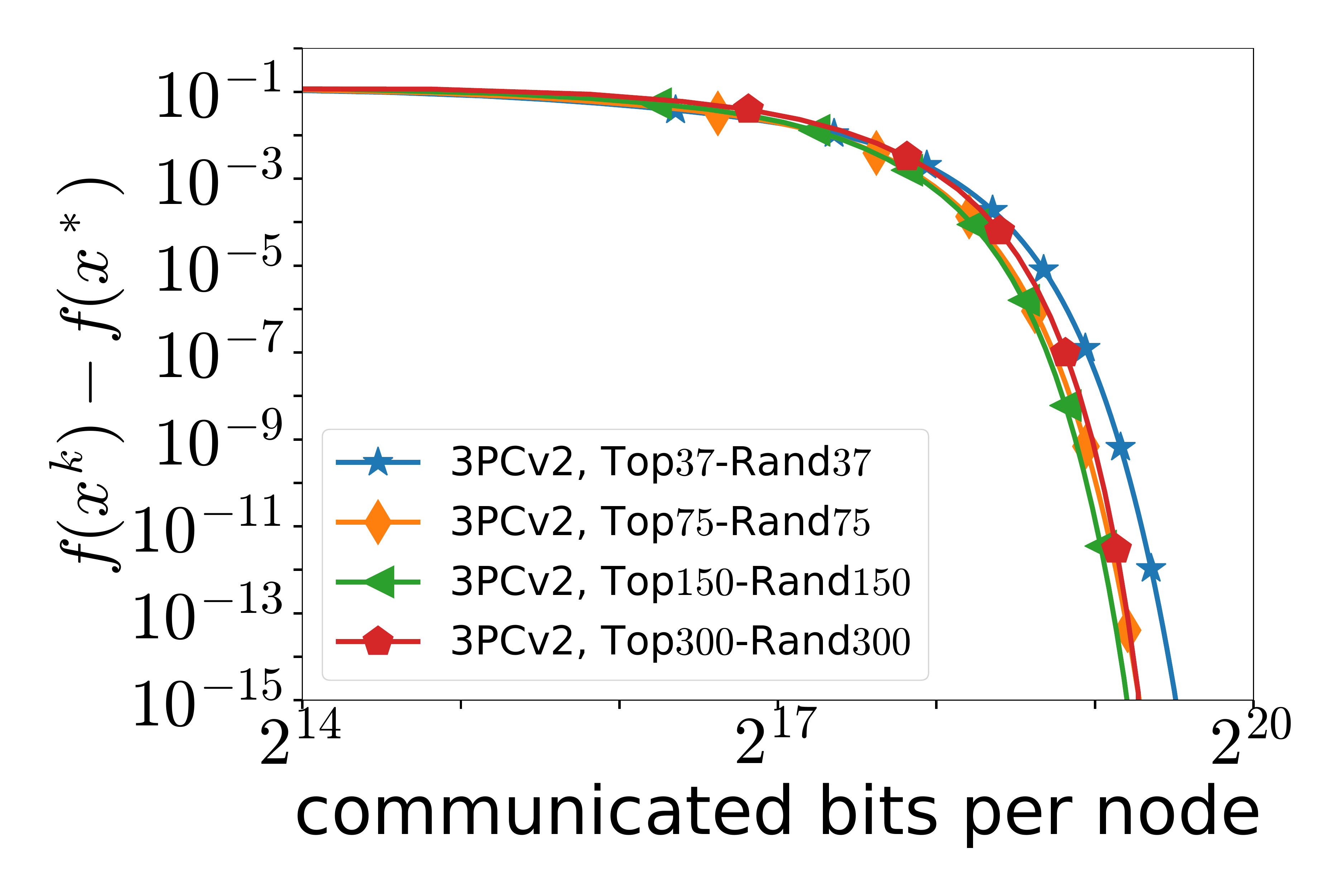}\\
				(e) \dataname{phishing}, {\scriptsize$ \lambda=10^{-4}$} &
				(f) \dataname{a1a}, {\scriptsize $\lambda=10^{-3}$} &
				(g) \dataname{a9a}, {\scriptsize$ \lambda=10^{-4}$} &
				(h) \dataname{w8a}, {\scriptsize$ \lambda=10^{-3}$} \\
			\end{tabular}       
		\end{center}
		\caption{{\bf First row:} The performance of \algname{Newton-3PCv2} where 3PCv2 compression mechanism is based on Top-$K_1$ and Rand-$K_2$ compressors with $K_1+K_2=d$ in terms of communication complexity. {\bf Second row:}  The performance of \algname{Newton-3PCv2} where 3PCv2 compression mechanism is based on Top-$K_1$ and Rand-$K_2$ compressors with $K_1=K_2 \in \{\nicefrac{d}{8}, \nicefrac{d}{4}, \nicefrac{d}{2}, d\}$ in terms of communication complexity. }
		\label{fig:Newton-3PCv2}
	\end{figure}

	\subsection{Behavior of \algname{Newton-3PCv4} under different compression levels}
	
	Now we test the behavior of \algname{Newton-3PCv4} where 3PCv4 is based on a pair (Top-$K_1$, Top-$K_2$) of compressors. Again, we have to sets of experiments: in the first one we examine the performance of \algname{Newton-3PCv4} when $K_1+K_2=d$; in the second one we check the efficiency of \algname{Newton-3PCv4} when $K_1=K_2=K$ varying $K$. In both cases we provide the behavior of \algname{Newton-EF21} (equivalent to \algname{FedNL}) for comparison. All results are presented in Figure~\ref{fig:Newton-3PCv4}. 
	
	As we can see, in the first set of experiments it does not matter how we distribute $d$ between $K_1$ and $K_2$ since it does not affect the performance. Regarding the second set of experiments, we can say that in some cases less aggressive compression ($K_1=K_2=d$) could be better than \algname{Newton-EF21}.

	\begin{figure}[t]
		\begin{center}
			\begin{tabular}{cccc}
				\includegraphics[width=0.22\linewidth]{./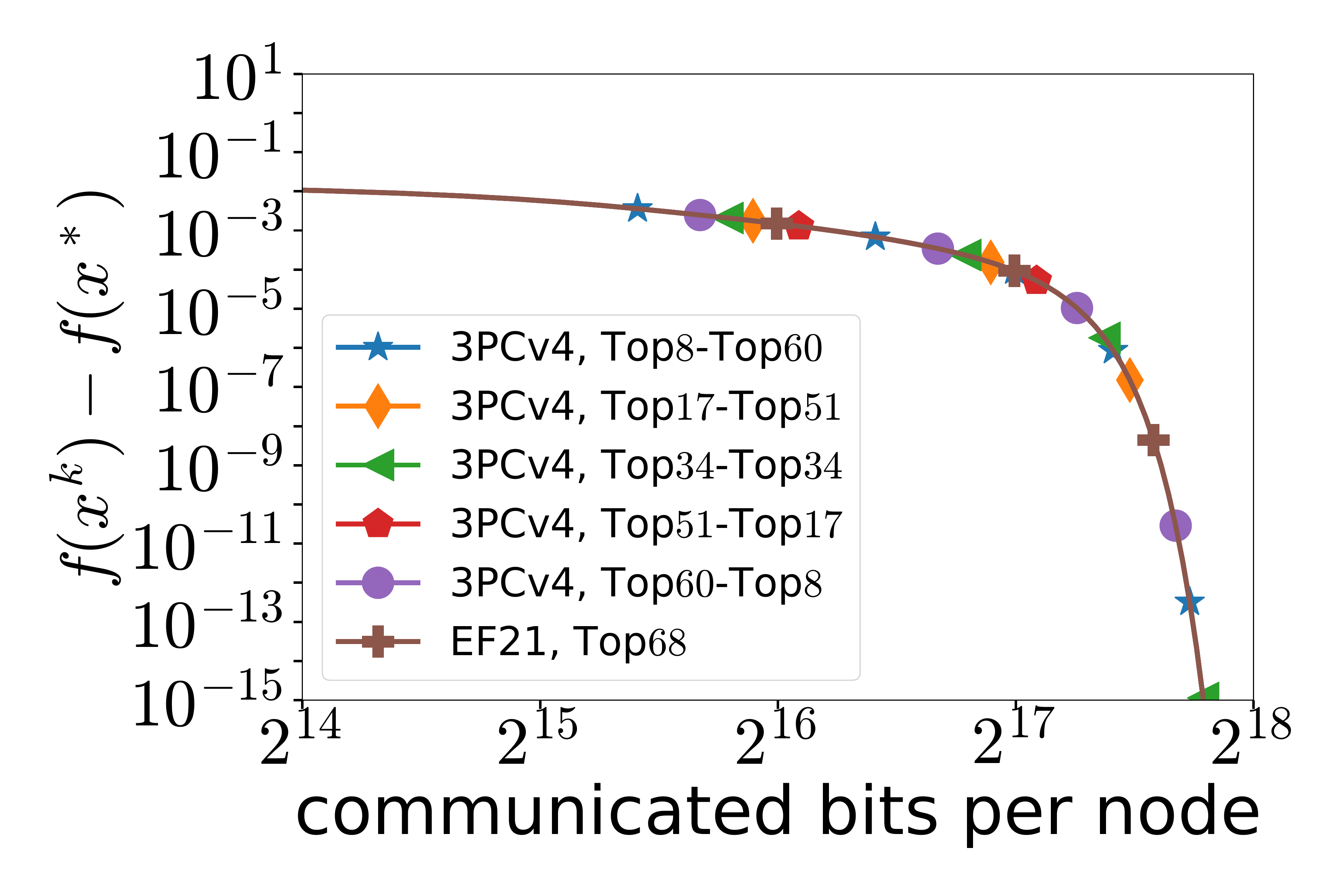} &
				\includegraphics[width=0.22\linewidth]{./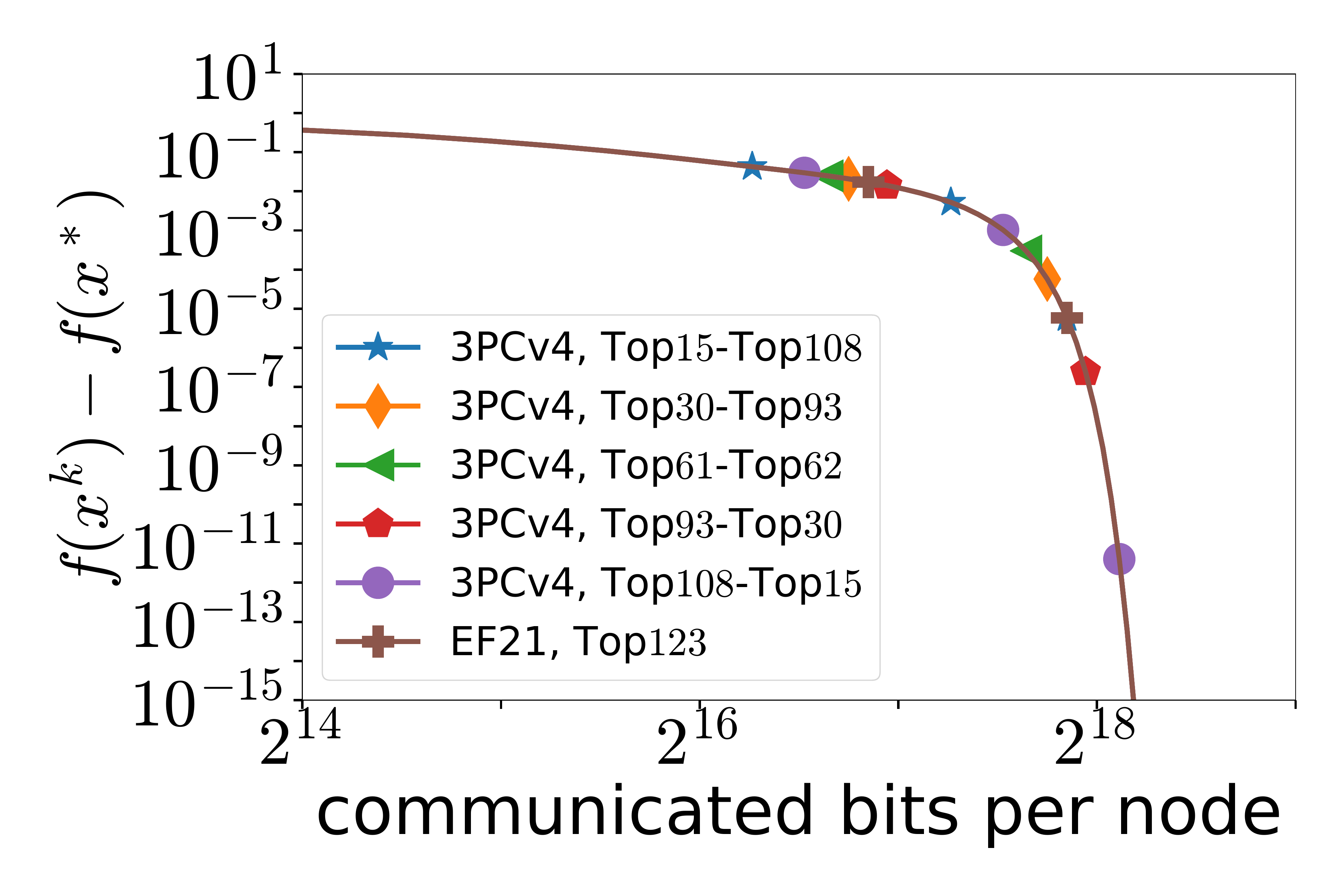} &
				\includegraphics[width=0.22\linewidth]{./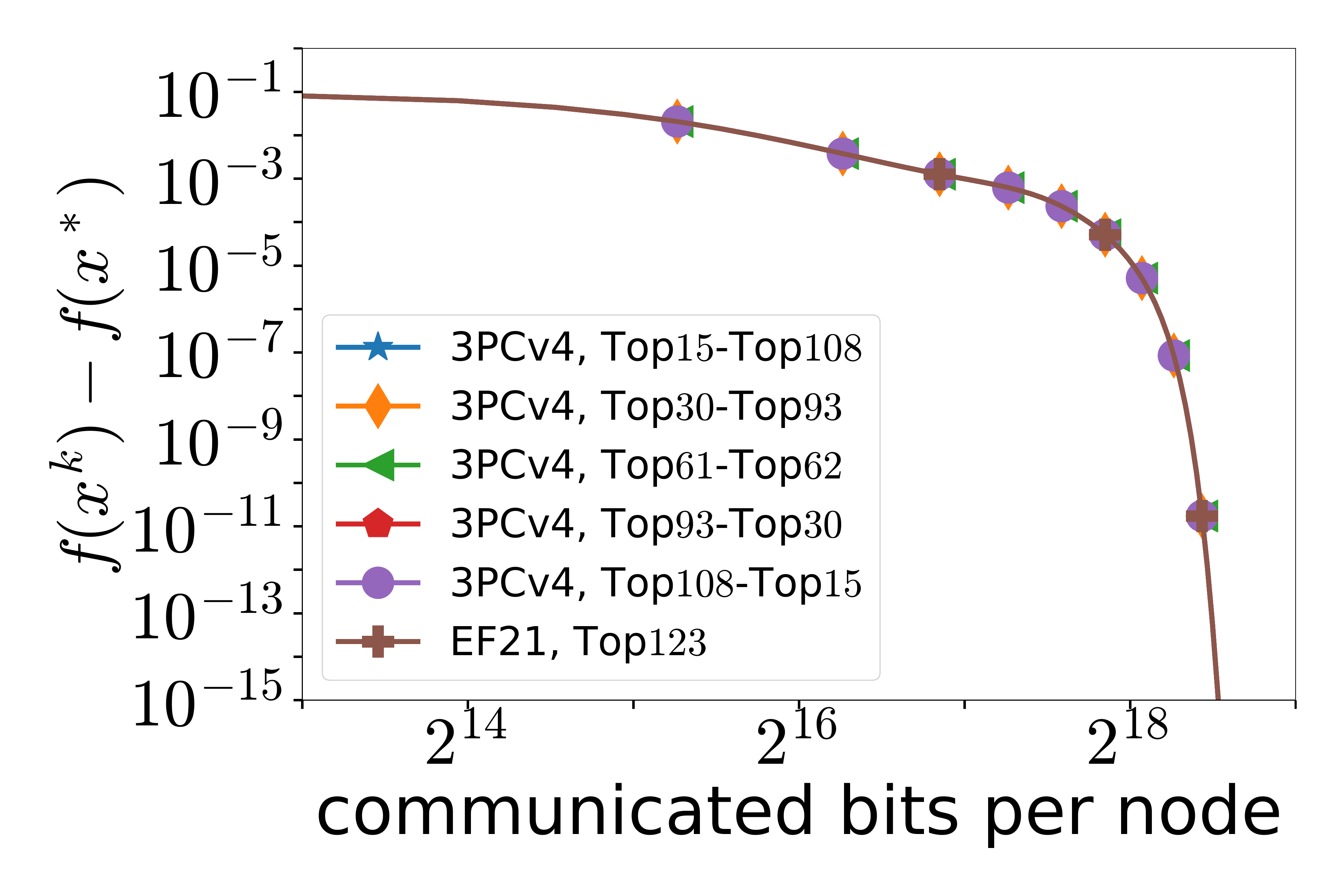} &
				\includegraphics[width=0.22\linewidth]{./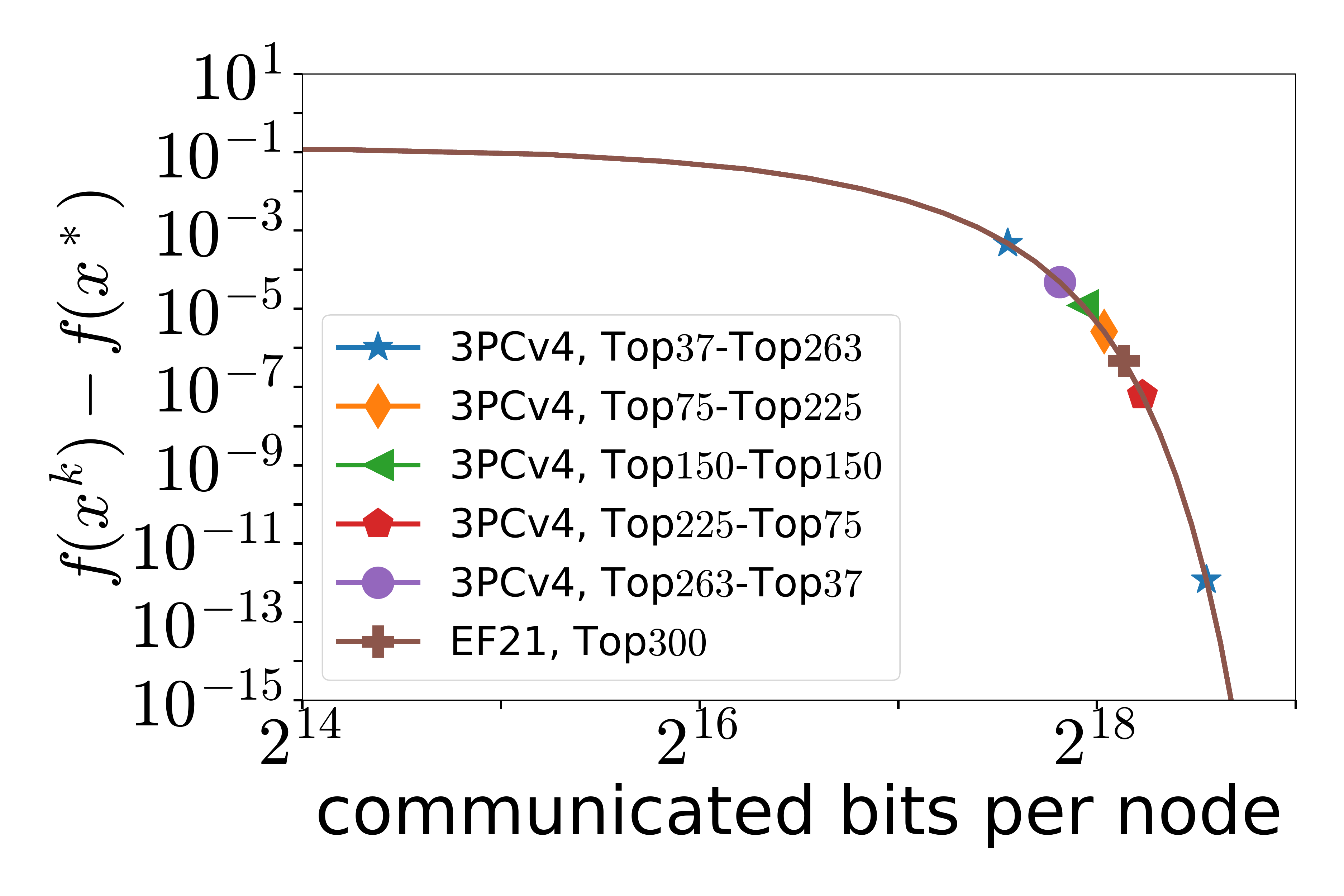}\\
				(a) \dataname{phishing}, {\scriptsize$ \lambda=10^{-4}$} &
				(b) \dataname{a1a}, {\scriptsize $\lambda=10^{-3}$} &
				(c) \dataname{a9a}, {\scriptsize$ \lambda=10^{-4}$} &
				(d) \dataname{w8a}, {\scriptsize$ \lambda=10^{-3}$} \\
				\includegraphics[width=0.22\linewidth]{./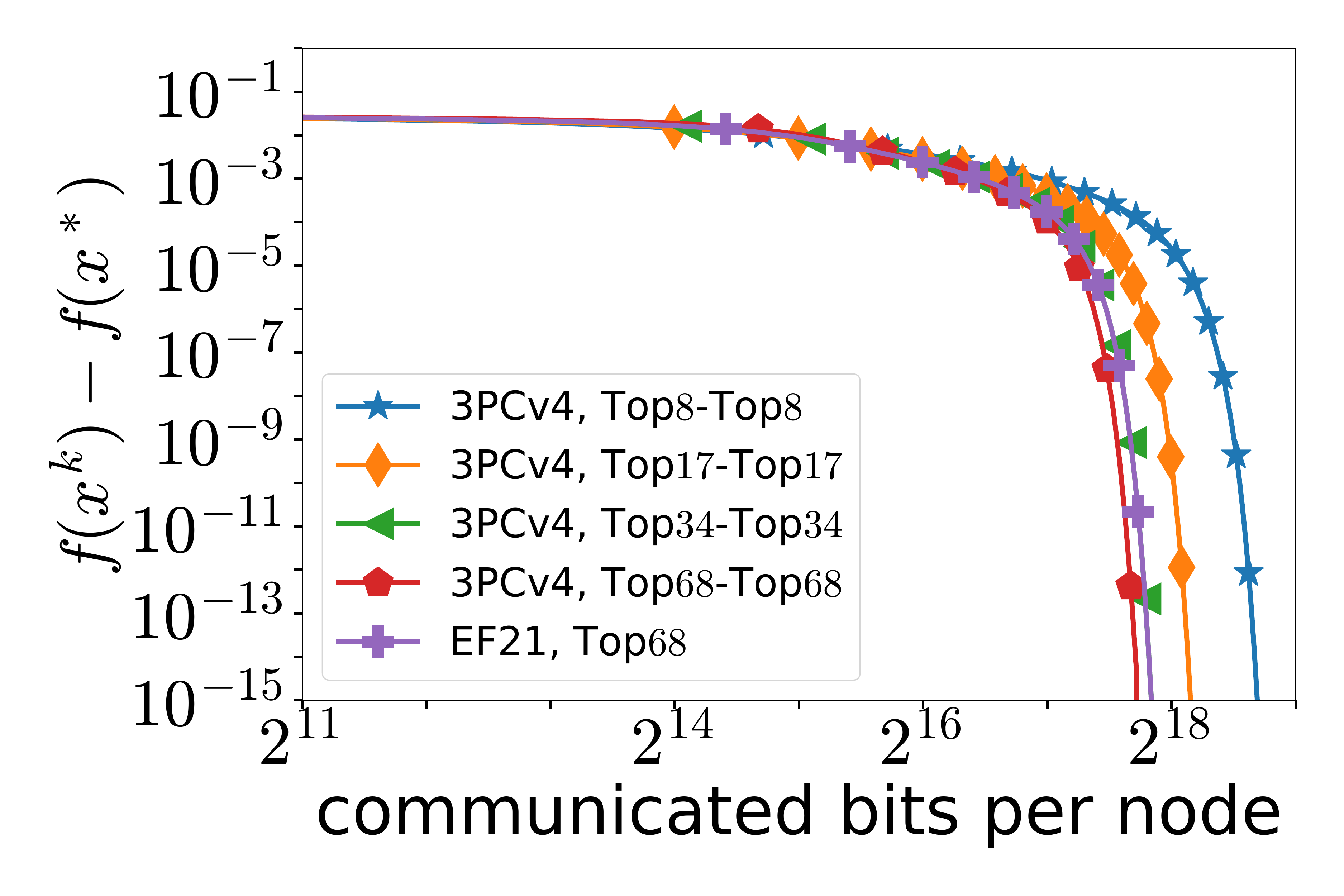} &
				\includegraphics[width=0.22\linewidth]{./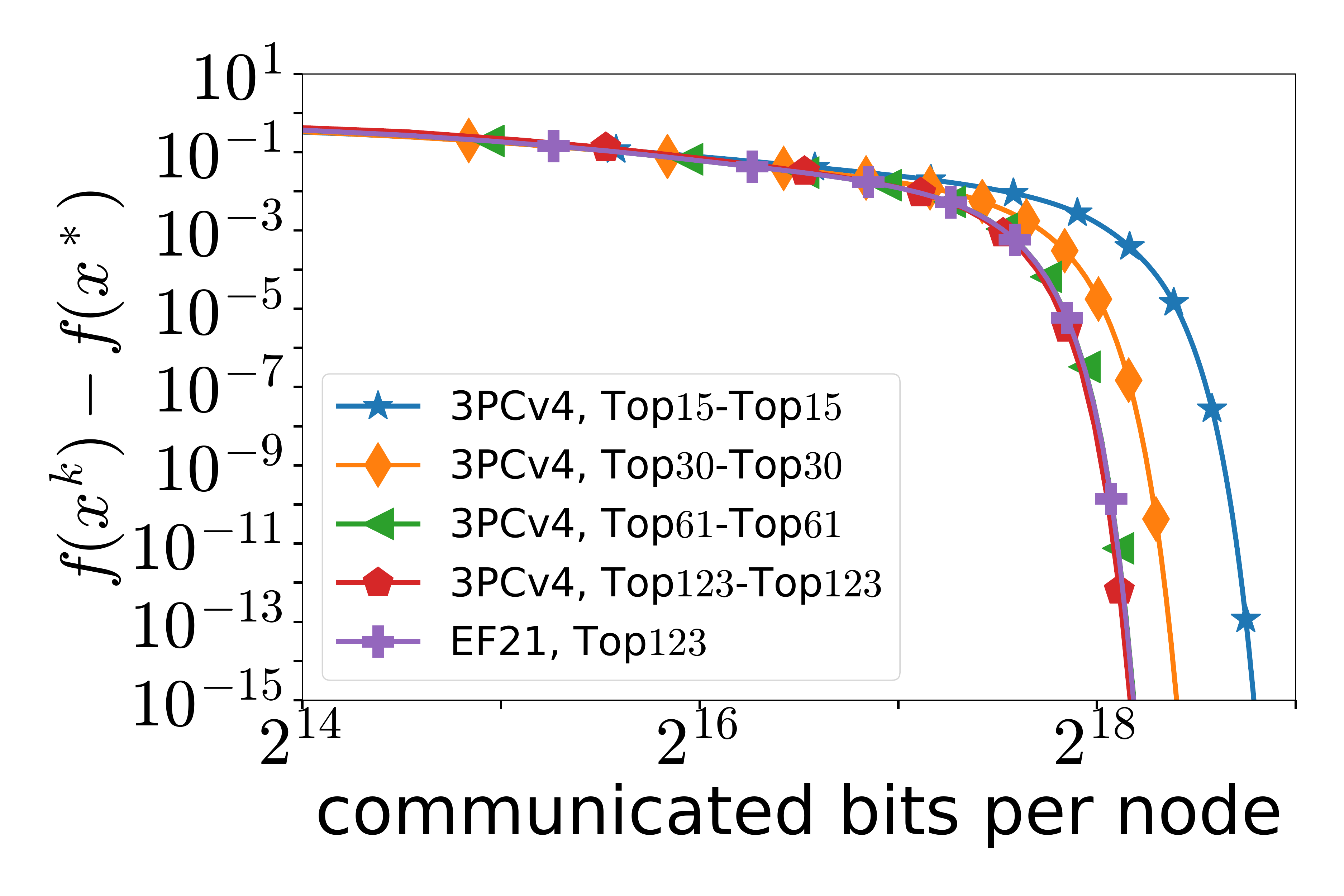} &
				\includegraphics[width=0.22\linewidth]{./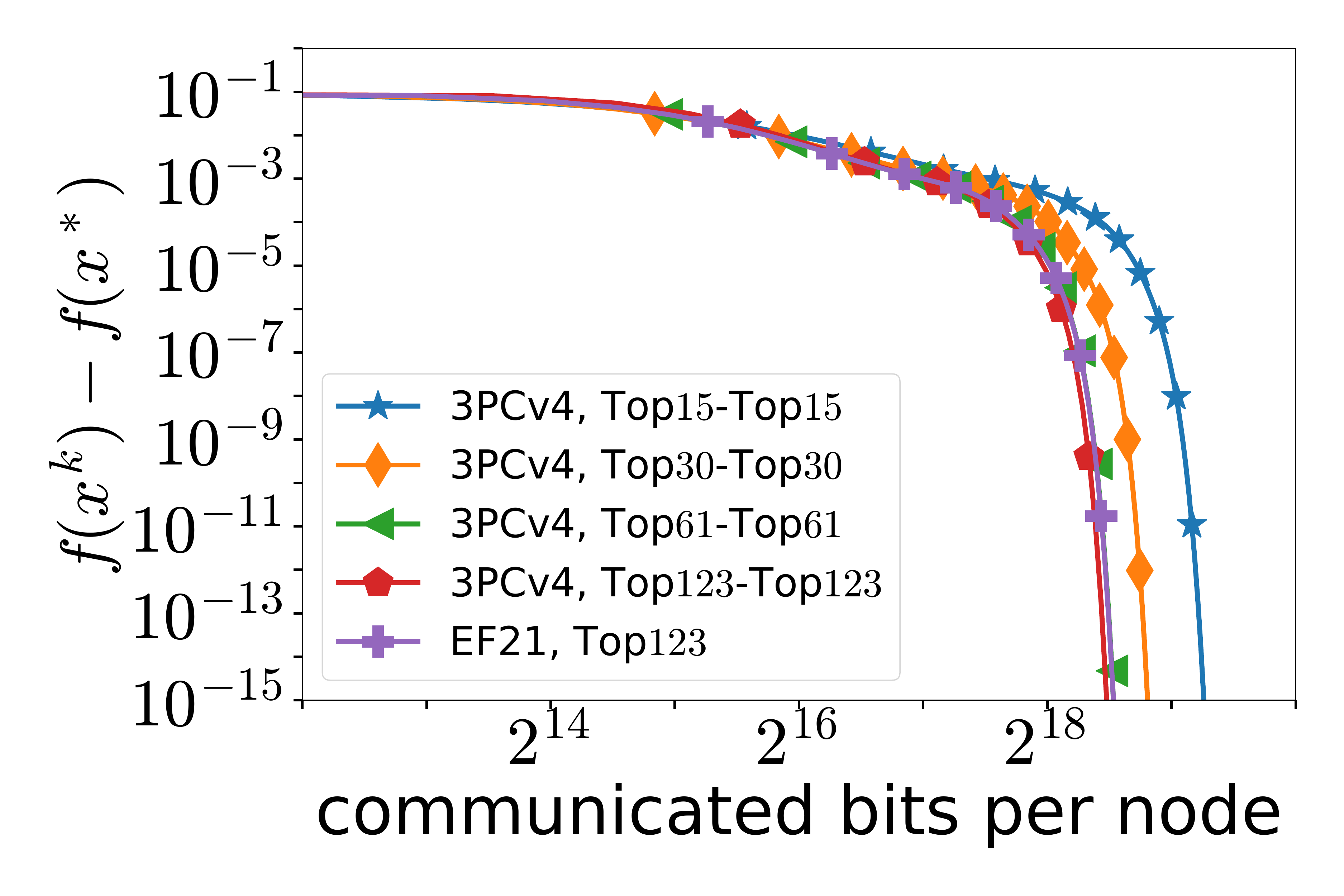} &
				\includegraphics[width=0.22\linewidth]{./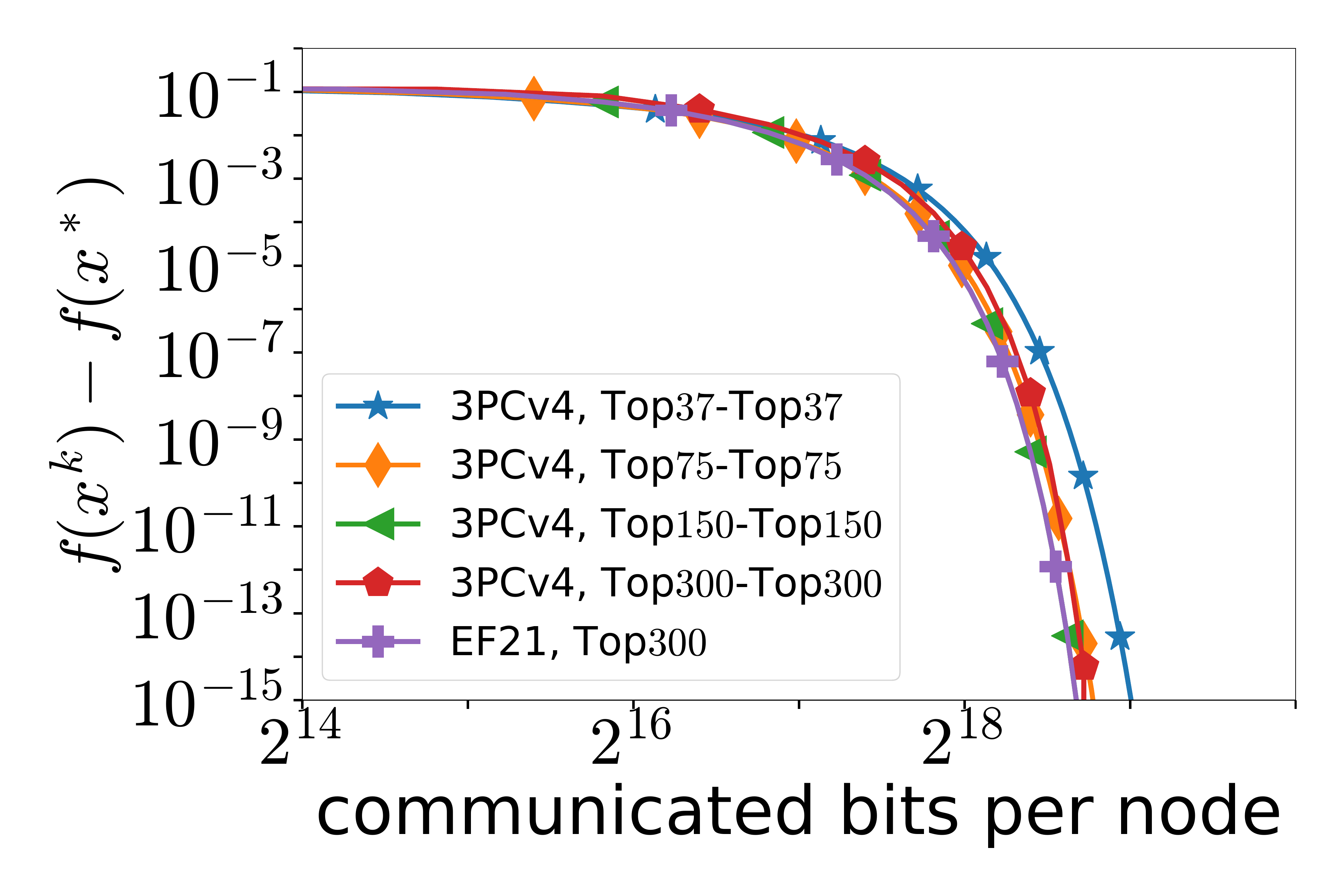}\\
				(e) \dataname{phishing}, {\scriptsize$ \lambda=10^{-4}$} &
				(f) \dataname{a1a}, {\scriptsize $\lambda=10^{-3}$} &
				(g) \dataname{a9a}, {\scriptsize$ \lambda=10^{-4}$} &
				(h) \dataname{w8a}, {\scriptsize$ \lambda=10^{-3}$} \\
			\end{tabular}       
		\end{center}
		\caption{{\bf First row:} The performance of \algname{Newton-3PCv4} where 3PCv4 compression mechanism is based on Top-$K_1$ and Top-$K_2$ compressors with $K_1+K_2=d$ in terms of communication complexity. {\bf Second row:}  The performance of \algname{Newton-3PCv4} where 3PCv4 compression mechanism is based on Top-$K_1$ and Top-$K_2$ compressors with $K_1=K_2 \in \{\nicefrac{d}{8}, \nicefrac{d}{4}, \nicefrac{d}{2}, d\}$ in terms of communication complexity. Performance of \algname{Newton-EF21}  with Top-$d$ is given for comparison.}
		\label{fig:Newton-3PCv4}
	\end{figure}

	\subsection{Study of \algname{Newton-3PCv1}}
	
	Next, we investigate the performance of \algname{Newton-3PCv1} where 3PC compression mechanism is based on Top-$K$. We compare its performance with \algname{Newton-EF21} (equivalent to \algname{FedNL}) with Top-$d$, \algname{NL1} with Rand-$1$, and \algname{DINGO}. We observe in Figure~\ref{fig:Newton-3PCv1} that \algname{Newton-3PCv1} is not efficient method since it fails in all cases.

	\begin{figure}[t]
		\begin{center}
			\begin{tabular}{cccc}
				\includegraphics[width=0.22\linewidth]{./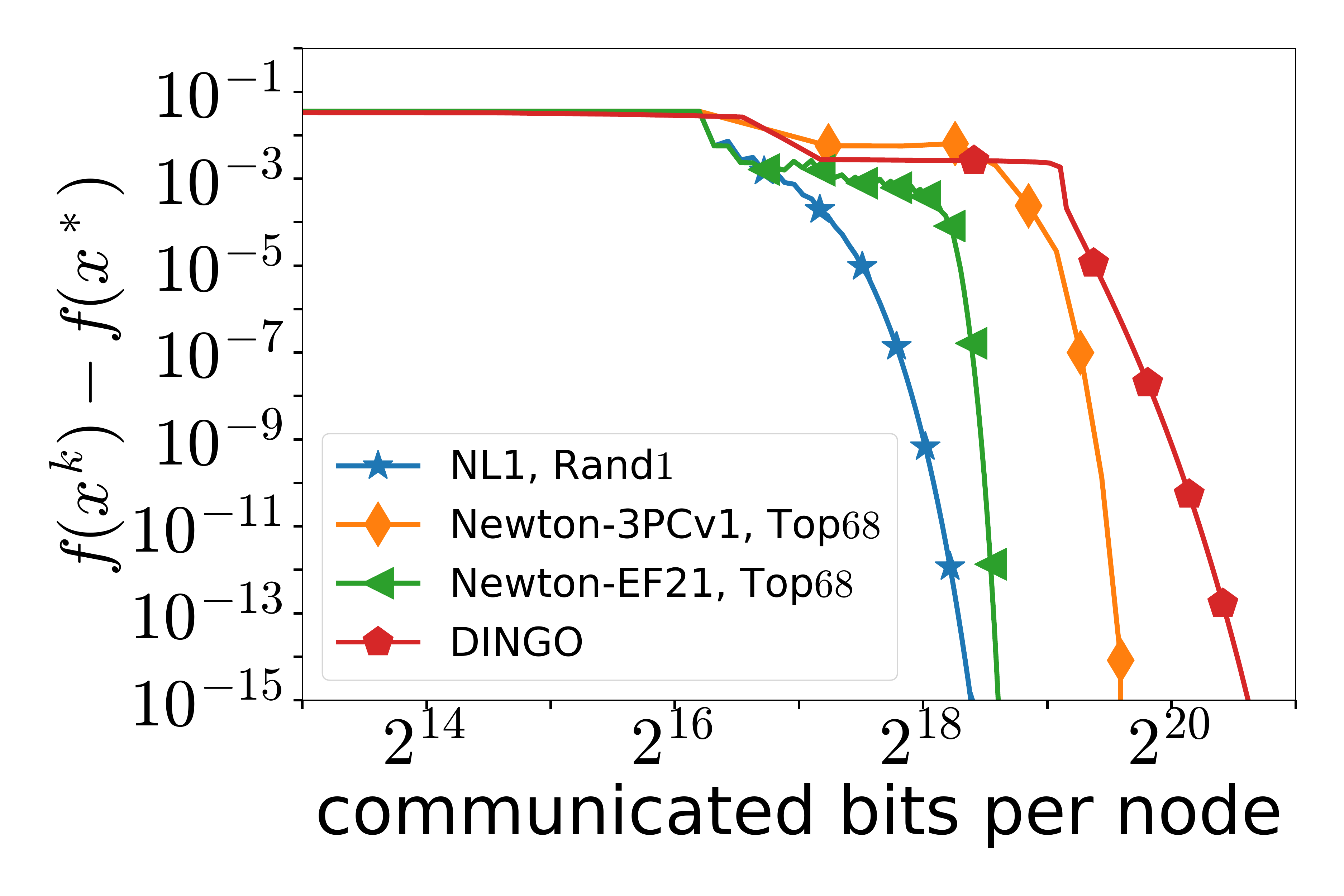} &
				\includegraphics[width=0.22\linewidth]{./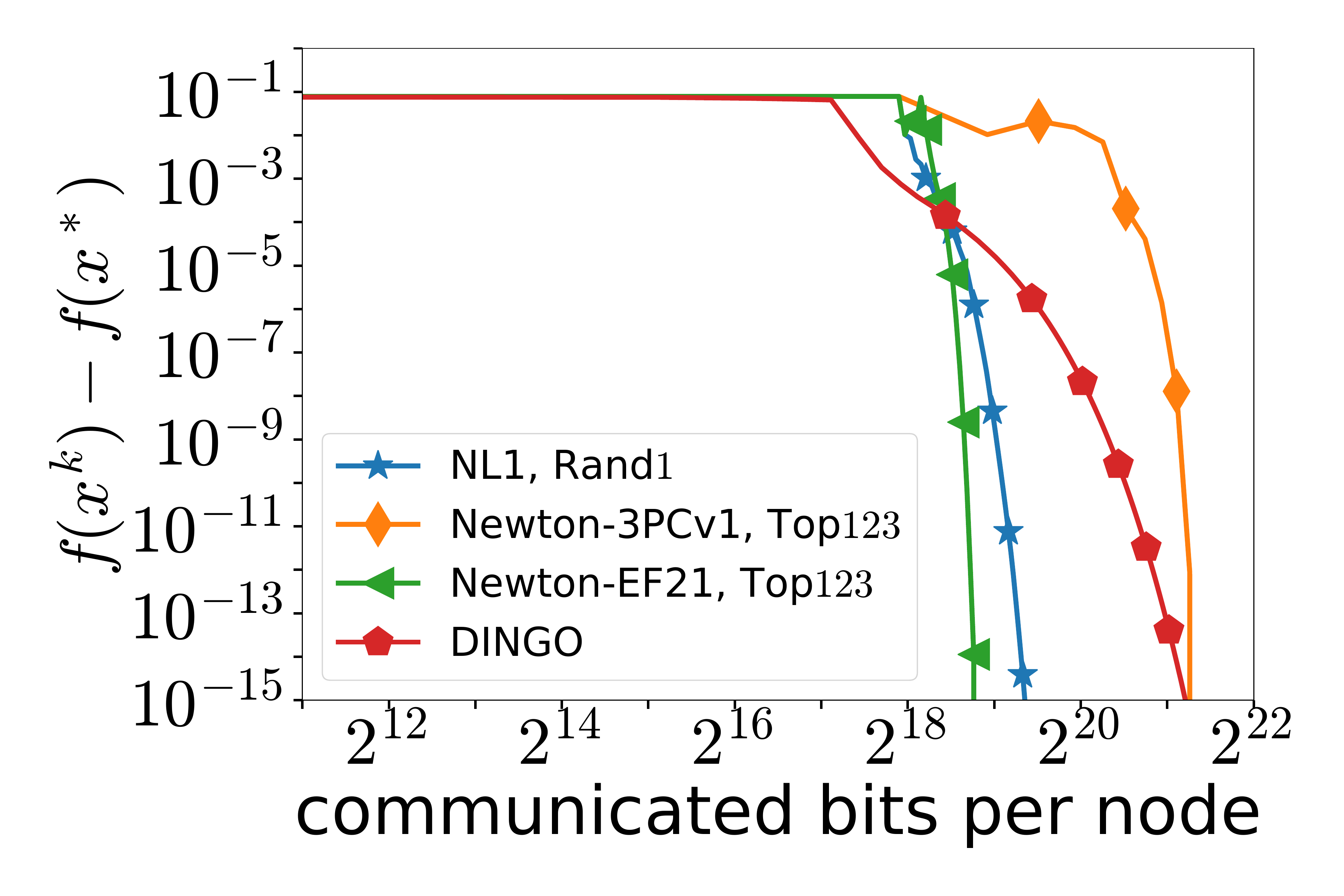} &
				\includegraphics[width=0.22\linewidth]{./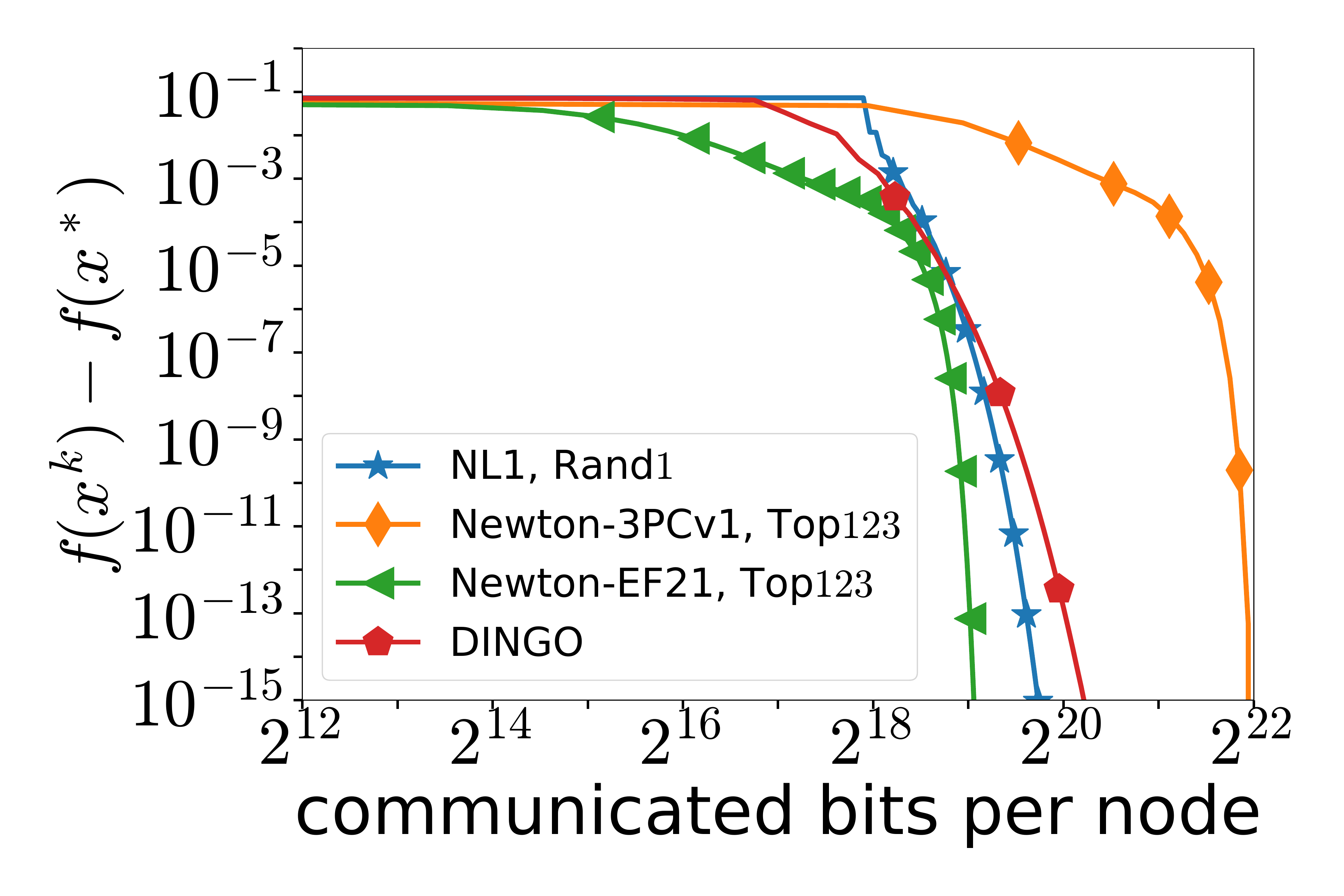} &
				\includegraphics[width=0.22\linewidth]{./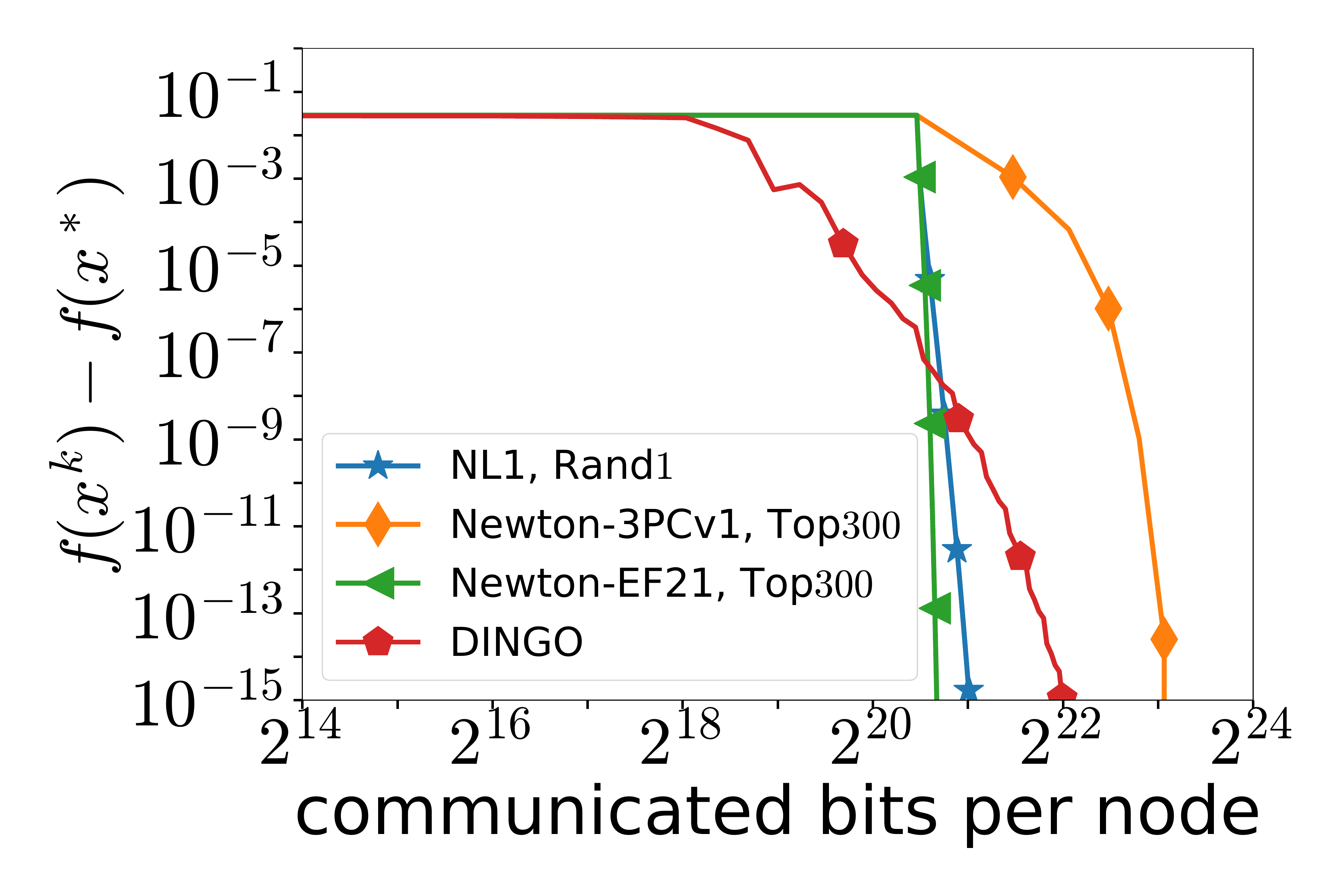}\\
				(a) \dataname{phishing}, {\scriptsize$ \lambda=10^{-4}$} &
				(b) \dataname{a1a}, {\scriptsize $\lambda=10^{-3}$} &
				(c) \dataname{a9a}, {\scriptsize$ \lambda=10^{-4}$} &
				(d) \dataname{w8a}, {\scriptsize$ \lambda=10^{-3}$} \\
			\end{tabular}       
		\end{center}
		\caption{The performance of \algname{Newton-3PCv1} with 3PCv1 based on Top-$d$, \algname{Newton-EF21} (equivalent to \algname{FedNL}) with Top-$d$, \algname{NL1} with Rand-$1$, and \algname{DINGO} in terms of communication complexity. }
		\label{fig:Newton-3PCv1}
	\end{figure}
	
	\subsection{Performance of \algname{Newton-3PCv5}}
	
	In this section we investigate the performance of \algname{Newton-3PCv5} where 3PC compression mechanism is based on Top-$K$. We compare its performance with \algname{Newton-EF21} (equivalent to \algname{FedNL}) with Top-$d$, \algname{NL1} with Rand-$1$, and \algname{DINGO}. According to the plots presented in Figure~\ref{fig:Newton-3PCv5}, we conclude that \algname{Newton-3PCv5} is not as effective as \algname{NL1} and \algname{Newton-EF21}, but it is comparable with \algname{DINGO}. The reason why \algname{Newton-3PCv5} is not efficient in terms of communication complexity is that we still need to send true Hessians with some nonzero probability which hurts the communication complexity of this method.
	
	\begin{figure}[t]
		\begin{center}
			\begin{tabular}{cccc}
				\includegraphics[width=0.22\linewidth]{./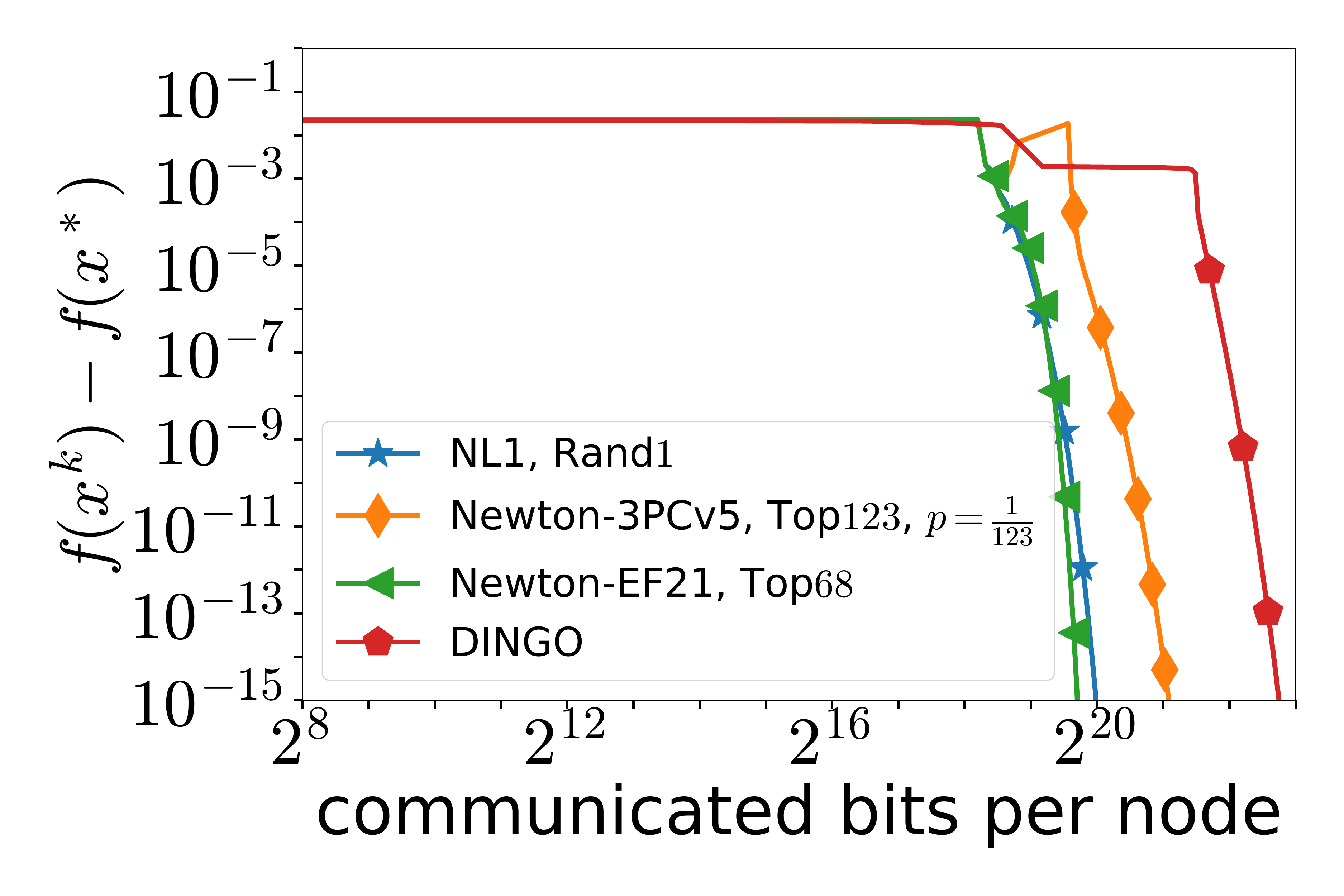} &
				\includegraphics[width=0.22\linewidth]{./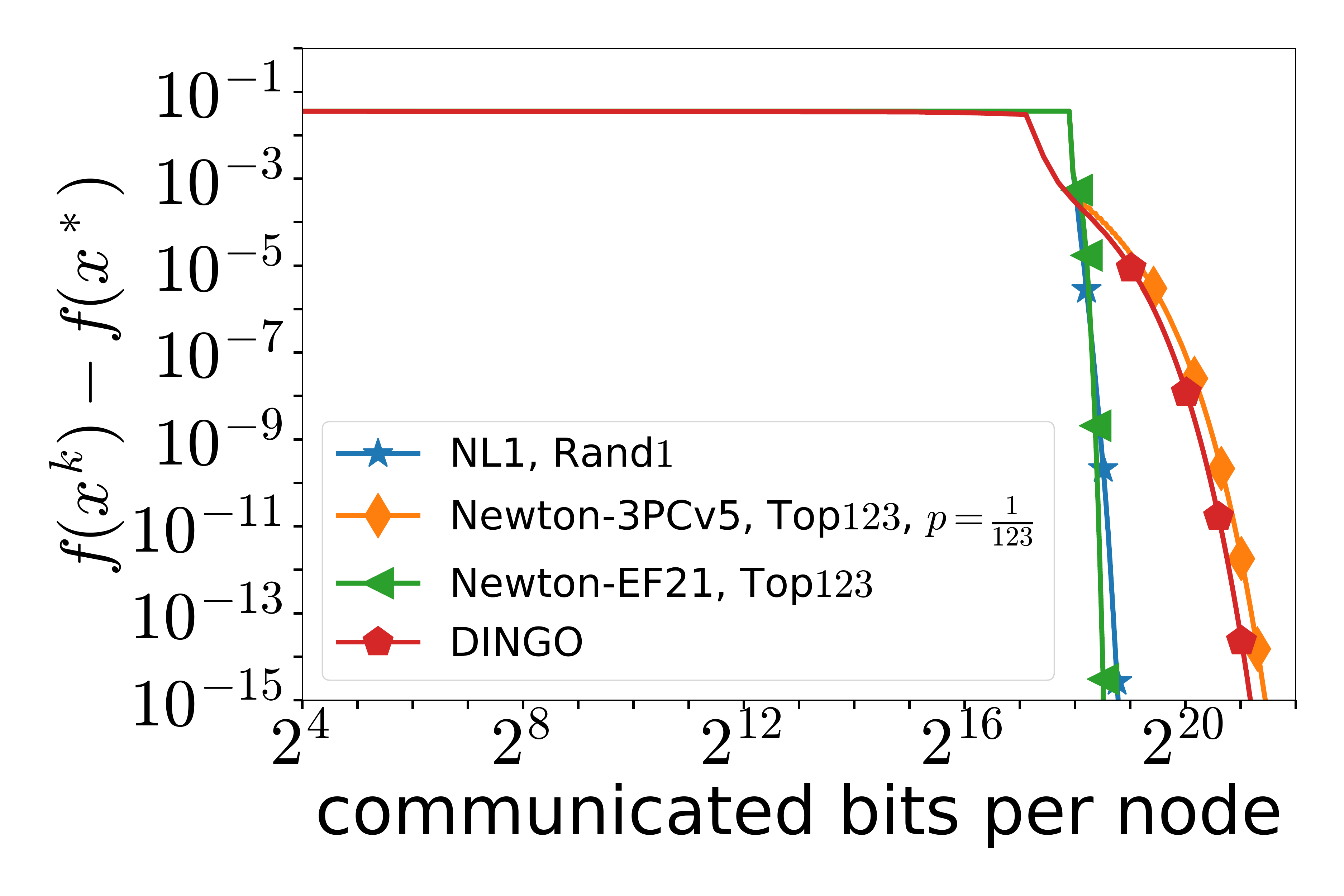} &
				\includegraphics[width=0.22\linewidth]{./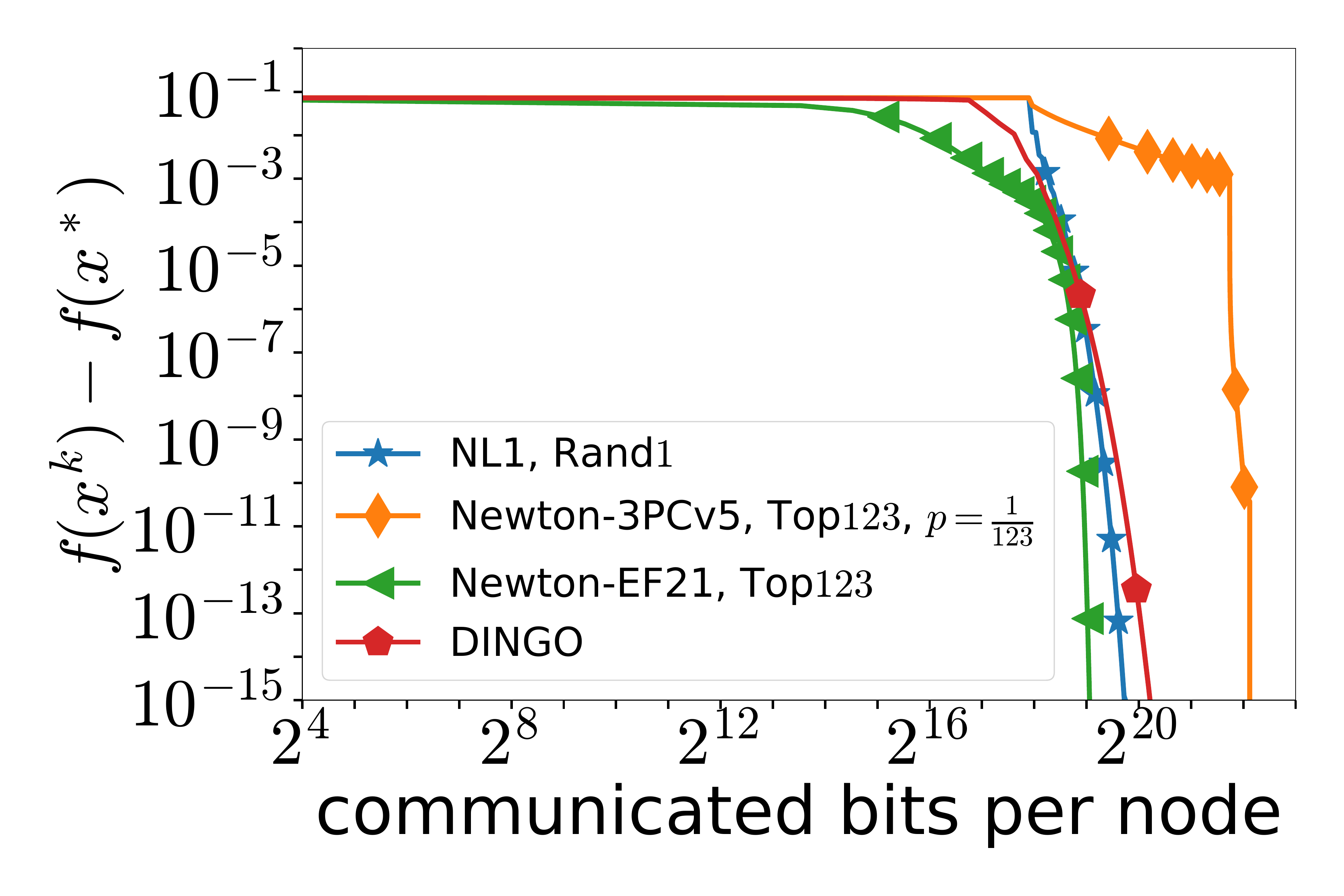} &
				\includegraphics[width=0.22\linewidth]{./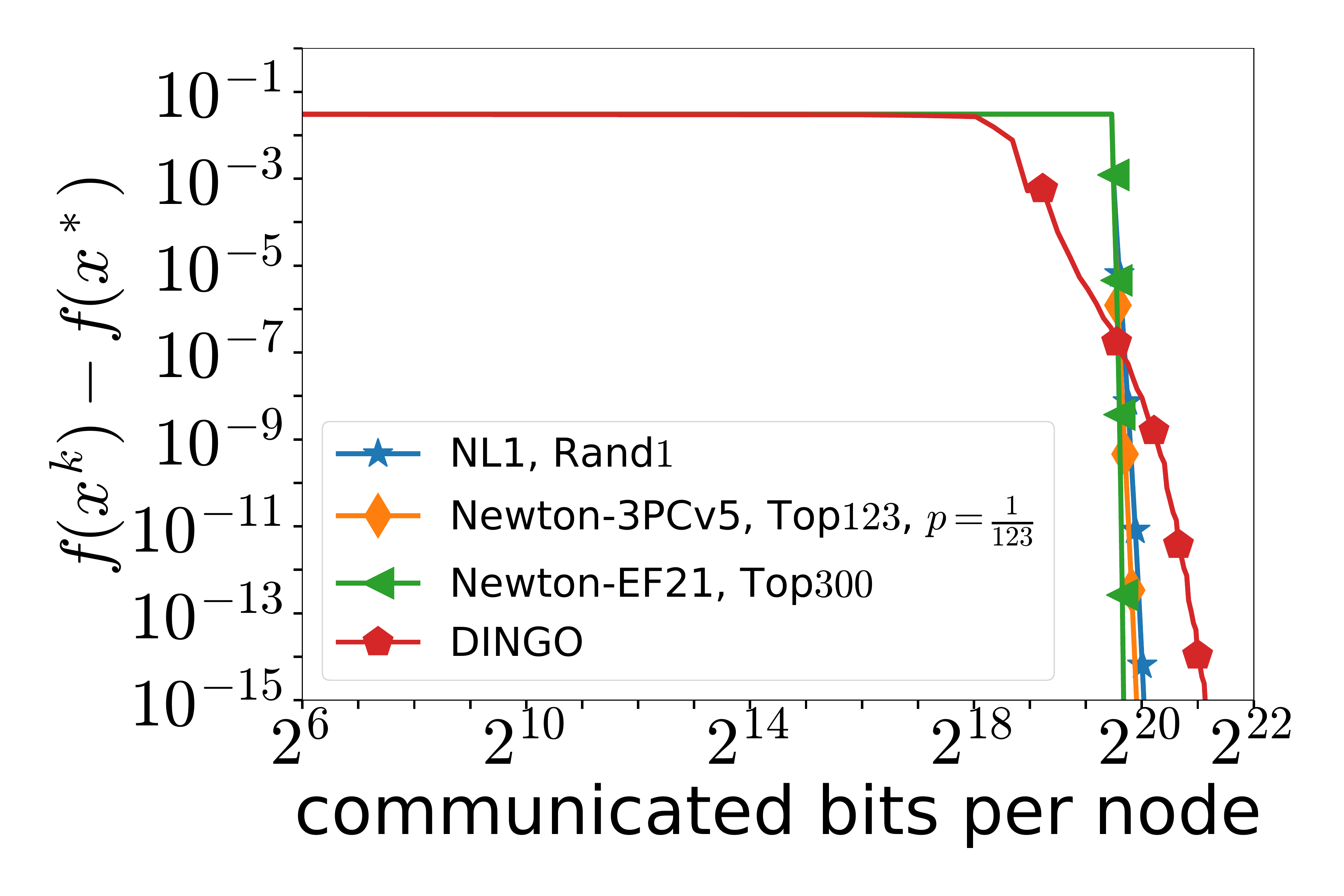}\\
				(a) \dataname{phishing}, {\scriptsize$ \lambda=10^{-4}$} &
				(b) \dataname{a1a}, {\scriptsize $\lambda=10^{-3}$} &
				(c) \dataname{a9a}, {\scriptsize$ \lambda=10^{-4}$} &
				(d) \dataname{w8a}, {\scriptsize$ \lambda=10^{-3}$} \\
			\end{tabular}       
		\end{center}
		\caption{The performance of \algname{Newton-3PCv5} with 3PCv5 based on Top-$d$, \algname{Newton-EF21} (equivalent to \algname{FedNL}) with Top-$d$, \algname{NL1} with Rand-$1$, and \algname{DINGO} in terms of communication complexity. }
		\label{fig:Newton-3PCv5}
	\end{figure}
	
	\subsection{\algname{Newton-3PC} with different choice of 3PC compression mechanism}
	
	Now we investigate how the choice of 3PC compressor influences the communication complexity of \algname{Newton-3PC}. We test the performance of \algname{Newton-3PC} with EF21, CLAG, LAG, 3PCv1 (based on Top-$K$), 3PCv2 (based on Top-$K_1$ and Rand-$K_2$), 3PCv4 (based on Top-$K_1$ and Top-$K_2$), and 3PCv5 (based on Top-$K$). We choose $p=\nicefrac{1}{d}$ for \algname{Newton-3PCv5} in order to make the communication cost of one iteration to be $\cO(d)$. The choice of $K$, $K_1$, and $K_2$ is justified by the same logic. 
	
	We clearly see that \algname{Newton-3PC} combined with EF21 (\algname{Newton-3PC} with this 3PC compressor reduces to \algname{FedNL}), CLAG, 3PCv2, 3PCv4 demonstrates almost identical results in terms of communication complexity. \algname{Newton-LAG} performs worse than previous methods except the case of \dataname{phishing} dataset. Surprisingly, \algname{Newton-3PCv1}, where only true Hessian differences is compressed, demonstrates the worst performance among all 3PC compression mechanisms. This probably caused by the fact that communication cost of one iteration of \algname{Newton-3PCv1} is significantly larger than those of other \algname{Newton-3PC} methods.

	\begin{figure}[t]
		\begin{center}
			\begin{tabular}{cccc}
				\includegraphics[width=0.22\linewidth]{./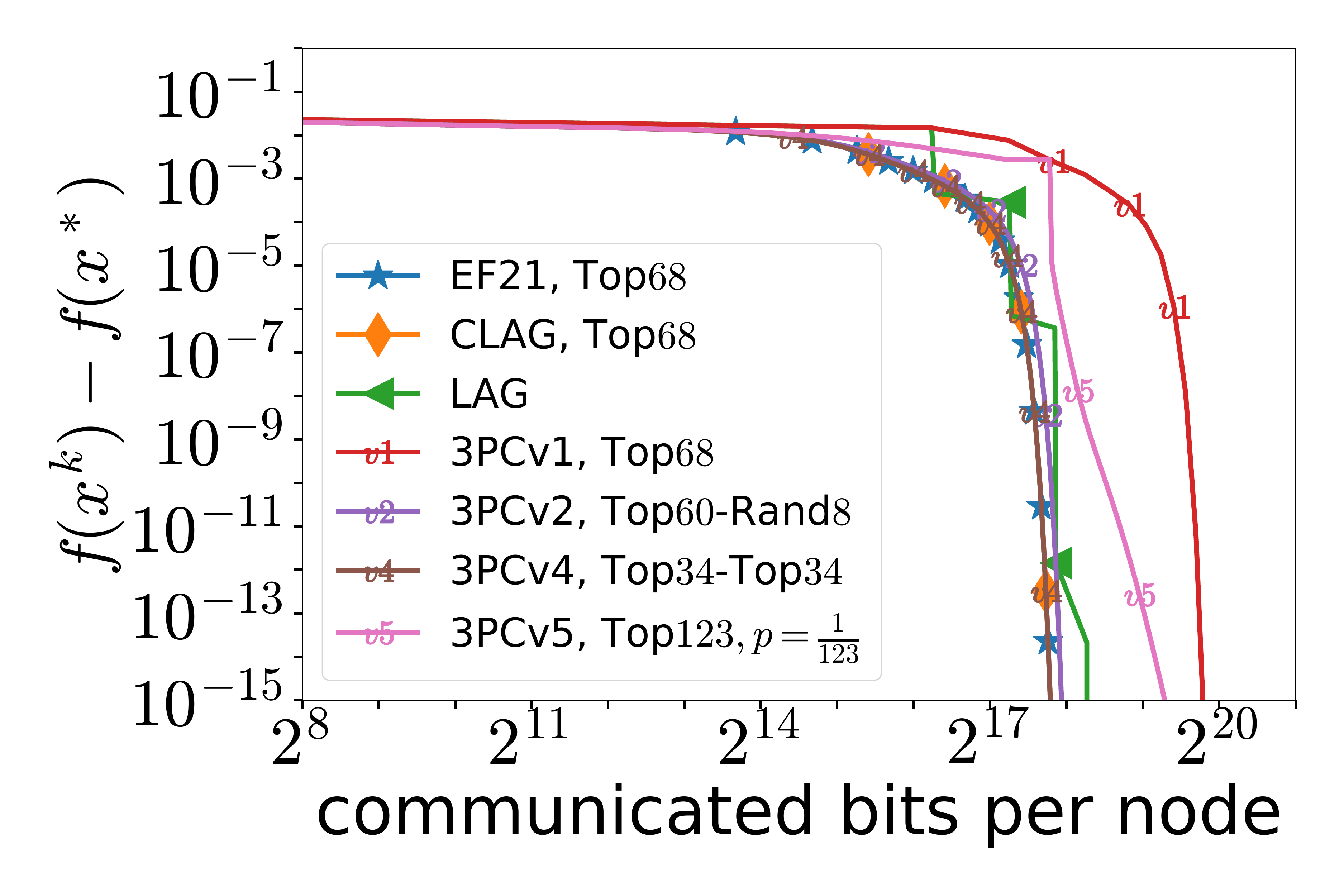} &
				\includegraphics[width=0.22\linewidth]{./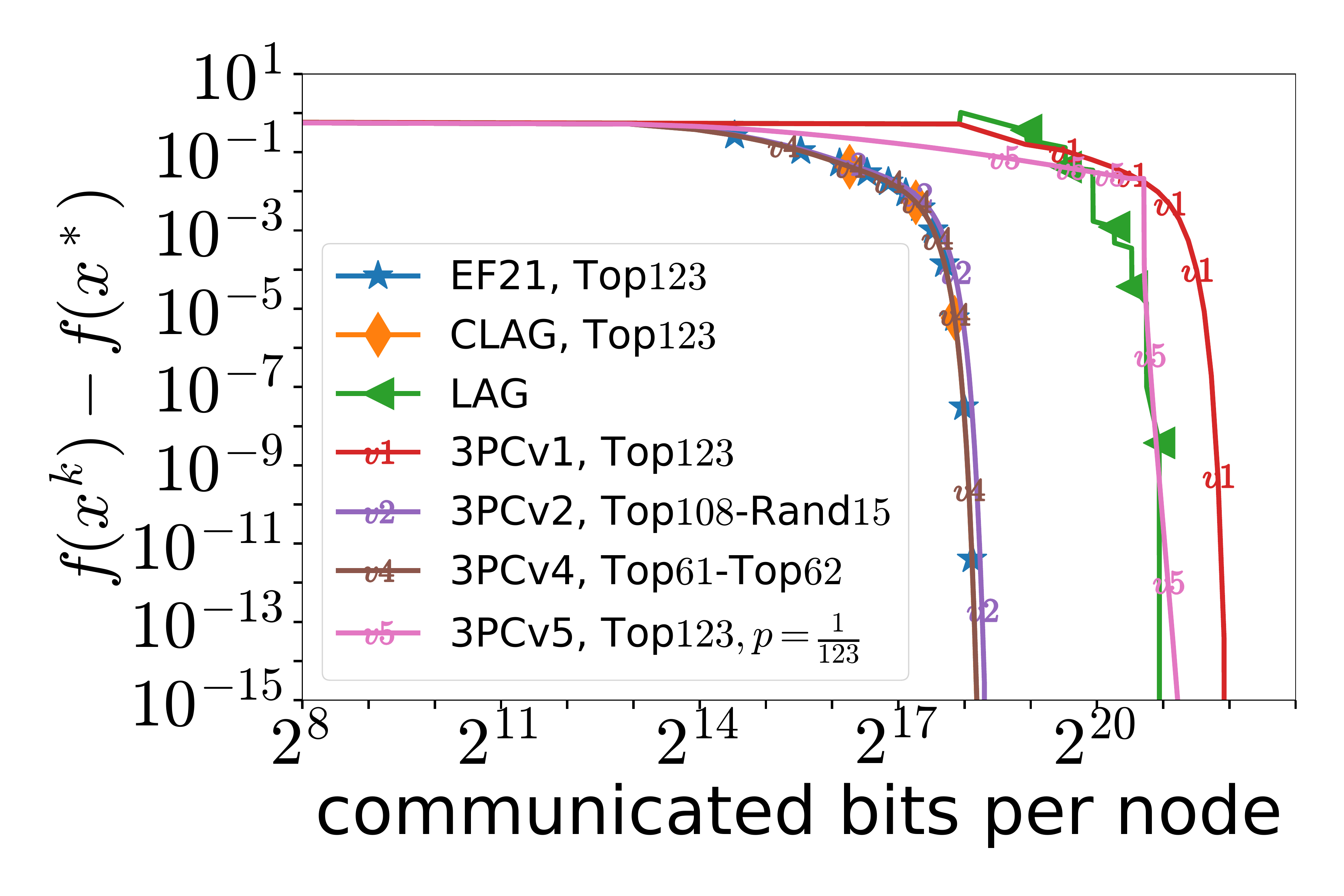} &
				\includegraphics[width=0.22\linewidth]{./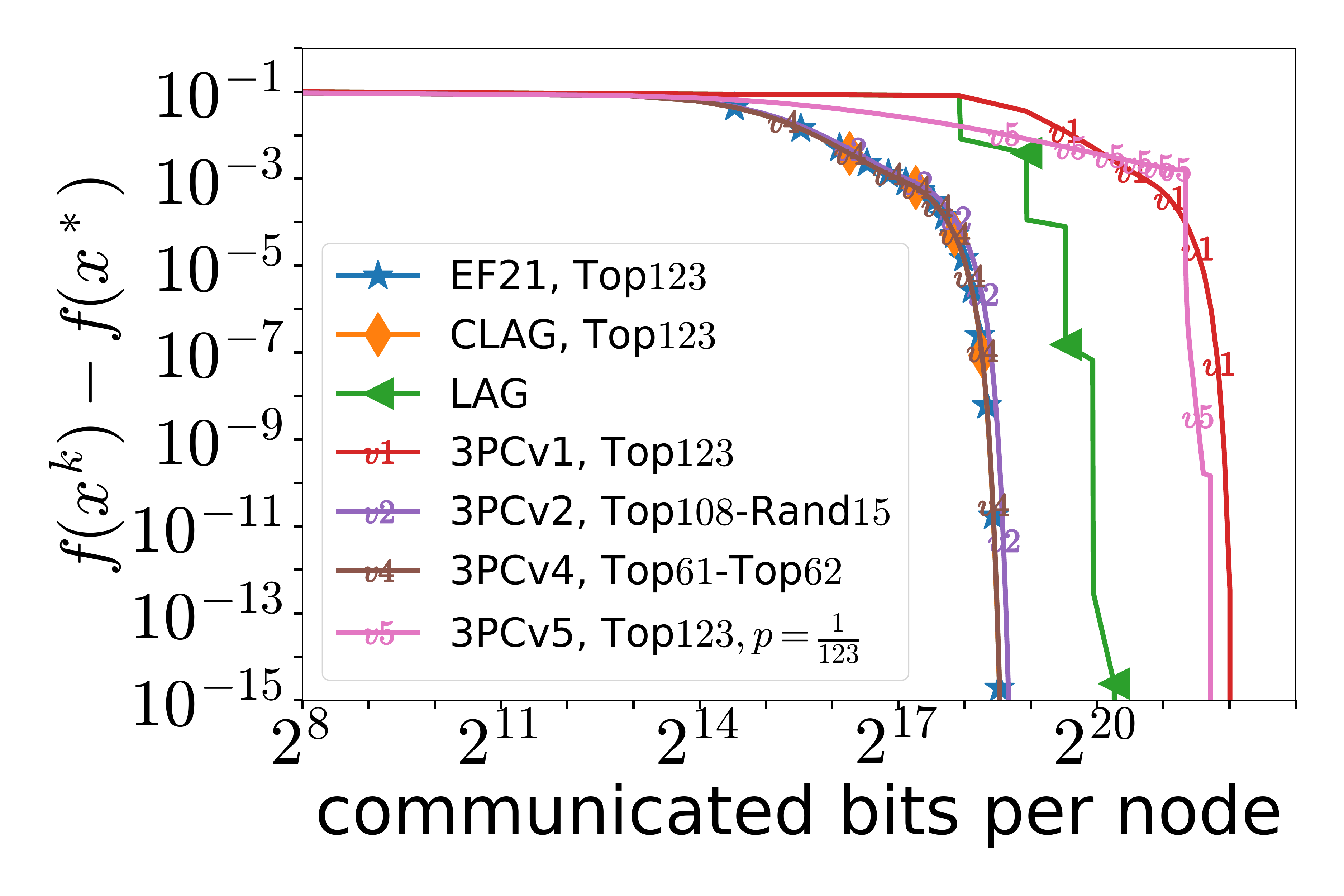} &
				\includegraphics[width=0.22\linewidth]{./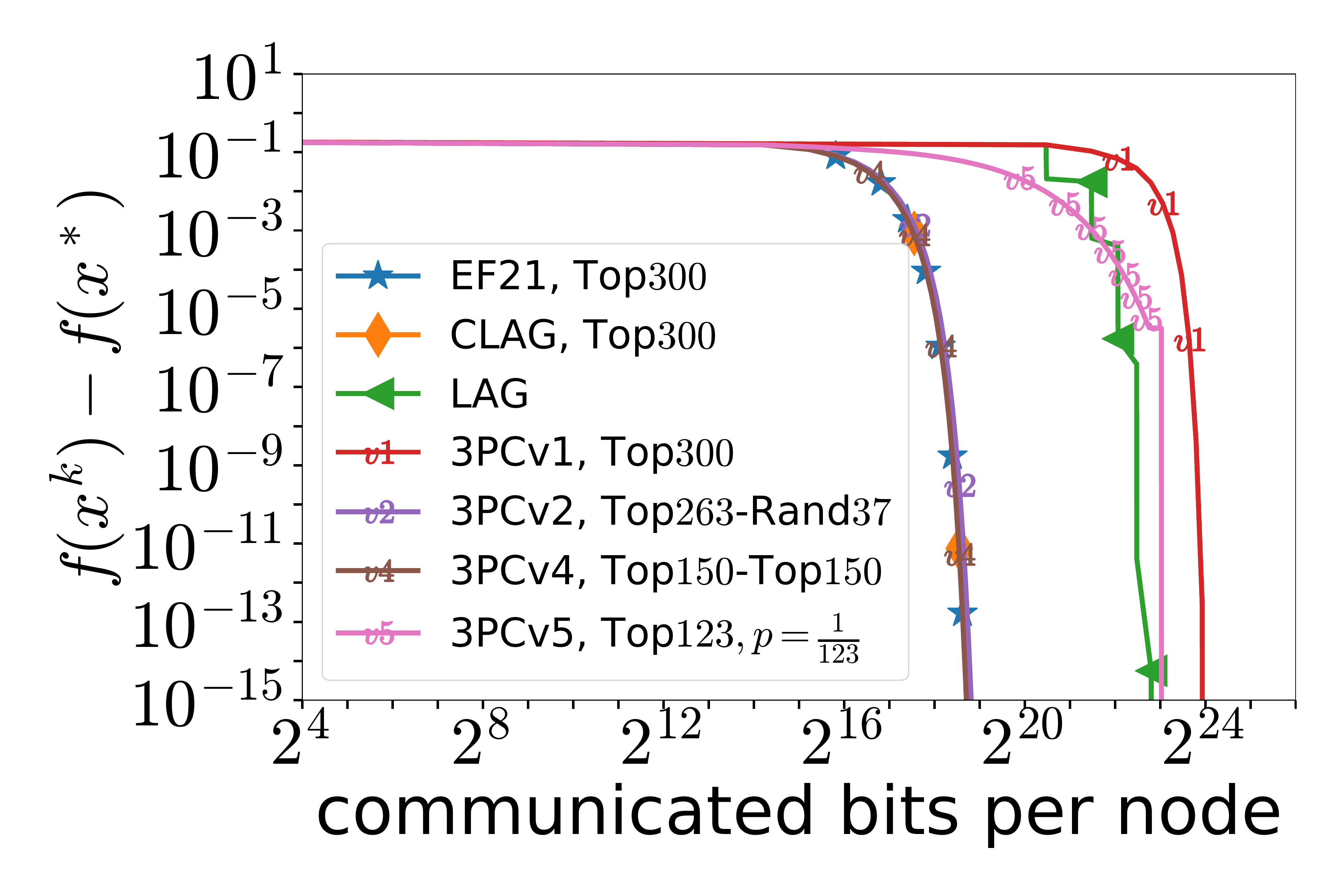}\\
				(a) \dataname{phishing}, {\scriptsize$ \lambda=10^{-4}$} &
				(b) \dataname{a1a}, {\scriptsize $\lambda=10^{-3}$} &
				(c) \dataname{a9a}, {\scriptsize$ \lambda=10^{-4}$} &
				(d) \dataname{w8a}, {\scriptsize$ \lambda=10^{-3}$} \\
			\end{tabular}       
		\end{center}
		\caption{The performance of \algname{Newton-3PC} with different choice of 3PC compression mechanism in terms of communication complexity. }
		\label{fig:Newton-3PC-different-3PC}
	\end{figure}

	\subsection{Analysis of Bidirectional \algname{Newton-3PC}}
	
	\subsubsection{EF21 compression mechanism}
	
	In this section we analyze how each type of compression (Hessians, iterates, and gradients) affects the performance of \algname{Newton-3PC}. In particular, we choose \algname{Newton-EF21} (equivalent to \algname{FedNL}) and change parameters of each compression mechanism. For Hessians and iterates we use Top-$K_1$ and Top-$K_2$ compressors respectively. In Figure~\ref{fig:Newton-EF21-BC} we present the results when we vary the parameter $K_1, K_2$ of Top-$K$ compressor and probability $p$ of Bernoulli Aggregation. The results are presented as heatmaps indicating the number of Mbytes transmitted in uplink and downlink directions by each client.
	
	In the first row in Figure~\ref{fig:Newton-EF21-BC} we test different combinations of compression parameters for Hessians and iterates keeping the probability $p$ of BAG for gradients to be equal $0.5$. In the second row we analyze various combinations of pairs of parameters $(K, p)$ for Hessians and gradients when the compression on iterates is not applied. Finally, the third row corresponds to the case when Hessians compression is fixed (we use Top-$d$), and we vary pairs of parameters $(K, p)$ for iterates and gradients compression.
	
	According to the results in the heatmaps, we can conclude that \algname{Newton-EF21} benefits from the iterates compression. Indeed, in both cases (when we vary compression level applied on Hessians or gradients) the best result is given in the case when we do apply the compression on iterates. This is not the case for gradients (see second row) since the best results are given for high probability $p$; usually for $p=1$ and rarely for $p=0.75$. Nevertheless, we clearly see that bidirectional compression is indeed useful in almost all cases.
	
	\begin{figure}[t]
		\begin{center}
			\begin{tabular}{ccc}
				\includegraphics[width=0.22\linewidth]{./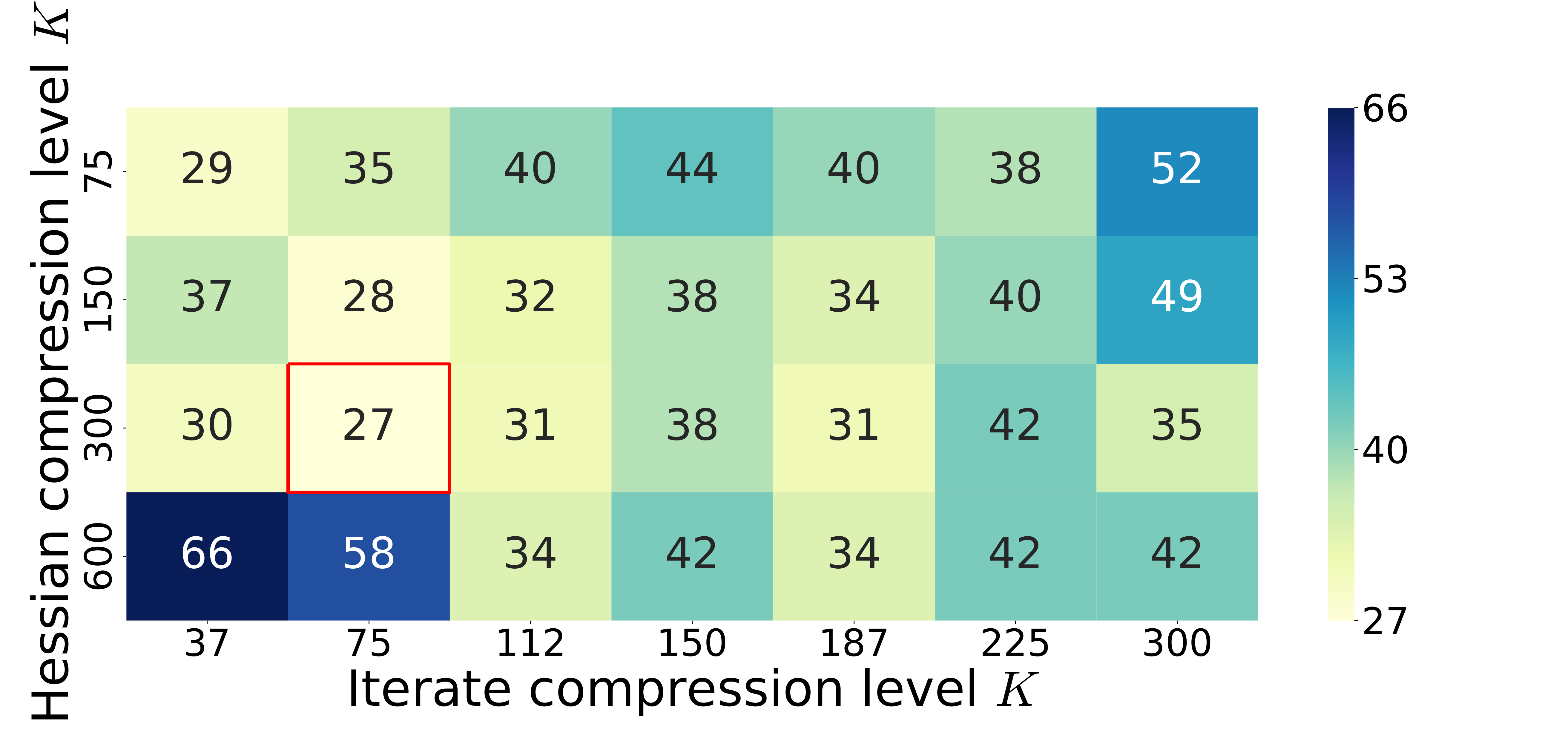} &
				\includegraphics[width=0.22\linewidth]{./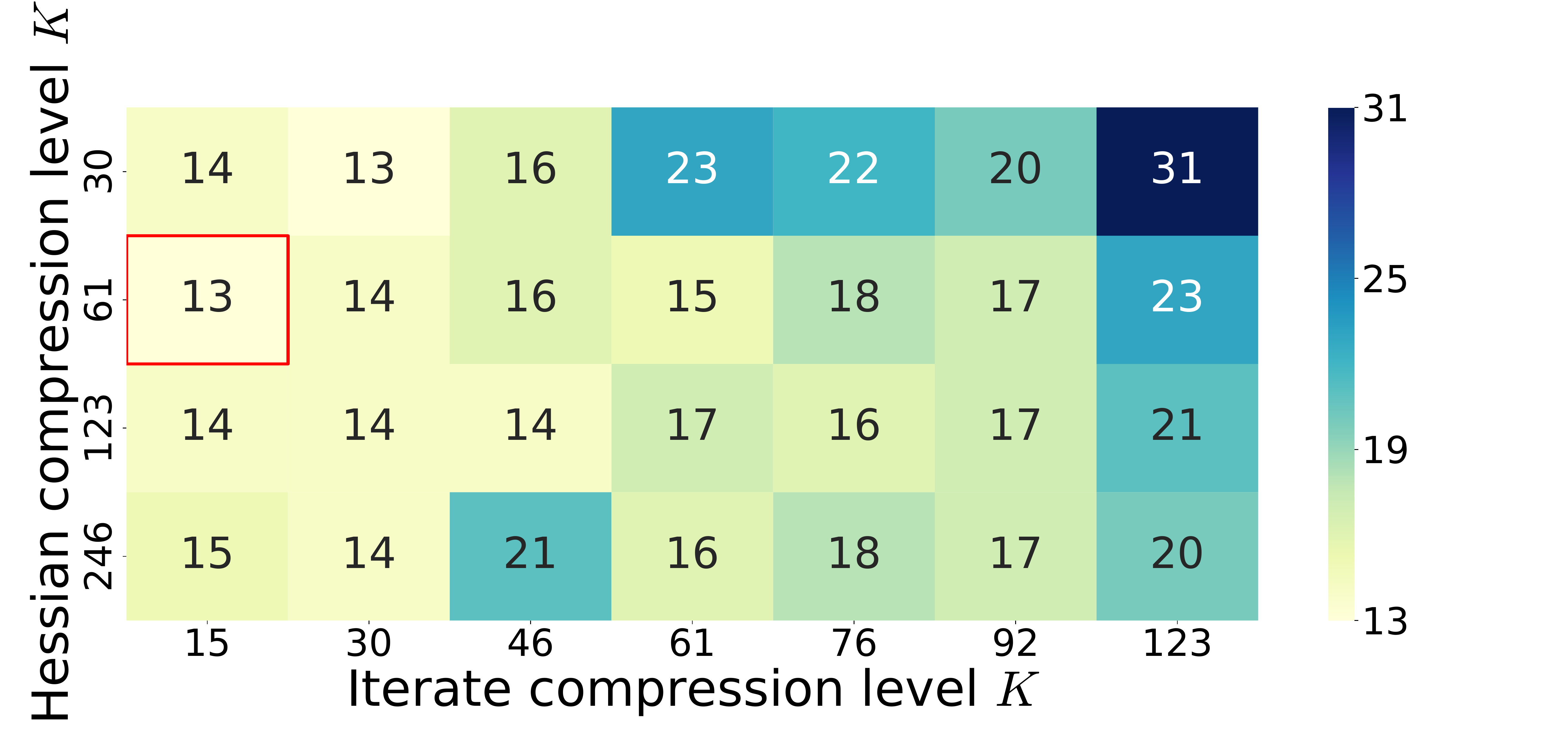} &
				\includegraphics[width=0.22\linewidth]{./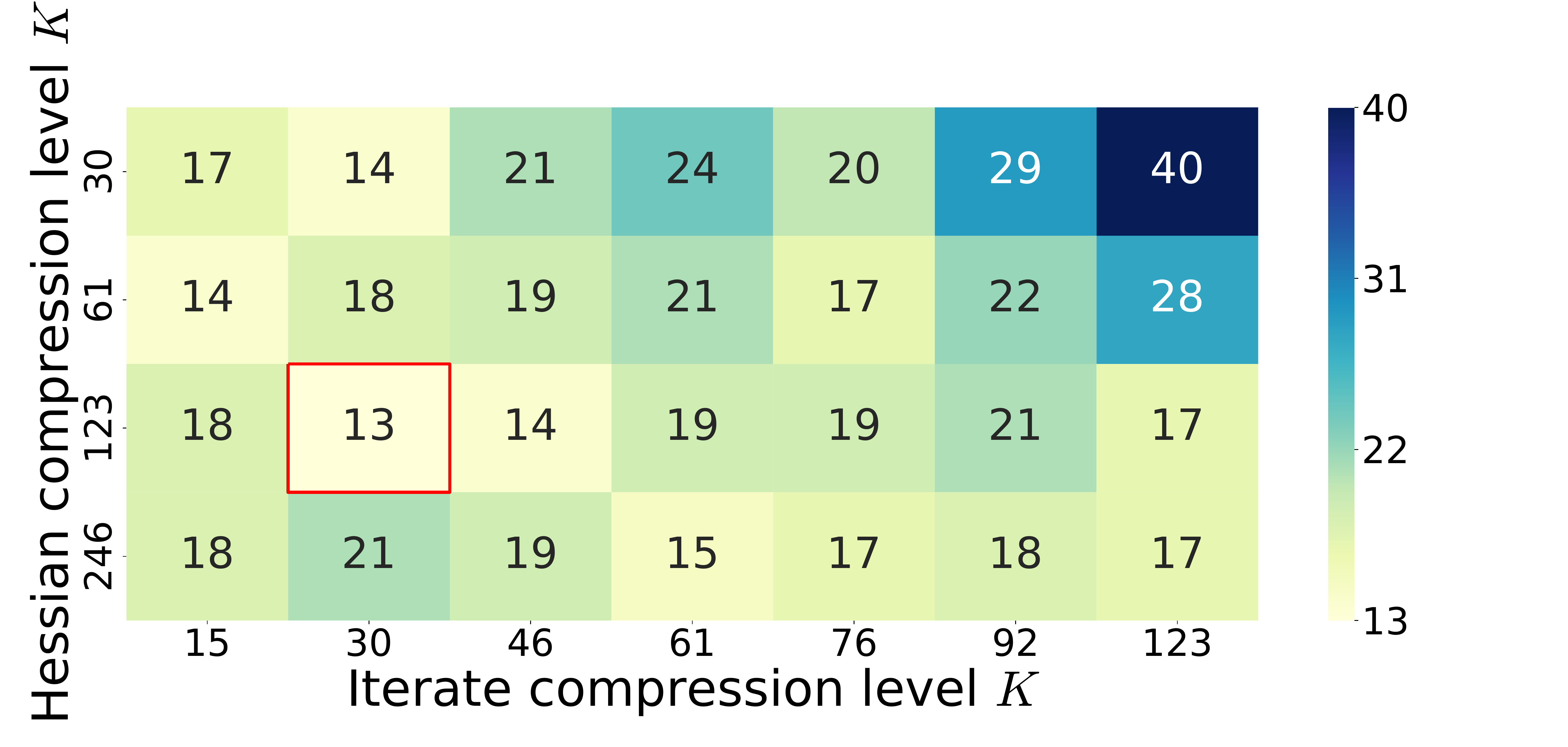}\\
				(a1) \dataname{w2a}, {\scriptsize$ \lambda=10^{-4}$} &
				(b1) \dataname{a9a}, {\scriptsize $\lambda=10^{-3}$} &
				(c1) \dataname{a1a}, {\scriptsize$ \lambda=10^{-4}$}\\
				\includegraphics[width=0.22\linewidth]{./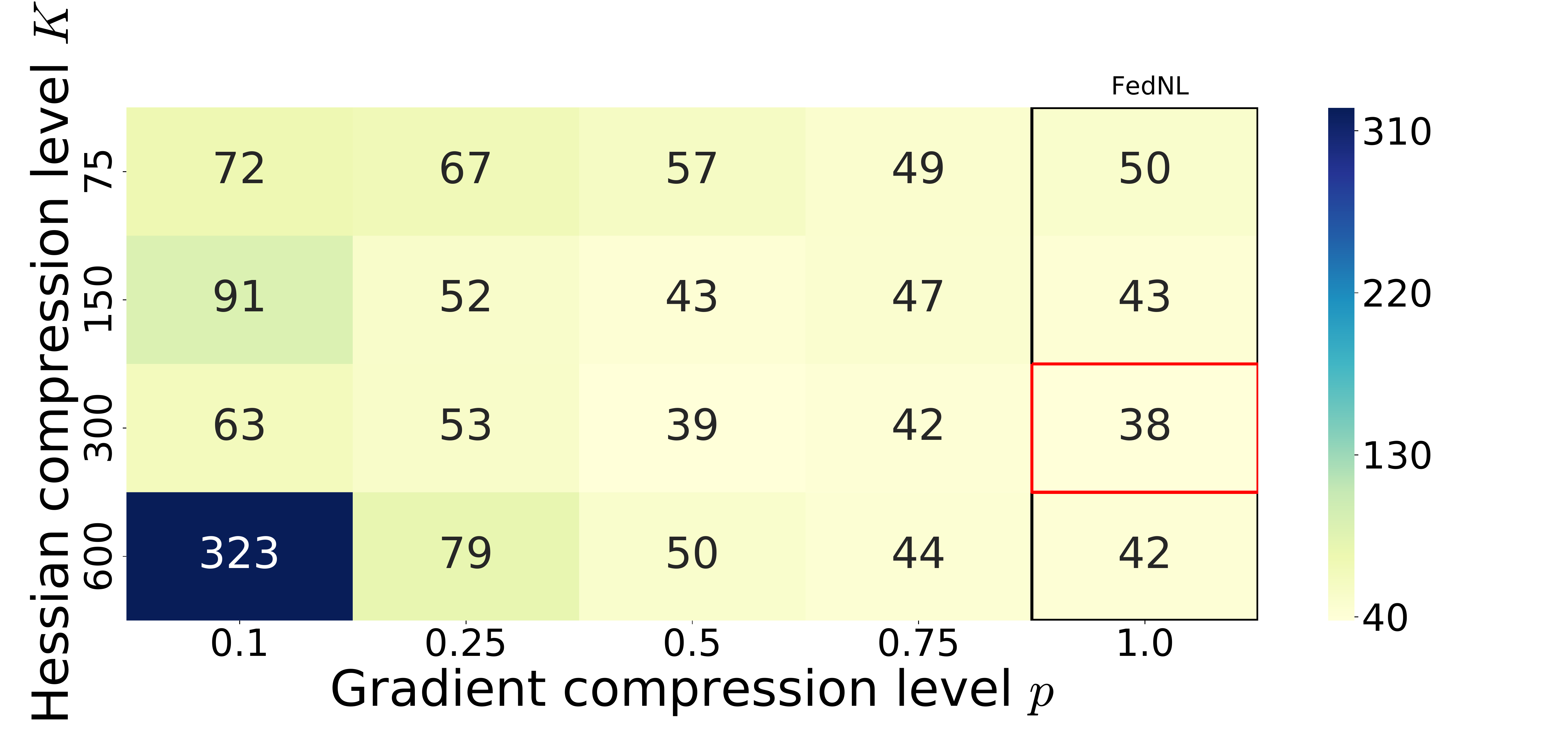} &
				\includegraphics[width=0.22\linewidth]{./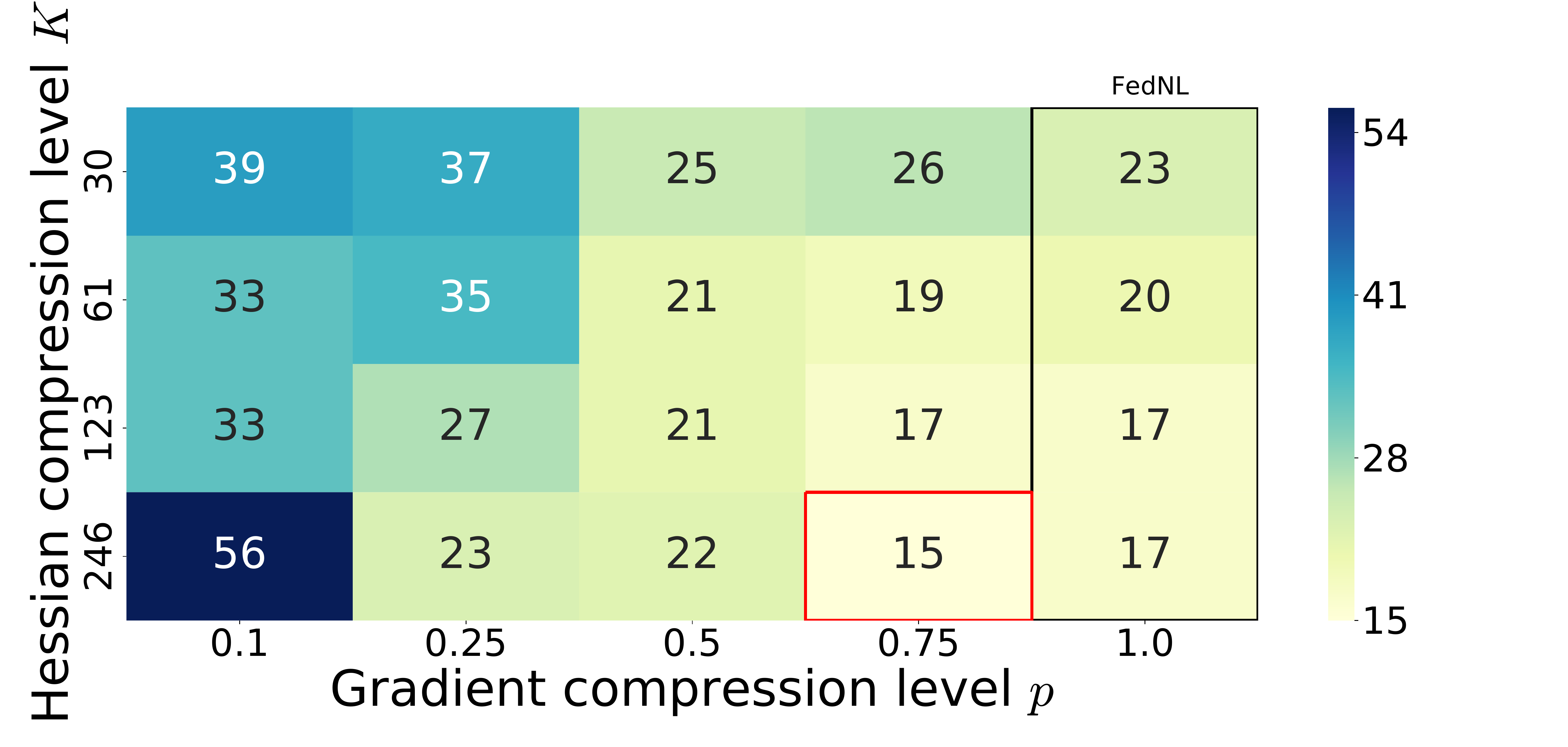} &
				\includegraphics[width=0.22\linewidth]{./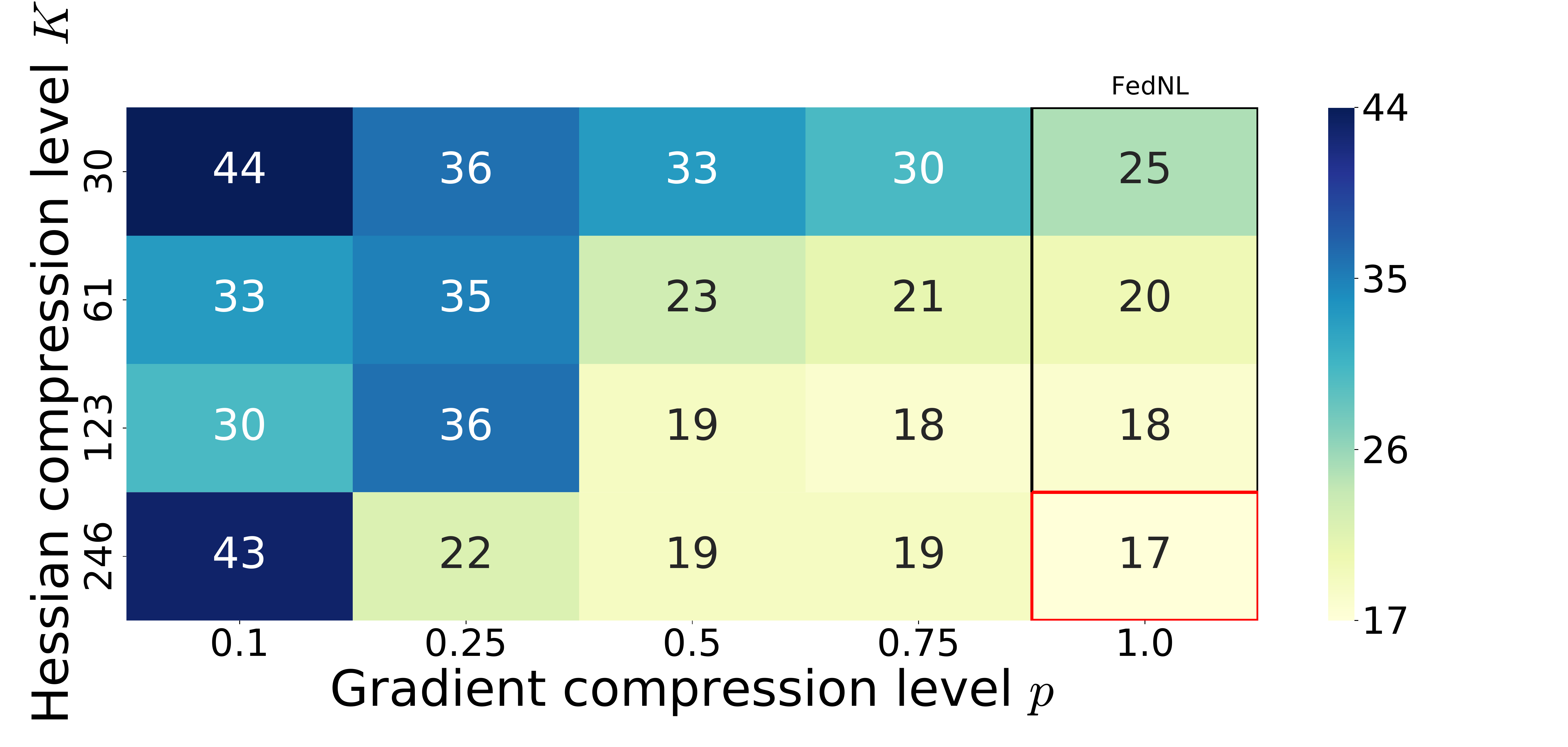}\\
				(a2) \dataname{w2a}, {\scriptsize$ \lambda=10^{-4}$} &
				(b2) \dataname{a9a}, {\scriptsize $\lambda=10^{-3}$} &
				(c2) \dataname{a1a}, {\scriptsize$ \lambda=10^{-4}$}\\
				\includegraphics[width=0.22\linewidth]{./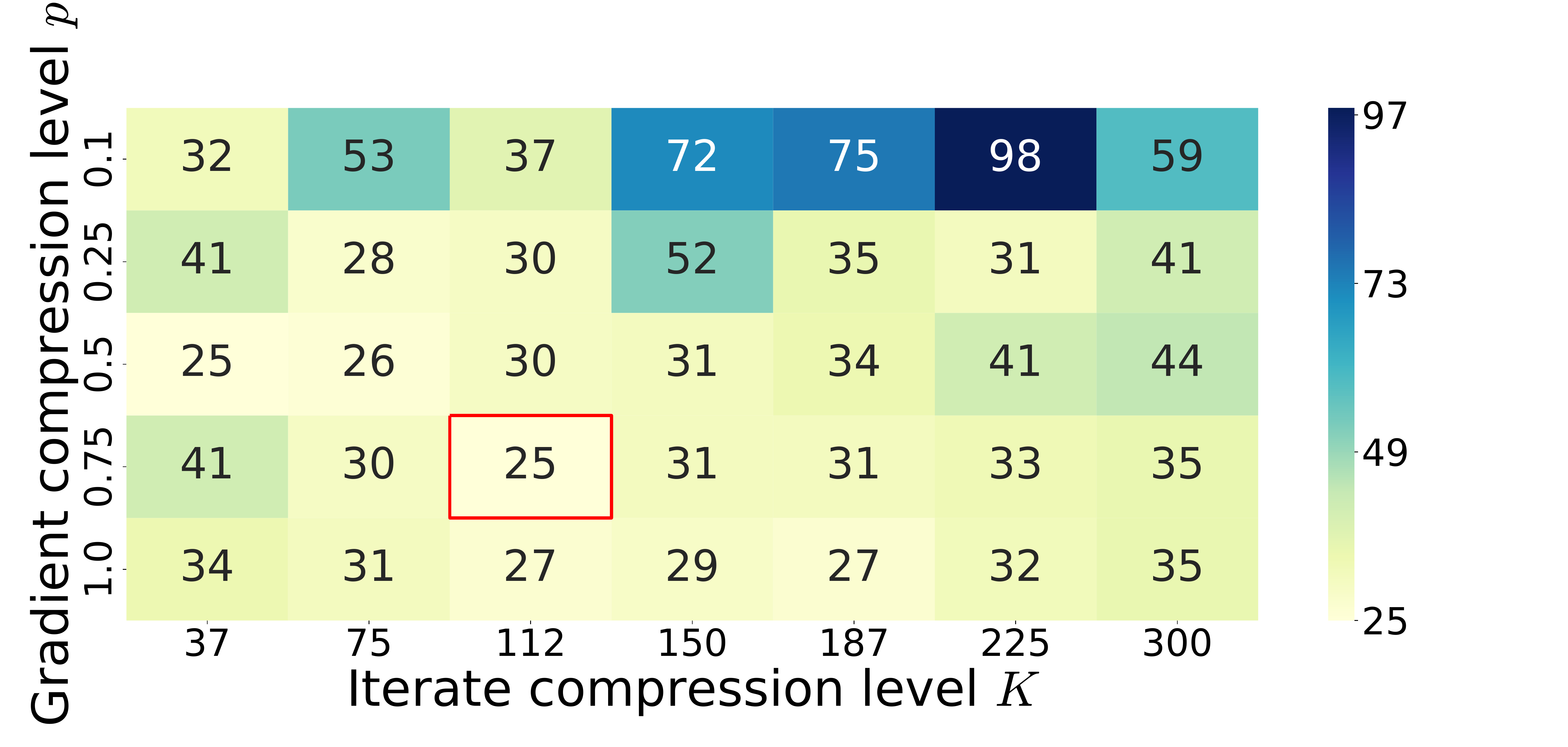} &
				\includegraphics[width=0.22\linewidth]{./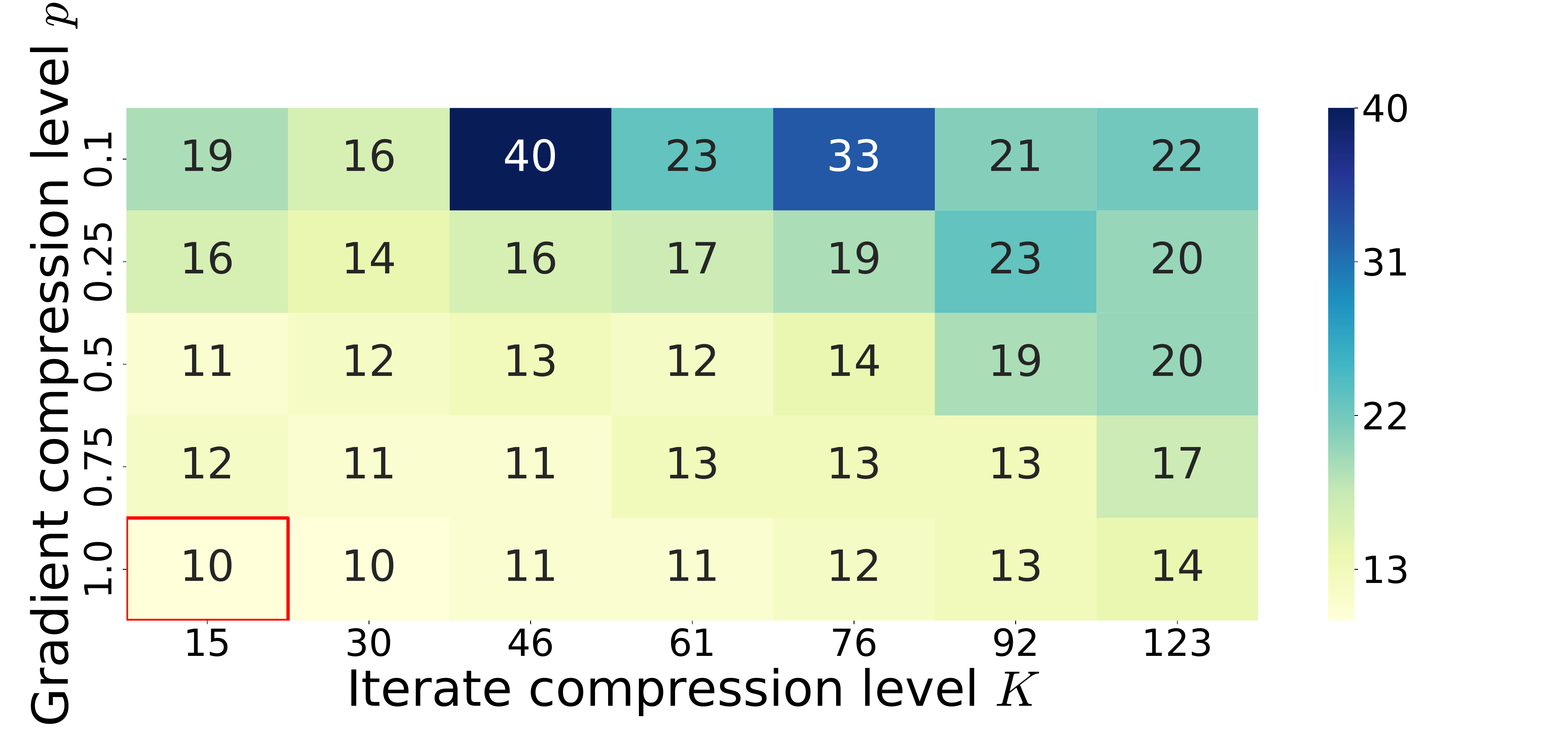} &
				\includegraphics[width=0.22\linewidth]{./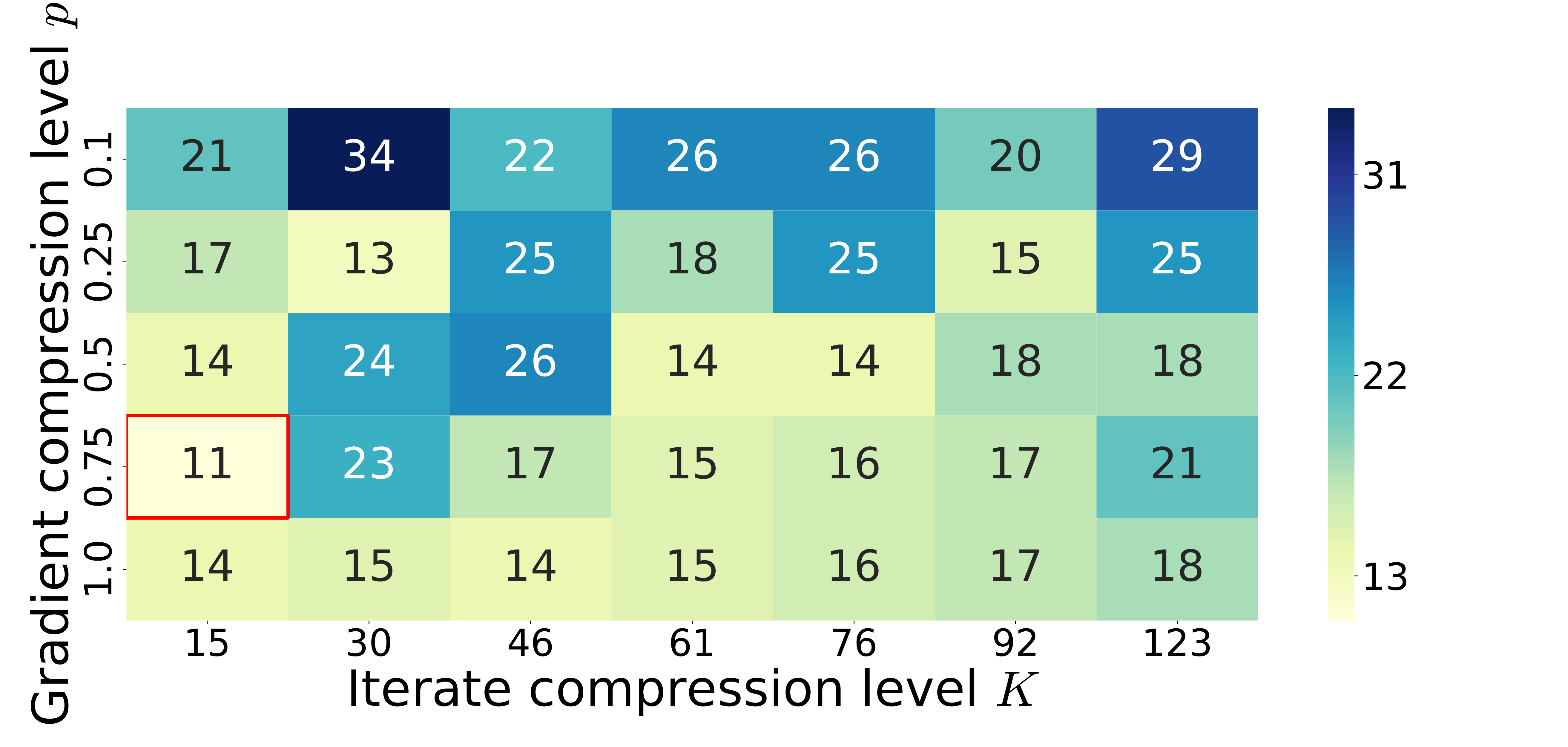}\\
				(a3) \dataname{w2a}, {\scriptsize$ \lambda=10^{-4}$} &
				(b3) \dataname{a9a}, {\scriptsize $\lambda=10^{-3}$} &
				(c3) \dataname{a1a}, {\scriptsize$ \lambda=10^{-4}$}\\
			\end{tabular}       
		\end{center}
		\caption{{\bf First row:} The performance of \algname{Newton-3PC-BC} in terms of communication complexity for different values of $(K_1, K_2)$ of Top-$K_1$ and Top-$K_2$ compressors applied on Hessians and iterates respectively while probability $p=0.75$ of BAG applied on gradients is fixed. {\bf Second row:} The performance of \algname{Newton-EF21} in terms of communication complexity for different values of $(K_1, p)$ of Top-$K_1$ compressor applied on Hessians and probability $p$ of BAG applied on gradients while $K_2=d$ parameter of Top-$K_2$ applied on iterates is fixed.  {\bf Third row:} The performance of \algname{Newton-EF21} in terms of communication complexity for different values of $(K_2, p)$ of Top-$K_2$ compressor applied on iterates and probability $p$ of BAG applied on gradients while $K_1=d$ parameter of Top-$K_1$ applied on Hessians is fixed.}
		\label{fig:Newton-EF21-BC}
	\end{figure}
	
	\subsubsection{3PCv4 compression mechanism}
	
	In our next set of experiments we fix EF21 compression mechanism based on Top-$d$ compressor applied on Hessians and probability $p=0.75$ of Bernoulli aggregation applied on gradients. Now we use 3PCv4 update rule on iterates based on outer and inner compressors (Top-$K_1$, Top-$K_2$) varying the values of pairs $(K_1, K_2)$. We report the results as heatmaps in Figure~\ref{fig:Newton-3PCv4-BC}.

	We observe that in all cases it is better to apply relatively smaller outer and inner compression levels as this leads to better performance in terms of communication complexity. Note that the first row in heatmaps corresponds to \algname{Newton-3PC-BC} when we apply just EF21 update rule on iterates. As a consequence, \algname{Newton-3PC-BC} reduces to \algname{FedNL-BC} method \citep{FedNL2021}. We obtain that 3PCv4 compression mechanism applied on iterates in this setting is more communication efficient than EF21. This implies the fact that \algname{Newton-3PC-BC} could be more efficient than \algname{FedNL-BC} in terms of communication complexity.
	
	\begin{figure}[t]
		\begin{center}
			\begin{tabular}{cccc}
				\includegraphics[width=0.22\linewidth]{./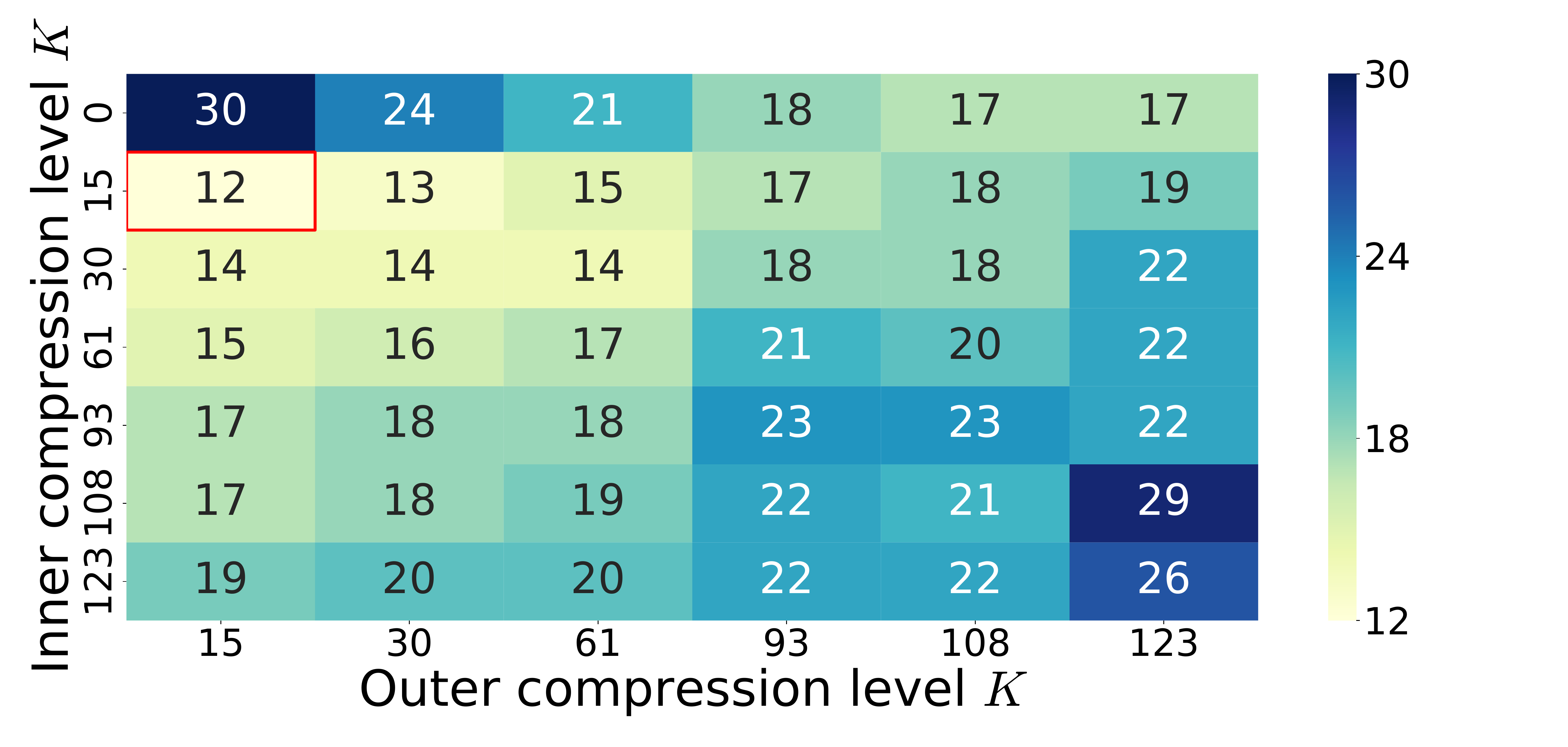} &
				\includegraphics[width=0.22\linewidth]{./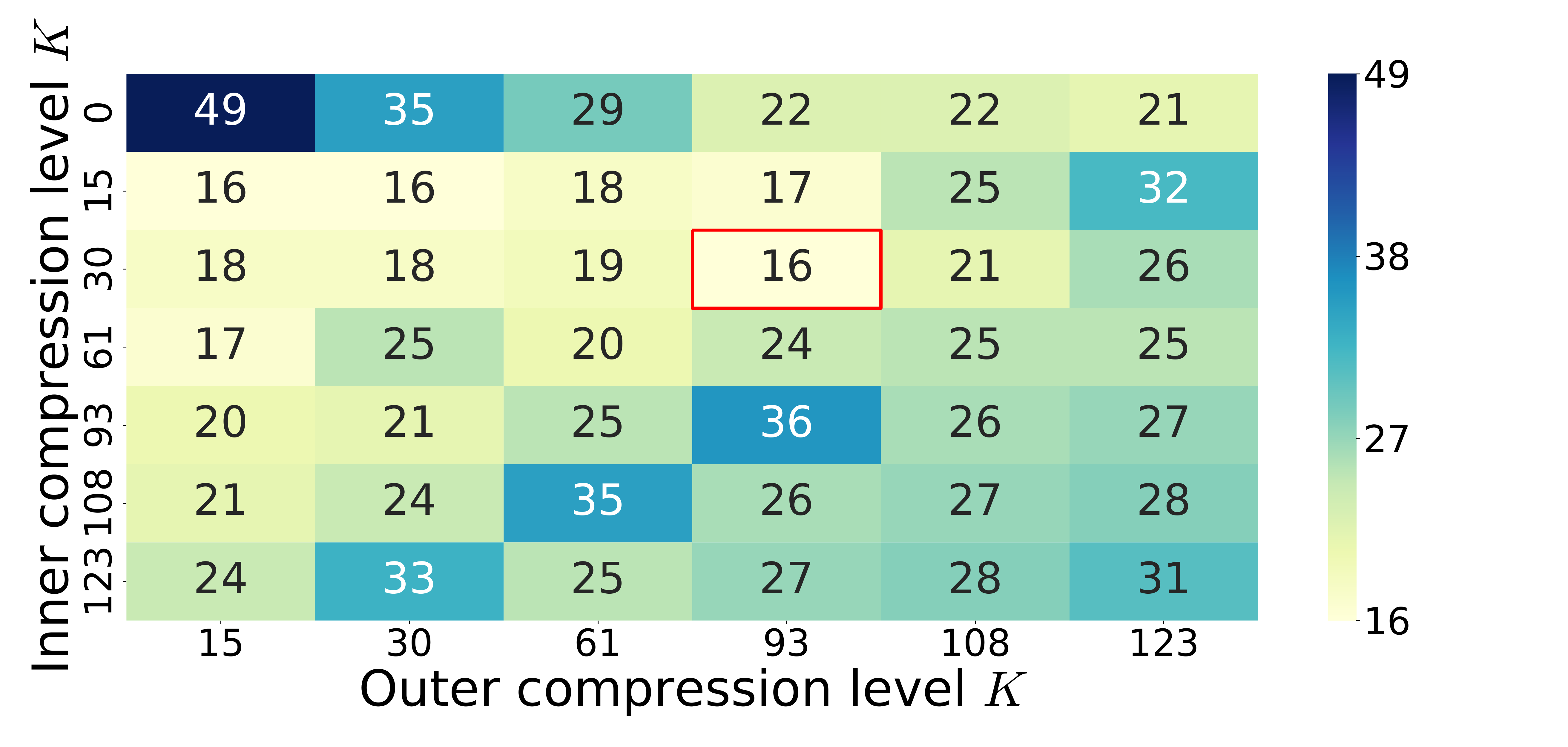} &
				\includegraphics[width=0.22\linewidth]{./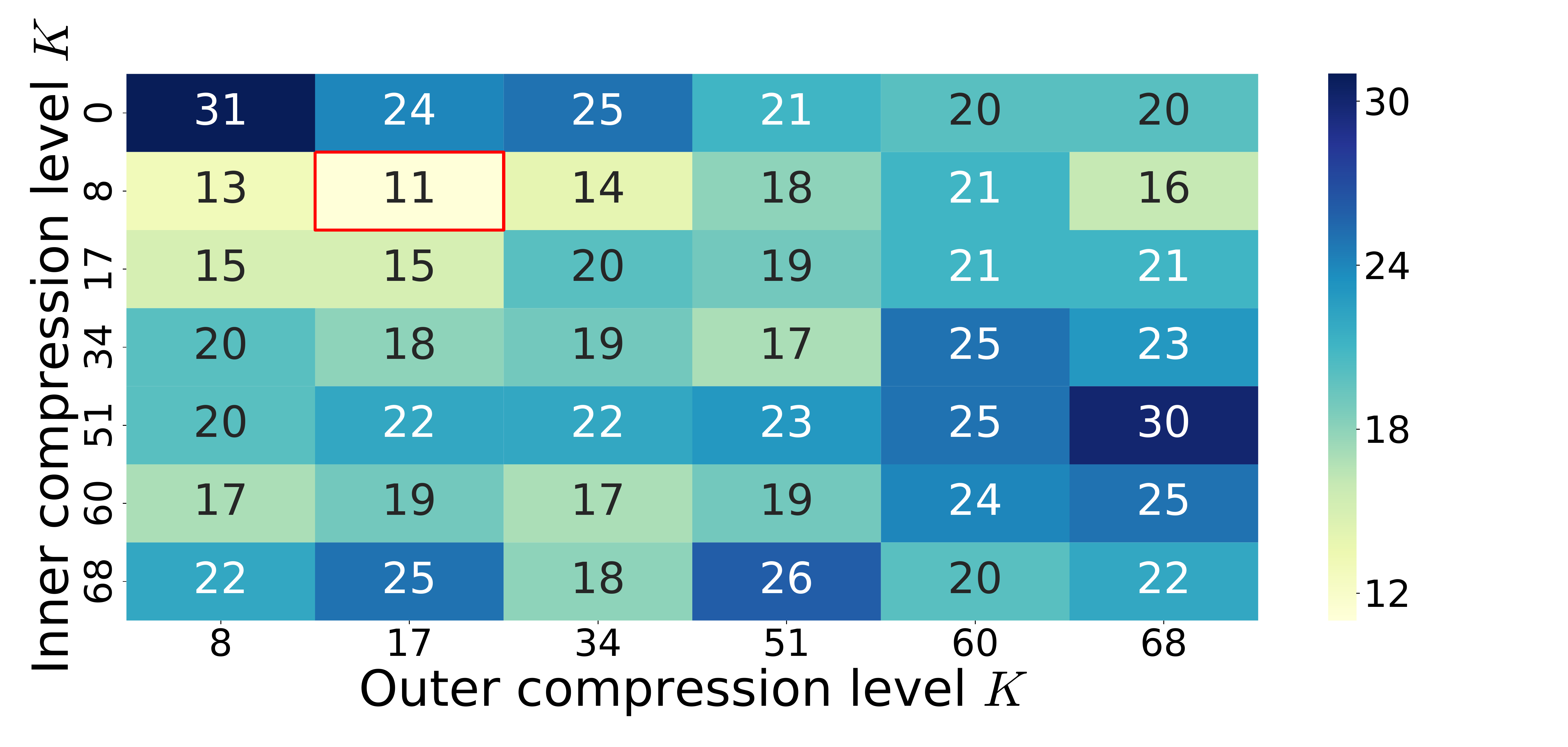} &
				\includegraphics[width=0.22\linewidth]{./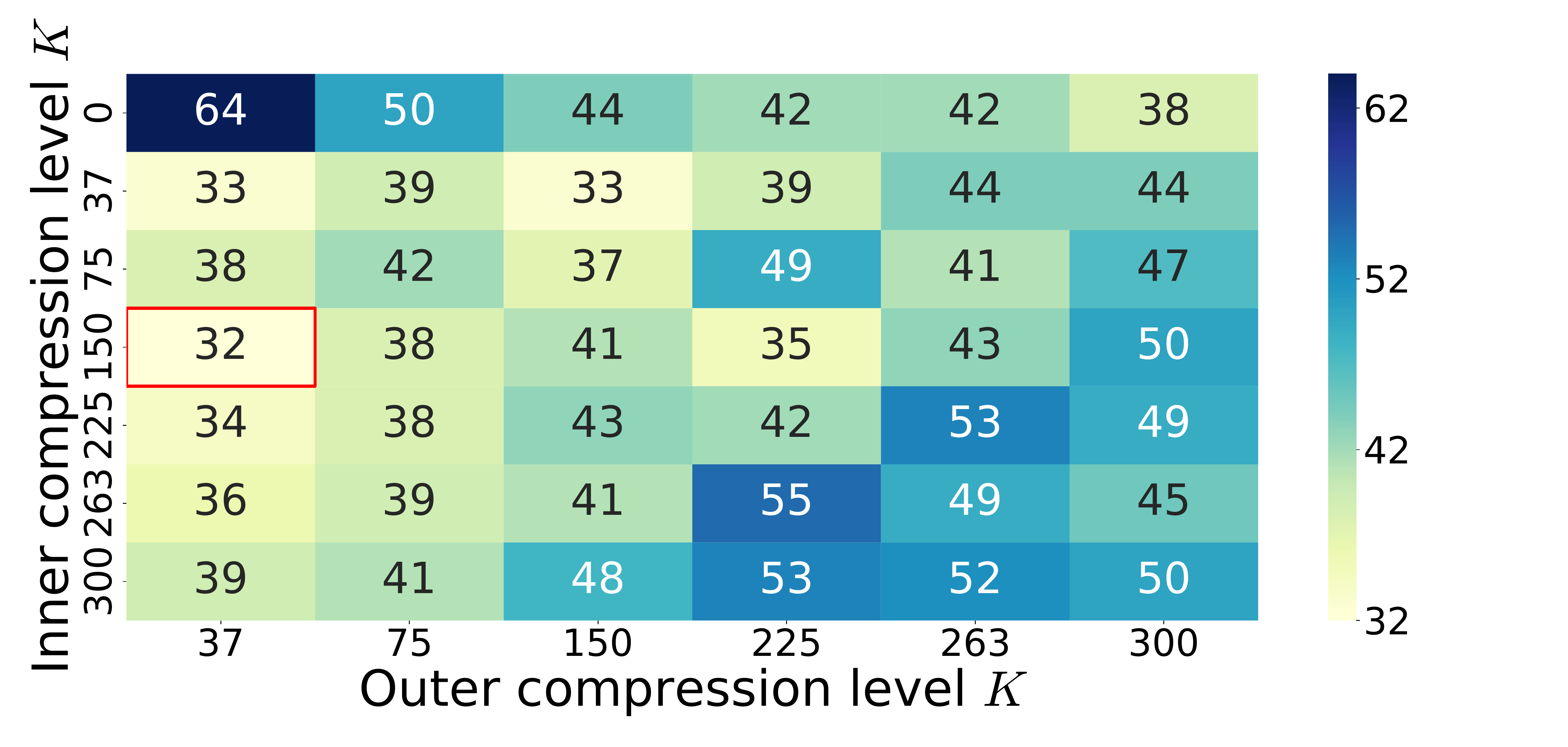}\\
				(a) \dataname{a9a}, {\scriptsize$ \lambda=10^{-3}$} &
				(b) \dataname{a1a}, {\scriptsize$ \lambda=10^{-4}$} &
				(c) \dataname{phishing}, {\scriptsize $\lambda=10^{-3}$} &
				(d) \dataname{a1a}, {\scriptsize$ \lambda=10^{-4}$}\\
			\end{tabular}       
		\end{center}
		\caption{The performance of \algname{Newton-3PC-BC} with EF21 update rule based on Top-$d$ compressor applied on Hessians, BAG update rule with probability $p=0.75$ applied on gradients, and 3PCv4 update rule based on (Top-$K_1$, Top-$K_2$) compressors applied on iterates for different values of pairs $(K_1, K_2)$. }
		\label{fig:Newton-3PCv4-BC}
	\end{figure}

	\subsection{\algname{BL1} \citep{qian2021basis} with 3PC compressor}
	
	As it was stated in Section $4.1$ \algname{Newton-3PC} covers methods introduced in \citep{qian2021basis} as a special case. Indeed, in order to run, for example, \algname{BL1} method we need to use rotation compression operator \ref{def:rotation_comp}. The role of orthogonal matrix in the definition plays the basis matrix. 
	
	In this section we test the performance of \algname{BL1} in terms of communication complexity with different 3PC compressors: EF21, CBAG, CLAG. For CBAG update rule the probability $p=0.5$, and for CLAG the trigger $\zeta=2$. All aforementioned 3PC compression operators are based on Top-$\tau$ compressor where $\tau$ is the dimension of local data (see Section~$2.3$ of \citep{qian2021basis} for detailed description).
	
	Observing the results in Figure~\ref{fig:BL1_3PC}, we can notice that there is no improvement of one update rule over another. However, in EF21 is slightly better than other 3PC compressors in a half of the cases, and CBAG insignificantly outperform in other cases. This means that even if the performance of \algname{BL1} with EF21 and CBAG are almost identical, CBAG is still preferable since it is computationally less expensive since we do not need to compute local Hessians and their representations in new basis.
	
	\begin{figure}[t]
		\begin{center}
			\begin{tabular}{cccc}
				\includegraphics[width=0.22\linewidth]{./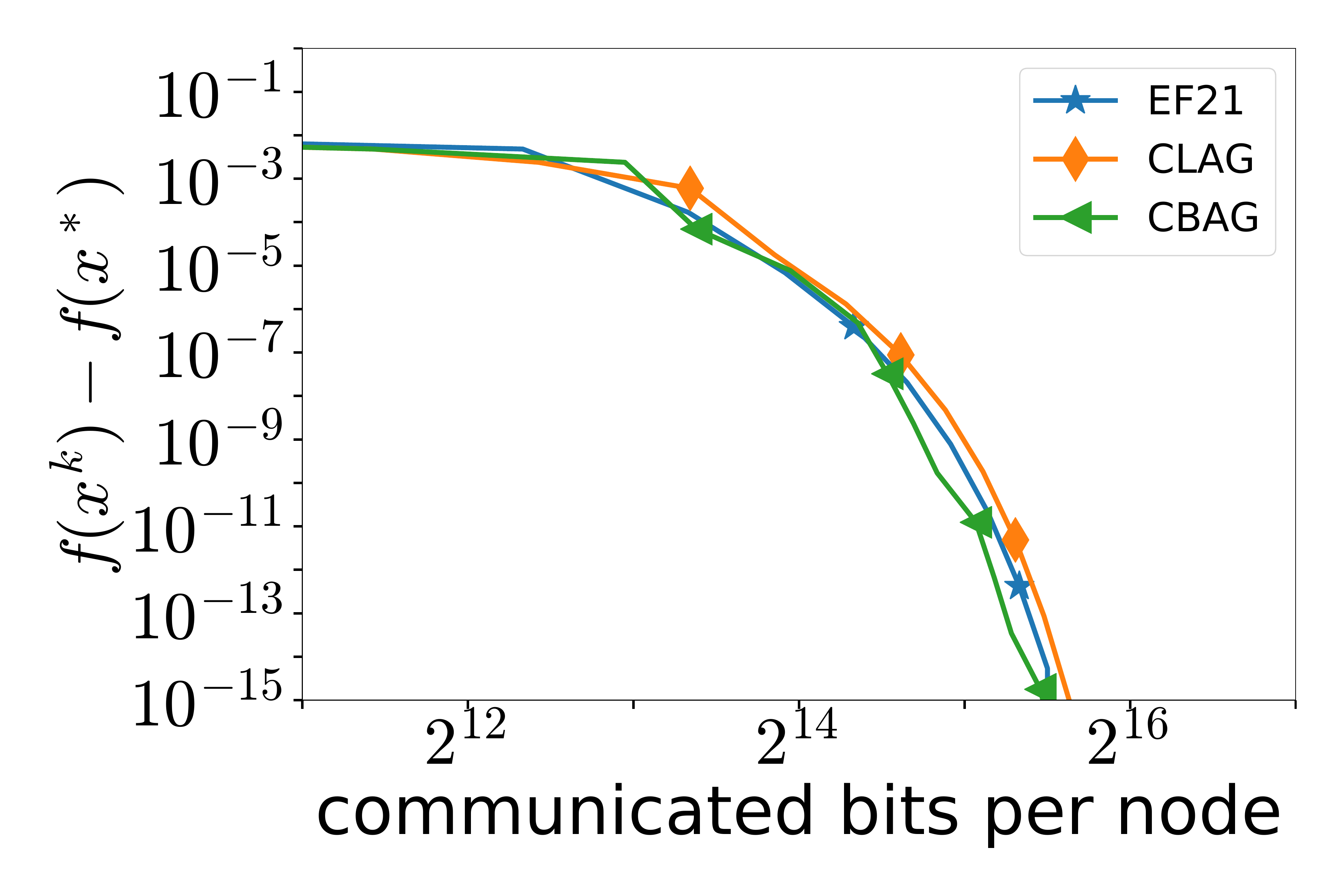} &
				\includegraphics[width=0.22\linewidth]{./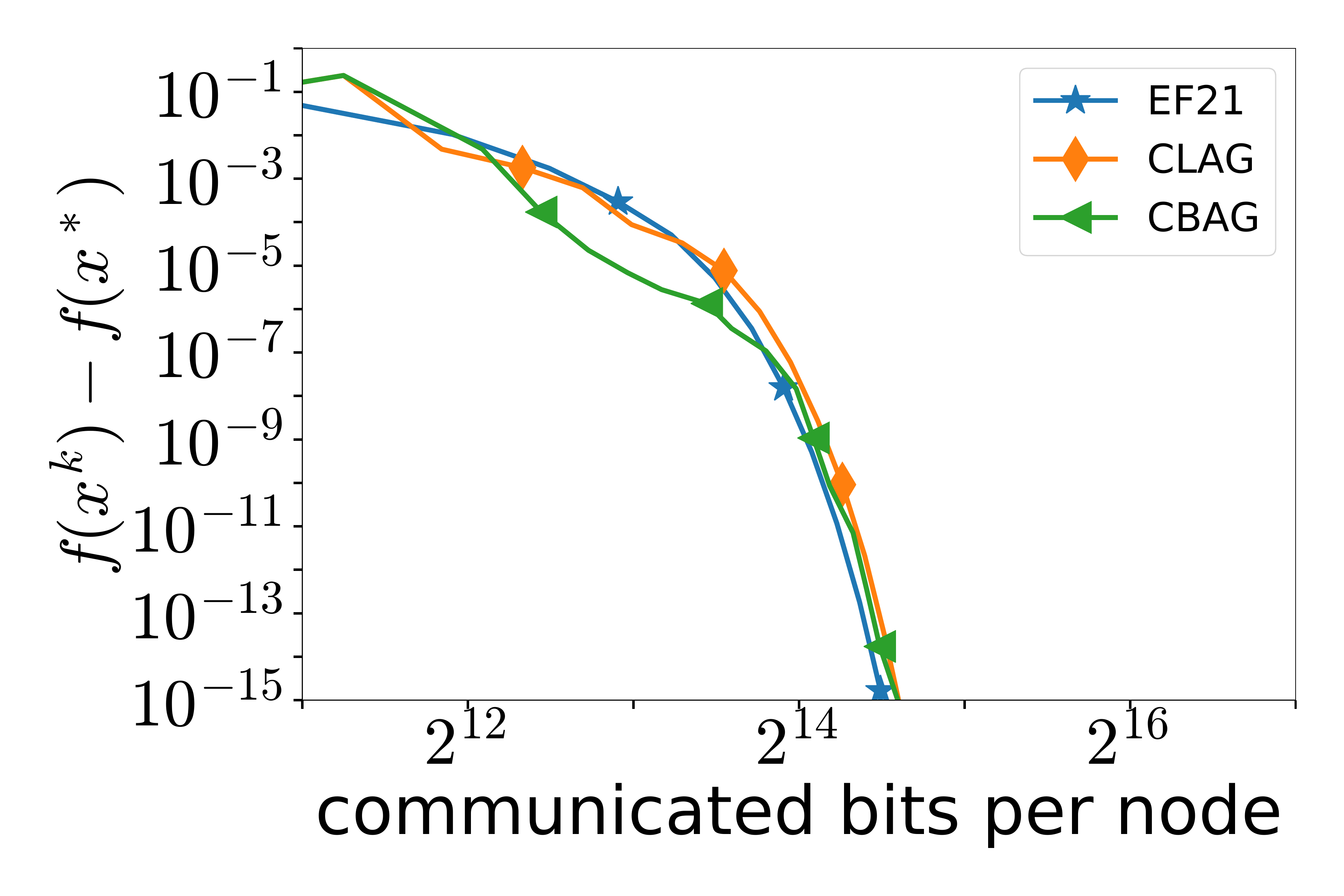} &
				\includegraphics[width=0.22\linewidth]{./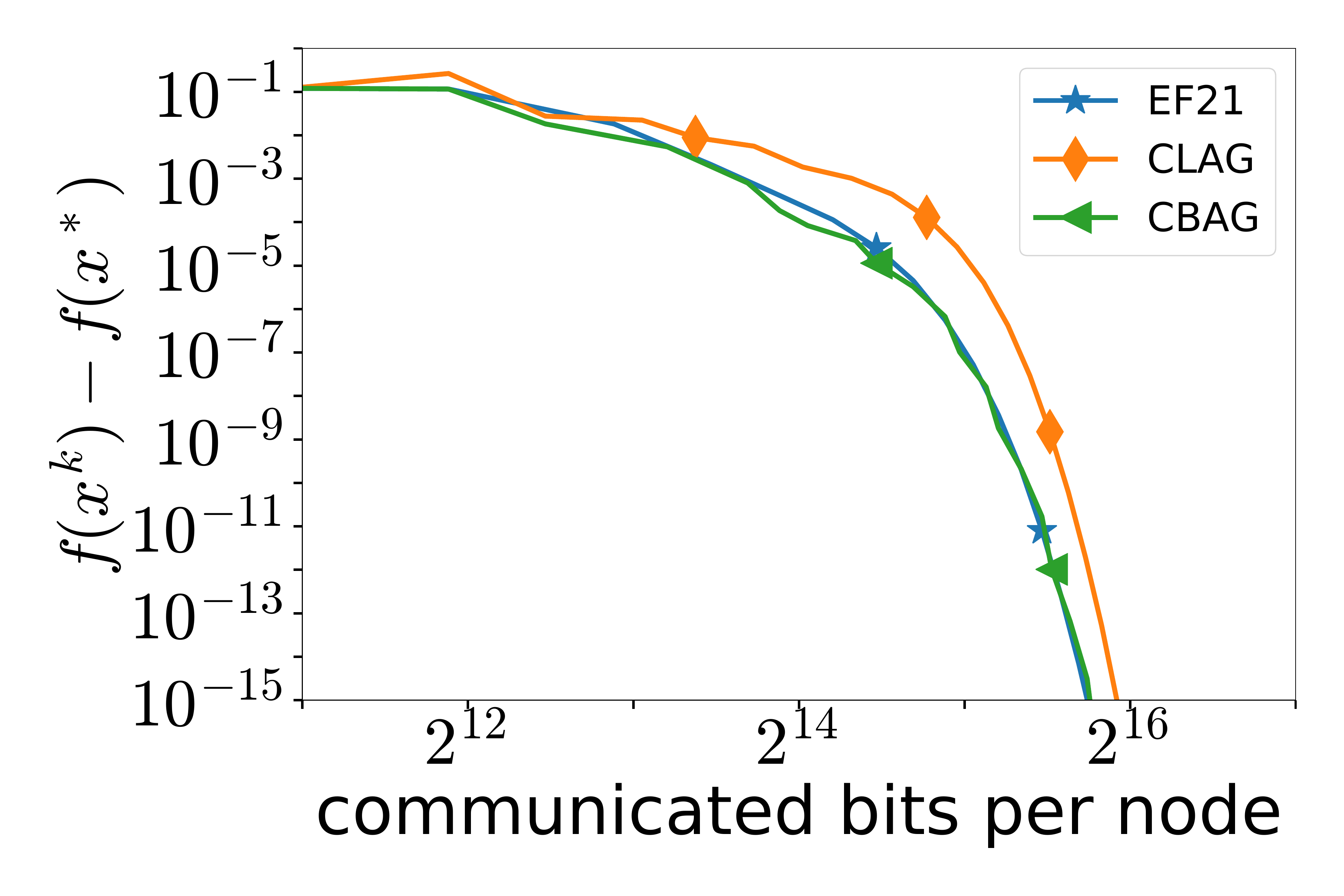} &
				\includegraphics[width=0.22\linewidth]{./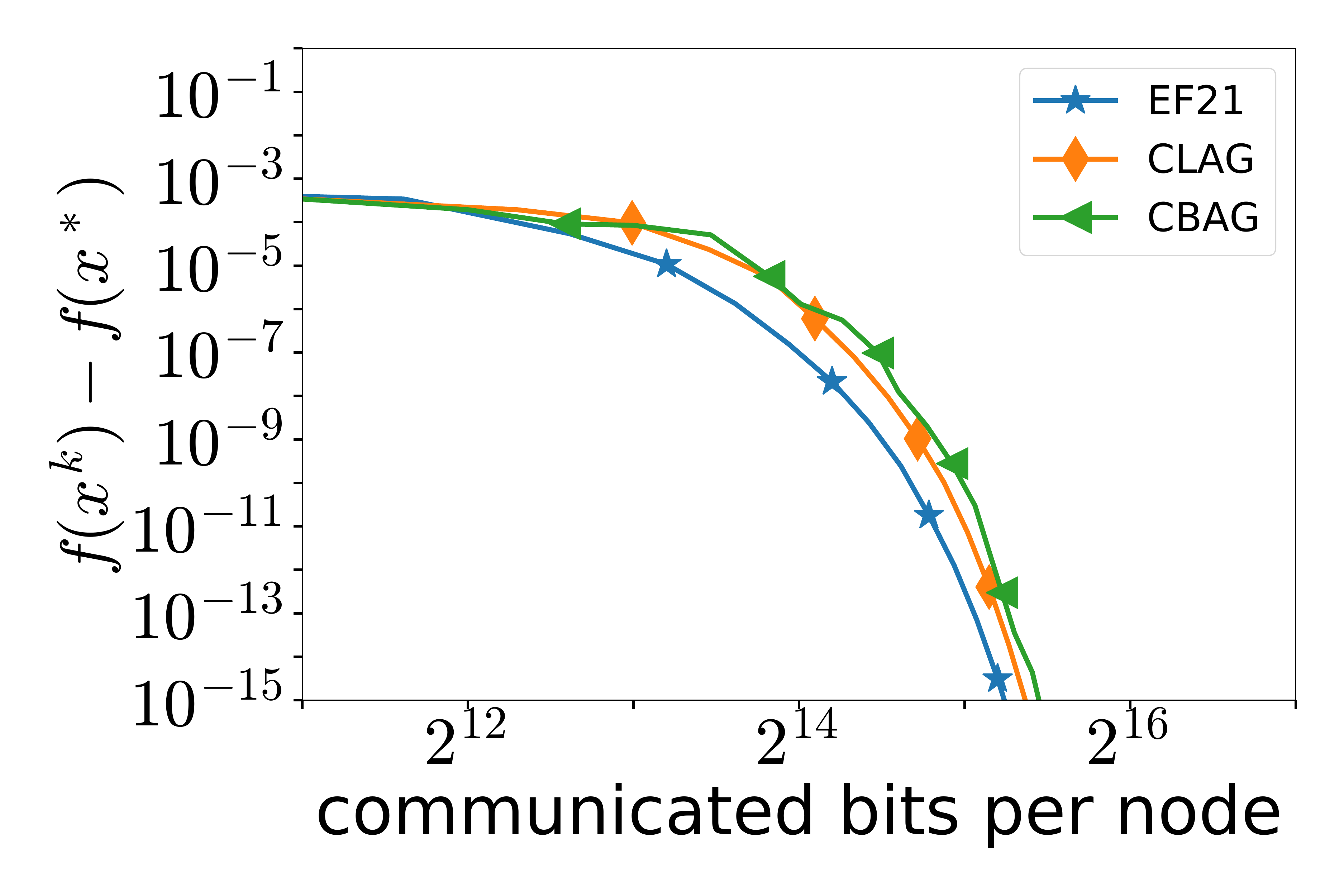}\\
				(a) \dataname{a9a}, {\scriptsize$ \lambda=10^{-3}$} &
				(b) \dataname{phishing}, {\scriptsize$ \lambda=10^{-4}$} &
				(c) \dataname{a1a}, {\scriptsize $\lambda=10^{-3}$} &
				(d) \dataname{w2a}, {\scriptsize$ \lambda=10^{-4}$}\\
			\end{tabular}       
		\end{center}
		\caption{The performance of \algname{BL1} with EF21, CBAG and CLAG 3PC compression mechanisms in terms of communication complexity.}
		\label{fig:BL1_3PC}
	\end{figure}

	\subsection{Analysis of \algname{Newton-3PC-BC-PP}}
	
	\subsubsection{3PC's parameters fine-tuning for \algname{Newton-3PC-BC-PP}}
	
	On the following step we study how the choice of parameters of 3PC compression mechanism and the number of active clients influence the performance of \algname{Newton-3PC-BC-PP}. 
	
	In the first series of experiments we test \algname{Newton-3PC-BC-PP} with CBAG compression combined with Top-$2d$ compressor and probability $p$ applied on Hessians; EF21 with Top-$\nicefrac{2d}{3}$ compressor applied on iterates; BAG update rule with probability $p=0.75$ applied on gradients. We vary aggregation probability $p$ of Hessians and the number of active clients $\tau$. Looking at the numerical results in Figure~\ref{fig:Newton-3PC-BC-PP} (first row), we may claim that the more clients are involved in the optimization process in each communication round, the faster the convergence since the best results in each case always belongs the first column. However, we do observe that lazy aggregation rule with probability $p < 1$ is still beneficial.
	
	In the second row of Figure~\ref{fig:Newton-3PC-BC-PP} we investigate \algname{Newton-3PC-BC-PP} with CBAG compression based on Top-$d$ and probability $p=0.75$ applied on Hessians; 3PCv5 update rule combined with Top-$\nicefrac{2d}{3}$ and probability $p$ applied on iterates; BAG lazy aggregation rule with probability $p-0.75$ applied gradients. In this case we modify iterate aggregation probability $p$ and the number of clients participating in the training. We observe that again the fastest convergence is demonstrated when all clients are active, but aggregation parameter $p$ of iterates smaller than $1$.
	
	Finally, we study the effect of BAG update rule on the communication complexity of \algname{Newton-3PC-BC-PP}. As in previous cases, \algname{Newton-3PC-BC-PP} is more efficient when all clients participate in the training process. Nevertheless, lazy aggregation rule of BAG still brings the benefit to communication complexity of the method. 
	
	\begin{figure}[t]
		\begin{center}
			\begin{tabular}{ccc}
				\includegraphics[width=0.22\linewidth]{./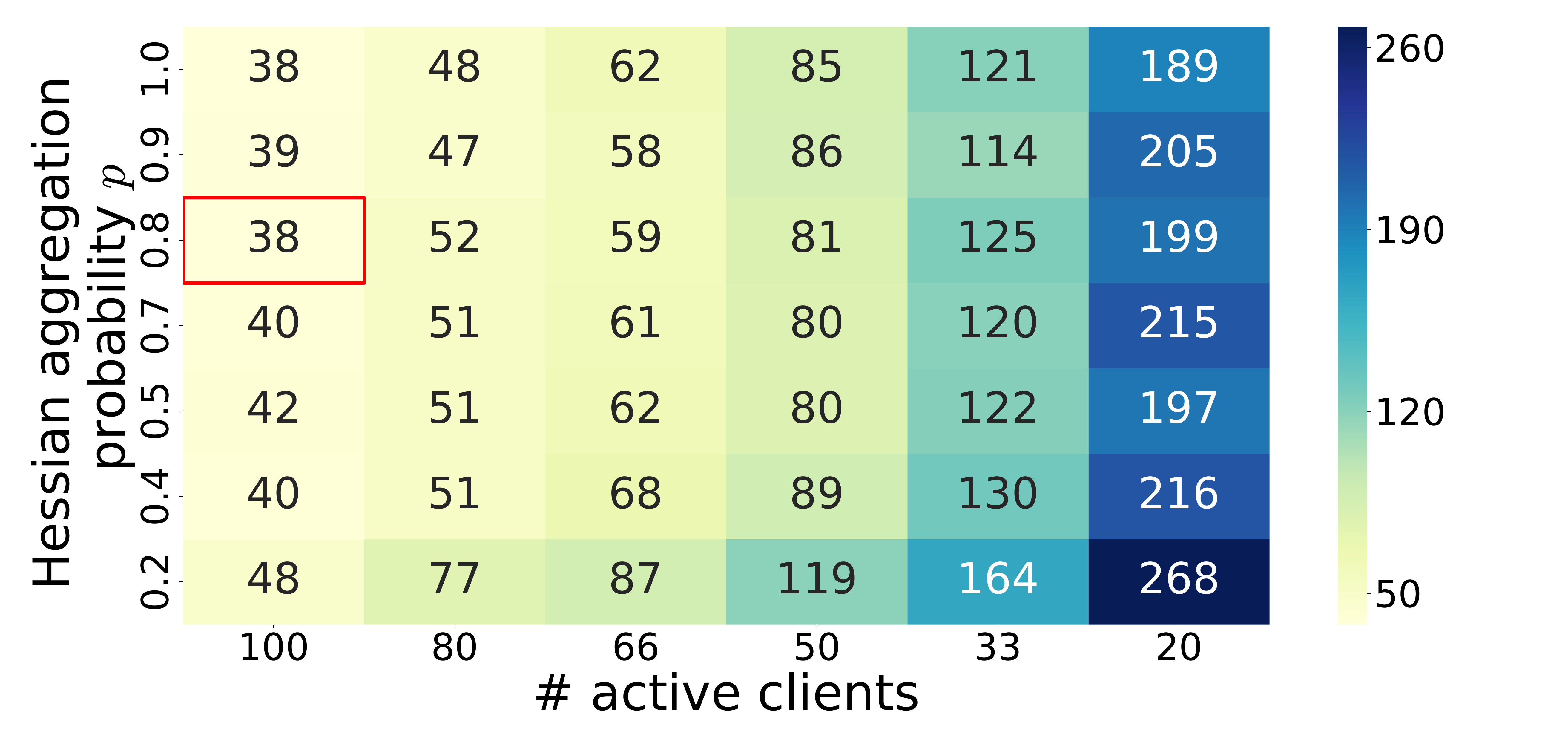} &
				\includegraphics[width=0.22\linewidth]{./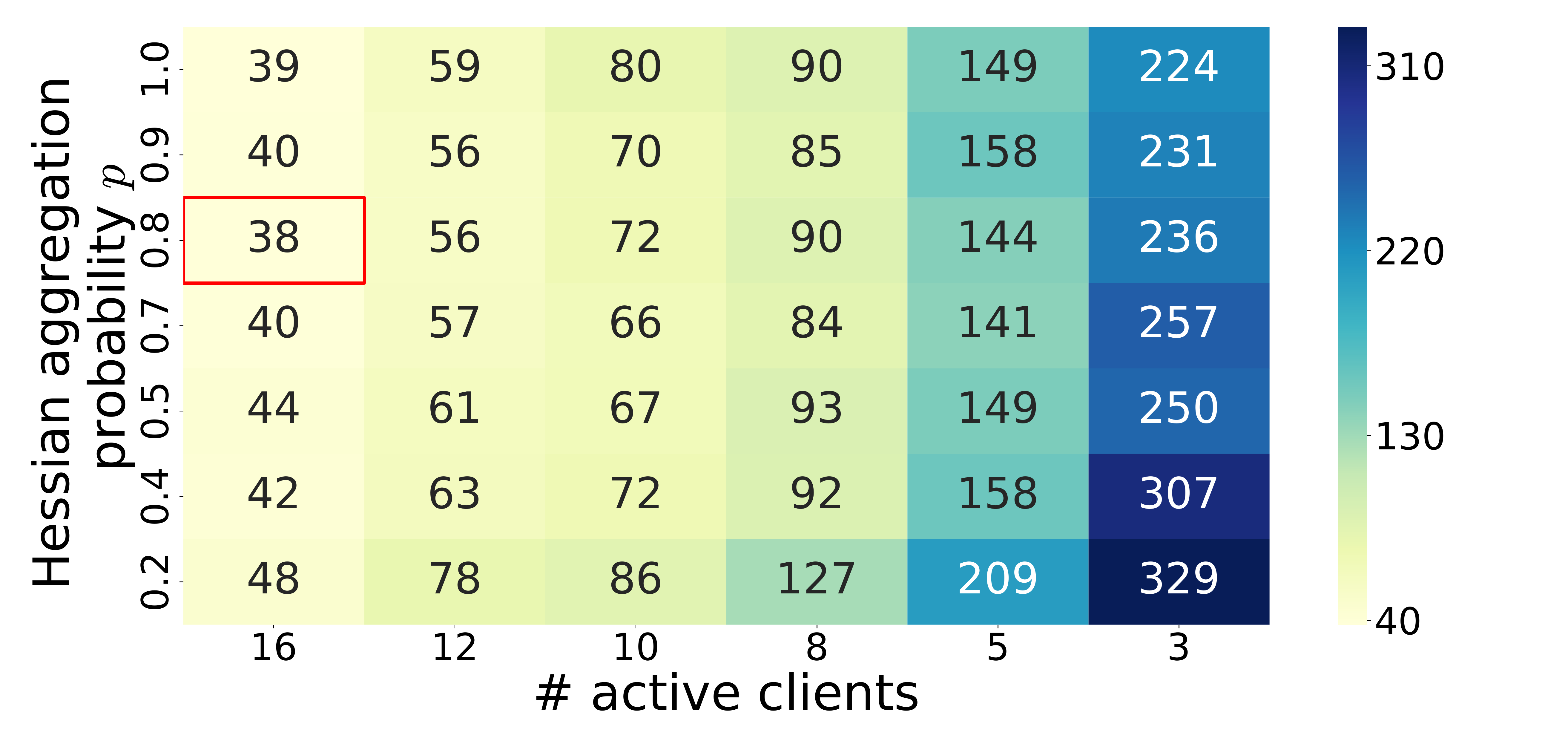} &
				\includegraphics[width=0.22\linewidth]{./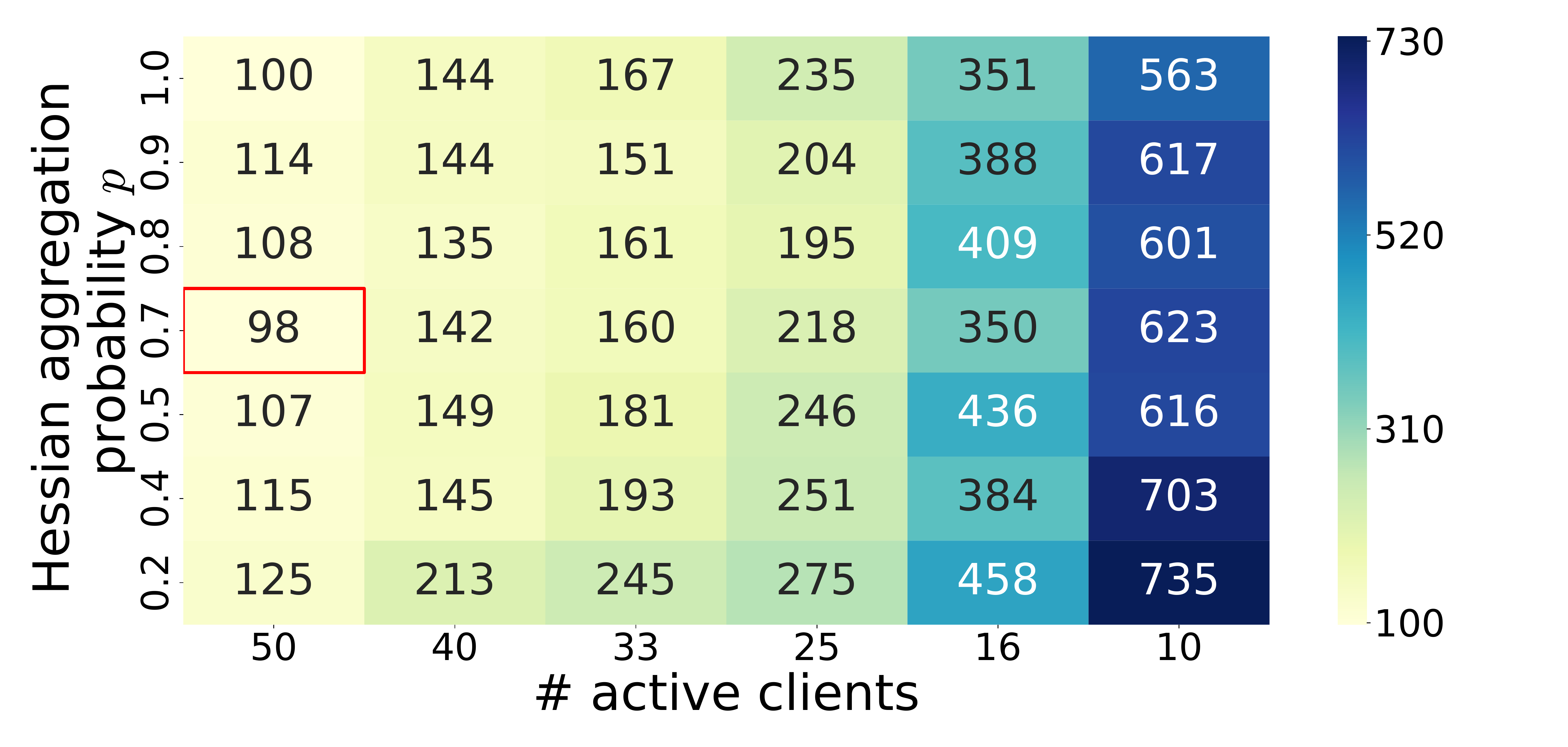} \\
				(a1) \dataname{phishing}, {\scriptsize$ \lambda=10^{-4}$} &
				(b1) \dataname{a1a}, {\scriptsize$ \lambda=10^{-3}$} &
				(c1) \dataname{w2a}, {\scriptsize $\lambda=10^{-4}$} \\
				\includegraphics[width=0.22\linewidth]{./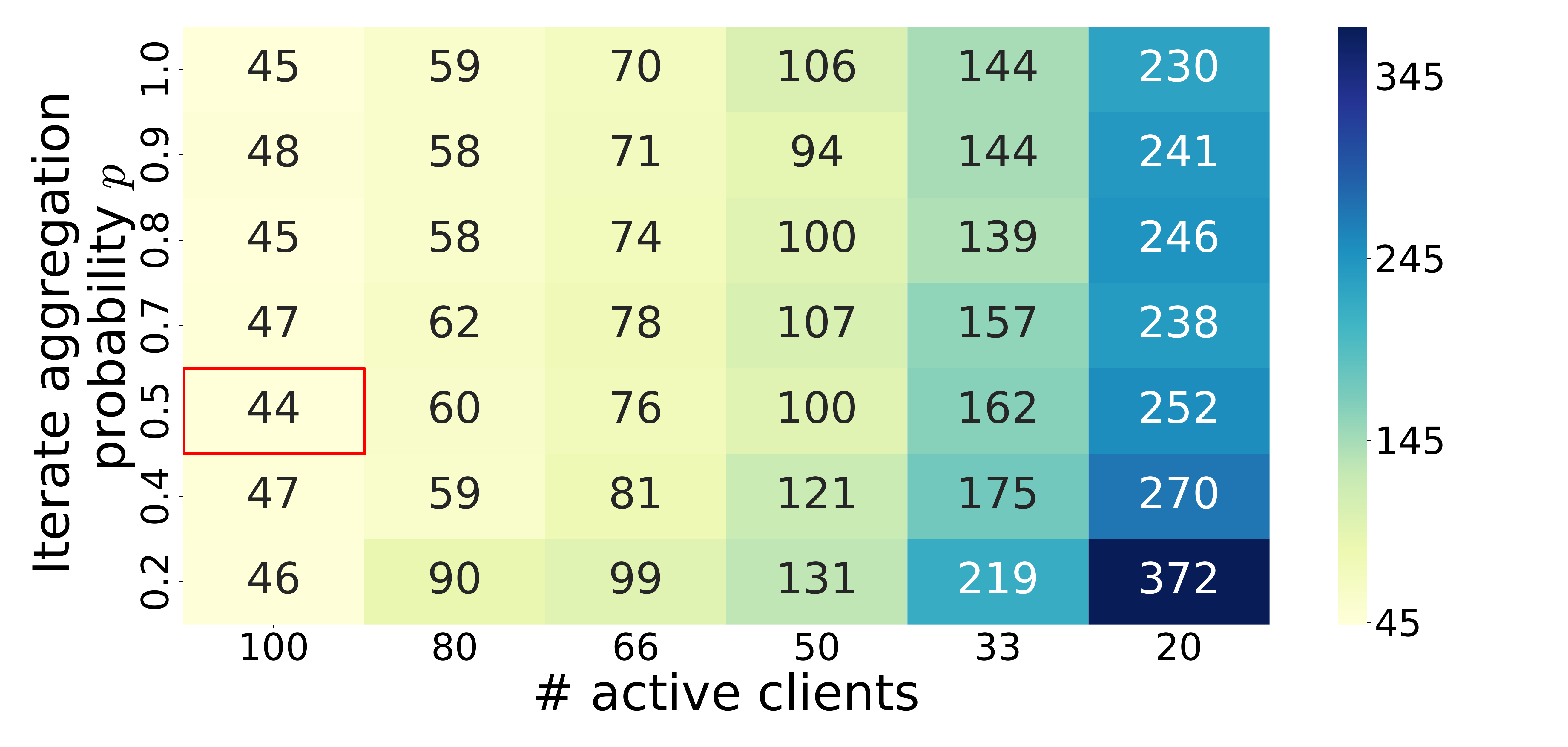} &
				\includegraphics[width=0.22\linewidth]{./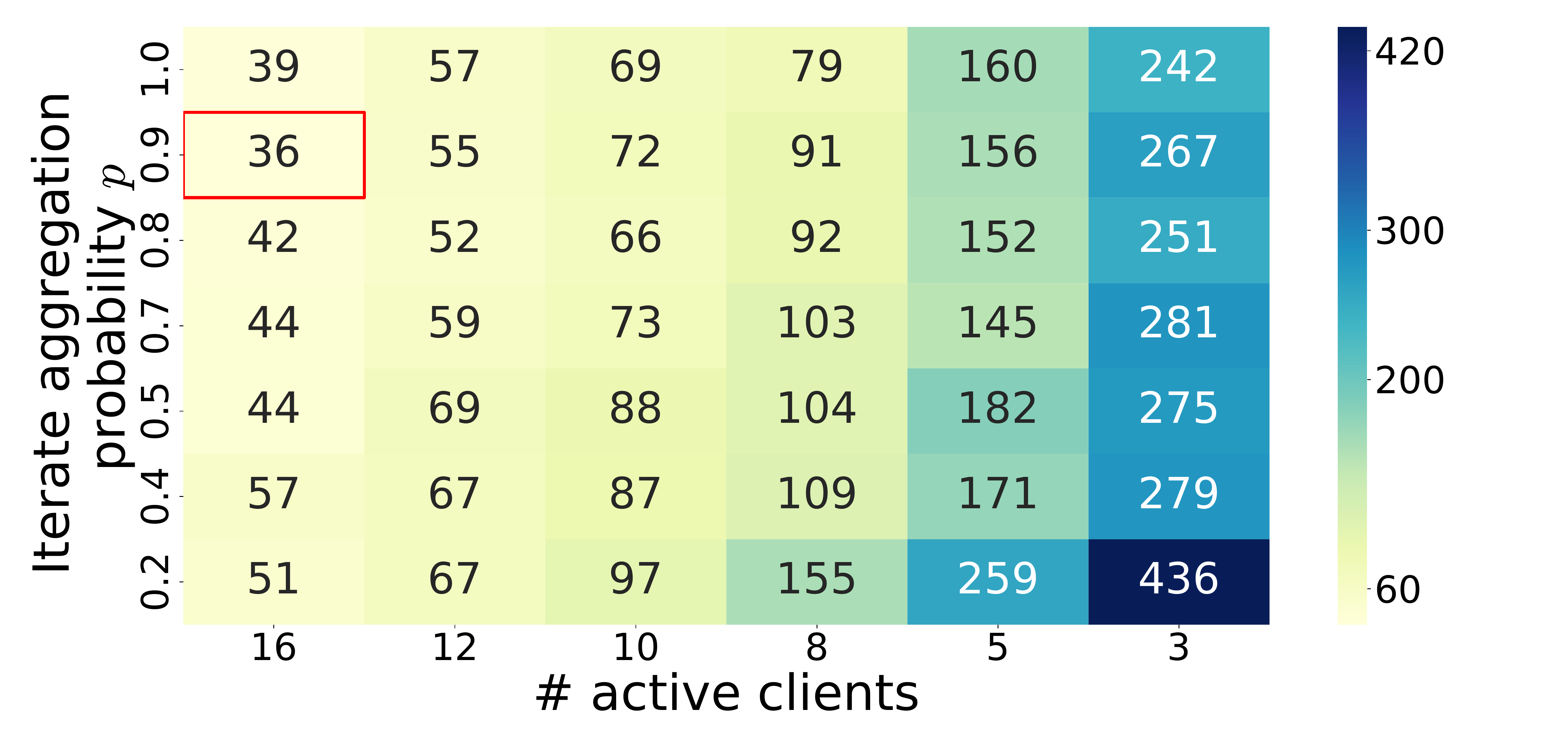} &
				\includegraphics[width=0.22\linewidth]{./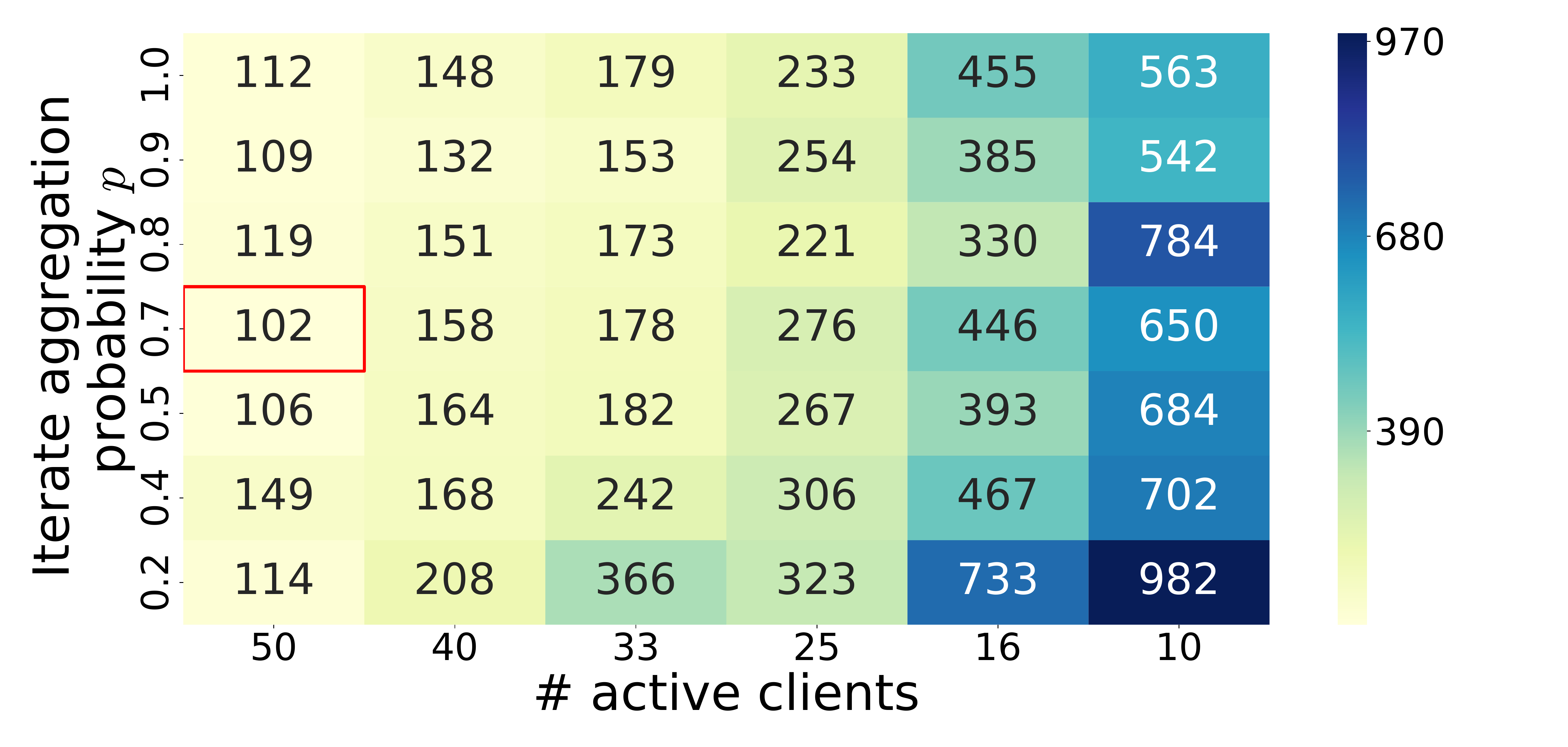} \\
				(a2) \dataname{phishing}, {\scriptsize$ \lambda=10^{-4}$} &
				(b2) \dataname{a1a}, {\scriptsize$ \lambda=10^{-3}$} &
				(c2) \dataname{w2a}, {\scriptsize $\lambda=10^{-4}$} \\
				\includegraphics[width=0.22\linewidth]{./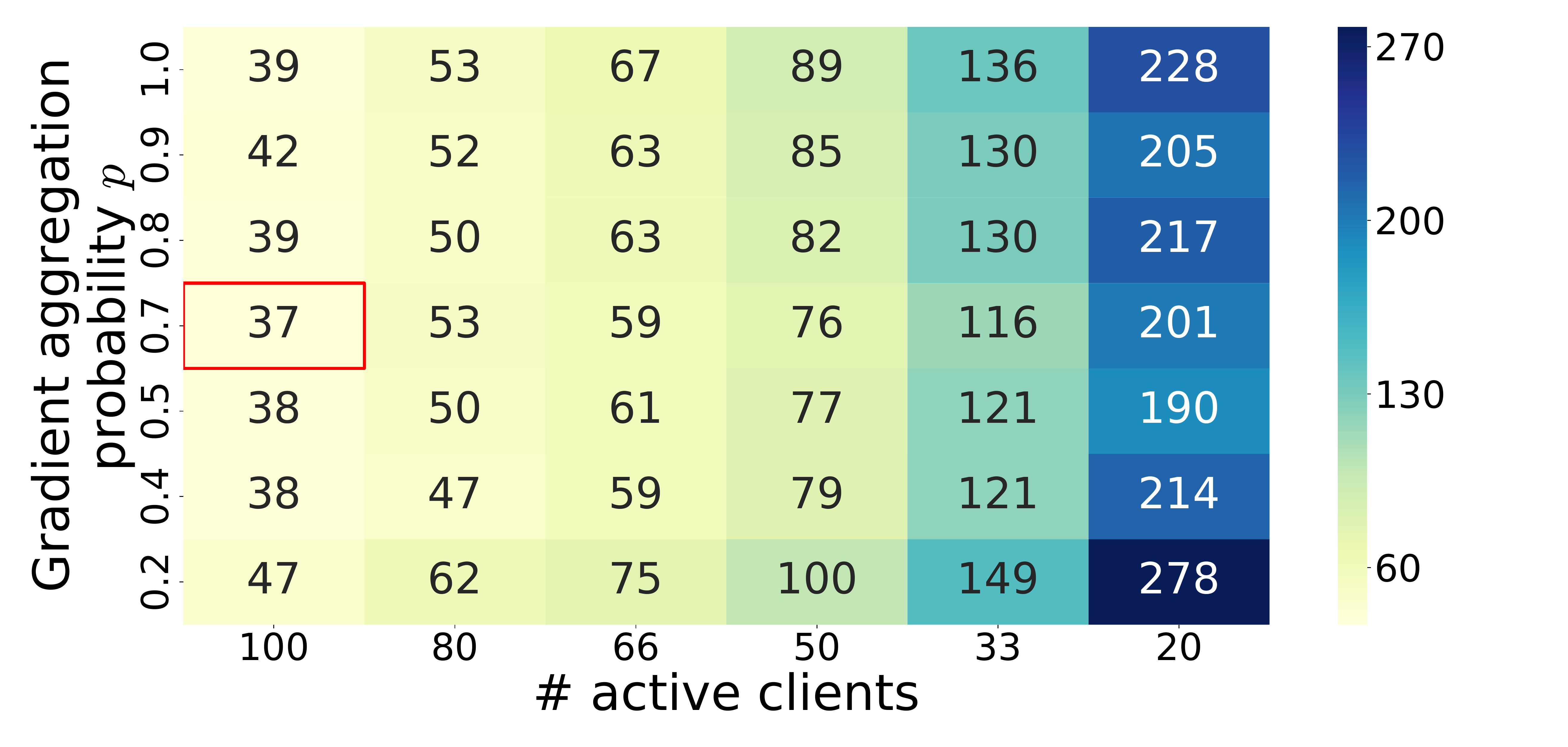} &
				\includegraphics[width=0.22\linewidth]{./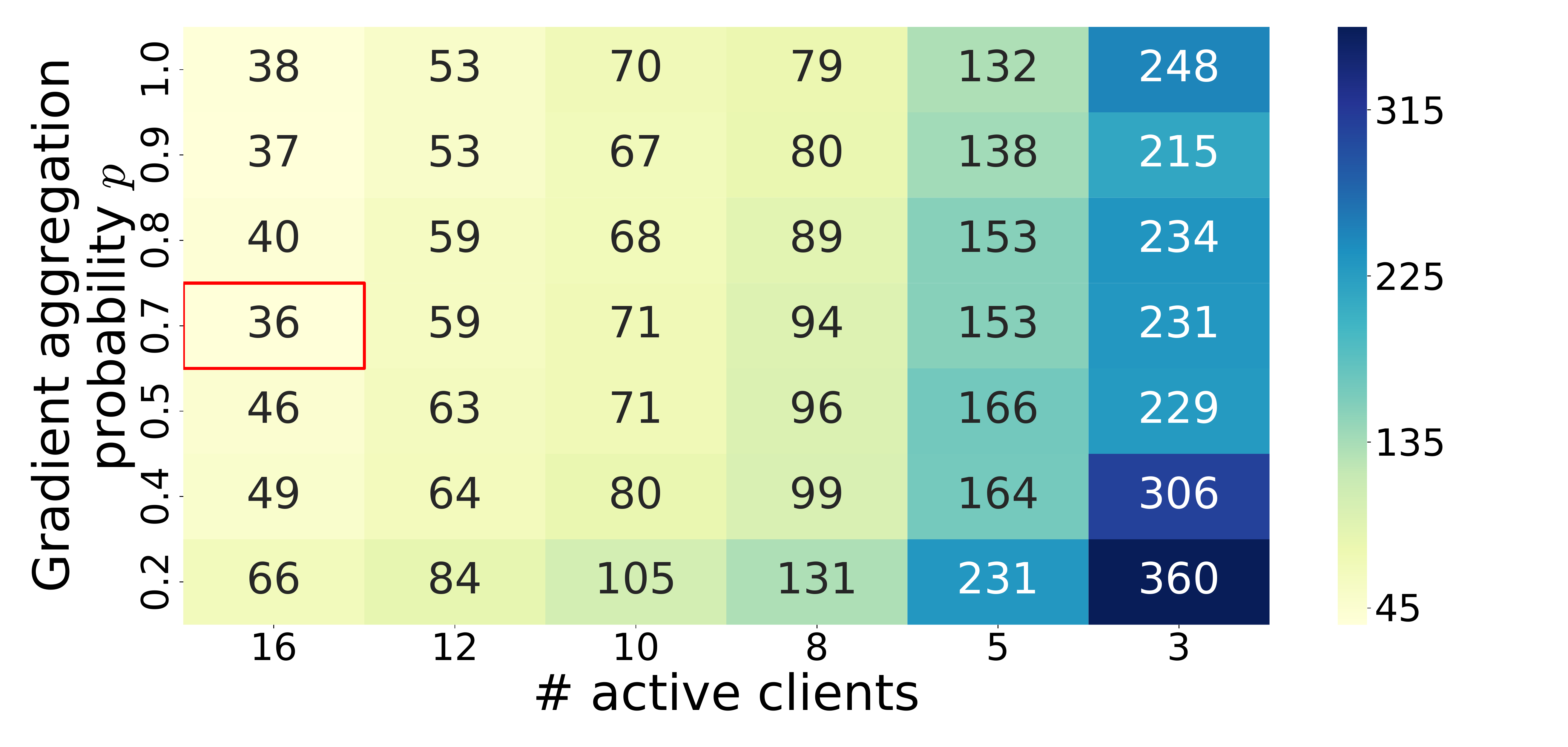} &
				\includegraphics[width=0.22\linewidth]{./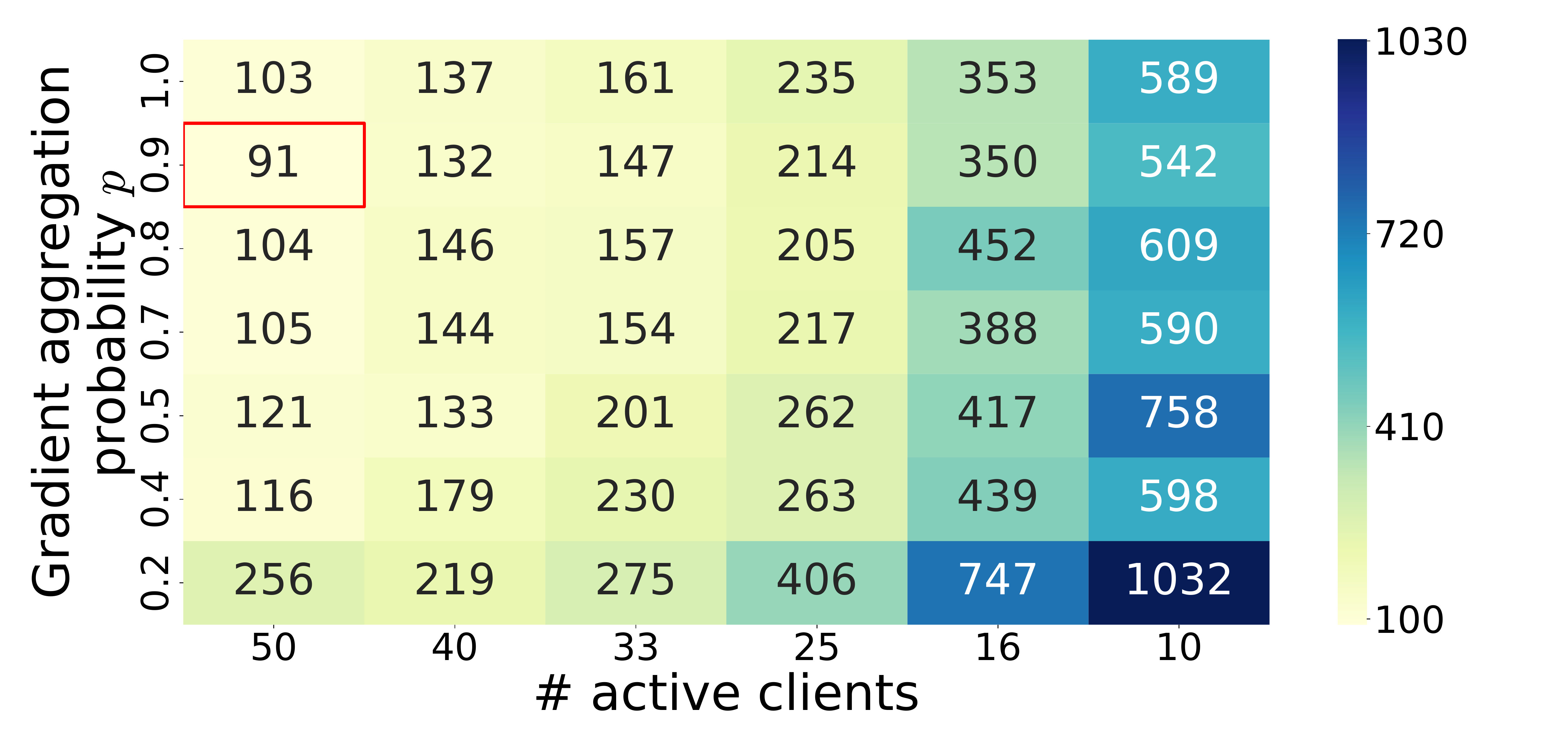} \\
				(a3) \dataname{phishing}, {\scriptsize$ \lambda=10^{-4}$} &
				(b3) \dataname{a1a}, {\scriptsize$ \lambda=10^{-3}$} &
				(c3) \dataname{w2a}, {\scriptsize $\lambda=10^{-4}$} \\
			\end{tabular}       
		\end{center}
		\caption{The performance of \algname{Newton-3PC-BC-PP} with various update strategies in terms of communication complexity (in Mbytes).}
		\label{fig:Newton-3PC-BC-PP}
	\end{figure}

	\subsubsection{Comparison of different 3PC update rules}
	
	Now we test different combinations of 3PC compression mechanisms applied on Hessians and iterates. First, we fix probability parameter of BAG update rule applied on gradients to $p=0.7$. The number of active clients in all cases $\tau=\nicefrac{n}{2}$. We analyze various combinations of 3PC compressors: CBAG (Top-$d$ and $p=0.7$) and 3PCv5 (Top-$\nicefrac{d}{2}$ and $p=0.7$); EF21 (Top-$d$) and EF21 (Top-$\nicefrac{d}{2}$); CBAG (Top-$d$ and $p=0.7$) and EF21 (Top-$\nicefrac{d}{2}$);  EF21 (Top-$d$) and 3PCv5 (Top-$\nicefrac{d}{2}$ and $p=0.7$) applied on Hessians and iterates respectively. Numerical results might be found in Figure~\ref{fig:Newton-3PC-BC-PP_comparison}. We can see that in all cases \algname{Newton-3PC-BC-PP} performs the best with a combination of 3PC compressors that differ from EF21+EF21. This allows to conclude that EF21 update rule is not always the most effective since other 3PC compression mechanisms lead to better performance in terms of communication complexity. Nonetheless one can notice that it is useless to apply CBAG or LAG compression mechanisms on iterates. Indeed, in the case when we skip communication the iterates remain intact, and the next step is equivalent to previous one. Thus, there is no need to carry out the step again.

	\begin{figure}[t]
		\begin{center}
			\begin{tabular}{cccc}
				\includegraphics[width=0.22\linewidth]{./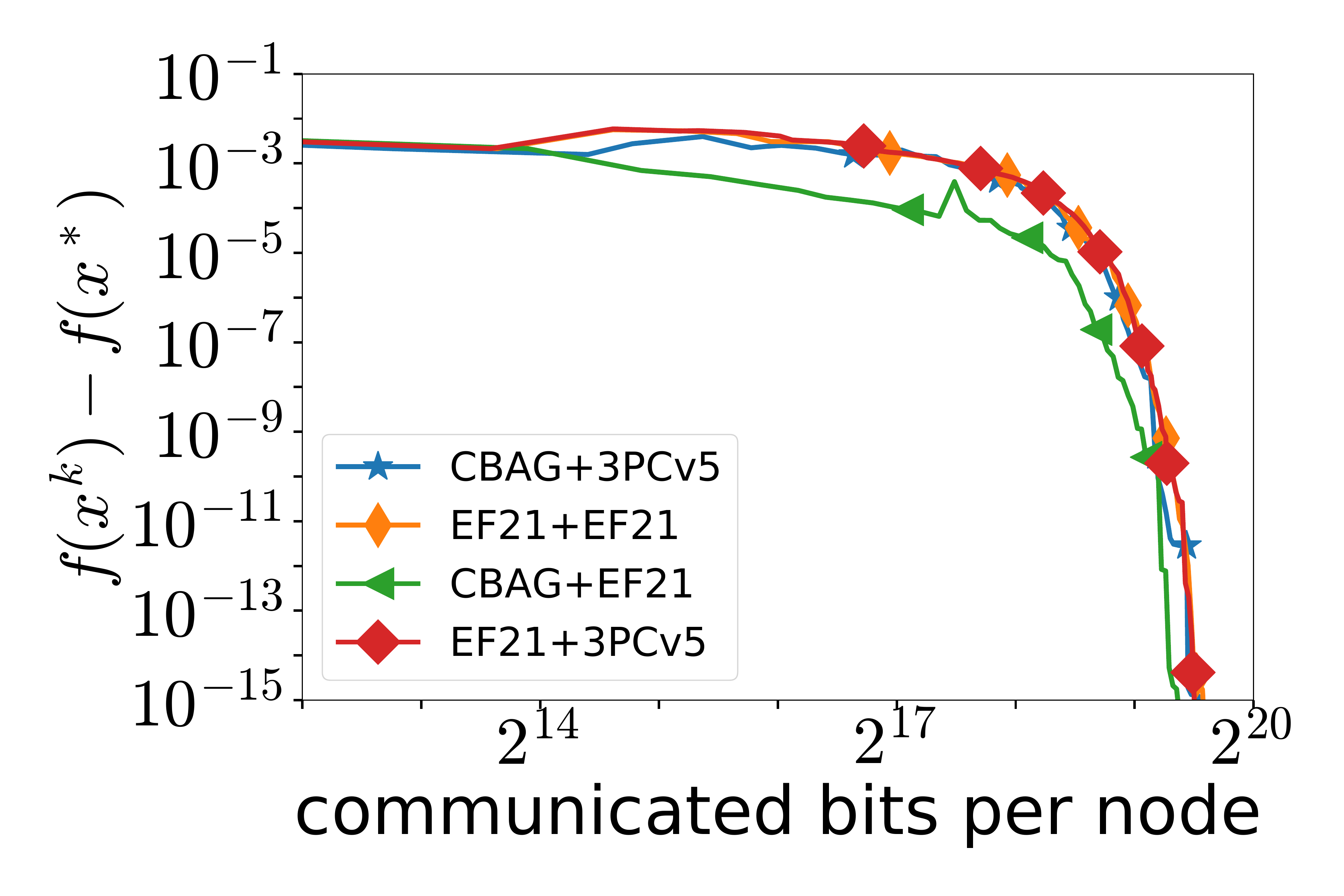} &
				\includegraphics[width=0.22\linewidth]{./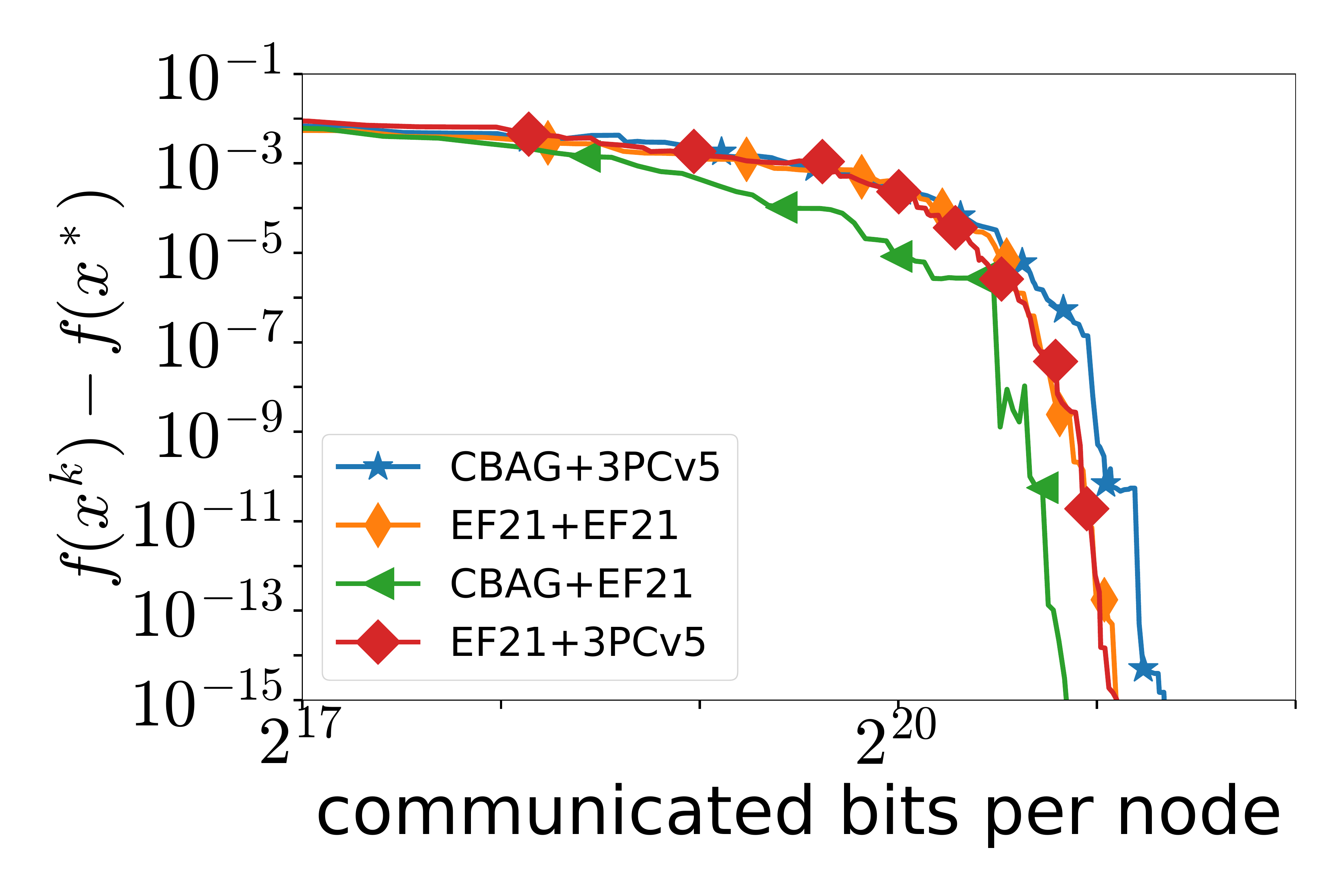} &
				\includegraphics[width=0.22\linewidth]{./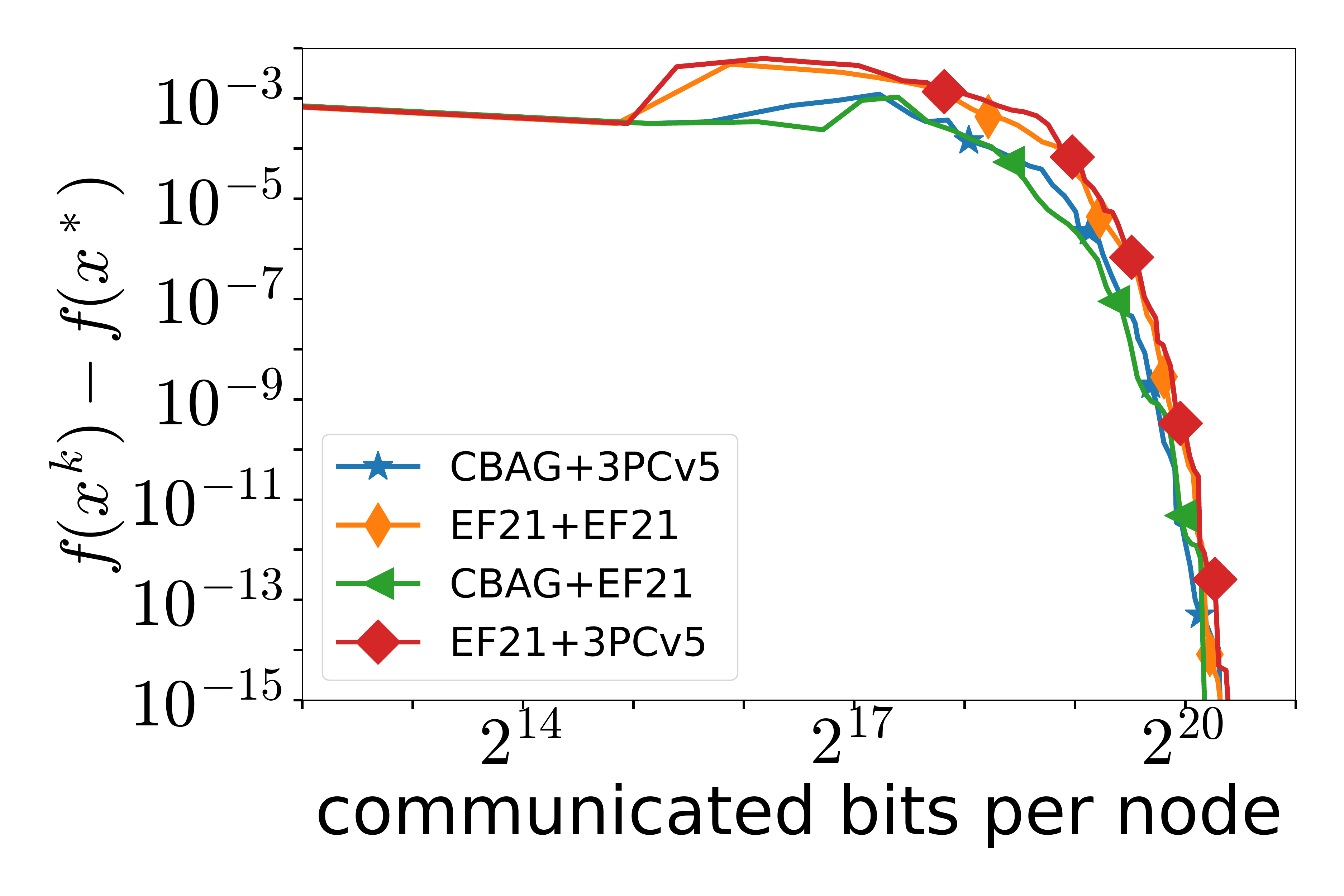} &
				\includegraphics[width=0.22\linewidth]{./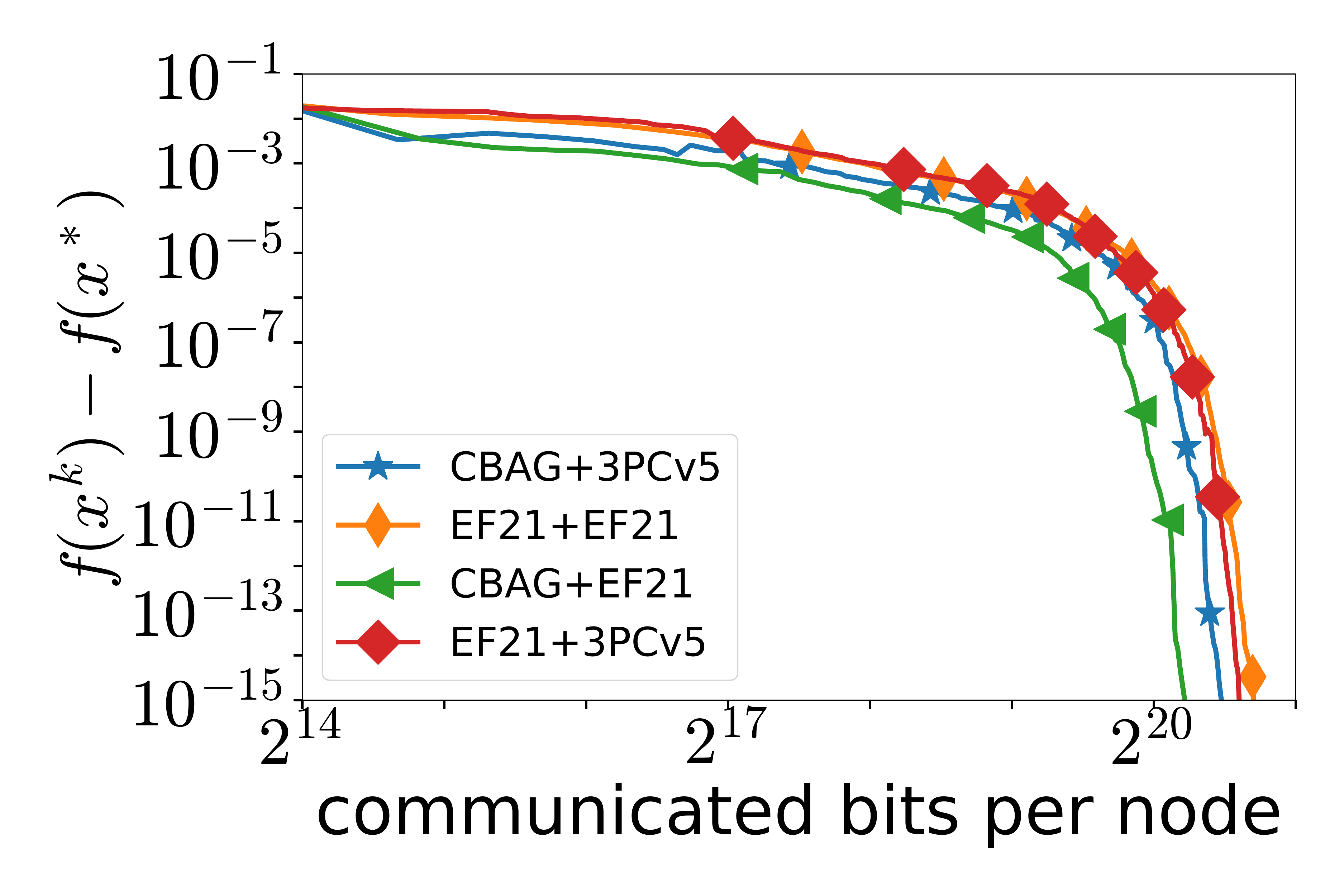} \\
				(a1) \dataname{a1a}, {\scriptsize$ \lambda=10^{-3}$} &
				(b1) \dataname{w2a}, {\scriptsize$ \lambda=10^{-4}$} &
				(c1) \dataname{a9a}, {\scriptsize $\lambda=10^{-3}$} &
				(c1) \dataname{w8a}, {\scriptsize $\lambda=10^{-4}$} \\
			\end{tabular}       
		\end{center}
		\caption{The performance of \algname{Newton-3PC-BC-PP} with different combinations of 3PC compressors applied on Hessians and iterates respectively.}
		\label{fig:Newton-3PC-BC-PP_comparison}
	\end{figure}

	\subsection{Global convergence of \algname{Newton-3PC}}

	Now we investigate the performance of globallly convergent \algname{Newton-3PC-LS}~--- an extension of \algname{Newton-3PC}~--- based on line search as it performs significantly better than \algname{Newton-3PC-CR} based on cubic regularization. The experiments are done on synthetically generated datasets with heterogeneity control. Detailed description of how the datasets are created is given in  section B.12 of \citep{FedNL2021}. Roughly speaking, the generation function has $2$ parameters $\alpha$ and $\beta$ that control a heterogeneity of local data. We denote datasets created in a such way with parameters $\alpha$ and $\beta$ as \dataname{Synt($\alpha$, $\beta$)}. All datasets are generated with dimension $d=100$, split between $n=20$ clients each of which has $m=1000$ local data points. In all cases the regularization parameter is chosen~$\lambda=10^{-4}$.
	
	We compare $5$ versions of \algname{Newton-3PC-LS} combined with EF21 (based on Rank-$1$ compressor), CBAG (based on Rank-$1$ compressor with probability $0.8$), CLAG (based on Rank-$1$ compressor and communication trigger $\zeta=2$), 3PCv2 (based on Top-$\nicefrac{3d}{4}$ and Rand-$\nicefrac{d}{4}$ compressors), and 3PCv4 (based on Top-$\nicefrac{d}{2}$ and Top-$\nicefrac{d}{2}$ compressors). In this series of experiments the initialization of $\mH^0_i$ is equal to zero matrix. The comparison is performed against \algname{ADIANA} \citep{ADIANA} with random dithering ($s=\sqrt{d}$), \algname{Fib-IOS} \citep{IOSFabbro2022}, and \algname{GIANT} \citep{GIANT2018}. 
	
	The numerical results are shown in Figure~\ref{fig:Newton-3PC-LS}. According to them, we observe that \algname{Newton-3PC-LS} is more resistant to heterogeneity than other methods since they outperform others {\it by several orders in magnitude}. Besides, we see that \algname{Newton-CBAG-LS} and \algname{Newton-EF21-LS} are the most efficient among all \algname{Newton-3PC-LS} methods; in some cases the difference is considerable.

	\begin{figure}[ht]
		\begin{center}
			\begin{tabular}{cccc}
				\includegraphics[width=0.22\linewidth]{./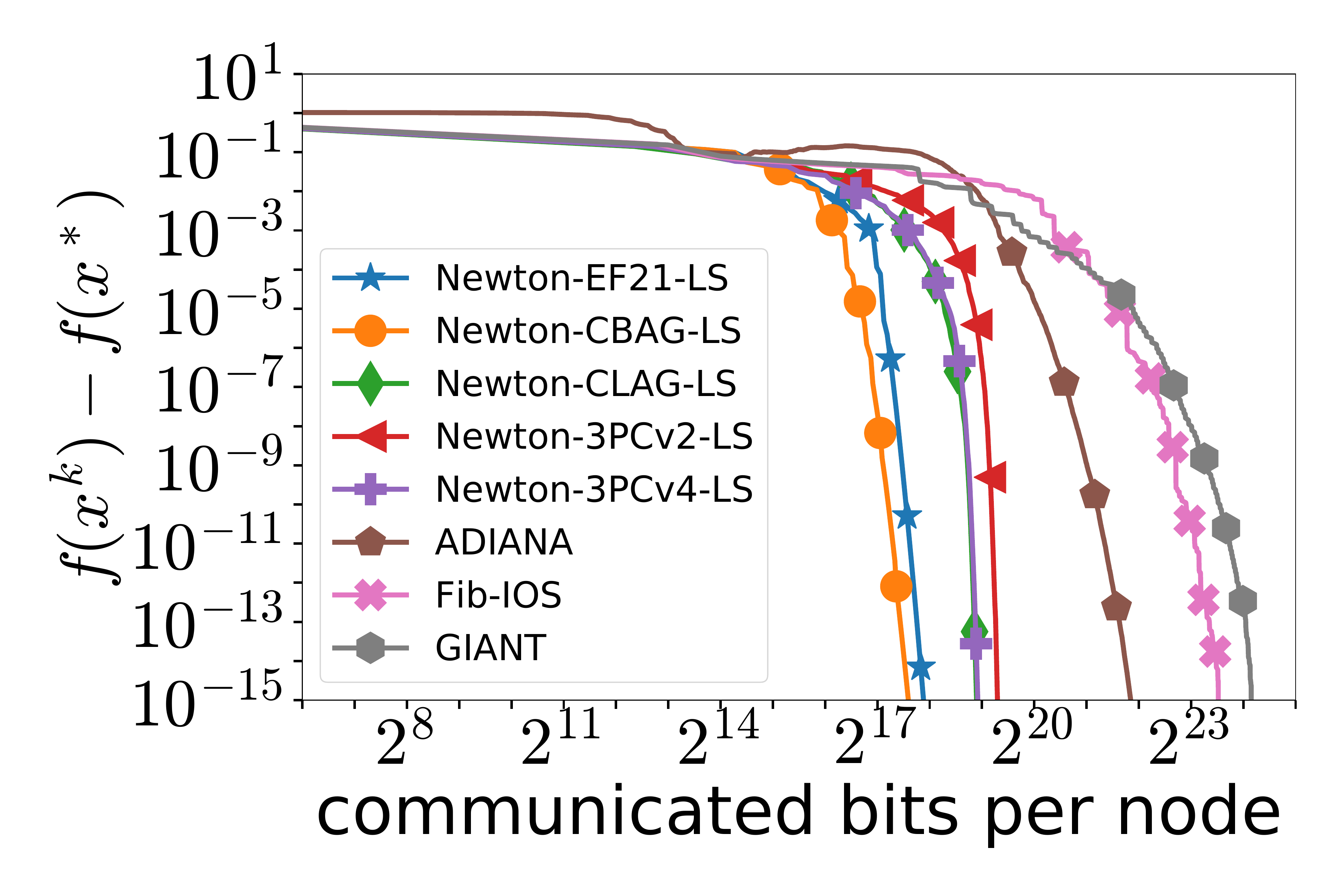} &
				\includegraphics[width=0.22\linewidth]{./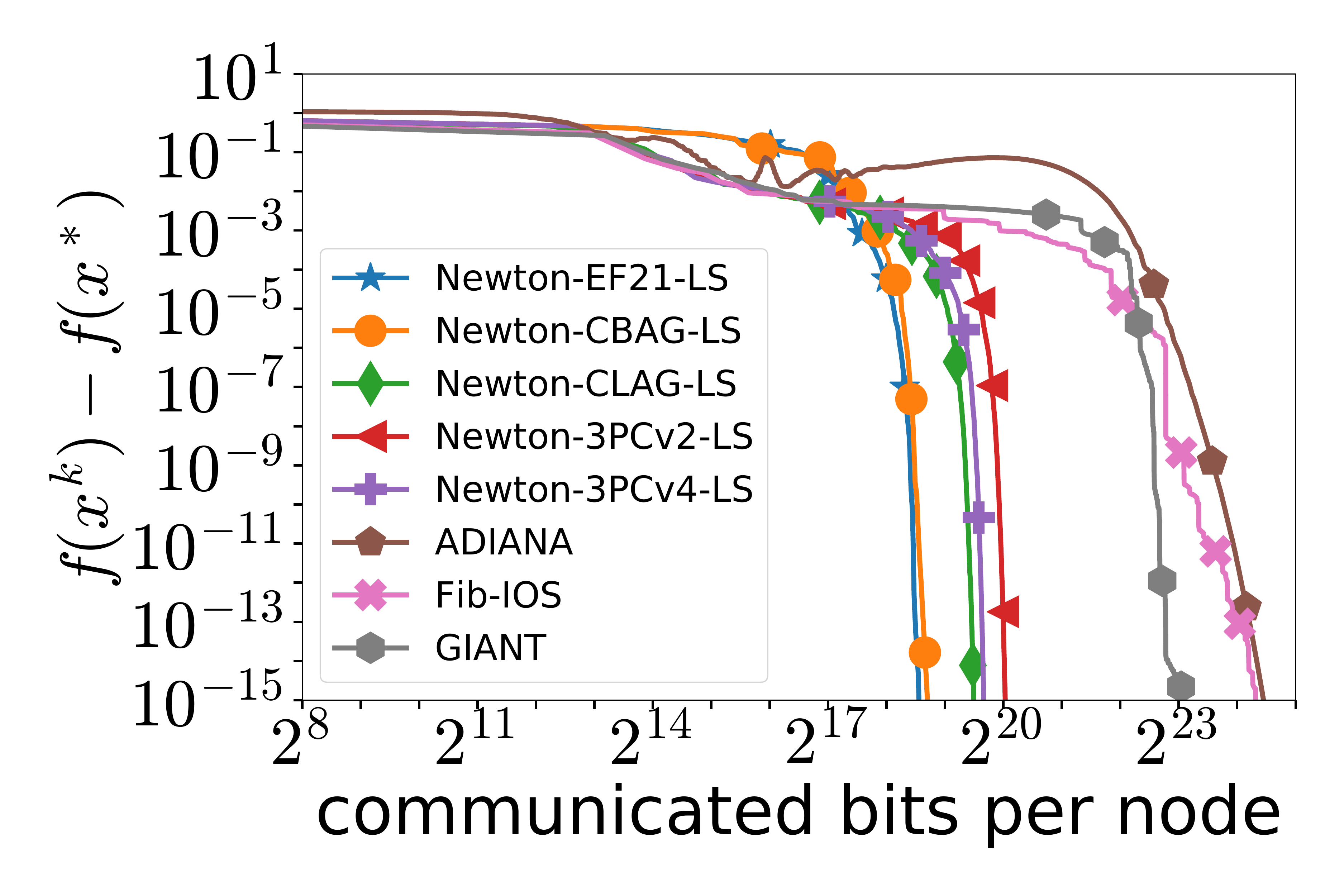} &
				\includegraphics[width=0.22\linewidth]{./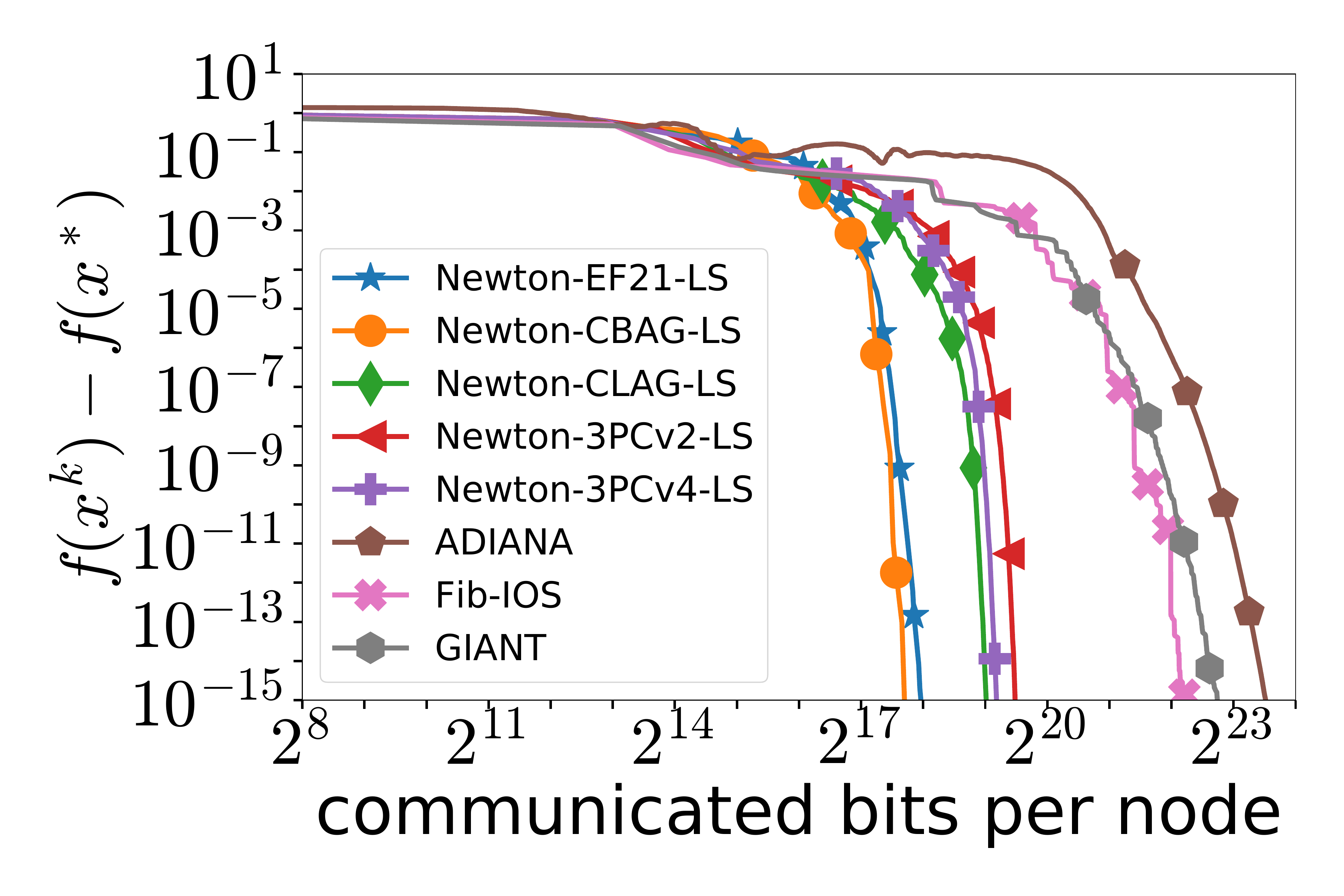} &
				\includegraphics[width=0.22\linewidth]{./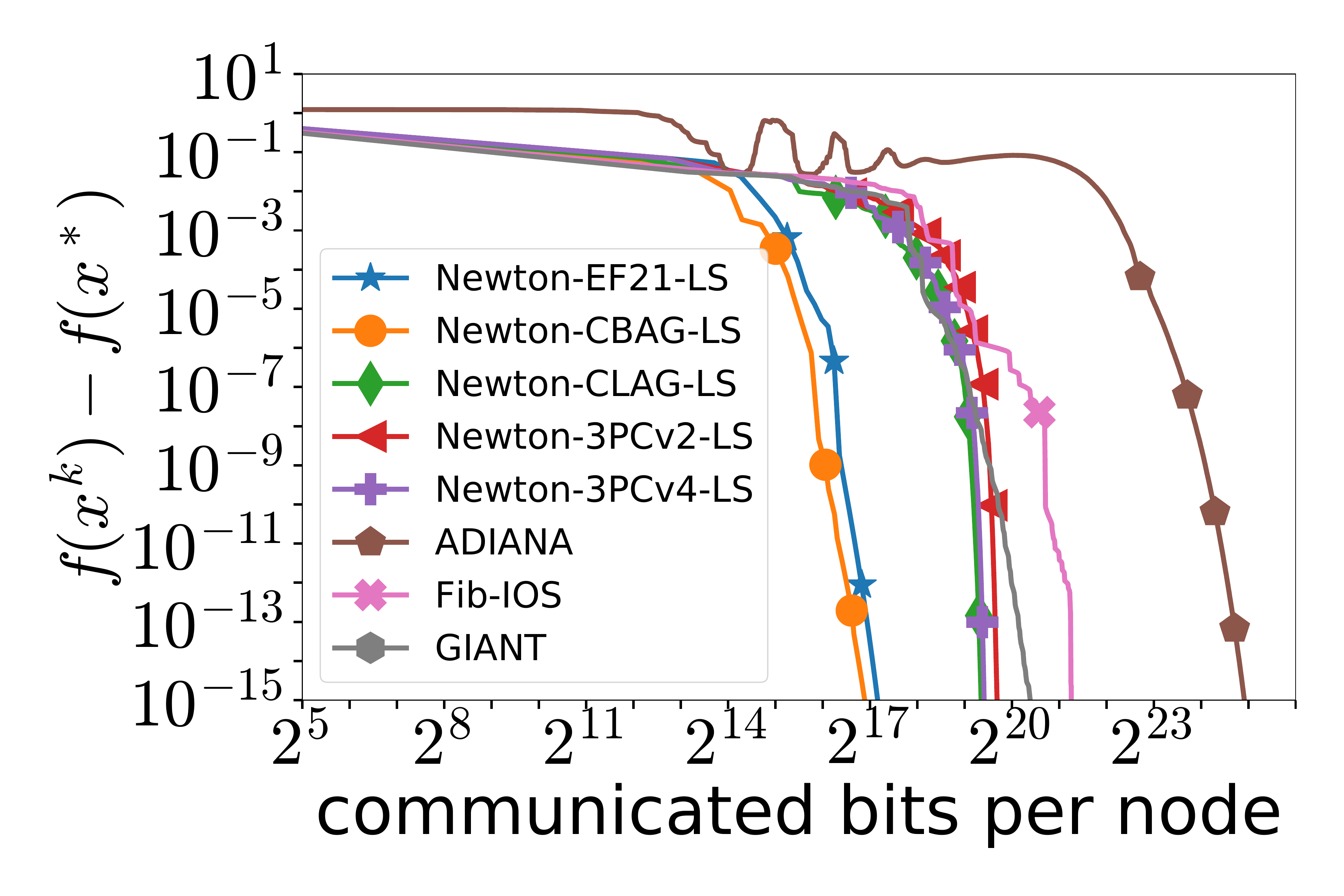} \\
				(a) \dataname{Synt(0.5,0.5)} &
				(b)  \dataname{Synt(1,1)}&
				(c)  \dataname{Synt(1.5,1.5)}&
				(d)  \dataname{Synt(2,2)}\\
			\end{tabular}       
		\end{center}
		\caption{The performance of \algname{Newton-3PC-LS} with different combinations of 3PC compressors applied on Hessians against \algname{ADIANA}, \algname{Fib-IOS}, and \algname{GIANT}.}
		\label{fig:Newton-3PC-LS}
	\end{figure}

\end{document}